\def \Leb {\mathrm{Leb}}
\def\proj{\mathrm{proj}}
\def\projM{\proj_{\mathcal{M}}}
\newcommand{\projR}[1]{\proj_{\mathcal{R}(#1)}}
\newcommand{\projsimpleR}{\proj_{\mathcal{R}}}
\newcommand{\projzero}[1]{\proj_{0, #1}}
\newcommand{\projone}[1]{\proj_{1, #1}}
\newcommand{\abs}[1]{| #1 |}
\def\pathmeas{\mathcal{P}(\mathcal{C})}
\def \rmC{\mathrm{C}}
\def \rmD{\mathrm{D}}
\def\KL{\mathrm{KL}}
\newcommand{\schro}{Schr\"{o}dinger\xspace}
\def \argmin {\mathrm{argmin}}
\def \bfX {\mathbf{X}}
\def \bfT {\mathbf{T}}
\def \bfZ {\mathbf{Z}}
\def \bfY {\mathbf{Y}}
\def \bfB {\mathbf{B}}
\def \nset {\mathbb{N}}
\def \rset {\mathbb{R}}
\def \Qbb {\mathbb{Q}}
\def \Pbb {\mathbb{P}}
\def \Mbb {\mathbb{M}}
\def \calR {\mathcal{R}}
\def \calC {\mathcal{C}}
\def \calM {\mathcal{M}}
\def \calP {\mathcal{P}}
\def \Leb {\mathrm{Leb}}
\def \rmd {\mathrm{d}}
\def \rmC {\mathrm{C}}
\def \msx {\mathsf{X}}
\def \msa {\mathsf{A}}
\def \vareps {\varepsilon}
\newcommand{\std}[1]{\scriptsize\rm$\pm$#1}
\newcommand{\gN}{\mathcal{N}}
\def \Id {\mathrm{Id}}
\def \Eproj {$\mathrm{E}$-projection ~}
\def \Mproj {$\mathrm{M}$-projection ~}
\newcommand \ccint[1]{[#1]}
\newcommand \ensembleLigne[2]{\{ #1 \ : \ #2 \}}
\newcommand \expeLigne[2]{\mathbb{E}_{#1}[#2]}
\newcommand \normLigne[1]{\| #1 \|}
\newcommand{\citedcc}[2]{(\cite{#1}; #2)}
\newcommand{\vsra}{{\ensuremath{%
  \mathchoice{\includegraphics[height=1.1ex]{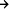}}
    {\includegraphics[height=1.1ex]{img/veryshortrightarrow.pdf}}
    {\includegraphics[height=.8ex]{img/veryshortrightarrow.pdf}}
    {\includegraphics[height=.5ex]{img/veryshortrightarrow.pdf}}
}}}
\newcommand{\vsla}{{\ensuremath{%
  \mathchoice{\includegraphics[height=1.1ex]{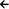}}
    {\includegraphics[height=1.1ex]{img/veryshortleftarrow.pdf}}
    {\includegraphics[height=.8ex]{img/veryshortleftarrow.pdf}}
    {\includegraphics[height=.5ex]{img/veryshortleftarrow.pdf}}
}}}
\def \rmD {\mathrm{D}}
\def \Lparam {\mathrm{L}}
\def \Lnonparam {\mathcal{L}}
\definecolor{customred}{HTML}{e4c3d0} % Define your custom red color
\definecolor{customblue}{HTML}{c6d5e7}
\title{Schr\"odinger Bridge Flow \\ for Unpaired Data Translation}
\author{%
    Valentin De Bortoli \thanks{Equal contribution.} \\
    Google DeepMind
\And
    Iryna Korshunova \footnotemark[1] \\
    Google DeepMind
\And
    Andriy Mnih
 \\
    Google DeepMind
\And
    Arnaud Doucet \\
    Google DeepMind
}
\begin{document}

\maketitle

\begin{abstract}
Mass transport problems arise in many areas of machine learning whereby one wants to compute a map transporting one distribution to another. Generative modeling techniques like Generative Adversarial Networks (GANs) and Denoising Diffusion Models (DDMs) have been successfully adapted to solve such transport problems, resulting in CycleGAN and Bridge Matching respectively. However, these methods do not approximate Optimal Transport (OT) maps, which are known to have desirable properties. Existing techniques approximating OT maps for high-dimensional data-rich problems, such as DDM-based Rectified Flow and Schr\"odinger Bridge procedures, require fully training a DDM-type model at each iteration, or use mini-batch techniques which can introduce significant errors. We propose a novel algorithm to compute the Schr\"odinger Bridge, a dynamic entropy-regularised version of OT, that eliminates the need to train multiple DDM-like models. This algorithm corresponds to a discretisation of a flow of path measures, which we call the Schr\"odinger Bridge Flow, whose only stationary point is the Schr\"odinger Bridge. We demonstrate the performance of our algorithm on a variety of unpaired data translation tasks.
\end{abstract}

\section{Introduction}
The problem of finding a map to transport one probability distribution to another one has numerous applications in machine learning. In particular, it is at the core of generative modeling where the idea is to transform a noise distribution into the data distribution, and is also central to transfer learning tasks such as image-to-image translation. For discrete probability distributions, it is possible to compute the Optimal Transport (OT) map but this is computationally expensive \citep{peyre2019computational}. By showing that an entropy-regularised version of OT, the Entropic OT (EOT), could be computed much more efficiently using the Sinkhorn algorithm, \cite{cuturi2013sinkhorn} has enabled transport ideas to be used in numerous applications \citep{ge2021ota,zhou2022rethinking}. % ranging from object detection \citep{ge2021ota} to semantic segmentation \citep{zhou2022rethinking}.
However, the computational complexity of Sinkhorn algorithm is quadratic in the sample size, which makes its application to very large datasets impractical. Mini-batch versions have been proposed, see e.g.~\citep{genevay18a}, but tend to introduce significant errors in high dimensions \citep{sommerfeld2019optimal}.

In the context of generative modeling,  Denoising Diffusion Models (DDMs) \citep{song_denoising_2021,ho_denoising_2020} have shown impressive performance in a variety of domains. DDMs define a forward process progressively noising the data, and sample generation is achieved by approximating the time-reversal of this diffusion. In order to leverage the iterative refinement properties of DDMs in the OT setting, methods exploiting the equivalence between the static versions of (E)OT and their dynamic counterparts \citep{benamou2000computational,leonard2014survey} have been developed. A procedure to approximate the dynamic OT is considered by \cite{liu_flow_2023}, while techniques to approximate the dynamic equivalent to EOT, the \schro Bridge (SB), have been proposed in \citep{debortoli2021diffusion,vargas2021solving,chen2021likelihood,peluchetti_diffusion_2023,shi2023DSBM}. These techniques are expensive however, as they require training multiple DDM-type models. Mini-batch versions of OT and Sinkhorn \citep{pooladian_2023_multisample,tong2024simulationfree} combined with bridge or flow matching have also been proposed to approximate the OT path and SB, but they optimise a minibatch OT objective that can introduce significant errors in high dimensions: the error in Wasserstein-$1$ distance is of order $O(B^{-1/(2d)})$, where $d$ is the dimension of the problem and $B$ the minibatch size, see \citep[Corollary 1]{sommerfeld2019optimal}.

In this paper, we propose a novel approach to computing the SB. Similarly to Iterative Markovian Fitting (IMF) and its practical implementation, Diffusion Schr\"odinger Bridge Matching (DSBM) \citep{shi2023DSBM,peluchetti_diffusion_2023}, it leverages the fact that the SB is the only Markov process with prescribed marginals at the endpoints which is in the reciprocal class of the Brownian motion, i.e.~it has the same bridge as the Brownian motion \citep{leonard2014survey}; see \Cref{sec:background} for more details on Markov processes and the reciprocal class. Compared to DSBM, our approach is easier to implement as it does not require caching samples, alternating between optimising two different losses, and, optionally, uses one neural network instead of two. In \Cref{sec:theoretical_results}, we start by introducing a flow of path measures whose time-discretisation yields a family of algorithms called $\alpha$-IMF and presented in \Cref{sec:online_dsbm}. Notably, we show that $\alpha$-IMF converges to the Schr\"odinger Bridge for any $\alpha \in (0,1]$.  Additionally, for a special value of the discretisation stepsize $\alpha = 1$, we recover the IMF procedure \citep{peluchetti_diffusion_2023,shi2023DSBM}, while $\alpha < 1$ corresponds to online versions of IMF. We implement a parametric version of the $\alpha$-IMF as an online DSBM procedure, called $\alpha$-DSBM. 
We illustrate the efficiency of our approach in \emph{unpaired} image-to-image translation settings in \Cref{sec:experiments}. 

\paragraph{Notation.} We denote the space of \emph{path
measures} by $\pathmeas$, i.e.~$\pathmeas=\mathcal{P}(\rmC(\ccint{0,1}, \rset^d))$, where $\rmC(\ccint{0,1}, \rset^d)$ is the space of continuous functions from $\ccint{0,1}$ to $\rset^d$.
The subset of \emph{Markov} path measures associated
with a diffusion of the form
$\rmd \bfX_t = v_t(\bfX_t) \rmd t + \sigma_t \rmd \bfB_t$, with $\sigma,v$
locally Lipschitz, is denoted $\calM$.  For $\Qbb$ induced by $(\sqrt{\vareps} \bfB_t)_{t \in [0,1]}$, with $\vareps >0$ and $(\bfB_t)_{t \geq 0}$ a $d$-dimensional Brownian motion, the \emph{reciprocal class}  of $\Qbb$
is denoted $\calR(\Qbb)$, see
\Cref{def:reciprocal_projection}. For any $\Pbb \in \pathmeas$, we denote by $\Pbb_t$ its marginal
distribution at time $t$, $\Pbb_{s,t}$ the joint distribution at times $s,t$, $\Pbb_{s|t}$ the conditional distribution at time $s$ given the state at
time $t$, and $\Pbb_{|0,1} \in \pathmeas$ the distribution of the path on time interval $(0,1)$ given its endpoints; e.g.~$\Qbb_{|0,1}$ is a scaled Brownian bridge. Unless specified otherwise, all gradient operators $\nabla$ are
w.r.t.~the variable $x_t$ with time index $t$. Given probability spaces $(\msx, \mathcal{X})$ and $(\mathsf{Y}, \mathcal{Y})$, a Markov kernel $\mathrm{K}: \ \msx \times \mathcal{Y} \to [0,1]$, and a probability measure $\mu$ defined on $\mathcal{X}$, we write $\mu \mathrm{K}$ for the probability measure on $\mathcal{Y}$ such that for any $\mathsf{A} \in \mathcal{Y}$ we have $\mu \mathrm{K}(\mathsf{A}) = \int_{\msx} \mathrm{K}(x, \mathsf{A}) \rmd \mu(x)$. 
In particular, for any joint distribution $\Pi_{0,1}$ over $\rset^d \times \rset^d$, we denote the \emph{mixture of bridges} measure as $\Pi = \Pi_{0,1} \Pbb_{|0,1} \in \pathmeas$, which is short for $\Pi(\cdot) = \int_{\rset^d \times \rset^d} \Pbb_{|0,1}(\cdot|x_0, x_1) \rmd \Pi_{0,1}(x_0, x_1)$. Finally, we define the Kullback--Leibler (KL) divergence between two probability measures $\pi_0, \pi_1 \in \mathcal{P}(\msx)$ as $\KL(\pi_0 | \pi_1) = \int_{\msx} \log ((\rmd \pi_0 / \rmd \pi_1)(x)) \rmd \pi_0(x)$ if $\pi_0$ is absolutely continuous w.r.t.~$\pi_1$ and $\KL(\pi_0 | \pi_1) = +\infty$ otherwise. 

\section{Optimal Transport and Schr\"odinger Bridge}
\label{sec:background}

\paragraph{Unpaired Transfer and Optimal Transport.} Given unpaired data samples from $\pi_0$ and $\pi_1$, where $\pi_0,\pi_1$ are two distributions on $\rset^d$, we are interested in designing a transport map from $\pi_0$ to $\pi_1$. This corresponds to an \emph{unpaired data transfer task}.
We can formulate this problem as finding a distribution $\Pi$ on $\rset^d\times \rset^d$ with marginals $\Pi_0 = \pi_0$ and $\Pi_1 = \pi_1$ so that if $\bfX_0 \sim \pi_0$ then $\bfX_1|\bfX_0 \sim \Pi_{1|0}(\cdot|\bfX_0)$ satisfies $\bfX_1 \sim \pi_1$. Among an infinite number of such so-called coupling distributions $\Pi$, we are here interested in finding the Entropic Optimal Transport (EOT) coupling $\Pi^\star$ defined as
\begin{equation}
\label{eq:static_ot}
   \Pi^\star = \argmin_{\Pi \in \calP(\rset^d\times \rset^d)} \left\{ \int_{\rset^d \times \rset^d} \frac{1}{2} \| x - y \|^2 \rmd \Pi(x,y) - \varepsilon \mathrm{H}(\Pi) \ ; \ \Pi_0 = \pi_0, \ \Pi_1 = \pi_1 \right\} ,
\end{equation}
where 
$\mathrm{H}(\Pi)$ is the differential entropy of $\Pi$ and $\vareps > 0$ is a regularisation hyperparameter \citep{peyre2019computational}. For $\vareps= 0$, we recover the standard OT. 

In order to leverage the recent advances in generative modeling, and in particular the concept of \emph{iterative refinement} central to DDMs, we turn to a \emph{dynamic} formulation of EOT known as the \emph{Schr\"odinger Bridge} problem \citep{leonard2014survey}. It is defined as follows: find  $\Pbb^\star \in \mathcal{P}(\mathcal{C})$ such that 
\begin{equation}
    \label{eq:dynamic_ot}
    \Pbb^\star = \argmin_{\Pbb \in \calP(\calC)} \{ \KL(\Pbb | \Qbb)  \ ; \ \Pbb_0 = \pi_0, \ \Pbb_1 = \pi_1 \} ,
\end{equation}
with $\Qbb \in \pathmeas$ induced by a scaled $d$-dimensional Brownian motion $(\sqrt{\varepsilon} \bfB_t)_{t \in [0,1]}$. The term \emph{dynamic} here refers to the fact that \eqref{eq:dynamic_ot} is defined on path measures, i.e.~on (stochastic) processes, in contrast to the \emph{static} problem \eqref{eq:static_ot} which is defined on measures on the space $\rset^d \times \rset^d$. 
In \Cref{sec:theoretical_results}, we show that solving \eqref{eq:dynamic_ot} is equivalent to optimising the vector field of a stochastic process using objectives similar to the ones of bridge matching \citep{peluchettinon,albergo_building_2023,lipman_flow_2022,liu2023I2SB}. Under mild assumptions, it can be shown that $\Pbb^\star_{0,1} = \Pi^\star$, see e.g.~\citep{leonard2014survey,pavon2018data}. Hence solving \eqref{eq:static_ot} reduces to solving \eqref{eq:dynamic_ot}. Once we have found $\Pbb^\star$ associated with $(\bfX_t^\star)_{t \in [0,1]}$, we can sample from $\Pbb^\star$ by first sampling $\bfX_0^\star \sim \pi_0$ and then sampling the trajectory $(\bfX_t^\star)_{t \in (0,1]}$ which yields $(\bfX_0^\star, \bfX_1^\star)\sim \Pi^\star$.

\paragraph{Reciprocal and Markov projections.} To introduce our methodology, it is necessary to recall the notions of reciprocal and Markov projections. We refer to \cite{shi2023DSBM} for more details. For practitioners, a more intuitive explanation of these projections is given in \Cref{sec:background_dsbm}.

\begin{definitionblue}{Reciprocal projection}{reciprocal_projection}
  $\Pbb \in \pathmeas$ is in the reciprocal class $\calR(\Qbb)$ of $\Qbb$ if
  $\Pbb = \Pbb_{0,1} \Qbb_{|0,1}$. 
  % We denote by $\calR(\Qbb)$ the reciprocal class of $\Qbb$. 
  We define the
  \emph{reciprocal projection} of $\Pbb \in \pathmeas$ as
  $\Pbb^\star = \projR{\Qbb}(\Pbb) =  \Pbb_{0,1} \Qbb_{|0,1} $. We will write $\projsimpleR$ instead of $\projR{\Qbb}$ to simplify notation.
\end{definitionblue}
In other words, $\Pbb$ is in the reciprocal class of $\Qbb$ if the conditional distribution of a path given its endpoints is identical under $\Pbb$ and $\Qbb$, see \citep{roelly2013reciprocal}. Sampling from the reciprocal projection of $\Pbb$ can be achieved by sampling a path $(\bfX_t)_{t\in[0,1]}$ from $\Pbb$, keeping only the values of the endpoints, say $\bfX_0,\bfX_1$, and then sampling a new value for the bridge $(\bfX_t)_{t\in(0,1)}$ from $\Qbb_{|0,1}$.
\begin{definitionred}{Markov projection}{markovian_proj}
  Assume that $\Qbb$ is induced by $(\sqrt{\vareps} \bfB_t)_{t \in [0,1]}$ for $\vareps >0$. Then, when it is well-defined, for any $\Pbb \in \mathcal{R}(\Qbb)$, the \emph{Markovian projection} 
  $\Mbb = \projM(\Pbb) \in \calM$ is the path measure induced by the diffusion $(\bfX_t^\star)_{t \in [0,1]}$ with for any $t \in [0,1]$
  \begin{equation}
    \rmd \bfX^\star_t =  v_t^\star(\bfX^\star_t) \rmd t + \sqrt{\vareps} \rmd \bfB_t, \qquad v_t^\star(x_t) =  \left(\mathbb{E}_{\Pbb_{1|t}}[\bfX_1 \ | \ \bfX_t = x_t] - x_t\right)/(1-t) , \qquad \bfX^\star_0 \sim \Pbb_0.
  \end{equation}  
\end{definitionred}
In practice, implementing a Markovian projection requires solving a regression problem to approximate $\mathbb{E}_{\Pbb_{1|t}}[\bfX_1 \ | \ \bfX_t = x_t]$, similar to the one appearing in bridge matching and flow matching. One key property of the Markovian projection is that $\Mbb_t = \Pbb_t$ for all $t \in [0,1]$, i.e.~the Markovian projection preserves the marginals; see \citep{peluchettinon} for instance. 
% We provide more details on the reciprocal and Markovian projections for the practitioners in \Cref{sec:background_dsbm}.

\paragraph{Iterative Markovian Fitting.} Leveraging the reciprocal and Markovian projections, \citet{peluchetti_diffusion_2023} and \citet{shi2023DSBM} concurrently introduced IMF. Starting from $\hat{\Pbb}^0 = (\pi_0 \otimes \pi_1) \Qbb_{|0,1}$, a measure where endpoints are sampled independently from $\pi_0$ and $\pi_1$ and then interpolated using a (scaled) Brownian bridge, they define a sequence of path measures $(\Pbb^n, \hat{\Pbb}^n)_{n \in \nset}$ where $\Pbb^{n} = \projM(\hat{\Pbb}^{n})$ and $\hat{\Pbb}^{n+1} = \projsimpleR(\Pbb^{n})$. This ensures that $\Pbb^n_0 = \pi_0$, $\Pbb^n_1 = \pi_1$ for all $n$, and it can be shown that the sequence $(\Pbb^n)_{n \in \nset}$ converges to the SB, see \cite[Theorem 2]{peluchetti_diffusion_2023}. The practical implementation of this algorithm proposed by \cite{shi2023DSBM} is called DSBM. Implementing DSBM poses challenges, as each Markovian projection requires training a neural network to approximate the relevant conditional expectations by minimising a bridge matching loss. Furthermore, in practice, generated model samples are stored in a cache in order to train the next iterations of DSBM. This introduces additional hyperparameters that require tuning. In \Cref{sec:theoretical_results} we propose $\alpha$-IMF, an algorithm which can be interpreted as the discretisation of a \emph{flow of path measures}. This leads to $\alpha$-DSBM, an algorithm that is computationally much more efficient than DSBM as it does not rely on a Markovian projection at each step. 

\section{Schr\"odinger Bridge flow}
\label{sec:theoretical_results}

We will now introduce a flow of path measures $(\Pbb^s)_{s \geq 0}$, and show that the time-discretisation of this flow with an appropriate stepsize $\alpha \in (0,1]$ yields a family of procedures called $\alpha$-IMF, which all converge to the Schr\"odinger Bridge.  While $\alpha=1$ yields the classical IMF, $\alpha \in (0,1)$ yields an \emph{incremental} version of IMF. In Section \ref{sec:online_dsbm} we show that $\alpha$-IMF can be implemented as an \emph{online} version of DSBM.

\subsection{A flow of path measures} Let $(\Pbb^s, \hat{\Pbb}^s)_{s \geq 0}$ be a \emph{flow of path measures} defined for any $s \geq 0$ by
\begin{align}
    &\hat{\Pbb}^0 = (\pi_0 \otimes \pi_1) \Qbb_{|0,1}, \quad \partial_s \hat{\Pbb}^s = \projsimpleR(\projM(\hat{\Pbb}^s)) -\hat{\Pbb}^s , \quad \Pbb^s = \projM(\hat{\Pbb}^s), \label{eq:flow_hat_p}
\end{align}

\begin{wrapfigure}[18]{l}{0.4\textwidth}\vspace{-0.6cm}
\centering
    \includegraphics[width=0.34\textwidth]{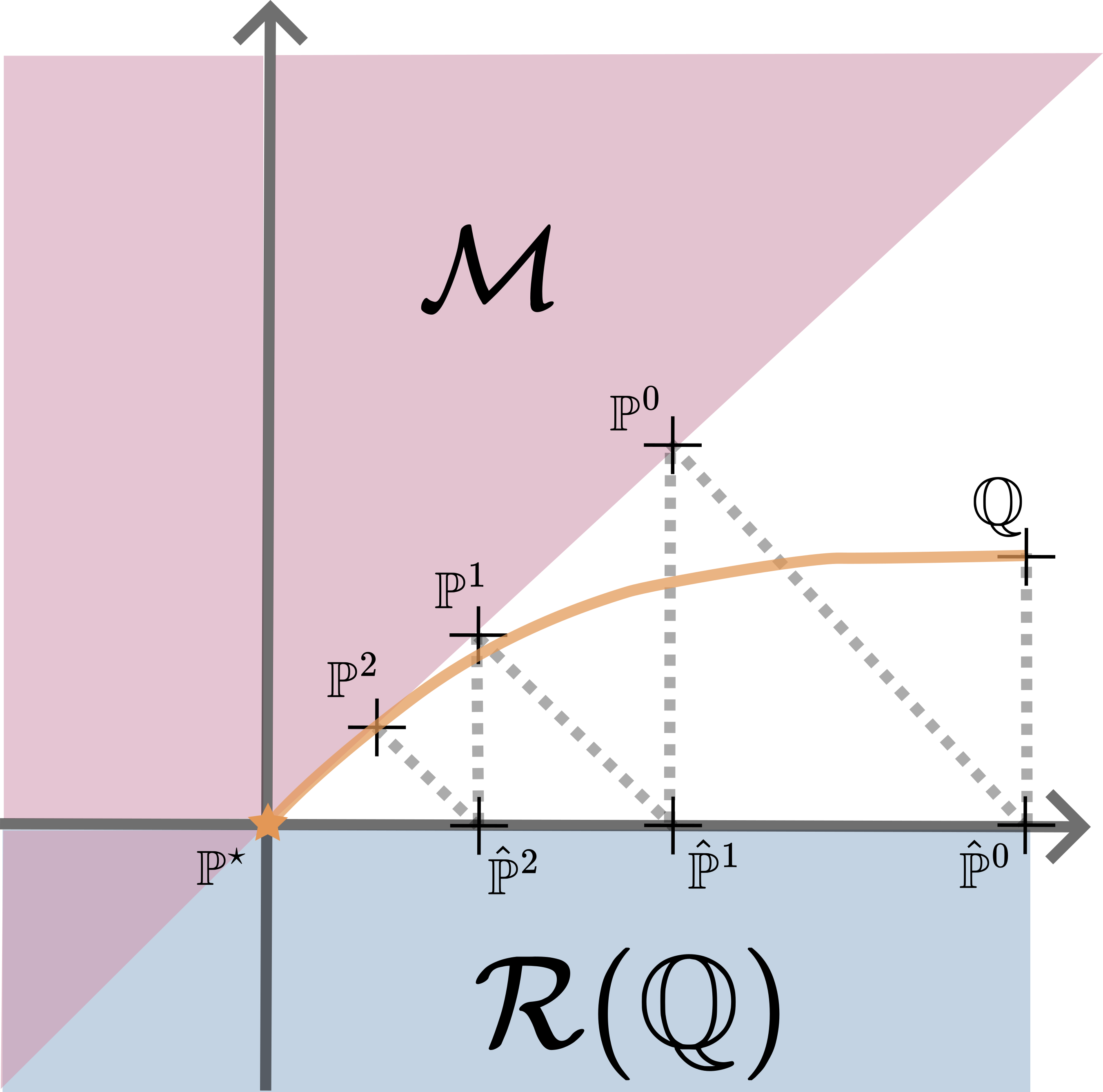}
    \caption{Illustration of the SB Flow and comparison with IMF. $\mathbb{Pbb}^\star$ is the SB, $(\hat{\Pbb}^n)_{n \in \nset}$ the IMF sequence and $(\hat{\Pbb}^s)_{s \geq 0}$ the flow we consider. See \Cref{sec:euclidean_flow} for the analysis of this example. }
    \label{fig:schrodinger_flow}
\end{wrapfigure}

which we assume is well-defined. Note that for any $s \geq 0$, $\Pbb^s$ is Markov while $\hat{\Pbb}^s$ is in the reciprocal class of $\Qbb$. 
Crucially, the only fixed point of \eqref{eq:flow_hat_p} is the SB. Indeed, let $\bar{\Pbb}$ be a fixed point of $(\Pbb^s)_{s \geq 0}$ in \eqref{eq:flow_hat_p}. Then, we have that $\bar{\Pbb} = \projsimpleR(\projM(\bar{\Pbb}))$. Hence, we get $\bar{\Pbb} = \projsimpleR(\projM(\dots(\projsimpleR(\projM(\bar{\Pbb})))\dots))$. Hence, under mild assumptions, $\bar{\Pbb}$ is a limit point of IMF and therefore $\bar{\Pbb}$ is the SB $\Pbb^\star$ given by \eqref{eq:dynamic_ot}, see \cite[Theorem 2]{peluchetti_diffusion_2023}.

Next, for any $\alpha \in (0,1]$, we define the following discretisation of \eqref{eq:flow_hat_p} called $\alpha$-IMF:
\begin{equation}
\label{eq:discretisation_flow_path_measure}
        \hat{\Pbb}^{n+1} = (1-\alpha) \hat{\Pbb}^n + \alpha \projsimpleR(\projM(\hat{\Pbb}^n)) ,
\end{equation}
and $\Pbb^n = \projM(\hat{\Pbb}^n)$. Note that for any $n \in \nset$, $\hat{\Pbb}^n \in \calR(\Qbb)$.
This recovers the IMF procedure \citep{shi2023DSBM,peluchetti_diffusion_2023} when $\alpha = 1$.  
Using the definition of the sequence $(\hat{\Pbb}^n)_{n \in \nset}$, it is possible to analyse the sequence $(\Pbb^n)_{n \in \nset}$ using the properties of the KL divergence as well as the Pythagorean identities derived in \citep{shi2023DSBM,peluchetti_diffusion_2023}. Using the characterisation of the SB as the only path measure that preserves $\pi_0$, $\pi_1$, and is both Markov and in the reciprocal class of $\Qbb$ (see e.g.~\citep[Theorem 2.12]{leonard2014survey}), we get the following result.

\begin{theorem}{Convergence of $\alpha$-IMF}{convergence}
Let $\alpha \in (0, 1]$ and $(\Pbb^n, \hat{\Pbb}^n)_{n \in \nset}$ defined by \eqref{eq:discretisation_flow_path_measure}. 
    Under mild assumptions, we have that 
    $\lim_{n \to +\infty} \Pbb^n = \Pbb^\star$, where $\Pbb^\star$ is the solution of the Schr\"odinger Bridge problem \eqref{eq:dynamic_ot}.
\end{theorem}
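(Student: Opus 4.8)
The plan is to track the KL divergence $\KL(\hat{\Pbb}^n \mid \Pbb^\star)$ as a Lyapunov function along the iteration \eqref{eq:discretisation_flow_path_measure} and show it decreases until the iterates reach the Schr\"odinger Bridge. First I would assemble the two Pythagorean identities from \citep{shi2023DSBM,peluchetti_diffusion_2023}. The Markovian projection identity states that for $\Pbb \in \calR(\Qbb)$ and any Markov measure $\Mbb$ (in particular $\Pbb^\star$, which is both Markov and reciprocal), one has $\KL(\Pbb \mid \Pbb^\star) = \KL(\Pbb \mid \projM(\Pbb)) + \KL(\projM(\Pbb) \mid \Pbb^\star)$, and symmetrically the reciprocal projection identity gives $\KL(\Mbb \mid \Pbb^\star) = \KL(\Mbb \mid \projsimpleR(\Mbb)) + \KL(\projsimpleR(\Mbb) \mid \Pbb^\star)$, using that $\Pbb^\star \in \calR(\Qbb)$. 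Chaining these, the composite map $\projsimpleR \circ \projM$ is nonexpansive in the sense $\KL(\projsimpleR(\projM(\hat{\Pbb}^n)) \mid \Pbb^\star) \leq \KL(\hat{\Pbb}^n \mid \Pbb^\star)$, with equality only when both projection residuals vanish, i.e.~when $\hat{\Pbb}^n$ is already Markov and $\projM(\hat{\Pbb}^n)$ is already reciprocal, which by the characterisation in \citep[Theorem 2.12]{leonard2014survey} forces $\hat{\Pbb}^n = \Pbb^\star$.

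Next I would handle the convex combination introduced by $\alpha < 1$. Writing $\hat{\Pbb}^{n+1} = (1-\alpha)\hat{\Pbb}^n + \alpha\, \projsimpleR(\projM(\hat{\Pbb}^n))$, I would invoke joint convexity of $\KL(\cdot \mid \Pbb^\star)$ in its first argument to bound
\begin{equation}
\KL(\hat{\Pbb}^{n+1} \mid \Pbb^\star) \leq (1-\alpha)\KL(\hat{\Pbb}^n \mid \Pbb^\star) + \alpha\,\KL(\projsimpleR(\projM(\hat{\Pbb}^n)) \mid \Pbb^\star).
\end{equation}
Substituting the contraction from the Pythagorean step, the right-hand side is at most $\KL(\hat{\Pbb}^n \mid \Pbb^\star)$, and more precisely the per-step decrease is $\alpha$ times the sum of the two nonnegative projection residuals. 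Since $\alpha \in (0,1]$ is fixed, the sequence $\KL(\hat{\Pbb}^n \mid \Pbb^\star)$ is nonincreasing and bounded below by zero, hence convergent; summing the decrements shows the residuals are summable and therefore tend to zero.

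From vanishing residuals I would conclude convergence. The residual $\KL(\hat{\Pbb}^n \mid \projM(\hat{\Pbb}^n)) \to 0$ says the iterates become asymptotically Markov, while $\KL(\projM(\hat{\Pbb}^n) \mid \projsimpleR(\projM(\hat{\Pbb}^n))) \to 0$ says $\Pbb^n = \projM(\hat{\Pbb}^n)$ becomes asymptotically reciprocal; each $\Pbb^n$ already has the correct marginals $\pi_0,\pi_1$ because $\projM$ preserves marginals and the reciprocal mixture fixes the endpoints. A measure that is simultaneously Markov, reciprocal w.r.t.~$\Qbb$, and matches both endpoint marginals is exactly $\Pbb^\star$, so any limit point of $(\Pbb^n)$ must be $\Pbb^\star$, and uniqueness of the limit then follows from monotonicity of the Lyapunov function.

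I expect the main obstacle to be the analytic rigour hidden in the phrase \emph{under mild assumptions}, namely passing from \emph{residuals tend to zero} to \emph{the sequence actually converges to $\Pbb^\star$}. This requires a compactness or lower-semicontinuity argument to upgrade the subsequential limit-point statement to convergence of the full sequence, and it requires that all the KL quantities and the projections be finite and well-defined along the flow (the excerpt already flags well-definedness as an assumption). A secondary subtlety is justifying the joint-convexity bound when the combined measure $\hat{\Pbb}^{n+1}$ must remain in $\calR(\Qbb)$: one should check that a convex combination of reciprocal measures is reciprocal, which holds because the reciprocal constraint $\Pbb = \Pbb_{0,1}\Qbb_{|0,1}$ is linear in $\Pbb$, so the mixture is the bridge-mixture of the mixed endpoint coupling. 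Making the closure of $\calR(\Qbb)$ and the continuity of $\projM$ precise is where the real work lies; the algebraic contraction and convexity steps are otherwise routine given the cited Pythagorean identities.
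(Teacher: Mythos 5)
Your proposal is correct and follows essentially the same route as the paper's proof: convexity of $\KL(\cdot \,|\, \Pbb^\star)$ in the first argument to handle the mixture, a Pythagorean-type contraction through the two projections yielding $\KL(\hat{\Pbb}^{n+1} | \Pbb^\star) \leq \KL(\hat{\Pbb}^n | \Pbb^\star) - \alpha \KL(\hat{\Pbb}^n | \projM(\hat{\Pbb}^n))$, summability of the residuals, and the characterisation of $\Pbb^\star$ as the unique Markov measure in $\calR(\Qbb)$ with marginals $\pi_0,\pi_1$. The only (cosmetic) differences are that the paper bounds the reciprocal-projection step via the data processing inequality, using $\KL(\projsimpleR(\Mbb)\,|\,\Pbb^\star) = \KL(\Mbb_{0,1}\,|\,\Pbb^\star_{0,1}) \leq \KL(\Mbb\,|\,\Pbb^\star)$, rather than your full reciprocal Pythagorean identity (which tracks an extra residual the argument never needs), and the analytic endgame you flag as the real work is carried out exactly as you anticipate: relative compactness from the bounded KL sublevel sets, lower semicontinuity of KL in both arguments, closedness of the Markov and reciprocal classes, and a characterisation lemma built on \citep[Theorem 2.12]{leonard2014survey} to identify every adherent point with $\Pbb^\star$.
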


\subsection{Discretisation and non-parametric loss}
We show here that $\alpha$-IMF is associated with an \emph{incremental} version of DSBM for $\alpha \in (0,1)$. 

\paragraph{Iterative Markovian Fitting.}  For any $v: \ [0,1] \times \rset^d \to \rset^d$, we introduce the loss function
\begin{equation}
\label{eq:loss_function}
   \Lnonparam(v, \Pbb)  = \int_0^1\Lnonparam_t(v_t, \Pbb) \rmd t = \int_0^1 \int_{(\rset^d)^3} \Big\| v_t(x_t) - \frac{x_1 - x_t}{1 - t} \Big\|^2 \rmd \Pbb_{0,1}(x_0,x_1) \rmd \Qbb_{t|0,1}(x_t | x_0, x_1) \rmd t ,
\end{equation}
where we recall that $\Qbb$ is induced by $(\sqrt{\vareps} \bfB_t)_{t \in [0,1]}$ for some $\vareps >0$. 
This loss was already considered in \citep{peluchettinon,lipman_flow_2022,liu2023I2SB,liu2022rectified,shi2023DSBM}.  We also define the path measure $\Pbb_v \in \pathmeas$ associated with  
\begin{equation}
\label{eq:mixture_bridge_sde}
    \rmd \bfX_t = v_t(\bfX_t) \rmd t + \sqrt{\vareps} \rmd \bfB_t , \qquad \bfX_0 \sim \pi_0 . 
\end{equation}
Consider first the sequence $(v^n)_{n \in \nset}$ defined by 
\begin{equation}
\label{eq:view_dsbm_non_param}
    v^{n+1} = \argmin_v \Lnonparam(v, \Pbb_{v^n}) .
\end{equation}
Using \Cref{def:markovian_proj}, we have that $\Pbb_{v^{n+1}} = \projM(\projsimpleR(\Pbb_{v^n}))$, which corresponds to $\Pbb^{n+1}$ in the IMF sequence. Therefore we have that $ \lim_{n \to +\infty} \Pbb_{v^n} = \Pbb^\star$ under mild assumptions \citep[Theorem 2]{peluchetti_diffusion_2023}. 

\paragraph{Functional gradient descent.} We now introduce a relaxation of \eqref{eq:view_dsbm_non_param}, where, instead of considering the $\argmin$, we update the vector field with one gradient step. To define this relaxation, we recall that for a functional $F: \ \mathcal{F} \to \rset$, where $\mathcal{F}$ is an appropriate function space, its functional derivative \citep{courant2008methods} with reference measure $\mu$ is denoted $\nabla_\mu F$ and is given for any $\phi \in \mathcal{F}$, when it exists, by 
\begin{equation}
\label{eq:functional_derivative}
 \textstyle \lim_{\gamma \to 0} (F(f + \gamma \phi) - F(f)) / \gamma = \int \langle \nabla_\mu F(f) (x), \phi(x) \rangle \rmd \mu(x).
\end{equation}
Initialised with $v_t^0(x) = (\mathbb{E}_{\hat{\Pbb}^0_{1|t}}[\bfX_1 \ | \ \bfX_t = x] - x)/(1-t)$,
where $\hat{\Pbb}^0 = (\pi_0 \otimes \pi_1) \Qbb_{|0,1}$, we now introduce a sequence of vector fields $(v^n)_{n \in \nset}$.  This corresponds to training a bridge matching model (see e.g.~\cite{liu2023I2SB,albergo2023stochastic}), giving $\Pbb_{v^0} = \projM(\hat{\Pbb}^0)$. Then for $n \in \nset$, %$t \in [0,1]$ and $x \in \rset^d$,
let 
\begin{equation}
\label{eq:update_non_parametric}
    v^{n+1}_t(x) = v^n_t(x) - \delta_n \nabla_{\mu^n} \Lnonparam_t(v^n_t, \Pbb_{v^n})(x) ,
\end{equation}
with $\delta_n > 0$ and $\mu^n \in \calP(\calC)$. The parameters $(\delta_n, \mu^n)_{n \in \nset}$ will be made explicit in \Cref{prop:shadow_sequence}.
We emphasize that, in contrast to the IMF procedure, in the online update \eqref{eq:update_non_parametric} we do not need to solve a Markovian projection problem at every step; instead we simply take a gradient step on the loss \eqref{eq:loss_function}.

\paragraph{Connection with $\alpha$-IMF. }
The following proposition shows that $(\Pbb_{v^n})_{n \in \nset}$ defined by \eqref{eq:update_non_parametric} is associated with $\alpha$-IMF defined in  \eqref{eq:discretisation_flow_path_measure}. 
\begin{proposition}{Non-parametric updates are $\alpha$-IMF}{shadow_sequence}
Let $\alpha \in (0,1]$, $(\Pbb^n, \hat{\Pbb}^n)_{n \in \nset}$ as in \eqref{eq:discretisation_flow_path_measure}, $\delta_n =  \alpha$ and $\mu^n = (1-\alpha) \hat{\Pbb}^n + \alpha \projsimpleR(\Pbb^n)$. Then, under mild assumptions, we have $\Pbb_{v^n} = \Pbb^n$ for all $n \in \nset$.
\end{proposition}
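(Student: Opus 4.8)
The plan is to prove $\Pbb_{v^n}=\Pbb^n$ by induction on $n$, reducing the claim at each step to an identity between two drift fields. Since both $\Pbb_{v^{n}}$ and $\Pbb^{n}=\projM(\hat\Pbb^{n})$ are diffusions on $[0,1]$ with the same diffusion coefficient $\sqrt{\vareps}$ and the same initial law $\pi_0$, it suffices to show that their drifts agree almost everywhere w.r.t.\ the relevant marginals. The base case $\Pbb_{v^0}=\projM(\hat\Pbb^0)=\Pbb^0$ is already recorded in the text. For the inductive step I assume $\Pbb_{v^n}=\Pbb^n$, so that in particular $v^n_t$ is the Markovian-projection drift of $\hat\Pbb^n$, i.e.\ $\E_{\hat\Pbb^n_{1|t}}[\bfX_1\mid\bfX_t=x]=(1-t)v^n_t(x)+x$, and $\projsimpleR(\Pbb_{v^n})=\projsimpleR(\Pbb^n)$.

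First I would compute the functional gradient in \eqref{eq:update_non_parametric}. Since $\Lnonparam_t(\cdot,\Pbb^n)$ is quadratic, integrating out $x_1$ in \eqref{eq:loss_function} against $\Pbb^n_{0,1}\,\Qbb_{t|0,1}$ identifies its unique minimiser as the drift $w^n_t(x):=(\E_{(\projsimpleR(\Pbb^n))_{1|t}}[\bfX_1\mid\bfX_t=x]-x)/(1-t)$ of $\projM(\projsimpleR(\Pbb^n))$, the full IMF update from $\Pbb^n$. Writing the directional derivative from \eqref{eq:functional_derivative} and re-expressing the resulting integral against the reference measure $\mu^n$ gives
\[
 \nabla_{\mu^n}\Lnonparam_t(v^n_t,\Pbb^n)(x) \;=\; \tfrac{\rmd (\projsimpleR(\Pbb^n))_t}{\rmd \mu^n_t}(x)\,\bigl(v^n_t(x)-w^n_t(x)\bigr),
\]
up to the constant coming from differentiating the square, which I fold into the normalisation of $\nabla_\mu$. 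The two ingredients here are the Radon--Nikodym factor induced by evaluating against $\mu^n$ rather than the natural marginal $(\projsimpleR(\Pbb^n))_t$, and the observation that the minimiser of the loss is exactly one IMF Markovian projection.

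Next I would compute the target, namely the drift of $\Pbb^{n+1}=\projM(\hat\Pbb^{n+1})$. Using that $\calR(\Qbb)$ is convex (a mixture of measures sharing the bridge $\Qbb_{|0,1}$ still factors through $\Qbb_{|0,1}$), $\hat\Pbb^{n+1}=(1-\alpha)\hat\Pbb^n+\alpha\,\projsimpleR(\Pbb^n)$ lies in $\calR(\Qbb)$ with endpoint law $(1-\alpha)\hat\Pbb^n_{0,1}+\alpha\,\Pbb^n_{0,1}$; pushing this through $\Qbb_{t|0,1}$ yields the mixture $(t,1)$-marginal $\hat\Pbb^{n+1}_{t,1}=(1-\alpha)\hat\Pbb^n_{t,1}+\alpha\,(\projsimpleR(\Pbb^n))_{t,1}$. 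The mixture-of-conditionals identity then expresses $\E_{\hat\Pbb^{n+1}_{1|t}}[\bfX_1\mid\bfX_t=x]$ as the density-weighted average of $\E_{\hat\Pbb^n_{1|t}}[\bfX_1\mid\bfX_t=x]$ and $\E_{(\projsimpleR(\Pbb^n))_{1|t}}[\bfX_1\mid\bfX_t=x]$, with weights proportional to $\hat\Pbb^n_t(x)$ and $(\projsimpleR(\Pbb^n))_t(x)$. Invoking the marginal-preservation property $\hat\Pbb^n_t=\Pbb^n_t$ and substituting the two conditional expectations in terms of $v^n_t$ and $w^n_t$, the $x$-terms cancel and the drift of $\projM(\hat\Pbb^{n+1})$ becomes
\[
 v^{\Pbb^{n+1}}_t(x)=\frac{(1-\alpha)\Pbb^n_t(x)\,v^n_t(x)+\alpha\,(\projsimpleR(\Pbb^n))_t(x)\,w^n_t(x)}{(1-\alpha)\Pbb^n_t(x)+\alpha\,(\projsimpleR(\Pbb^n))_t(x)} = v^n_t(x)-\tfrac{\alpha\,(\projsimpleR(\Pbb^n))_t(x)}{\mu^n_t(x)}\bigl(v^n_t(x)-w^n_t(x)\bigr).
\]

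Finally I would match the two computations. With $\mu^n=(1-\alpha)\hat\Pbb^n+\alpha\,\projsimpleR(\Pbb^n)$ its $t$-marginal is $\mu^n_t=(1-\alpha)\hat\Pbb^n_t+\alpha\,(\projsimpleR(\Pbb^n))_t=(1-\alpha)\Pbb^n_t+\alpha\,(\projsimpleR(\Pbb^n))_t$, which is exactly the denominator above; hence the gradient step \eqref{eq:update_non_parametric} with $\delta_n=\alpha$ turns $v^n_t$ into $v^{\Pbb^{n+1}}_t$, i.e.\ $v^{n+1}=v^{\Pbb^{n+1}}$ and $\Pbb_{v^{n+1}}=\projM(\hat\Pbb^{n+1})=\Pbb^{n+1}$, closing the induction. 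I expect the main obstacle to be the third step: correctly deriving the mixture-of-conditionals identity and combining it with marginal preservation so that the denominator coincides with $\mu^n_t$, which is what forces the specific choice of $\mu^n$. The remaining care is bookkeeping under the ``mild assumptions'': absolute continuity of the relevant time-$t$ marginals (so the Radon--Nikodym derivatives and conditional densities exist) and local Lipschitzness of the drifts (so the SDEs and Markovian projections are well-posed), together with tracking the constant from differentiating the square so that $\delta_n=\alpha$ matches on the nose.
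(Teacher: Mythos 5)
Your proposal is correct and takes essentially the same route as the paper's proof (\Cref{prop:shadow_sequence_appendix}): induction on the identity $v^n_t(x) = \left(\mathbb{E}_{\hat{\Pbb}^n}[\bfX_1 \mid \bfX_t = x] - x\right)/(1-t)$, the functional-derivative formula carrying the Radon--Nikodym factor $\rmd\,\projsimpleR(\Pbb^n)_t/\rmd\mu^n_t$, and the observation that a gradient step with $\delta_n = \alpha$ yields exactly the conditional-expectation drift of the mixture $\hat{\Pbb}^{n+1} = (1-\alpha)\hat{\Pbb}^n + \alpha\,\projsimpleR(\Pbb^n)$. The only cosmetic difference is that you compute the drift of $\projM(\hat{\Pbb}^{n+1})$ separately via the mixture-of-conditionals identity and then match it to the updated $v^{n+1}$, whereas the paper folds both computations into a single chain of equalities using the weight $\bar{\delta}^n_t(x) = \alpha\,(\rmd\,\projsimpleR(\Pbb^n)_t/\rmd\mu^n_t)(x)$.
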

Combining \Cref{thm:convergence} to \Cref{prop:shadow_sequence}, we get that $\lim_{n \to +\infty} \Pbb_{v^n} = \Pbb^\star$, i.e.~the non-parametric procedure converges to the SB. 

\section{$\alpha$-Diffusion Schr\"odinger Bridge Matching}
\label{sec:online_dsbm}

\paragraph{From DSBM to $\alpha$-DSBM.} In \Cref{sec:theoretical_results}, we introduced $\alpha$-IMF, a scheme which defines a sequence of path measures converging to the SB for all $\alpha \in (0,1]$. For $\alpha=1$, this corresponds to the IMF, whose practical DSBM implementation \citep{shi2023DSBM} requires repeatedly solving an expensive minimisation problem \eqref{eq:view_dsbm_non_param}. In contrast, for $\alpha < 1$ we are only required to take one (non-parametric) gradient step to update the vector field, see \eqref{eq:update_non_parametric}. This suggests the following practical implementation of $\alpha$-IMF, called $\alpha$-DSBM: First, pretrain a bridge matching model so that for $t \in [0,1]$ and $x \in \rset^d$, $v_t^\theta(x) = (\mathbb{E}_{\hat{\Pbb}^0_{1|t}}[\bfX_1 \ | \ \bfX_t = x] - x)/(1-t)$, where $\hat{\Pbb}^0 = (\pi_0 \otimes \pi_1) \Qbb_{|0,1} $. Then, perform the parametric version of the update \eqref{eq:update_non_parametric}:
\begin{equation}
\label{eq:update_alpha_imf}
    \theta \leftarrow \theta - \alpha \nabla_\theta \Lparam(\theta, \Pbb_{\bar{\theta}})  ; \  \Lparam(\theta, \Pbb) =  \int_0^1 \int_{(\rset^d)^3} \Big\| v_t^\theta(x_t) - \frac{x_1 - x_t}{1 - t} \Big\|^2 \rmd \Pbb_{0,1}(x_0, x_1) \rmd \Qbb_{|0,1}(x_t|x_0,x_1) \rmd t ,
\end{equation}
where $\Pbb_{\bar{\theta}}$ is a stop-gradient version of $\Pbb_{v^\theta}$. In \Cref{sec:param_to_non_parametric}, we give a theoretical justification for this parametric equivalent of \eqref{eq:loss_function} and \eqref{eq:update_non_parametric} by showing that, as $\alpha \to 0$, the update on the velocity fields $v^\theta$ given by \eqref{eq:update_alpha_imf} corresponds to a direction of descent for the non-parametric loss \eqref{eq:update_non_parametric} on average. Once again, we emphasize that if we replace the gradient step in \eqref{eq:update_alpha_imf} with the minimisation $\theta \leftarrow \argmin_\theta \Lparam(\theta, \Pbb_{\bar{\theta}})$, we  recover DSBM. 

\paragraph{Bidirectional online procedure.} As with DSBM, directly implementing \eqref{eq:update_alpha_imf} leads to error quickly accumulating, see \Cref{sec:forward_forward_forward_backward} for details. One way to circumvent this error accumulation issue is to consider a \emph{bidirectional} procedure, in which we train both a forward and a backward model. This is possible because the Markovian projection coincides for forward and backward path measures, see \cite[Proposition 9]{shi2023DSBM}. This suggests considering the loss $\Lnonparam(v^\vsra, v^\vsla, \Pbb^\vsra, \Pbb^\vsla)=\int^1_0 \Lnonparam_t(v^\vsra_t, v^\vsla_t, \Pbb^\vsra, \Pbb^\vsla)\rmd t$, which is an extension of \eqref{eq:loss_function}, where 
\begin{align}
\label{eq:loss_function_bidirectional}
    \Lnonparam_t(v^\vsra_t, v^\vsla_t, \Pbb^\vsra, \Pbb^\vsla) &=   \int_{(\rset^d)^3} \Big\| v_t^\vsra(x_t) - \frac{x_1 - x_t}{1 - t} \Big\|^2 \rmd \Pbb^\vsla_{0,1}(x_0, x_1) \rmd \Qbb_{t|0,1}(x_t|x_0,x_1)  \\
    &\qquad + \int_{(\rset^d)^3} \Big\| v_{1-t}^\vsla(x_t) - \frac{x_0 - x_t}{t} \Big\|^2 \rmd \Pbb^\vsra_{0,1}(x_0, x_1) \rmd \Qbb_{t|0,1}(x_t|x_0,x_1). 
\end{align}
Similarly to \eqref{eq:mixture_bridge_sde}, we define 
$\Pbb_{v^\vsra}, \Pbb_{v^\vsla}$, associated with $(\bfX_t)_{t \in [0,1]}$ and $(\bfY_{1-t})_{t \in [0,1]}$ respectively, which are defined by forward and backward SDEs
\begin{equation}
\label{eq:mixture_bridge_sde_forward_backward}
    \textrm{(fwd):} \  \rmd \bfX_t = v_t^\vsra(\bfX_t) \rmd t + \sqrt{\vareps} \rmd \bfB_t , \ \bfX_0 \sim \pi_0 , \; \textrm{(bwd):} \  \rmd \bfY_t = v_t^\vsla(\bfY_t) \rmd t + \sqrt{\vareps} \rmd \bfB_t , \ \bfY_0 \sim \pi_1 . 
\end{equation}
Similarly to \eqref{eq:update_non_parametric}, we define non-parametric updates for any $n \in \nset$, $t \in [0,1]$ and $x \in \rset^d$
\begin{equation}
\label{eq:update_non_parametric_fb}
    (v^{n+1,\vsra}_t(x),v^{n+1, \vsla}_t(x))  = \left(v^{n,\vsra}_t(x),v^{n, \vsla}_t(x)\right) - \delta_n \nabla_{\mu^n} \Lnonparam_t\left(v^{n, \vsra}_t(x),v^{n, \vsla}_t(x), \Pbb_{v^{n, \vsra}}, \Pbb_{v^{n, \vsla}}\right)(x).
\end{equation}
We have the following proposition which ensures our bidirectional procedure is still valid and that the results of \Cref{prop:shadow_sequence} still hold. 
\begin{proposition}{Bidirectional updates}{shadow_sequence_bidirectional}
Let $\alpha \in (0,1]$. For any $n \in \nset$, define $(\Pbb^n, \hat{\Pbb}^n)_{n \in \nset}$ by \eqref{eq:discretisation_flow_path_measure}. Then,
under mild assumption and assuming that $\delta_n =  \alpha$ and $\mu^n = (1-\alpha) \hat{\Pbb}^n + \alpha \projsimpleR(\Pbb^n)$, we have that for any $n \in \nset$, $\Pbb_{v^{n,\vsra}} = \Pbb_{v^{n,\vsla}} =  \Pbb^n$. 
\end{proposition}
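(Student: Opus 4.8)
The plan is to reduce Proposition~\ref{prop:shadow_sequence_bidirectional} to the already-established Proposition~\ref{prop:shadow_sequence} by exploiting the symmetry between the forward and backward descriptions of a Markov path measure. The key structural fact, borrowed from \cite[Proposition 9]{shi2023DSBM}, is that the Markovian projection of a reciprocal-class measure is unique and admits \emph{both} a forward and a backward It\^{o} representation: a single Markov measure $\Mbb \in \calM$ is simultaneously generated by the forward SDE with drift $v^\vsra$ and by the (time-reversed) backward SDE with drift $v^\vsla$. Consequently, the two summands in the bidirectional loss \eqref{eq:loss_function_bidirectional} are genuinely decoupled: the first term depends only on $v^\vsra$ and on the coupling $\Pbb^\vsla_{0,1}$ used to draw bridges, while the second depends only on $v^\vsla$ and on $\Pbb^\vsra_{0,1}$. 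I would first make this decoupling explicit by writing $\nabla_{\mu^n}\Lnonparam_t$ as a pair whose components are the functional derivatives of each summand separately.

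The second step is to run the argument of Proposition~\ref{prop:shadow_sequence} on each coordinate in parallel, using an inductive hypothesis that is the conjunction of three claims at step $n$: $\Pbb_{v^{n,\vsra}} = \Pbb^n$, $\Pbb_{v^{n,\vsla}} = \Pbb^n$, and hence $\Pbb_{v^{n,\vsra}} = \Pbb_{v^{n,\vsla}}$. The base case is the bidirectional analogue of the initialisation in \eqref{eq:update_non_parametric}: both $v^{0,\vsra}$ and $v^{0,\vsla}$ are the forward and backward drifts of $\projM(\hat{\Pbb}^0)$, so both induced measures equal $\Pbb^0$. For the inductive step I would substitute the hypothesis $\Pbb_{v^{n,\vsra}} = \Pbb_{v^{n,\vsla}} = \Pbb^n$ into $\mu^n = (1-\alpha)\hat{\Pbb}^n + \alpha\,\projsimpleR(\Pbb^n)$, observe that the reference measure is then \emph{the same} for both the forward and the backward update (because $\projsimpleR$ only reads off the endpoint coupling $\Pbb^n_{0,1}$, which is common to the two representations), and then invoke the single-direction computation of Proposition~\ref{prop:shadow_sequence} verbatim for each component. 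This yields $\Pbb_{v^{n+1,\vsra}} = \projM((1-\alpha)\hat{\Pbb}^n + \alpha\,\projsimpleR(\Pbb^n)) = \Pbb^{n+1}$, and the identical statement for $v^{n+1,\vsla}$ since the time-reversal of $\Pbb^{n+1}$ has backward drift exactly the minimiser of the second summand.

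The delicate part is checking that the forward and backward drifts obtained by the two gradient steps really parametrise \emph{the same} path measure rather than two different Markov processes that merely share marginals. This is where the Markovian-projection coincidence is doing the essential work, and I would spell it out: the time-reversal formula for diffusions guarantees that if a forward drift $v^\vsra$ is the $\Lnonparam_t$-minimiser against coupling $\Pbb_{0,1}$, then the corresponding backward drift of the \emph{same} measure is exactly the $\Lnonparam_t$-minimiser of the second summand against the \emph{same} $\Pbb_{0,1}$, so the two updates are consistent by construction at every step. One must also verify that the gradient step with $\delta_n = \alpha$ reproduces the convex combination \eqref{eq:discretisation_flow_path_measure} in the bidirectional setting; this is immediate once the decoupling is established, because each summand is quadratic in its own velocity field with minimiser given by the relevant conditional expectation, so a step of size $\alpha$ from the previous drift lands at the $\alpha$-interpolated conditional expectation, which is precisely the drift of $(1-\alpha)\hat{\Pbb}^n + \alpha\,\projsimpleR(\Pbb^n)$. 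I expect the bookkeeping around time-reversal (mapping $v_{1-t}^\vsla$ and the $x_0 - x_t$ versus $x_1 - x_t$ targets correctly) to be the only genuine source of potential error; everything else follows from Proposition~\ref{prop:shadow_sequence} applied twice.
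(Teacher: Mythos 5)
Your proposal is correct and follows essentially the same route as the paper, which gives no separate proof of this proposition but derives it exactly as you do: decouple the two summands of \eqref{eq:loss_function_bidirectional}, apply the argument of \Cref{prop:shadow_sequence} (with its density-ratio-weighted convex combination, which your appeal to that proposition ``verbatim'' correctly inherits) to each velocity field against the common coupling $\Pbb^n_{0,1}$, and invoke \cite[Proposition 9]{shi2023DSBM} to conclude that the forward and backward drifts parametrise the same Markovian projection. Your explicit treatment of the time-reversal bookkeeping and of why the two updates yield one path measure rather than two marginally-equal processes is in fact more detailed than what the paper records.
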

In \Cref{sec:forward_forward_forward_backward}, we show that in the Gaussian setting the bidirectional procedure \eqref{prop:shadow_sequence_bidirectional} does not accumulate error when the vector field is approximated, while the unidirectional one \eqref{eq:update_non_parametric} does. 

\paragraph{Vector field parameterisation.} Contrary to existing procedures \citep{shi2023DSBM,peluchetti_diffusion_2023,liu2022rectified}, we  do not parameterise $v^\vsra$ and $v^\vsla$ using two separate networks. Instead, we consider an additional input $s \in \{0,1\}$ such that $v_\theta(1, \cdot) \approx v^\vsra$ and $v_\theta(0, \cdot) \approx v^\vsla$. This allows us to substantially reduce the number of parameters in the model. The conditioning on $s$ in the network is detailed in \Cref{sec:experimental_details}. Before stating our full algorithm in \Cref{alg:online_DSBM_general}, we introduce a batched parametric version of \eqref{eq:loss_function_bidirectional}. For ease of notation, we write $\mathrm{Interp}_t$ for the operation corresponding to sampling from $\Qbb_{t|0,1}$, i.e. 
\begin{equation}
\label{eq:interpolation_brownian_motion}
    \mathrm{Interp}_t(\bfX_0, \bfX_1, \bfZ) = (1-t)\bfX_0 + t \bfX_1 + \sqrt{\vareps(1-t)t} \bfZ . 
\end{equation}
We are now ready to introduce the batched parametric version of \eqref{eq:loss_function_bidirectional}.
For a given batch of inputs $\bfX_0^{1:B}$ and $\bfX_1^{1:B}$, timesteps $t \sim \mathrm{Unif}([0,1])^{\otimes B}$, and $\bfX_t =\mathrm{Interp}_t(\bfX_0, \bfX_1, \bfZ)$ with $\bfZ \sim \gN(0, \Id)^{\otimes B} $, we compute the empirical forward and backward losses as
\begin{align}
\label{eq:empirical_loss}
   &\ell^{\vsra}(\theta; t, \bfX_1, \bfX_t)  = \frac{1}{B}\sum_{i=1}^B \| v_\theta\left(1, t^i, \bfX_t^i\right) - \left(\bfX_1^i - \bfX_t^i\right)/(1-t^i) \|^2,\\
    &\ell^{\vsla}(\theta; t, \bfX_0, \bfX_t)  = \frac{1}{B} \sum_{i=1}^B \Big\| v_\theta\left(0, 1 - t^i, \bfX_t^i\right) - \left(\bfX_0^i - \bfX_t^i\right)/t^i \Big\|^2 . 
\end{align}

We present the resulting $\alpha$-DSBM in \Cref{alg:online_DSBM_general}.
Note that in this algorithm, we maintain an Exponential Moving Average (EMA) of model parameters, as is common in diffusion models \citep{nichol2021improved}. During the finetuning stage, when we generate samples to use as model's inputs, we then have a choice of sampling using the EMA or non-EMA parameters. At test time, we always sample using the EMA parameters, as it is known to improve the visual quality \citep{song_improved_2020}. In \Cref{alg:online_DSBM_general}, we specify $\alpha \in (0,1]$ as a stepsize parameter. In practice, we use Adam~\citep{kingma:adam} for optimization, thus the choice of $\alpha$ is implicit and adaptive throughout the training. We refer to \Cref{sec:experimental_details} for more details on our experimental setup. 

\begin{algorithm}%[H]
\caption{$\alpha$-Diffusion Schr\"odinger Bridge Matching}
\label{alg:online_DSBM_general}
\begin{algorithmic}[1]
\STATE{\textbf{Input:} datasets $\pi_0$ and $\pi_1$, entropic regularisation $\vareps$, number of pretraining and finetuning steps $N_{\mathrm{pretraining}}$ and $N_{\mathrm{finetuning}}$, batch size $B$ and half batch size $b = B/2$, EMA decay $\gamma$, initial parameters $\theta$ and initial EMA parameters $\theta^{\scriptscriptstyle\text{EMA}} = \theta$, $\alpha \in (0,1]$}
\FOR{$n \in \{1, \dots, N_{\mathrm{pretraining}}\}$} 
\STATE Sample $(\bfX_0, \bfX_1) \sim (\pi_0 \otimes \pi_1)^{\otimes B}$
\STATE Sample $t \sim \mathrm{Unif}([0,1])^{\otimes B}$ and $\bfZ \sim \gN(0, \Id)^{\otimes B}$ and compute $\bfX_t = \mathrm{Interp}_t(\bfX_0, \bfX_1, \bfZ)$
\STATE Update $\theta$ with a gradient step on $\frac{1}{2}\left[\ell^\vsra\left(t^{1:b},\bfX_1^{1:b}, \bfX_t^{1:b}\right) + \ell^\vsla\left(t^{b+1:B},\bfX_0^{b+1:B}, \bfX_t^{b+1:B}\right)\right]$ 
\STATE Update EMA parameters: $\theta^{\scriptscriptstyle\text{EMA}} = \gamma \theta^{\scriptscriptstyle\text{EMA}} + (1 - \gamma) \theta $
\ENDFOR
\FOR{$n \in \{1, \dots, N_{\mathrm{finetuning}}\}$}
\STATE Sample $(\bfX_0, \bfX_1) \sim (\pi_0 \otimes \pi_1)^{\otimes b}$
\STATE Sample  $\hat \bfX_1$ solving forward SDE \eqref{eq:mixture_bridge_sde_forward_backward}-\textrm{(fwd)} with $v_{\theta ^{\scriptscriptstyle\text{EMA}} }(1,\cdot)$ or $v_\theta(1,\cdot)$ starting from $\bfX_0$
\STATE Sample  $\hat \bfX_0$  solving backward SDE \eqref{eq:mixture_bridge_sde_forward_backward}-\textrm{(bwd)} with $v_{\theta ^{\scriptscriptstyle\text{EMA}} }(0,\cdot)$ or $v_\theta(0,\cdot)$ starting from $\bfX_1$
\STATE Sample $t^\vsra \sim \mathrm{Unif}([0,1])^{\otimes b}$  and $\bfZ^\vsra \sim \gN(0,\Id)^{\otimes b} $ and compute $\bfX_t^\vsra = \mathrm{Interp}_{t^\vsra}(\hat \bfX_0, \bfX_1, \bfZ^\vsra)$
\STATE Sample $t^\vsla \sim \mathrm{Unif}([0,1])^{\otimes b}$  and $\bfZ^\vsla \sim \gN(0,\Id)^{\otimes b} $ and compute $\bfX_t^\vsla = \mathrm{Interp}_{t^\vsla}(\bfX_0, \hat\bfX_1, \bfZ^\vsla)$
\STATE Update $\theta$ with a gradient step on $\frac{1}{2}\left[\ell^\vsra(t^\vsra,\bfX_1, \bfX_t^\vsra) + \ell^\vsla(t^\vsla,\bfX_0, \bfX_t^\vsla)\right]$ and stepsize $\alpha$
\STATE Update EMA parameters: $\theta^{\scriptscriptstyle\text{EMA}} = \gamma \theta^{\scriptscriptstyle\text{EMA}} + (1 - \gamma) \theta $ 
\ENDFOR
\STATE{\textbf{Output:} $(\theta, \theta^{\scriptscriptstyle\text{EMA}})$ parameters of the finetuned model }
\end{algorithmic}
\end{algorithm}

\section{Related work}

\paragraph{Solving Schr\"odinger Bridge problems.}
Schr\"odinger Bridges
\citep{schrodinger1932theorie} have been thoroughly studied through the lens of probability theory \citep{leonard2014survey} and stochastic control \citep{dai1991stochastic,chen2020optimal}. They recently found applications in generative modeling and related fields leveraging recent advances in diffusion models \citep{debortoli2021diffusion,vargas2021solving,chen2021likelihood}.
Extensions of these methods to other machine learning problems and modalities were studied in 
\citep{shi2022conditional,thornton2022riemannian,liu2022deep,chen2024deep,tamir2023transport}. 
\cite{shi2023DSBM,peluchetti_diffusion_2023} concurrently introduced the DSBM algorithm which relies on a new procedure called IMF, while the DSB algorithm introduced in  \citep{debortoli2021diffusion} is based on the standard Iterative Proportional Fitting (IPF) scheme.  \cite{neklyudov2023action,neklyudov2023computational,liu2022deep} generalise DSBM to arbitrary cost functions, albeit at the expense of having to learn the reciprocal projection which is no longer given by a Brownian bridge. These new methodologies translate to improved numerics when compared to their IPF counterparts, but they remain reliant on alternating between the optimisation of two losses. Finally, we note that the Schr\"odinger Bridge flow and the $\alpha$-IMF procedure can be linked to the Sinkhorn flow recently introduced by \cite{karimi2024sinkhorn}, see \Cref{sec:connection_sinkhorn_flows} for a detailed discussion.

\paragraph{Sampling-free methodologies.}
Sampling-free methodologies have been proposed to solve OT related objectives. In \citep{liu2023I2SB, somnath2023aligned, diefenbacher2024improving, cao2024neural}, the authors perform one step of DSBM, i.e.~only consider the pretraining stage of our algorithm. While the obtained bridge might enjoy transport properties, it does not solve an OT problem. 
% In fact, in the context of \emph{paired} transfer tasks, solving the OT problem might be undesirable as argued in \citep{debortoli2023augmented,chen2024probabilistic}. Indeed, in that case, one wants to \emph{preserve} the coupling from the training dataset. 
In another line of work, \cite{pooladian_2023_multisample,tong_conditional_2023,tong2024simulationfree,eyring2023unbalancedness} have proposed simulation-free methods to minimise OT objectives. However, they target not the OT problem, but a minibatch version of it which coincides with OT only in the limit of infinite batch size, see \cite[Theorem 4.2]{pooladian_2023_multisample}. Other sampling-free methods to solve the Schr\"odinger Bridge problem include \cite{kim2023unpaired,gushchin2024entropic} both of which rely on adversarial losses to solve the OT problem. In \citep{debortoli2021diffusion,vargas2021solving,liu2022deep,shi2023DSBM,peluchetti_diffusion_2023} the adversarial objective is dropped and instead the procedure requires alternating objectives during training and is not sampling-free. We also highlight the line of work of \cite{korotin2023light,gushchin2024light} in which the Schr\"odinger Bridge potentials are parameterised with mixtures of Gaussians, allowing for fast training in small dimensions. Finally, recently \citet{deng2024variational} introduced a variation on Schr\"odinger Bridge for generative modeling, which while still not sampling-free, does not require learning a forward process.

\vspace{-.25cm}
\section{Experiments}
\label{sec:experiments}

In this section, we illustrate the efficiency of $\alpha$-DSBM on different tasks. In \Cref{sec:gaussian_case}, we compare $\alpha$-DSBM to DSBM in a Gaussian setting where the EOT coupling is tractable and show that $\alpha$-DSBM recovers the solution faster than DSBM. In \Cref{sec:image_xps}, we illustrate the scalability of our method through a range of unpaired image translation experiments. 

\subsection{Gaussian case}
\label{sec:gaussian_case}

% This is new compared to NeurIPS May submission!!!

We compare $\alpha$-DSBM to DSBM in the Gaussian setting where $\pi_0 = \gN(0, \sigma_0^2 \Id)$, $\pi_1 = \gN(0, \sigma_1^2 \Id)$ and $\Qbb$ is associated with $(\sqrt{\vareps} \bfB_t)_{t \in [0,1]}$ with $\sqrt \vareps = 0.5$. In this case, the EOT coupling is $\gN(0, \Sigma_\star)$, with $\Sigma_\star$ given by 
\begin{equation}
    \Sigma_\star = \left( \begin{matrix} \sigma_0^2 \Id & \sigma_\star^2 \Id \\ \sigma_\star^2 \Id & \sigma_1^2 \Id \end{matrix} \right) ,~\textup{where~~}\sigma_\star^2 = (1/2)((\sigma_0^2 \sigma_1^2 + \vareps^2)^{1/2} - \vareps),
\end{equation}
with $\Id$ being a $d \times d$ identity matrix. We consider $d=50$, $\sigma_0 = \sigma_1 =1$, resulting in $\sigma_\star^2 \approx 0.88$. To showcase the robustness of $\alpha$-DSBM, we consider the initial coupling $\Pbb_{0,1}$, where $(\bfX_0, \bfX_1) \sim \Pbb_{0,1}$, $\bfX_0 \sim \gN(0, \Id)$, $\bfX_1 = -\bfX_0$, and let $\hat{\Pbb}^0 = \Pbb_{0,1} \Qbb_{|0,1}$. In this setting, the base model, i.e. bridge matching, significantly underestimates the true covariance $\sigma_\star^2$, as shown in \Cref{fig:gaussian_case_antithetic}. Additionally, the figure illustrates that online finetuning approaches the true solution faster than the original iterative DSBM finetuning. For the latter, we can set how often we alternate between updating the forward and backward networks, and as this frequency increases, the behaviour approaches that of the online finetuning.

\begin{wrapfigure}{l}{0.4\textwidth}
\centering
    \includegraphics[width=0.4\textwidth]{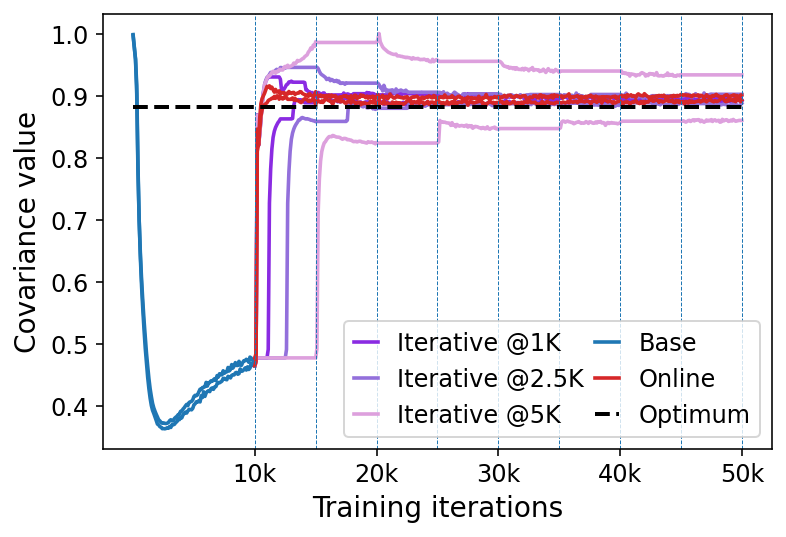}
    \caption{Evolution of the covariance during online and iterative DSBM finetuning for forward and backward networks. The finetuning starts after 10K steps of training a bridge matching model. For the iterative case, we alternate between forward and backward updates with varying frequencies, i.e. changing after 1K, 2.5K and 5K steps. }
    \label{fig:gaussian_case_antithetic}
\end{wrapfigure}

\begin{figure}[htbp]
    \centering 
    \subfloat{\includegraphics[width=0.45\textwidth, valign=t]{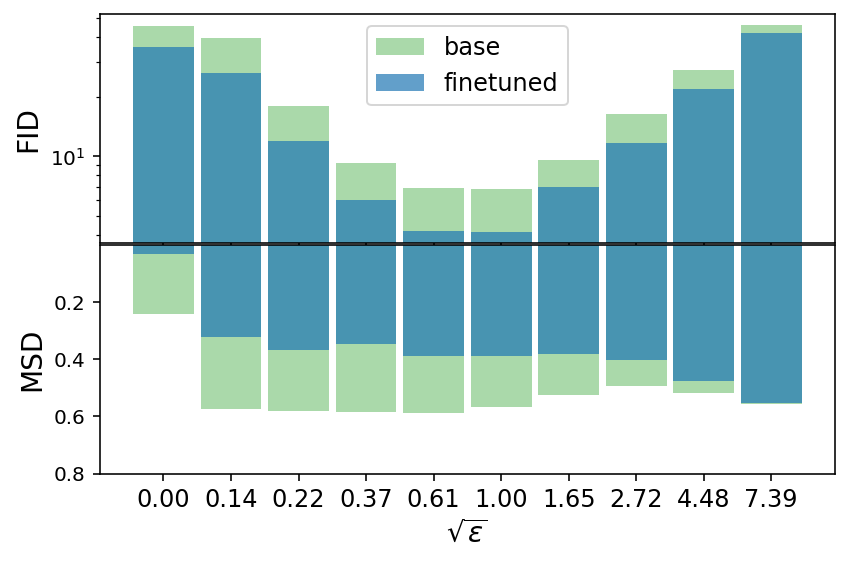 }}\hfill
    \subfloat{\includegraphics[width=0.5\textwidth, valign=t]{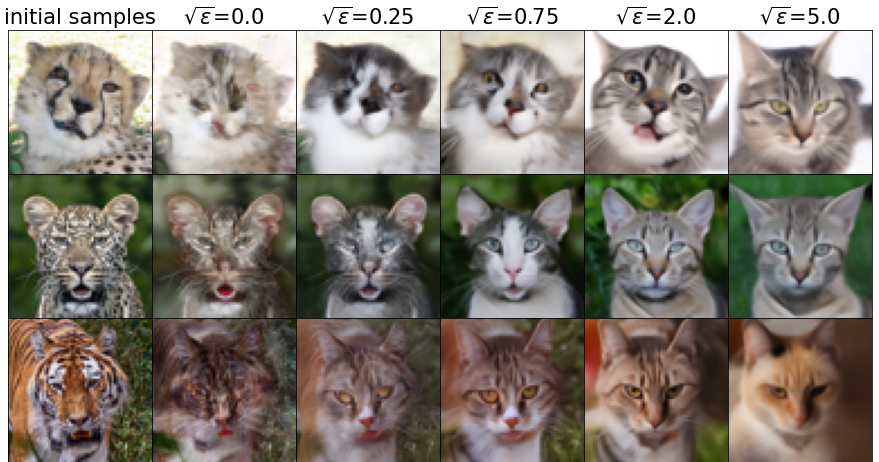}}\hfill
    \caption{\textbf{Left}: FID and Mean Squared Distance (MSD) on EMNIST to MNIST translation before and after finetuning with different values of $\vareps$. \textbf{Right}: AFHQ-64 samples after the finetuning.  For both, we use a bidirectional model with online finetuning. More results are in \Cref{sec:experimental_details_mnist} and \ref{sec:experimental_details_afhq}.} 
    \label{fig:mnist_sigma_sweep_bars}
\end{figure}

\subsection{Image datasets}
\label{sec:image_xps}

Similarly to \cite{shi2023DSBM}, we apply our method to image translation problems, such as MNIST digits to EMNIST letters~\citep{mnist, cohen2017emnist}, and Wild to Cat domains from the Animal Faces-HQ (AFHQ) dataset~\citep{choi2020starganv2}, downsampled to 64 $\times$ 64 and 256 $\times$ 256 resolutions. 

The whole training procedure can be framed as a two-stage process: first, we train a base model on the true data samples, performing bridge matching~\citep{peluchettinon,albergo_building_2023,lipman_flow_2022,liu2023I2SB}, and then we finetune this model. 
We compare models that combine different vector field parameterisations (two networks vs.~one bidirectional net), finetuning methods (iterative vs.~online), and sample generation strategies during the finetuning stage. 
% For the latter, we have the choice of generating training samples either using EMA or non-EMA parameters. 

Following the established practice~\citep{choi2020starganv2}, we evaluate our models using FID \citep{heusel2017gans} for visual quality, and mean squared distance (MSD) or LPIPS ~\citep{zhang2018unreasonable} for alignment. It is important to note that for image translation tasks at hand, FID scores are not ideal, as FID was designed for natural RGB images, which is not the case for MNIST.  It is also not well suited for small sample sizes as it is the case with AFHQ, where the test set in each domain has fewer than 500 examples. Thus quantitative results in Table \ref{table:mnist_afhq} should be interpreted cautiously, and we recommend a visual inspection of samples to complement these quantitative measures, especially for the AFHQ models. Samples from the models along with the training and evaluation protocols are given in Appendix \ref{sec:experimental_details}.

Compared to the iterative DSBM, our online finetuning $\alpha$-DSBM reduces the number of tunable hyperparameters, i.e.~inner and outer iterations, refresh rate and the size of the cache for storing generated samples. This simplifies implementation and makes the algorithm more practical. The primary remaining hyperparameter, the variance of a Brownian motion $\vareps$,  requires careful tuning as it influences the trade-off between the visual quality and alignment, as was also observed in ~\cite{shi2023DSBM}. An appropriate $\vareps$ needs to balance the two: setting $\vareps$ too low results in poor visual quality, while high values of $\vareps$ cause poorly aligned and oversmoothed samples. 
\Cref{fig:mnist_sigma_sweep_bars} illustrates how FID and MSD metrics vary with $\vareps$ for the case of MNIST. Additionally, it demonstrates the impact of $\vareps$ on the generated samples for the AFHQ-64 model.
% To aid interpretation of these quantitative results, Figures~\ref{fig:mnist_sigma_sweep_samples} and \ref{fig:afhq_sigma_sweep} in Appendix \ref{sec:experimental_details} provide samples for models with different $\vareps$ values for both MNIST and AFHQ tasks.

\begin{table}[]
 \centering
    \begin{tabular}{lllll}
    \multirow{2}{*}{Method} & \multicolumn{2}{c}{EMNIST $\rightarrow$ MNIST} & \multicolumn{2}{c}{AFHQ-64 Wild $\rightarrow$ Cat} \\
    \cmidrule(lr){2-3}\cmidrule(lr){4-5}
                            & FID         & MSD         & FID        & LPIPS       \\
       \midrule
      DSBM* & 10.59 & 0.375 & --  &  -- \\
      \midrule
      Pretrained two-networks model & 6.02 & 0.564 & 25.97 & 0.589 \\
      (a) iterative finetuning  &  5.25\std{0.15}  &  0.345\std{0.001} & 25.41\std{0.84} & 0.485\std{0.003}    \\
      (b) online finetuning &  4.28\std{0.07}  &  0.368\std{0.001} & 28.752\std{1.191} & 0.487\std{0.003}    \\
      (c) online finetuning without EMA & 4.23\std{0.171}  & 0.361\std{0.002}  & 32.665\std{0.647} &  0.445\std{0.002} \\
      \midrule
      Pretrained bidirectional model & 6.33 & 0.572 & 29.44 & 0.584 \\
      (d) online finetuning &  4.39\std{0.09}  &  0.387\std{0.003} & 26.579\std{0.434} & 0.482\std{0.001}  \\  
      (e) online finetuning without EMA   &  4.57\std{0.17}  &  0.369\std{0.003} & 30.638\std{1.023} & 0.451\std{0.002} \\  
      \bottomrule
    \end{tabular}
    \vspace{5pt}
    \caption{Results of image translation between EMNIST and MNIST, and AFHQ 64$\times$64 between Wild and Cat domains. DSBM* results are from \cite{shi2023DSBM}. Our reimplementation of DSBM corresponds to row (a). For MNIST and AFHQ models, we used $\vareps=1$ and $\vareps=0.75^2$, respectively. Each finetuning run was done with 5 random seeds, and we report mean scores ± standard deviation.}
    \label{table:mnist_afhq}
\end{table}

\begin{figure*}
    \centering
    % FWD: cat -> wild
    \begin{minipage}[t]{0.48\textwidth}  % Adjust width to fit all three images and caption
        \centering
        \begin{minipage}[t]{0.19\textwidth} 
            \includegraphics[width=\linewidth]{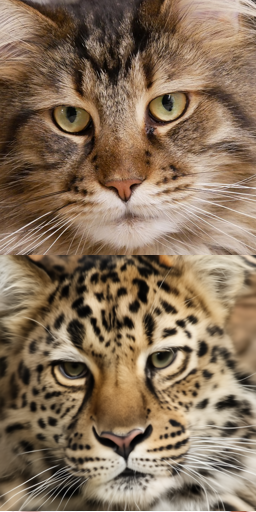}
        \end{minipage}
        \begin{minipage}[t]{0.19\textwidth}
            \includegraphics[width=\linewidth]{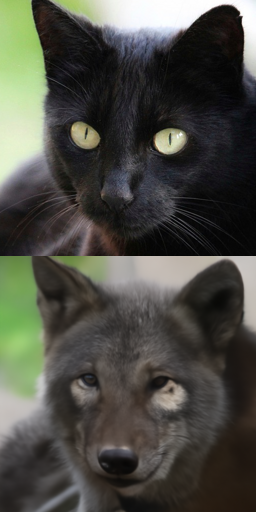}
        \end{minipage}
        \begin{minipage}[t]{0.19\textwidth}
            \includegraphics[width=\linewidth]{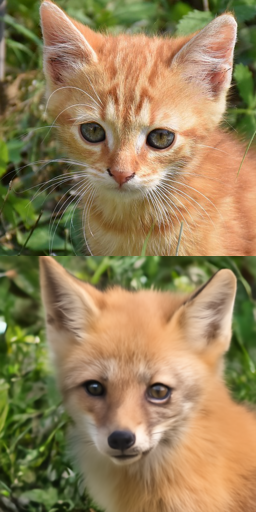}
        \end{minipage}
        \begin{minipage}[t]{0.19\textwidth}
            \includegraphics[width=\linewidth]{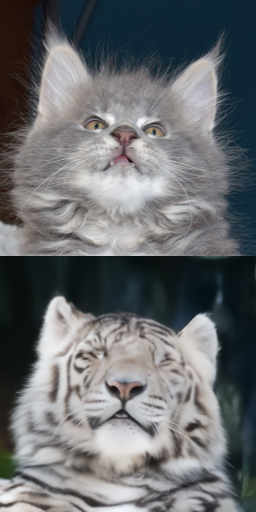}
        \end{minipage}
        \begin{minipage}[t]{0.19\textwidth}
            \includegraphics[width=\linewidth]{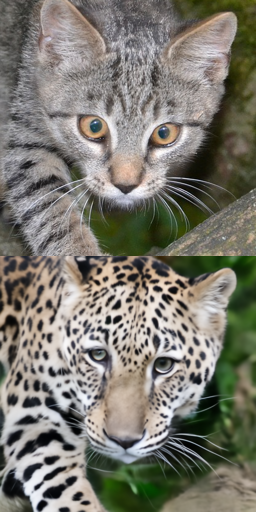}
        \end{minipage}
        \caption*{(a) Cat $\rightarrow$ Wild} 
    \end{minipage}
    \hfill
    % BWD: wild -> cat
    \begin{minipage}[t]{0.48\textwidth}  % Adjust width to fit all three images and caption
        \centering
        \begin{minipage}[t]{0.19\textwidth} 
            \includegraphics[width=\linewidth]{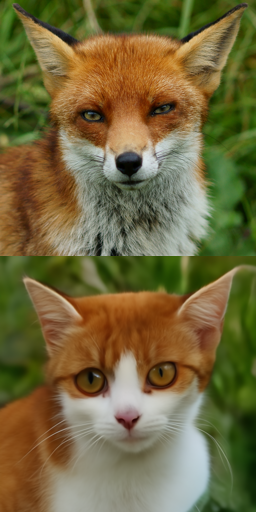}
        \end{minipage}
        \begin{minipage}[t]{0.19\textwidth}
            \includegraphics[width=\linewidth]{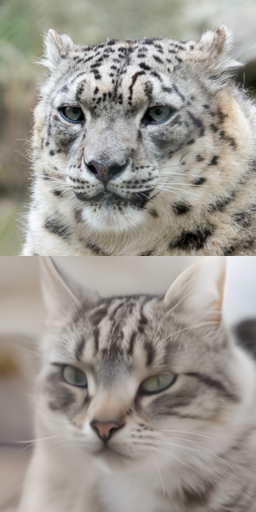}
        \end{minipage}
        \begin{minipage}[t]{0.19\textwidth}
            \includegraphics[width=\linewidth]{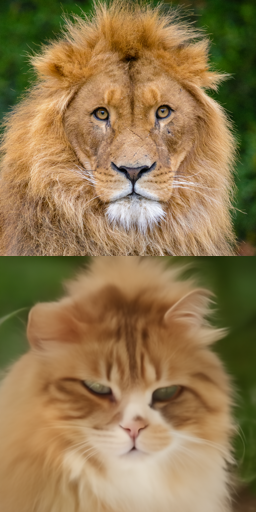}
        \end{minipage}
        \begin{minipage}[t]{0.19\textwidth}
            \includegraphics[width=\linewidth]{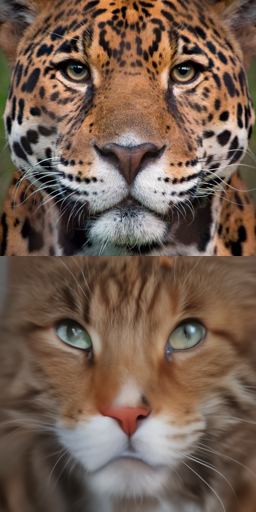}
        \end{minipage}
        \begin{minipage}[t]{0.19\textwidth}
            \includegraphics[width=\linewidth]{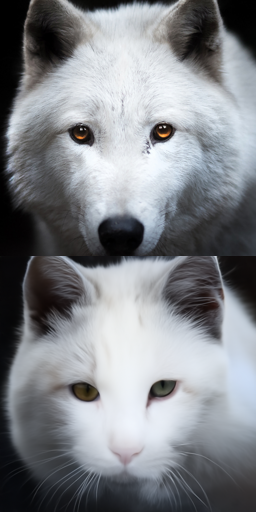}
        \end{minipage}
        \caption*{(b) Wild $\rightarrow$ Cat} 
    \end{minipage}
\caption{Online DSBM transfer results on AFHQ 256$\times$ 256 dataset between Cat and Wild domains. Top row---initial samples, bottom row---transferred samples.  } 
\end{figure*}

\section{Discussion}

In this paper we have introduced $\alpha$-Diffusion Schr\"odinger  Bridge Matching ($\alpha$-DSBM), a new methodology to solve Entropic Optimal Transport problems. $\alpha$-DSBM is an improved version of DSBM, which does not require training multiple DDM-type models. 
%Instead, the forward and backward objectives appearing in DSBM are minimised jointly using samples from the previous training iteration.
We have shown that a non-parametric version of this method recovers the Schr\"odinger Bridge (SB). In addition, $\alpha$-DSBM is easier to implement than existing SB methodologies while exhibiting similar performance. We illustrated the efficiency of our algorithm on a variety of unpaired transfer tasks. 

While $\alpha$-DSBM solves one of the most critical limitations of DBSM, namely the alternative optimisation, several issues remain to be addressed in order for the method to scale comparably to generative DDMs. In particular, the method is not sampling-free, as during training it requires sampling from the model from the previous iteration to obtain the training data for the current iteration. While it seems difficult to derive a completely sampling-free method to solve SB problems without resorting to the Minibatch OT approximation, there is still room for improvement. 
% As our methodology can be interpreted as an online finetuning procedure, a promising area of research is the study of the links between our procedure and Reinforcement Learning finetuning of diffusion models as in \cite{lee2023aligning,black2023training,fan2024reinforcement}. 
% Finally, while our theoretical results ensure the convergence of a non-parametric version of the algorithm, the convergence of the parametric methodology remains an open problem.
% , even though the experimental evidence is promising. 
\newpage
%%%%%%%%%%%%%%%%%%%%%%%%%%%%%%%%%%%%%%%%%%%%%%%%%%%%%%%%%%%%

\bibliographystyle{apalike}
\bibliography{main.bbl}

\newpage

\appendix

\section{Appendix organisation}

The supplementary material is organised as follows. First in \Cref{sec:euclidean_flow}, we analyze an Euclidean counterpart to the $\alpha$-IMF sequence identified in \Cref{sec:theoretical_results} and the associated flow. In \Cref{sec:optimality_interpolation}, we show that the Markovian projection can be recovered as the parameterisation of the vector field that minimises the accumulation of errors, extending the results of \cite{chen2024probabilistic}. 
    Theoretical results are gathered in \Cref{sec:theoretical_results_appendix}. In particular in \Cref{sec:non_parametric_convergence} we show that the proposed non-parametric method coincides with the $\alpha$-IMF and prove the convergence of the $\alpha$-IMF. In \Cref{sec:param_to_non_parametric}, we show the connection between the non-parametric and the parametric updates. In \Cref{sec:background_dsbm}, we provide more background on DSBM and propose an extension of the DSBM methodology. Consistency losses similar to \citep{daras2024ambient,debortoli2024target} are proposed in \Cref{sec:consistency_schrodinger_bridge}. Model stitching procedures are described in \Cref{sec:model_stitching}. We comment on extended related work in \Cref{sec:extended_related_work}. In particular we draw connections with Sinkhorn flows \citep{karimi2024sinkhorn}, Reinforcement Learning policies, Expectation-Maximisation schemes following \citep{brekelmans2023schrodinger} and comment on finetuning of diffusion models. In \Cref{sec:forward_forward_forward_backward}, we investigate the accumulation of bias in a Gaussian setting and compare forward-forward and forward-backward methods. In \Cref{sec:preconditioning_loss}, we derive the preconditioning of loss following the principles of \citep{karras2022elucidating} in the case of bridge matching. Additional results and experimental details are presented in \Cref{sec:experimental_details}.

\section{Euclidean flow and iterative procedure}
\label{sec:euclidean_flow}

In this section, we study a simplified counterpart of the Schr\"odinger flow and of DSBM in a Euclidean setting. The goal of this section is to draw some conclusions in the Euclidean case which also remain true empirically when analyzing the Schr\"odinger Bridge problem.

We consider the set $\mathsf{A}_1 = \ensembleLigne{(x,y) \in \rset^2}{y \ge x}$ and the set $\mathsf{A}_2 = \ensembleLigne{(x,y) \in \rset^2}{y \leq 0}$. Loosely  speaking, one can identify $\mathsf{A}_1$ with the reciprocal class $\mathcal{R}(\mathbb{Q})$ and $\mathsf{A}_2$ with the set of Markov path measures $\projM$. In that case, we have that for any $(x,y) \in \rset^2$, $\mathrm{proj}_{\mathsf{A}_1}((x,y)) = ((x+y)/2,(x+y)/2)$ if $(x,y) \notin \mathsf{A}_1$ and otherwise, $\mathrm{proj}_{\mathsf{A}_1}((x,y)) = (x,y)$. In addition, we have that for any $(x,y) \in \rset^2$, $\mathrm{proj}_{\mathsf{A}_2}((x,y)) = (x,0)$ if $(x,y) \notin \mathsf{A}_2$ and $\mathrm{proj}_{\mathsf{A}_2}((x,y)) = (x,y)$ otherwise. We consider the following flow $(x_t,y_t)_{t \geq 0}$ given by 
\begin{equation}
    \partial_t (x_t,y_t) = \mathrm{proj}_{\mathsf{A}_1}(\mathrm{proj}_{\mathsf{A}_2}((x_t,y_t))) - (x_t,y_t) . 
\end{equation}
Let $(x_0,y_0) \notin \mathsf{A}_1$ and $(x_0,y_0) \notin \mathsf{A}_2$. Denote $T$ the explosion time of $(x_t,y_t)$, i.e.~for any $t \geq T$ we have that $(x_t,y_t) = \infty$, where $\rset^2 \cup \{ \infty \}$ is the one-point compactification of $\rset^2$. Finally, denote $\tau \leq T$ such that for any $t \in [0, \tau]$, $(x_t, y_t) \not \in \mathsf{A}_1$ and $(x_t, y_t) \not \in \mathsf{A}_2$. Then, we have 
\begin{equation}
    \partial_t (x_t,y_t) = (-x_t/2, x_t/2 - y_t) . 
\end{equation}
Hence, we have that $x_t = x_0 \exp[-t/2]$ for any $t \in [0, \tau]$ and $y_t = x_0 \exp[-t/2] + (x_0 \exp[-t/2])^2 (y_0 - x_0)/x_0^2 $. Therefore, we get that $\tau = T = +\infty$ and we have that for any $t \geq 0$
\begin{equation}
    x_t = x_0 \exp[-t/2] , \qquad y_t = x_t +x_t^2 (y_0 - x_0)/x_0^2  . 
\end{equation}
Hence, $((x_t, y_t))_{t \geq 0}$ converges exponentially fast to $(0,0)$ with rate $1/2$. 

We now investigate the rate of convergence of the alternate projection scheme, i.e.~the Euclidean equivalent of DSBM. 
We define $((x_n, y_n))_{n \in \nset}$ such that for any $n \in \nset$, 
\begin{equation}
    (x_{n+1}, y_{n+1}) = \mathrm{proj}_{\mathsf{A}_1}(\mathrm{proj}_{\mathsf{A}_2}((x_n,y_n))) = (x_n/2, 0) . 
\end{equation}
Hence, we get that $x_n = x_0 2^{-n}$ and therefore $((x_n, y_n))_{n \in \nset}$ converges exponentially fast to $(0,0)$. Note that this procedure corresponds to a discretisation of the flow $((x_t, y_t))_{n \in \nset}$ with stepsize $\alpha =1 $. 

More generally, we define for any $\alpha \in (0, 1]$,  $((x_n^\alpha, y_n^\alpha))_{n \in \nset}$ such that for any $n \in \nset$, 
\begin{equation}
    (x_{n+1}^\alpha, y_{n+1}^\alpha) = \alpha \mathrm{proj}_{\mathsf{A}_1}(\mathrm{proj}_{\mathsf{A}_2}((x_n^\alpha,y_n^\alpha))) + (1-\alpha)  (x_{n}^\alpha, y_{n}^\alpha) .
\end{equation}
Hence, we get that $x_n = x_0 2^{-n}$ and therefore $((x_n, y_n))_{n \in \nset}$ converges exponentially fast to $(0,0)$. It can be shown that for any $n \in \nset$, $x_n^\alpha =x_0^\alpha (1 - \alpha /2)^n$ and in addition, we have that 
\begin{equation}
     y_n^\alpha = (1-\alpha)^n y_0^\alpha + \alpha x_0^\alpha  \sum_{k=0}^{n-1} (1-\alpha)^k (1-\alpha/2)^{n-k} .
\end{equation}
Therefore, we get that 
\begin{equation}
    y_n = (1-\alpha)^{n}y_0^\alpha + 2 (1 - (1 - \alpha/(2-\alpha))^n)(1-\alpha/2)^n x_0^\alpha . 
\end{equation}

We now analyse the complexity of the different discretisations assuming that the cost of discretising the flow with stepsize $\alpha \in (0,1]$ is $C^\alpha$. In that case in order to reach the threshold value $\vareps$, i.e.~$\abs{x_n^\alpha} \leq \vareps$, we get a total cost $C^\alpha_n = O(\log(1/\vareps)C^\alpha/\log(1/(1-\alpha/2)))$, where we have neglected the terms that do not depend on $\log(1/\vareps)$. Hence, if $C^\alpha$ is constant then the choice $\alpha = 1$ is the best possible one in the range $\alpha \in (0,1]$. Otherwise, one has to consider the ratio $C^\alpha / \log(1/(1-\alpha/2))$, where the lower is the better. The flow procedure and the iterative based one are presented in \Cref{fig:schrodinger_flow}.

Based on this simplified Euclidean experiment, we draw some conclusions which also remain true in our setting, see \Cref{sec:experimental_details} for more experimental details. First, we have that different discretisations of the flow yield different convergence rates. Large stepsizes incur faster convergence. This suggests to choose $\alpha =1$. However, if the cost of choosing $\alpha =1$ is too high then one might turn to alternative schemes with $\alpha \in (0,1)$ assuming that $C^\alpha < C^1$ in that case. To draw a parallel with our setting, in the case of DSBM (case $\alpha =1$), we need to solve the projection subproblem  at each step which incurs a great cost. On the other hand, one step of the online algorithm only requires sampling once from the model and performing one gradient step.
 
\section{Minimisation of errors and Markovian projection}
\label{sec:optimality_interpolation}
For a given non-Markovian (stochastic) interpolant process (see definition below), there exist an infinite number of Markov processes admitting the same marginals  \citep{albergo_building_2023}. In this section, when it is well-defined, we show that the Markovian projection corresponds to the process which minimises an error measure (defined further) in case one has access to the oracle of $x_t \mapsto \mathbb{E}[\bfX_1 \ | \ \bfX_t = x_t]$.

\paragraph{Stochastic Interpolant.} We first start by recalling the framework of \cite{albergo_building_2023}. Consider a coupling $\Pi$ between $\pi_0$ and $\pi_1$, one builds a (stochastic) flow between $\pi_0$ and $\pi_1$ using the following interpolation procedure
\begin{equation}
\label{eq:interpolant}
    \bfX_t = \mathrm{Interp}_t(\bfX_0, \bfX_1, \bfZ) = \alpha_t \bfX_0 + \beta_t \bfX_1 + \gamma_t \bfZ , \quad (\bfX_0, \bfX_1) \sim \Pi , \quad \bfZ \sim \gN(0, \Id) ,
\end{equation}
where $\alpha_1  = \beta_0 = \gamma_0 = \gamma_1 = 0$ and $\alpha_0 = \beta_1 = 1$. This defines a non-Markovian process. We denote by $\pi_t$ the induced unconditional distribution of $\bfX_t$.
Let us now consider the Markov process $(\bfX_t^\vareps)$ given by 
\begin{equation}
\label{eq:stochastic_flow}
    \rmd \bfX_t^\vareps = \mathbb{E}\left[\dot{\alpha}_t \bfX_0 + \dot{\beta}_t \bfX_1 + (\dot{\gamma}_t - \vareps_t^2 /(2\gamma_t)) \bfZ \ | \ \bfX_t = \bfX_t^\vareps\right] + \vareps_t \rmd \bfB_t , \qquad \bfX_0^\vareps \sim \pi_0,
\end{equation}
where $(\bfB_t)_{t \in [0,1]}$ is a $d$-dimensional Brownian motion and $\vareps_t$ is an additional hyperparameter. It can then be shown that $(\bfX_t^\vareps)_{t \in [0,1]}$ satisfies that $\bfX_t^\vareps \sim \pi_t$ for all $t\in[0,1]$; see e.g.~\citep[Theorem 2.8, Corollary 2.10]{albergo2023stochastic}. Hence $(\bfX_t^\vareps)_{t \in [0,1]}$ is a (stochastic) flow mapping $\pi_0$ onto $\pi_1$. Note that $(\bfX_t^\vareps)_{t \in [0,1]}$ in \eqref{eq:stochastic_flow} can be rewritten as 
\begin{equation}
\label{eq:stochastic_flow_rewrite}
    \rmd \bfX_t^\vareps = (\dot{\alpha}_t/\alpha_t) \bfX_t^\vareps + \mathbb{E}\left[(\dot{\beta}_t - \beta_t \dot{\alpha}_t / \alpha_t) \bfX_1 + (\dot{\gamma}_t - \gamma_t \dot{\alpha}_t/\alpha_t -  \vareps_t^2 /(2\gamma_t)) \bfZ \ | \ \bfX_t = \bfX_t^\vareps\right] + \vareps_t \rmd \bfB_t.
\end{equation}

In the specific case where $\alpha_t = 1-t$, $\beta_t = t$ and $\gamma_t = \sigma_0 \sqrt{t(1-t)}$ then \eqref{eq:stochastic_flow_rewrite} becomes
\begin{equation}
\label{eq:interpolation_brownian_motion_appendix}
    \bfX_t = \mathrm{Interp}_t(\bfX_0, \bfX_1, \bfZ) = (1-t) \bfX_0 + t \bfX_1 + \sigma_0 \sqrt{t(1-t)} \bfZ.
\end{equation} 
This corresponds to the marginal distribution of the bridge associated with $(\sigma_0 \bfB_t)_{t \in [0,1]}$. In this case, \eqref{eq:stochastic_flow_rewrite} becomes
\begin{equation}
\label{eq:markov_projection}
    \rmd \bfX_t^\vareps = \mathbb{E}\left[(\bfX_1 - \bfX_t)/(1-t) | \bfX_t = \bfX_t^\vareps\right] \rmd t + \sqrt{2} \sigma_0 \rmd \bfB_t  
\end{equation}
for $\vareps_t^2 = (2\gamma_t)(\dot{\gamma}_t - \gamma_t \dot{\alpha}_t / \alpha_t)=2\sigma^2_0$. In \Cref{prop:choice_eps}, we will show that this choice of $(\vareps_t)_{t \in [0,1]}$ is optimal in some sense. 

Consider $(\hat{\bfX}_t^\vareps)_{t \in [0,1]}$ given by 
\begin{equation}
\label{eq:stochastic_flow_rewrite_approximate}
        \rmd \hat{\bfX}_t^\vareps = (\dot{\alpha}_t/\alpha_t) \hat{\bfX}_t^\vareps + (\dot{\beta}_t - \beta_t \dot{\alpha}_t / \alpha_t) \mathbb{E}[\bfX_1 \ | \ \bfX_t = \hat{\bfX}_t^\vareps] + (\dot{\gamma}_t - \gamma_t \dot{\alpha}_t/\alpha_t -  \vareps_t^2 /(2\gamma_t)) \hat{\bfZ}(t, \hat{\bfX}_t^\vareps)  + \vareps_t \rmd \bfB_t , 
\end{equation}
with $\hat{\bfX}_0^\vareps \sim \pi_0$ and 
where $\hat{\bfZ}(t, x)$ is an approximation of $\mathbb{E}[ \bfZ | \bfX_t = x]$. We have the following result. 

\begin{proposition}{Optimality and stochastic interpolant}{choice_eps}
Denote $\Pbb^\vareps$, respectively $\hat{\Pbb}^\vareps$ the path measures associated with $(\bfX_t^\vareps)_{t \in [0,1]}$ and $(\hat{\bfX}_t)_{t \in [0,1]}$ respectively. 
Consider $\ell(\vareps) = \mathrm{KL}( \Pbb^\vareps | \hat{\Pbb}^\vareps)$. Let $\vareps^\star = \argmin_\vareps \ell(\vareps)$. Then we have 
\begin{equation}
    (\vareps^\star_t)^2 = 2 \gamma_t \dot{\gamma}_t - 2 \gamma_t^2 \dot{\alpha}_t/\alpha_t .
\end{equation}
In particular, if $\alpha_t = t$, $\beta_t = 1-t$ and $\gamma_t = \sigma_0 \sqrt{t (1-t)}$, then $\vareps_t = \sqrt{2} \sigma_0$. The value $(\vareps_t^\star)_{t \in [0,1]}$ corresponds to Markovian projection when it is well defined. 
\end{proposition}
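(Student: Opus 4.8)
The plan is to evaluate $\ell(\vareps) = \KL(\Pbb^\vareps \mid \hat{\Pbb}^\vareps)$ in closed form via Girsanov's theorem and then minimise the resulting functional pointwise in time. First I would observe that the two diffusions $(\bfX_t^\vareps)$ in \eqref{eq:stochastic_flow_rewrite} and $(\hat{\bfX}_t^\vareps)$ in \eqref{eq:stochastic_flow_rewrite_approximate} share the same initial law $\pi_0$ and the same diffusion coefficient $\vareps_t$, differing only in their drifts. The decisive structural point is that both drifts use the \emph{exact} conditional expectation $\E[\bfX_1 \mid \bfX_t = \cdot]$ in the $\bfX_1$-term and the identical linear term $(\dot\alpha_t/\alpha_t)\,x$, so these contributions cancel in the drift difference; the drifts disagree only in the $\bfZ$-term, where $\Pbb^\vareps$ uses $\E[\bfZ \mid \bfX_t = \cdot]$ while $\hat{\Pbb}^\vareps$ uses the approximation $\hat{\bfZ}(t,\cdot)$. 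Writing $c_t = \dot\gamma_t - \gamma_t\dot\alpha_t/\alpha_t - \vareps_t^2/(2\gamma_t)$ for the common scalar multiplying this term, the pointwise drift difference is exactly $c_t\big(\E[\bfZ\mid\bfX_t=x] - \hat{\bfZ}(t,x)\big)$.

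Next I would invoke the Girsanov formula for the KL divergence between two diffusions with the same initial condition and diffusion coefficient, which yields
\begin{equation*}
\ell(\vareps) = \tfrac{1}{2}\int_0^1 \frac{c_t^2}{\vareps_t^2}\, \E_{\Pbb^\vareps}\big[\, \| \E[\bfZ \mid \bfX_t] - \hat{\bfZ}(t,\bfX_t)\|^2 \,\big]\,\rmd t .
\end{equation*}
The key simplification is the stochastic-interpolant marginal-preservation property (\citep[Corollary 2.10]{albergo2023stochastic}), which guarantees $\bfX_t^\vareps \sim \pi_t$ for every admissible $\vareps$. Consequently the time-$t$ expectation $a_t := \E_{\pi_t}\big[\|\E[\bfZ\mid\bfX_t] - \hat{\bfZ}(t,\bfX_t)\|^2\big]$ is independent of $\vareps$, so $\ell(\vareps) = \tfrac{1}{2}\int_0^1 a_t\, c_t^2/\vareps_t^2\,\rmd t$ and the minimisation decouples across time.

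It then remains to minimise, for each fixed $t$ with $a_t>0$, the scalar map $u \mapsto (A_t - u/(2\gamma_t))^2/u$ over $u = \vareps_t^2 > 0$, where $A_t = \dot\gamma_t - \gamma_t\dot\alpha_t/\alpha_t$. Expanding this as $A_t^2/u - A_t/\gamma_t + u/(4\gamma_t^2)$ and differentiating, the stationarity condition is $u^2 = 4\gamma_t^2 A_t^2$, giving $\vareps_t^2 = 2\gamma_t A_t = 2\gamma_t\dot\gamma_t - 2\gamma_t^2\dot\alpha_t/\alpha_t$, which is the claimed expression; the positive second derivative confirms it is the minimiser. Finally I would substitute the Brownian-bridge interpolation coefficients of \eqref{eq:interpolation_brownian_motion_appendix} to recover the value of $\vareps_t$ stated in the proposition and identify the minimiser with the Markovian projection \eqref{eq:markov_projection}.

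The main obstacle is making the Girsanov step rigorous: one must verify the integrability (Novikov-type) conditions ensuring $\ell(\vareps)<\infty$ and validity of the change of measure — this is where the "when it is well-defined" hypothesis is used — and one must justify that the marginal identity $\bfX_t^\vareps\sim\pi_t$ holds for the exact-drift process $\Pbb^\vareps$ \emph{uniformly} in $\vareps$, so that $a_t$ is genuinely $\vareps$-independent and the decoupled pointwise minimisation is legitimate. Granting these, the remaining calculus is elementary.
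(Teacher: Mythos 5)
Your proposal is correct and rests on the same key identity as the paper's own proof: the Girsanov computation giving $\KL(\Pbb^\vareps | \hat{\Pbb}^\vareps) \propto \int_0^1 \vareps_t^{-2}\bigl(\dot{\gamma}_t - \gamma_t\dot{\alpha}_t/\alpha_t - \vareps_t^2/(2\gamma_t)\bigr)^2\,\E[\Delta_t]\,\rmd t$ with $\Delta_t = \|\hat{\bfZ}(t,\bfX_t^\vareps) - \E[\bfZ \,|\, \bfX_t = \bfX_t^\vareps]\|^2$ (the factor $1/2$ you carry and the paper omits is immaterial). Where you diverge is the finishing step: you minimise the integrand pointwise in $u = \vareps_t^2$ by calculus, which obliges you to argue that the weight $\E[\Delta_t]$ is independent of $\vareps$ — a point you correctly supply via the marginal-preservation property $\bfX_t^\vareps \sim \pi_t$ — so that the minimisation legitimately decouples across $t$. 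The paper's proof is a one-liner that sidesteps this entirely: at $\vareps = \vareps^\star$ the scalar coefficient $\dot{\gamma}_t - \gamma_t\dot{\alpha}_t/\alpha_t - \vareps_t^2/(2\gamma_t)$ vanishes identically, so $\KL(\Pbb^{\vareps^\star} | \hat{\Pbb}^{\vareps^\star}) = 0$, and nonnegativity of the KL makes $\vareps^\star$ a global minimiser no matter how $\E[\Delta_t]$ varies with $\vareps$. Your convex minimisation in fact reproduces this — the pointwise minimum value is zero — so the two arguments agree; yours additionally buys uniqueness of the minimiser whenever $\E[\Delta_t] > 0$, at the cost of the extra decoupling justification, while the Novikov-type integrability issues you flag are likewise left implicit in the paper. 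One caveat on your last sentence: substituting $\alpha_t = 1-t$, $\beta_t = t$, $\gamma_t = \sigma_0\sqrt{t(1-t)}$ into the general formula gives $2\gamma_t\dot{\gamma}_t - 2\gamma_t^2\dot{\alpha}_t/\alpha_t = \sigma_0^2(1-2t) + 2\sigma_0^2 t = \sigma_0^2$, i.e.~$\vareps_t^\star = \sigma_0$, which is exactly the noise level of the Markovian projection of \Cref{def:markovian_proj} for the reference $(\sigma_0 \bfB_t)_{t \in [0,1]}$; the value $\sqrt{2}\sigma_0$ in the proposition statement (and the transposed roles of $\alpha_t$ and $\beta_t$ there) appears to be a typo in the paper rather than something your derivation should, or would, recover.
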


\begin{proof}
We have that for any 
\begin{equation}
 \KL(\Pbb^\vareps | \hat{\Pbb}^\vareps) = \int_0^1 \frac{1}{\vareps_t^2}(\dot{\gamma}_t - \gamma_t \dot{\alpha}_t/\alpha_t -  \vareps_t^2 /(2\gamma_t))^2 \mathbb{E}[\Delta_t] \rmd t,
\end{equation}
where $\Delta_t = \| \hat{\bfZ}(t, \bfX_t^\vareps) - \mathbb{E}[\bfZ \ | \ \bfX_t = \bfX_t^\vareps]\|^2$ and the expectation is w.r.t.~$\Pbb^\vareps$. We have that  $\KL(\Pbb^\vareps | \hat{\Pbb}^\vareps) =0 $ if $\vareps = \vareps^\star$, which concludes the proof.
\end{proof}

\Cref{prop:choice_eps} is related to \cite[Section 3.4]{chen2024probabilistic}. Therein it is noticed that, in the case of Augmented Bridge matching \citep{debortoli2023augmented}, the choice of $\vareps_t$ does not affect the joint distribution of $(\bfX_0^\vareps, \bfX_1^\vareps)$. The authors then select $(\vareps_t)$ so as to minimise an approximation error. They show that, in that case, they recover the F\"ollmer process.

We now show that \Cref{prop:choice_eps} can be further strengthened to establish that $\vareps^\star$ is also the optimal value if we interpolate between $\pi_s$ and $\pi_1$, or $\pi_0$ and $\pi_s$, for any $s \in [0,1]$ and $\pi_s$ the distribution of $\bfX_s$. Consider in this context for any $s, t \in [0,1]$ with $t \geq s$, $\gamma_t / \gamma_s \geq \alpha_t \geq \alpha_s$ the following interpolation model.
\begin{equation}
\label{eq:interpolation_with_s}
    \bfX_t = (\alpha_t/\alpha_s) \bfX_s + (\beta_t - \alpha_t \beta_s /\alpha_s) \bfX_1 + \sqrt{\gamma^2_t - \alpha_t^2 \gamma^2_s / \alpha_s^2} \bfZ ,
\end{equation}
where $\bfX_s \sim \pi_s$, $\bfX_1 \sim \pi_1$ and $\bfZ \sim \gN(0, \Id)$. Assume that $\alpha_t = 1-t$, $\beta_t = t$ and $\gamma_t = \sigma_0 \sqrt{t (1-t)}$ for any $t \in [0,1]$ then \eqref{eq:interpolation_with_s} corresponds to the Brownian bridge associated with $(\sigma_0 \bfB_t)_{t \in [0,1]}$ with endpoints $\bfX_s$ at time $s$ and $\bfX_1$ at time $1$.
We have the following proposition. 
\begin{proposition}{Stochastic interpolant with intermediate time points}{interpolation_property_intermediary_appendix}
Define $(\bfX_{t,s}^\vareps)_{t\in [s,1]}$ given by 
\begin{align}
\label{eq:stochastic_flow_rewrite_appendix}
    \rmd \bfX_{t,s}^\vareps &= (\dot{\alpha}_t/\alpha_t) \bfX_{t,s}^\vareps + \mathbb{E}\left[(\dot{\beta}_t - \beta_t \dot{\alpha}_t / \alpha_t) \bfX_1 \ | \ \bfX_t = \bfX_{t,s}^\vareps\right] \\
    & \qquad + \mathbb{E}\left[(\dot{\gamma}_{t,s} - \gamma_{t,s} \dot{\alpha}_t/\alpha_t -  \vareps_{t,s}^2 /(2\gamma_{t,s})) \bfZ \ | \ \bfX_t = \bfX_{t,s}^\vareps\right] + \vareps_{t,s} \rmd \bfB_t , \quad \bfX_{s,s}^\vareps \sim \pi_s ,
\end{align}
with $\gamma_{t,s} = \sqrt{\gamma_t^2 - \alpha_t^2 \gamma_s^2 / \alpha_s^2}$. Then for any $t  \in [s,1]$, $\bfX_{t,s}^\vareps$ and $\bfX_t$ defined by \eqref{eq:interpolation_with_s} have the same distribution.
\end{proposition}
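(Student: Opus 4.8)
The plan is to recognise the interpolation \eqref{eq:interpolation_with_s} as an ordinary stochastic interpolant in the sense of \eqref{eq:interpolant}, but posed on the restricted time interval $[s,1]$ and with rescaled coefficients, so that the argument behind \Cref{prop:choice_eps} and the underlying result of \cite{albergo2023stochastic} apply verbatim. Concretely, I would set $\tilde\alpha_t = \alpha_t/\alpha_s$, $\tilde\beta_t = \beta_t - \alpha_t\beta_s/\alpha_s$ and $\tilde\gamma_t = \gamma_{t,s} = (\gamma_t^2 - \alpha_t^2\gamma_s^2/\alpha_s^2)^{1/2}$, and treat $\bfX_s$ (distributed as $\pi_s$) as the left-endpoint variable, $\bfX_1$ as the right-endpoint variable, and $\bfZ$ as the Gaussian noise. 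Then \eqref{eq:interpolation_with_s} reads $\bfX_t = \tilde\alpha_t \bfX_s + \tilde\beta_t \bfX_1 + \tilde\gamma_t \bfZ$, which is exactly of the form \eqref{eq:interpolant} with time running over $[s,1]$ rather than $[0,1]$.

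First I would check that the boundary conditions required by the interpolant framework hold for the rescaled coefficients. At $t=s$ one has $\tilde\alpha_s = 1$, $\tilde\beta_s = 0$ and $\tilde\gamma_s = 0$; at $t=1$, using $\alpha_1 = \gamma_1 = 0$ and $\beta_1 = 1$, one gets $\tilde\alpha_1 = 0$, $\tilde\beta_1 = 1$ and $\tilde\gamma_1 = 0$. I would also record that the monotonicity constraints imposed in the statement are exactly what guarantees $\gamma_t^2 - \alpha_t^2\gamma_s^2/\alpha_s^2 \ge 0$, so that $\gamma_{t,s}$ is real and the interpolant is well defined on $[s,1]$.

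The key step is then a short algebraic computation showing that the generic drift SDE \eqref{eq:stochastic_flow_rewrite}, written for the coefficients $(\tilde\alpha_t, \tilde\beta_t, \tilde\gamma_t)$ and noise schedule $\vareps_{t,s}$, collapses term by term to \eqref{eq:stochastic_flow_rewrite_appendix}. The two identities that drive this are $\dot{\tilde\alpha}_t/\tilde\alpha_t = \dot\alpha_t/\alpha_t$ (the common factor $1/\alpha_s$ cancels in the ratio) and $\dot{\tilde\beta}_t - \tilde\beta_t \dot{\tilde\alpha}_t/\tilde\alpha_t = \dot\beta_t - \beta_t \dot\alpha_t/\alpha_t$, which follows by expanding $\dot{\tilde\beta}_t = \dot\beta_t - \dot\alpha_t \beta_s/\alpha_s$ and cancelling the $\beta_s/\alpha_s$ contributions. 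Consequently the coefficient multiplying $\bfX_{t,s}^\vareps$ is $\dot\alpha_t/\alpha_t$, the coefficient multiplying the conditional expectation of $\bfX_1$ is $\dot\beta_t - \beta_t\dot\alpha_t/\alpha_t$, and the $\bfZ$-coefficient is $\dot\gamma_{t,s} - \gamma_{t,s}\dot\alpha_t/\alpha_t - \vareps_{t,s}^2/(2\gamma_{t,s})$, matching \eqref{eq:stochastic_flow_rewrite_appendix} exactly.

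Having identified \eqref{eq:stochastic_flow_rewrite_appendix} as the interpolant SDE associated with $(\tilde\alpha_t,\tilde\beta_t,\tilde\gamma_t)$ on $[s,1]$ with initial law $\bfX_{s,s}^\vareps \sim \pi_s$, I would invoke \cite[Theorem 2.8, Corollary 2.10]{albergo2023stochastic} (as already used for \Cref{prop:choice_eps}) to conclude that $\bfX_{t,s}^\vareps$ has the same law as the interpolant $\bfX_t$ of \eqref{eq:interpolation_with_s} for every $t \in [s,1]$, which is the claim. I expect the main obstacle to be purely bookkeeping rather than algebra: verifying that the rescaled coefficients meet every regularity and boundary hypothesis of the cited theorem, in particular the behaviour of $\dot{\gamma}_{t,s}/\gamma_{t,s}$ as $t \downarrow s$ and $t \uparrow 1$, where $\gamma_{t,s} \to 0$. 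As an optional strengthening that connects this to $\pi_t$, I would note that if $(\bfX_s, \bfX_1)$ is taken to be the joint law induced by the original interpolant at times $s$ and $1$, then substituting $\bfX_s = \alpha_s \bfX_0 + \beta_s \bfX_1 + \gamma_s \bfZ'$ into \eqref{eq:interpolation_with_s} and using the identity $\alpha_t^2\gamma_s^2/\alpha_s^2 + \gamma_{t,s}^2 = \gamma_t^2$ shows that the marginal of $\bfX_t$ is exactly $\pi_t$, recovering the original interpolant marginals.
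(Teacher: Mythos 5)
Your proposal is correct, and it reaches the conclusion by a somewhat different route than the paper. The paper's proof works directly on the path: it differentiates \eqref{eq:interpolation_with_s} in $t$ with $(\bfX_s, \bfX_1, \bfZ)$ held fixed, inverts \eqref{eq:interpolation_with_s} to express $\bfX_s$ in terms of $\bfX_t$, and recombines to obtain the conditional-expectation drift $(\dot{\alpha}_t/\alpha_t)\bfX_t + (\dot{\beta}_t - \beta_t\dot{\alpha}_t/\alpha_t)\,\mathbb{E}[\bfX_1 \,|\, \bfX_t] + (\dot{\gamma}_{t,s} - \gamma_{t,s}\dot{\alpha}_t/\alpha_t)\,\mathbb{E}[\bfZ \,|\, \bfX_t]$, before concluding "similarly to \citep[Theorem 2.8, Corollary 2.10]{albergo2023stochastic}". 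You instead recognise \eqref{eq:interpolation_with_s} as an instance of the generic interpolant \eqref{eq:interpolant} with rescaled coefficients $\tilde\alpha_t = \alpha_t/\alpha_s$, $\tilde\beta_t = \beta_t - \alpha_t\beta_s/\alpha_s$, $\tilde\gamma_t = \gamma_{t,s}$ on $[s,1]$, verify the boundary conditions and the two cancellation identities $\dot{\tilde\alpha}_t/\tilde\alpha_t = \dot\alpha_t/\alpha_t$ and $\dot{\tilde\beta}_t - \tilde\beta_t\dot{\tilde\alpha}_t/\tilde\alpha_t = \dot\beta_t - \beta_t\dot\alpha_t/\alpha_t$ (both of which check out), and then apply \eqref{eq:stochastic_flow_rewrite} verbatim; your identities are exactly the content of the paper's differentiate-and-substitute computation, repackaged as coefficient matching. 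Your reduction buys reuse of the already-established general form with no new calculus, makes the well-posedness conditions ($\tilde\gamma_s = 0$, $\tilde\alpha_s = 1$, etc.) explicit, and makes transparent why $\vareps^\star$ in \Cref{prop:choice_eps_interpolation} is independent of $s$ — the loss has the same functional form with $\gamma_t$ replaced by $\gamma_{t,s}$; the paper's direct derivation is more self-contained and yields as a by-product the explicit expression of $\bfX_s$ in terms of $\bfX_t$. Both proofs defer the passage from the deterministic conditional-expectation ODE to the SDE with $\vareps_{t,s} \neq 0$ to the cited result of Albergo et al., and the endpoint degeneracy of $\dot{\gamma}_{t,s}/\gamma_{t,s}$ that you flag is glossed over by the paper as well, so it is not a gap relative to the paper's own standard; your closing remark recovering the marginals $\pi_t$ via $\alpha_t^2\gamma_s^2/\alpha_s^2 + \gamma_{t,s}^2 = \gamma_t^2$ is also correct, provided the coupling of $(\bfX_s,\bfX_1)$ is the one induced by the original interpolant.
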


\begin{proof}
We let $s \in [0,1]$ and $\bfX_1, \bfX_s \in \rset^d$. From \eqref{eq:interpolation_with_s}, we have directly that for any $t \in [s,1]$
\begin{equation}
    \rmd \bfX_t = [(\dot{\alpha}_t / \alpha_s) \bfX_s + (\dot{\beta}_t - \dot{\alpha}_t \beta_s / \alpha_s) \bfX_1 + \dot{\gamma}_{t,s} \bfZ] \rmd t ,
\end{equation}
where $\bfZ \sim \gN(0, \Id)$. In addition, rearranging \eqref{eq:interpolation_with_s}, we also have that 
\begin{equation}
    \bfX_s = (\alpha_s / \alpha_t) \bfX_t - (\alpha_s \beta_t / \alpha_t - \beta_s) \bfX_1 - \gamma_{t,s} (\alpha_s / \alpha_t) \bfZ .  
\end{equation}
Hence, by combining these two expressions, we get that 
\begin{equation}
    \rmd \bfX_t = [(\dot{\alpha}_t / \alpha_t) \bfX_t + (\dot{\beta}_t - \dot{\alpha}_t \beta_t / \alpha_s) \bfX_1 + (\dot{\gamma}_{t,s} - \gamma_{t,s}(\dot{\alpha}_t / \alpha_t)) \bfZ] \rmd t .
\end{equation}
It follows that $(\bfX_{t,s})_{t \in [s,1]}$ given by 
\begin{align}
\label{eq:stochastic_flow_determistic_rewrite_appendix}
    \rmd \bfX_{t,s} &= (\dot{\alpha}_t/\alpha_t) \bfX_{t,s} + (\dot{\beta}_t - \beta_t \dot{\alpha}_t / \alpha_t) \mathbb{E}[ \bfX_1 \ | \ \bfX_t = \bfX_{t,s}] \\
    & \qquad + (\dot{\gamma}_{t,s} - \gamma_{t,s} \dot{\alpha}_t/\alpha_t) \mathbb{E}[ \bfZ \ | \ \bfX_t = \bfX_{t,s}]  , \quad \bfX_{s,s} \sim \pi_s ,
\end{align}
is such that for any $t\in [s,1]$ the same distribution as $\bfX_t$ defined by \eqref{eq:interpolation_with_s}. Then, we conclude similarly to \citep[Theorem 2.8, Corollary 2.10]{albergo2023stochastic}.
\end{proof}

We now consider the following approximate version of \eqref{eq:stochastic_flow_rewrite_appendix}
\begin{align}
    \rmd \hat{\bfX}_{t,s}^\vareps &= (\dot{\alpha}_t/\alpha_t) \hat{\bfX}_{t,s}^\vareps + (\dot{\beta}_t - \beta_t \dot{\alpha}_t / \alpha_t) \mathbb{E}\left[\bfX_1 \ | \ \bfX_t = \hat{\bfX}_{t,s}^\vareps\right] \\
    &\qquad + (\dot{\gamma}_{t,s} - \gamma_{t,s} \dot{\alpha}_t/\alpha_t -  \vareps_{t,s}^2 /(2\gamma_{t,s})) \hat{\bfZ}(t, \bfX_{t,s}^\vareps) + \vareps_{t,s} \rmd \bfB_t , \quad \bfX_{s,s}^\vareps \sim \pi_s ,
\label{eq:stochastic_flow_rewrite_appendix_approximate}
\end{align}
Similarly to \Cref{prop:choice_eps} we consider the best choice of $\vareps$ to minimise the interpolation cost. 
\begin{proposition}{Optimality and stochastic interpolant}{choice_eps_interpolation}
Let $s \in [0,1]$. Denote $\Pbb^\vareps$, respectively $\hat{\Pbb}^\vareps$, the path measure associated with $(\bfX_{t,s}^\vareps)_{t \in [0,1]}$, respectively $(\hat{\bfX}_{t,s})_{t \in [0,1]}$. 
Consider $\ell(\vareps) = \mathrm{KL}( \Pbb^\vareps | \hat{\Pbb}^\vareps)$. Let $\vareps^\star = \argmin_\vareps \ell(\vareps)$. Then we have 
\begin{equation}
    (\vareps^\star_t)^2 = 2 \gamma_t \dot{\gamma}_t - 2 \gamma_t^2 \dot{\alpha}_t/\alpha_t .
\end{equation}
In particular, $\vareps^\star$ does not depend on $s \in [0,1]$ and for every $s_1, s_2 \in [0,1]$ with $s_1 \leq s_2$, we have that $(\bfX_{t,s_1}^\vareps)_{t \in [s_2, 1]}$ and $(\bfX_{t,s_2}^\vareps)_{t \in [s_2, 1]}$ coincide.
\end{proposition}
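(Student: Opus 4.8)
The plan is to mirror the proof of \Cref{prop:choice_eps}, exploiting the fact that the intermediate-time interpolation \eqref{eq:interpolation_with_s} has exactly the same algebraic structure as \eqref{eq:interpolant}, only with $\gamma_t$ replaced by $\gamma_{t,s}$ and the starting distribution replaced by $\pi_s$. First I would observe, by direct analogy with the computation in \Cref{prop:choice_eps}, that the two path measures $\Pbb^\vareps$ and $\hat{\Pbb}^\vareps$ associated with \eqref{eq:stochastic_flow_rewrite_appendix} and \eqref{eq:stochastic_flow_rewrite_appendix_approximate} differ only in the diffusion coefficient of the $\bfZ$ term, so by Girsanov's theorem their KL divergence takes the form
\begin{equation}
 \KL(\Pbb^\vareps | \hat{\Pbb}^\vareps) = \int_s^1 \frac{1}{\vareps_{t,s}^2}\bigl(\dot{\gamma}_{t,s} - \gamma_{t,s} \dot{\alpha}_t/\alpha_t -  \vareps_{t,s}^2 /(2\gamma_{t,s})\bigr)^2 \mathbb{E}[\Delta_t] \rmd t ,
\end{equation}
where $\Delta_t = \| \hat{\bfZ}(t, \bfX_{t,s}^\vareps) - \mathbb{E}[\bfZ \mid \bfX_t = \bfX_{t,s}^\vareps]\|^2$. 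This integrand is minimised (indeed set to zero) precisely when the coefficient in the parentheses vanishes, i.e.\ when $\vareps_{t,s}^2 = 2\gamma_{t,s}\dot{\gamma}_{t,s} - 2\gamma_{t,s}^2 \dot{\alpha}_t/\alpha_t$.

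The key computational step is then to show that, despite the explicit $s$-dependence of $\gamma_{t,s} = \sqrt{\gamma_t^2 - \alpha_t^2\gamma_s^2/\alpha_s^2}$, the quantity $2\gamma_{t,s}\dot{\gamma}_{t,s} - 2\gamma_{t,s}^2\dot{\alpha}_t/\alpha_t$ collapses to the $s$-free expression $2\gamma_t\dot{\gamma}_t - 2\gamma_t^2\dot{\alpha}_t/\alpha_t$. I would do this by noting $2\gamma_{t,s}\dot{\gamma}_{t,s} = \partial_t(\gamma_{t,s}^2) = \partial_t(\gamma_t^2 - \alpha_t^2\gamma_s^2/\alpha_s^2) = 2\gamma_t\dot{\gamma}_t - 2\alpha_t\dot{\alpha}_t\gamma_s^2/\alpha_s^2$, and combining with the subtracted term $2\gamma_{t,s}^2\dot{\alpha}_t/\alpha_t = 2(\gamma_t^2 - \alpha_t^2\gamma_s^2/\alpha_s^2)\dot{\alpha}_t/\alpha_t$; the two $\gamma_s^2/\alpha_s^2$ contributions, namely $-2\alpha_t\dot{\alpha}_t\gamma_s^2/\alpha_s^2$ and $+2\alpha_t^2(\dot{\alpha}_t/\alpha_t)\gamma_s^2/\alpha_s^2 = +2\alpha_t\dot{\alpha}_t\gamma_s^2/\alpha_s^2$, cancel exactly, leaving $(\vareps^\star_t)^2 = 2\gamma_t\dot{\gamma}_t - 2\gamma_t^2\dot{\alpha}_t/\alpha_t$ independent of $s$. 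This algebraic cancellation is the heart of the proposition and is what I expect to be the main (though routine) point to verify carefully, since it is exactly the structural reason the optimal noise schedule is intrinsic to the interpolant rather than tied to a particular starting time.

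Finally, for the coincidence of trajectories, I would argue as follows: since $\vareps^\star$ is the same for every starting time $s$, both $(\bfX_{t,s_1}^{\vareps^\star})_{t\in[s_2,1]}$ and $(\bfX_{t,s_2}^{\vareps^\star})_{t\in[s_2,1]}$ are solutions of the \emph{same} SDE drift-and-diffusion on $[s_2,1]$ given by \eqref{eq:stochastic_flow_rewrite_appendix} with the common $\vareps^\star$. By \Cref{prop:interpolation_property_intermediary_appendix} applied with the two choices, each process has marginal $\pi_t$ at time $t$, and in particular both have marginal $\pi_{s_2}$ at time $s_2$. Since they solve the same (marginal-preserving) SDE from the same initial law $\pi_{s_2}$ at time $s_2$, under the standing well-posedness / locally Lipschitz assumptions on the coefficients they induce the same path measure on $[s_2,1]$, which gives the stated coincidence. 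The only care needed here is to invoke the uniqueness (in law) of the SDE solution, which follows from the same mild regularity assumptions used throughout to guarantee the Markovian projection is well-defined.
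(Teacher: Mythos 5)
Your proof is correct and follows essentially the same route as the paper's: the Girsanov/KL computation carried over from \Cref{prop:choice_eps} yields $(\vareps_{t,s}^\star)^2 = 2\gamma_{t,s}\dot{\gamma}_{t,s} - 2\gamma_{t,s}^2\dot{\alpha}_t/\alpha_t$, and your cancellation of the $\gamma_s^2/\alpha_s^2$ terms via $2\gamma_{t,s}\dot{\gamma}_{t,s} = \partial_t(\gamma_{t,s}^2)$ is exactly the paper's combination of its second and third identities. Your final uniqueness-in-law argument for the coincidence of $(\bfX_{t,s_1}^{\vareps^\star})_{t\in[s_2,1]}$ and $(\bfX_{t,s_2}^{\vareps^\star})_{t\in[s_2,1]}$ (noting the $s$-dependent $\bfZ$-term in the drift vanishes at $\vareps^\star$) merely makes explicit what the paper leaves implicit in ``we can conclude.''
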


\begin{proof}
Similarly to \Cref{prop:choice_eps}, we get first that 
for any $s, t \in [0,1]$ with $s \leq t$
\begin{equation}
\label{eq:uno_interp_appendix}
    \vareps_{t,s}^\star = 2 \gamma_{t,s} \dot{\gamma}_{t,s} - 2 \gamma_{t,s}^2 \dot{\alpha}_t/\alpha_t .
\end{equation}
Second, we have that for any $s, t \in [0,1]$ with $s \leq t$
\begin{equation}
\label{eq:duo_interp_appendix}
    2 \dot{\gamma}_{t,s} \gamma_{t,s} = \dot{\gamma}^2_{t,s}  = 2 \dot{\gamma}_t \gamma_t - 2 \dot{\alpha}_t \alpha_t \gamma_s^2 / \alpha_s^2 . 
\end{equation}
Third, we have that 
\begin{equation}
\label{eq:tertio_interp_appendix}
    \gamma_{t,s}^2 \dot{\alpha}_t / \alpha_t = \gamma_t^2 \dot{\alpha}_t / \alpha_t - \dot{\alpha}_t \alpha_t \gamma_s^2 / \alpha_s^2 . 
\end{equation}
Combining \eqref{eq:uno_interp_appendix}, \eqref{eq:duo_interp_appendix} and \eqref{eq:tertio_interp_appendix}, we can conclude. 
\end{proof}

\section{Theoretical results}
\label{sec:theoretical_results_appendix}

In this section, we prove the main theoretical results of the paper. In \Cref{sec:non_parametric_convergence},
we first prove the convergence of the $\alpha$-IMF sequence, i.e.~we prove \Cref{thm:convergence}. 
Second, we show that the non-parametric updates \eqref{eq:update_non_parametric} correspond to the $\alpha$-IMF sequence, i.e.~we prove \Cref{prop:shadow_sequence}. In \Cref{sec:param_to_non_parametric}, we link the non-parametric updates to the parametric updates. 

\subsection{Non-parametric sequence and convergence}
\label{sec:non_parametric_convergence}

Let $\Qbb \in \mathcal{P}(\mathcal{C})$ be associated with $(\sqrt{\vareps} \bfB_t)_{t \in [0,1]}$, where $(\bfB_t)_{t \in [0,1]}$ is a $d$-dimensional Brownian motion and $\vareps > 0$.
In this section, we abuse notation and denote $\mathcal{P}(\mathcal{C})$ the set of \emph{path measures} which are not necessarily \emph{probability} path measures. In particular, we will consider $\Qbb \in \mathcal{P}(\mathcal{C})$  associated with $(\sqrt{\vareps} \bfB_t)_{t \in [0,1]}$ with $\Qbb_0 = \Leb$. In that case, the Kullback--Leibler divergence is still well-defined and we refer to \citep{leonard2014survey} for more details. 
We recall that we have defined $(\Pbb^n, \hat{\Pbb}^n)_{n \in \nset}$ for any $n \in \nset$ and $\alpha \in (0,1]$ by
\begin{equation}
\label{eq:discretisation_flow_path_measure_repeat}
        \Pbb^n = \projM(\hat{\Pbb}^n) , \qquad \hat{\Pbb}^{n+1} = (1-\alpha) \hat{\Pbb}^n + \alpha \projsimpleR(\projM(\hat{\Pbb}^n)) . 
\end{equation}
In addition, for any $n \in \nset$, $t \in [0,1)$ and $x \in \rset^d$ we have defined
\begin{equation}
\label{eq:update_non_parametric_appendix_2}
    v^{n+1}_t(x) = v^n_t(x) - \delta_n \nabla_{\mu^n} \Lnonparam_t(v^n_t, \Pbb_{v^n})(x) ,
\end{equation}
 where
\begin{align}\label{eq:locallosst}
    \Lnonparam_t(v_t, \Pbb)&=\frac{1}{2} \int_{(\rset^d)^3} \Big\| v_t(x_t) - \frac{x_1 - x_t}{1 - t} \Big\|^2 \rmd \Pbb_{0,1}(x_0,x_1) \rmd \Qbb_{t|0,1}(x_t | x_0, x_1) \\
    &=\frac{1}{2}  \int_{(\rset^d)^3} \Big\| v_t(x_t) - \frac{x_1 - x_t}{1 - t} \Big\|^2 \rmd \projsimpleR(\Pbb)_{1,t}.
\end{align}
We define $(\Pbb_{v^n})_{n \in \nset}$ associated with \eqref{eq:update_non_parametric_appendix}, where for any suitable vector field $v$, $\Pbb_v$ is associated with 
\begin{equation}
    \rmd \bfX_t = v_t(\bfX_t) \rmd t + \sqrt{\vareps} \rmd \bfB_t ,
\end{equation}
where $(\bfB_t)_{t \in [0,1]}$ is a $d$-dimensional Brownian motion.

In order to rigorously prove \Cref{prop:shadow_sequence_appendix} detailed further, we introduce $\calP_2(\calC)$, such that $\Pbb \in \calP_2(\calC)$ if $\Pbb \in \calP(\calC)$ and for 
\begin{equation}
    \int_{(\rset^d)^2} \{ \| x_0 \|^2 + \|x_1 \|^2 \} \rmd \Pbb_{0,1}(x_0, x_1) < +\infty .
\end{equation}
Note that if $\Pbb \in \calP_2(\calC)$ then we have that for any $t \in [0,1]$
\begin{equation}
    \int_{\rset^d} \| x_t \|^2 \rmd \projsimpleR(\Pbb)_t < +\infty . 
\end{equation}
In addition, we recall that $\phi \in \mathrm{L}^2(\mu)$ for $\mu \in \calP(\rset^d)$ if $\phi: \ \rset^d \to \rset^d$ and 
\begin{equation}
   \int_{\rset^d} \| \phi(x) \|^2 \rmd \mu(x) < +\infty .
\end{equation}
Finally, we define 
\begin{equation}
    \mathsf{A}_2 = \ensembleLigne{(\phi, \Pbb)}{\Pbb \in \calP_2(\calC), \ \phi \in \mathrm{L}^2(\Pbb)} .  
\end{equation}
Then for any $t \in [0,1)$, we define $\Lnonparam_t : \ \mathsf{A}_2 \to \rset$ given for any $(v, \Pbb) \in \mathsf{A}_2$ by \eqref{eq:locallosst}.

\begin{proposition}{Non-parametric updates are $\alpha$-IMF}{shadow_sequence_appendix}
Let $\alpha \in (0,1]$, $(\Pbb^n, \hat{\Pbb}^n)_{n \in \nset}$ as in \eqref{eq:discretisation_flow_path_measure}, $\delta_n =  \alpha$ and $\mu^n = (1-\alpha) \hat{\Pbb}^n + \alpha \projsimpleR(\Pbb^n)$. Assume that for any $n \in \nset$, $\Pbb_{v^n}$ is well-defined. Then, for any $n \in \nset$,  $\Pbb_{v^n} = \Pbb^n$.
\end{proposition}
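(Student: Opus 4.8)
The plan is to argue by induction on $n$, showing at each step that the single functional-gradient update reproduces exactly the Markovian projection of the $\alpha$-IMF iterate, i.e. $\Pbb_{v^n} = \Pbb^n = \projM(\hat{\Pbb}^n)$. The base case is immediate: the initialisation $v^0_t(x) = (\mathbb{E}_{\hat{\Pbb}^0_{1|t}}[\bfX_1 \mid \bfX_t = x] - x)/(1-t)$ is, by \Cref{def:markovian_proj}, the drift of $\projM(\hat{\Pbb}^0)$, so $\Pbb_{v^0} = \projM(\hat{\Pbb}^0) = \Pbb^0$. For the inductive step the key starting observation is the algebraic identity $\mu^n = (1-\alpha)\hat{\Pbb}^n + \alpha\projsimpleR(\Pbb^n) = \hat{\Pbb}^{n+1}$, which follows directly from \eqref{eq:discretisation_flow_path_measure} once we use $\Pbb^n = \projM(\hat{\Pbb}^n)$. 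Since the reciprocal class is convex (a convex combination of measures of the form $\Pbb_{0,1}\Qbb_{|0,1}$ is again of this form, and both $\hat{\Pbb}^n$ and $\projsimpleR(\Pbb^n)$ lie in $\calR(\Qbb)$), we have $\hat{\Pbb}^{n+1} \in \calR(\Qbb)$, so $\projM(\hat{\Pbb}^{n+1})$ is well-defined with drift $v^\star_t(x) = (\mathbb{E}_{\hat{\Pbb}^{n+1}_{1|t}}[\bfX_1 \mid \bfX_t = x] - x)/(1-t)$. It therefore suffices to show that the update \eqref{eq:update_non_parametric} yields $v^{n+1} = v^\star$.

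The main computation is the functional gradient. Writing $r^n = \projsimpleR(\Pbb^n)$ and using the induction hypothesis $\Pbb_{v^n} = \Pbb^n$ inside \eqref{eq:locallosst}, I would expand $\Lnonparam_t(v^n_t + \gamma\phi, \Pbb^n)$ to first order and match it against the definition \eqref{eq:functional_derivative} of the functional derivative, obtaining
\[
\nabla_{\mu^n}\Lnonparam_t(v^n_t, \Pbb^n)(x) = \big(v^n_t(x) - w^n_t(x)\big)\,\tfrac{\mathrm{d} r^n_t}{\mathrm{d}\mu^n_t}(x), \qquad w^n_t(x) := \mathbb{E}_{r^n_{1|t}}\big[(\bfX_1 - x)/(1-t) \mid \bfX_t = x\big].
\]
The Radon--Nikodym factor appears precisely because the reference measure $\mu^n_t$ differs from the marginal $r^n_t$ carried by the loss; absolute continuity $r^n_t \ll \mu^n_t$ holds since $\mu^n_t = (1-\alpha)\hat{\Pbb}^n_t + \alpha r^n_t$. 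By the induction hypothesis $v^n_t$ is the drift of $\projM(\hat{\Pbb}^n)$, while $w^n_t$ is the drift of $\projM(r^n)$.

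The second ingredient is a mixture decomposition of $v^\star$. Since $\hat{\Pbb}^{n+1}_{1,t} = (1-\alpha)\hat{\Pbb}^n_{1,t} + \alpha r^n_{1,t}$, linearity of the conditional expectation under a mixture (divide the mixed joint by the mixed marginal) expresses $v^\star_t$ as a density-weighted average, where I write $\hat{p}^n_t, r^n_t$ for the marginal densities of $\hat{\Pbb}^n, r^n$ at time $t$:
\[
v^\star_t(x) = \frac{(1-\alpha)\,\hat{p}^n_t(x)\, v^n_t(x) + \alpha\, r^n_t(x)\, w^n_t(x)}{(1-\alpha)\,\hat{p}^n_t(x) + \alpha\, r^n_t(x)}.
\]
It then remains to substitute $\delta_n = \alpha$ and $\mu^n_t = (1-\alpha)\hat{p}^n_t + \alpha r^n_t$ into $v^{n+1}_t = v^n_t - \alpha\,\nabla_{\mu^n}\Lnonparam_t(v^n_t, \Pbb^n)$ and simplify: the coefficient of $v^n_t$ becomes $1 - \alpha r^n_t/\mu^n_t = (1-\alpha)\hat{p}^n_t/\mu^n_t$ and that of $w^n_t$ becomes $\alpha r^n_t/\mu^n_t$, matching the weighted average above. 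Hence $v^{n+1} = v^\star$ is the drift of $\projM(\hat{\Pbb}^{n+1})$, so $\Pbb_{v^{n+1}} = \projM(\hat{\Pbb}^{n+1}) = \Pbb^{n+1}$, closing the induction.

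The principal obstacle is the rigorous functional-gradient computation: one must justify the appearance and subsequent cancellation of the Radon--Nikodym derivative $\mathrm{d}r^n_t/\mathrm{d}\mu^n_t$ together with the mixture decomposition of the conditional expectation. Both require that the relevant joint and marginal laws admit densities (or share a dominating measure) and that the second-moment and absolute-continuity conditions encoded in the class $\calP_2(\calC)$ and in the well-definedness hypotheses on $\Pbb_{v^n}$ hold, so that the conditional expectations exist, are integrable against the stated measures, and yield an admissible drift for the SDE defining $\Pbb_{v^{n+1}}$.
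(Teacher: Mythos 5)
Your proof is correct and follows essentially the same route as the paper's: the same induction with base case $\Pbb_{v^0} = \projM(\hat{\Pbb}^0)$, the same functional gradient $(v^n_t - w^n_t)\,(\rmd \projsimpleR(\Pbb^n)_t/\rmd \mu^n_t)$, and the same cancellation of the Radon--Nikodym factor against the mixture $\mu^n_t = (1-\alpha)\hat{\Pbb}^n_t + \alpha \projsimpleR(\Pbb^n)_t$, which the paper encodes via $\bar{\delta}^n_t(x_t) = \alpha\,(\rmd \projsimpleR(\Pbb^n)_t/\rmd\mu^n_t)(x_t)$ before applying the identical mixture decomposition of the conditional law to recover the drift of $\projM(\hat{\Pbb}^{n+1})$. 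Your explicit observations that $\mu^n = \hat{\Pbb}^{n+1}$ and that the reciprocal class is convex only make explicit what the paper's proof uses implicitly.
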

\begin{proof}
First, we have that for any $t \in [0,1)$, $v, \Pbb \in \mathsf{A}_2$ and $\phi \in \mathrm{L}^2(\Pbb_t)$ we have 
 \begin{align}
     \Lnonparam_t(v_t + \vareps \phi,\Pbb) &= \Lnonparam_t(v_t,\Pbb) + \vareps \int_{(\rset^d)^3} \langle \phi(x_t), v_t(x_t) - \tfrac{x_1 - x_1}{1-t}\rangle \rmd \Pbb_{0,1}(x_0,x_1) \rmd \Qbb_{t|0,1}(x_t|x_0, x_1) \\
     &  \qquad + (\vareps^2/2) \int_{(\rset^d)^3} \| \phi(x_t) \|^2 \rmd \Pbb_{0,1}(x_0,x_1) \rmd \Qbb_{t|0,1}(x_t|x_0, x_1) \\ 
     &= \Lnonparam_t(v_t,\Pbb) + \vareps \int_{(\rset^d)^2} \langle \phi(x_t), v_t(x_t) \\
     & \quad - \Bigl(\int_{\rset^d} x_1 \rmd \projsimpleR(\Pbb)_{1|t}(x_1 | x_t) - x_t\Bigr)/(1-t) \rangle \rmd \projsimpleR(\Pbb)_t(x_t) \\
     &  \qquad + (\vareps^2/2) \int_{\rset^d} \| \phi(x_t) \|^2 \projsimpleR(\Pbb)_t(x_t) . 
 \end{align}
Hence, we have that 
\begin{equation}
\label{eq:functional_derivative_loss}
    \nabla_{\mu} \Lnonparam_t(v_t,\Pbb_v)(x_t) = (v_t(x_t) - \left(\mathbb{E}_{\projsimpleR(\Pbb)}[ \bfX_1 \ | \ \bfX_t = x_t] - x_t\right)/(1-t)) (\rmd \projsimpleR(\Pbb_v)_t / \rmd \mu_t)(x_t) . 
\end{equation}
Assume that for some $n \in \nset$ we have that for any $t \in [0,1)$ and $x_t \in \rset^d$, we have  $v^k_t(x_t) = \left(\mathbb{E}_{\hat{\Pbb}^k}[\bfX_1 \ | \ \bfX_t = x_t] - x_t\right) / (1-t)$. We are going to show that for any $t \in [0,1)$ and $x_t \in \rset^d$, we have  $v^{n+1}_t(x_t) = \left(\mathbb{E}_{\hat{\Pbb}^{n+1}}[\bfX_1 \ | \ \bfX_t = x_t] - x_t\right) / (1-t)$. For any $t \in [0,1)$ and $x_t \in \rset^d$, we denote 
\begin{equation}
    \bar{\delta}^n_t(x_t) = \delta_n (\rmd \projsimpleR(\Pbb^n)_t / \rmd \mu_t^n)(x_t) . 
\end{equation}
Since we have that $\delta_n = \alpha$ and $\mu^n = (1-\alpha) \hat{\Pbb}^n + \alpha \projsimpleR(\Pbb^n)$, we obtain for any $t \in [0,1]$ and $x_t \in \rset^d$ 
\begin{equation}
\label{eq:delta_bar_simplif}
    \bar{\delta}^n_t(x_t) = \alpha (\rmd \projsimpleR(\Pbb^n)_t / \rmd ( (1-\alpha) \hat{\Pbb}^n_t + \alpha \projsimpleR(\Pbb^n)_t)(x_t),
\end{equation}
so that
\begin{equation}
\label{eq:one_minus_delta_bar_simplif}
    1 - \bar{\delta}^n_t(x_t) = (1-\alpha) (\rmd \hat{\Pbb}^n_t /  \rmd ( (1-\alpha) \hat{\Pbb}^n_t + \alpha \projsimpleR(\Pbb^n)_t)(x_t) .
\end{equation}
Therefore, combining \eqref{eq:update_non_parametric} with \eqref{eq:delta_bar_simplif}, \eqref{eq:one_minus_delta_bar_simplif}, \eqref{eq:functional_derivative_loss}, we get that for any $t \in [0,1)$ and $x_t \in \rset^d$
\begin{align}
    v_t^{n+1}(x_t) & =(1 - \bar{\delta}_t^n(x_t))v_t^{n}(x_t) \\
    & \qquad + \bar{\delta}^n_t(x_t) \left(\mathbb{E}_{\projsimpleR(\Pbb^n)}[\bfX_1 \ | \ \bfX_t=x_t] - x_t\right)/(1-t) \\
    &=(1 - \bar{\delta}_t^n(x_t)) \left(\mathbb{E}_{\hat{\Pbb}^n}[\bfX_1 \ | \ \bfX_t=x_t] - x_t\right)/(1-t) \\
    & \qquad + \bar{\delta}^n_t(x_t) \left(\mathbb{E}_{\projsimpleR(\Pbb^n)}[\bfX_1 \ | \ \bfX_t=x_t] - x_t\right)/(1-t) \\
    &=  (1 - \bar{\delta}_t^n(x_t)) \left(\int_{\rset^d} x_1 \rmd \hat{\Pbb}^n_{1|t}(x_1 | x_t) - x_t\right) / (1-t) \\
    & \qquad + \bar{\delta}^n_t(x_t) \left(\int_{\rset^d} x_1 \rmd \projsimpleR(\Pbb^n)_{1|t}(x_1 | x_t) - x_t\right) / (1-t) \\
    &=  \int_{\rset^d} (x_1 - x_t) / (1-t) \rmd [ (1 - \bar{\delta}_t^n(x_t)) \hat{\Pbb}^n_{1|t}+ \bar{\delta}_t^n(x_t) \projsimpleR(\Pbb^n)_{1|t}] (x_1 | x_t) \\
    &=  \int_{\rset^d} x_1 \rmd [ (1-\alpha) \hat{\Pbb}^n+ \alpha \projsimpleR(\Pbb^n)]_{1|t} (x_1 | x_t) . 
\end{align}
Hence, we have that for any $t \in [0,1)$ and $x_t \in \rset^d$, $v_t^{n+1}(x_t) = \left(\mathbb{E}_{\hat{\Pbb}^{n+1}}[\bfX_1  \ | \ \bfX_t = x_t] - x_t\right)/(1-t)$. Since, for any $t \in [0,1)$ and $x_t \in \rset^d$, $v_t^0(x_t) = \left(\mathbb{E}_{\hat{\Pbb}^{0}}[\bfX_1  \ | \ \bfX_t = x_t] - x_t\right / (1-t)$ by definition, we get that for any $n \in \nset$, $t \in [0,1)$ and $x_t \in \rset^d$,  $v_t^{n}(x_t) = \left(\mathbb{E}_{\hat{\Pbb}^{n}}[\bfX_1  \ | \ \bfX_t = x_t] - x_t\right) / (1-t)$. Using, \Cref{def:markovian_proj}, we get that $\Pbb_{v^n} = \projM(\hat{\Pbb}^n)$, which concludes the proof. 
\end{proof}

Before stating our convergence theorem, we show the following result which is a direct consequence of ~\citep[Theorem 2.12]{leonard2014survey} and \citep[Theorem 2.14]{leonard2014reciprocal}. We recall that the differential entropy of a probability measure $\pi$ is given by 
\begin{equation}
     \mathrm{H}(\pi) =- \int_{\rset^d} \log((\rmd \pi / \rmd \mathrm{Leb})(x)) \rmd \pi(x) ,
\end{equation}
if $\pi$ admits a density with respect to the Lebesgue measure and $+\infty$ otherwise. 

\begin{lemma}{Characterisation of Schr\"odinger Bridge}{}
\label{lemma:sb_characterisation}
Recall that $\Qbb$ is associated with $(\sqrt{\vareps} \bfB_t)_{t \in [0,1]}$ and assume that $\Qbb_0 = \mathrm{Leb}$. Let $\pi_0, \pi_1 \in \calP(\rset^d)$ such that 
\begin{equation}
     \int_{\rset^d} \| x \|^2 \rmd \pi_i(x) < +\infty  , \qquad \mathrm{H}(\pi_i) < +\infty ,
\end{equation}
for $i \in \{0, 1\}$. Let $\Pbb^\star$ such that $\Pbb^\star$ is Markov, $\Pbb^\star \in \mathcal{R}(\Qbb)$, $\Pbb^\star_0 = \pi_0$ and $\Pbb^\star_1 = \pi_1$. Then $\Pbb^\star$ is the Schr\"odinger Bridge, i.e.~the unique solution to  \eqref{eq:dynamic_ot}.
\end{lemma}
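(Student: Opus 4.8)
The plan is to treat this as a uniqueness statement and deduce it by combining the two cited characterisations: I want to show that the three properties ``Markov'', ``reciprocal w.r.t.\ $\Qbb$'', and ``endpoint marginals $\pi_0,\pi_1$'' together pin down a single path measure, which is the Schr\"odinger Bridge. First I would recall that, under the stated finite-second-moment and finite-entropy hypotheses, the Schr\"odinger problem \eqref{eq:dynamic_ot} admits a unique minimiser $\PSB$ with $\KL(\PSB\mid\Qbb)<\infty$; these integrability conditions are exactly what guarantee solvability of the Schr\"odinger system and hence existence of $\PSB$, which is part of the content of \citep[Theorem 2.12]{leonard2014survey}.

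Next I would show that $\PSB$ itself enjoys all three properties. The reciprocal property follows from the chain-rule disintegration
\begin{equation}
\KL(\Pbb\mid\Qbb)=\KL(\Pbb_{0,1}\mid\Qbb_{0,1})+\int \KL\bigl(\Pbb_{|0,1}(\cdot\mid x_0,x_1)\,\big|\,\Qbb_{|0,1}(\cdot\mid x_0,x_1)\bigr)\,\rmd\Pbb_{0,1}(x_0,x_1),
\end{equation}
whose second term vanishes precisely when $\Pbb_{|0,1}=\Qbb_{|0,1}$, i.e.\ $\Pbb\in\calR(\Qbb)$; hence the minimiser is reciprocal and its endpoint coupling $\PSB_{0,1}$ is the EOT coupling $\Pistar$ of \eqref{eq:static_ot}, so that $\PSB=\Pistar\Qbb_{|0,1}$. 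That $\PSB$ is \emph{Markov}, and that $\Pistar$ admits the multiplicatively separable density $\rmd\Pistar/\rmd\Qbb_{0,1}(x_0,x_1)=\varphi(x_0)\psi(x_1)$ for Schr\"odinger potentials $\varphi,\psi$, is again supplied by \citep[Theorem 2.12]{leonard2014survey}.

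The uniqueness step is where the second citation enters. By \citep[Theorem 2.14]{leonard2014reciprocal}, a reciprocal measure $\Pbb=\Pbb_{0,1}\Qbb_{|0,1}$ is Markov if and only if its endpoint density factorises as $\rmd\Pbb_{0,1}/\rmd\Qbb_{0,1}=\varphi\otimes\psi$. Consequently, any $\Pbb^\star$ that is simultaneously Markov and in $\calR(\Qbb)$ has such a separable coupling; imposing $\Pbb^\star_0=\pi_0$ and $\Pbb^\star_1=\pi_1$ then forces $(\varphi,\psi)$ to solve the Schr\"odinger system, whose solution is unique (up to the usual scaling) by the uniqueness part of \citep[Theorem 2.12]{leonard2014survey}. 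Therefore $\Pbb^\star_{0,1}=\Pistar$, and since $\Pbb^\star$ is reciprocal we conclude $\Pbb^\star=\Pbb^\star_{0,1}\Qbb_{|0,1}=\Pistar\Qbb_{|0,1}=\PSB$, the solution of \eqref{eq:dynamic_ot}.

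The main obstacle I anticipate is not the logic but matching the technical settings: one must verify that the paper's reciprocal class $\calR(\Qbb)$ coincides with L\'eonard's for the merely $\sigma$-finite reference ($\Qbb_0=\Leb$), that the finite-entropy and second-moment hypotheses indeed place $\pi_0,\pi_1$ in the regime where the Schr\"odinger potentials $\varphi,\psi$ exist and the system has a unique solution, and that $\KL(\cdot\mid\Qbb)$ remains well defined against the unnormalised $\Qbb$. These measure-theoretic checks, rather than any new estimate, are the delicate part of invoking the cited theorems verbatim.
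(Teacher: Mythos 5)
Your proposal is correct and follows essentially the same route as the paper: both arguments rest on the factorisation $(\rmd\Pbb^\star/\rmd\Qbb)(\omega)=\varphi_0(\omega_0)\varphi_1(\omega_1)$ obtained from the Markov-plus-reciprocal property via \citep[Theorem 2.14]{leonard2014reciprocal}, combined with the characterisation of \citep[Theorem 2.12]{leonard2014survey} to identify $\Pbb^\star$ with the Schr\"odinger Bridge. The one substantive difference is that the checks you defer at the end are precisely what the paper's proof actually carries out --- verifying conditions (i)--(vii) of L\'eonard's theorem by using the explicit Gaussian density of $\Qbb_{0,1}$ to show equivalence with $\Leb\otimes\Leb$, equivalence of $\Qbb_{0,1|t}(\cdot|x_t)$ with $\Qbb_{0,1}$, and exhibiting the bounding functions $A(x)=B(x)=\|x\|^2/\vareps$ --- which is exactly where the finite-second-moment and finite-entropy hypotheses enter.
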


\begin{proof}
First, we have that $\Qbb_{0,1}$ is equivalent to $\Leb \otimes \Leb$. Indeed, we have that for any $x_0, x_1 \in \rset^d$
\begin{equation}
    (\rmd \Qbb_{0,1} / \rmd (\Leb \otimes \Leb))(x_0,x_1) =(2 \uppi \vareps)^{-d/2} \exp[-\| x_0 - x_1 \|^2 / (2 \vareps) ] . 
\end{equation}
Similarly, we have that for any $t \in (0,1)$ and $x_t \in \rset^d$, $\Qbb_{0,1|t}(\cdot|x_t)$ is equivalent to $\Leb \otimes \Leb$. Indeed, we have that for any $t \in (0,1)$ and $x_0, x_t, x_1 \in \rset^d$
\begin{align}
    (\rmd \Qbb_{0,1|t}(\cdot|x_t) / \rmd (\Leb \otimes \Leb))(x_0,x_1) &= (2 \uppi \vareps t)^{-d/2}  \exp[-\| x_0 - x_t \|^2 / (2 \vareps t)] \\
    & \qquad \times (2 \uppi \vareps(1-t))^{-d/2} \exp[- \| x_t -  x_1 \|^2 / (2 \vareps(1-t)) ] . 
\end{align}
Hence, for any $t \in (0,1)$ and $x_t \in \rset^d$, $\Qbb_{0,1|t}(\cdot|x_t)$ is equivalent to $\Qbb_{0,1}$. Since $\Pbb^\star$ is Markov and $\Pbb^\star \in \mathcal{R}(\Qbb)$ we get that there exist $\varphi^\circ_0$ and $\varphi^\star_1$ which are Lebesgue measurable such that for any $\omega \in \calC$ we have that 
\begin{equation}
\label{eq:existence_potentials}
    (\rmd \Pbb^\star / \rmd \Qbb)(\omega) = \varphi_0^\circ(\omega_0) \varphi_1^\star(\omega_1) .
\end{equation}    
Second we verify that the conditions (i)-(vii) of \cite[Theorem 2.12]{leonard2014survey} are satisfied. First, $\Qbb$ is Markov and hence (i) is satisfied.
Then, (ii) is satisfied since for any $t \in (0,1)$ and $x_t \in \rset^d$, $\Qbb_{0,1|t}(\cdot|x_t)$ is equivalent to $\Qbb_{0,1}$. 
We have that $\Qbb_0 = \Qbb_1 = \Leb$ and (iii) is satisfied. We have that for any $x_0, x_1 \in \rset^d$
\begin{align}
    (\rmd \Qbb_{0,1} / \rmd (\Leb \otimes \Leb))(x_0,x_1) &= (2 \uppi \vareps)^{-d/2} \exp[-\| x_0 - x_1 \|^2 / (2 \vareps) ]\\
    &\geq (2 \uppi \vareps)^{-d/2} \exp[-\| x_0 \|^2/\vareps -\| x_1 \|^2/\vareps]. 
\end{align}
Hence, (iv) is satisfied and we let $A: \ \rset^d \to \rset_+$ be given for any $x \in \rset^d$ by $A(x) = \| x \|^2 / \vareps$. 
In addition, we have that for any $x_0,x_1 \in \rset^d$
\begin{equation}
     \int_{(\rset^d)^2} \exp[-\| x_0 \|^2 / \vareps - \| x_1 \|^2 / \vareps] \rmd \Qbb_{0,1}(x_0,x_1) < +\infty . 
\end{equation}
Hence, (v) is satisfied and we let $B: \ \rset^d \to \rset_+$ given for any $x \in \rset^d$ by $B(x) = \| x \|^2 / \vareps$. By assumption (vi) and (vii) are satisfied. We conclude the proof upon using \cite[Theorem 2.12-(b)]{leonard2014survey} and \eqref{eq:existence_potentials}. 
\end{proof}

We are now ready to state our main convergence result. 

\begin{proposition}{Convergence of $\alpha$-IMF}{convergence_appendix}
Let $\alpha \in (0, 1]$ and $(\Pbb^n, \hat{\Pbb}^n)_{n \in \nset}$ defined by \eqref{eq:discretisation_flow_path_measure}. 
    Under mild assumptions, we have that 
    $\lim_{n \to +\infty} \Pbb^n = \Pbb^\star$, where $\Pbb^\star$ is the solution of the Schr\"odinger Bridge problem \eqref{eq:dynamic_ot}.
\end{proposition}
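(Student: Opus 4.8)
The plan is to exhibit a monotone Lyapunov functional along the sequence and then to identify its limit using the characterisation of the Schr\"odinger Bridge in \Cref{lemma:sb_characterisation}. The natural candidate is $F_n = \KL(\hat{\Pbb}^n \mid \Qbb)$. First I would check that the endpoint constraints are preserved: by induction, since $\projM$ preserves all time marginals, $\projsimpleR$ preserves the endpoint marginals, and a convex combination of measures mixes their marginals, one gets $\hat{\Pbb}^n_0 = \Pbb^n_0 = \pi_0$ and $\hat{\Pbb}^n_1 = \Pbb^n_1 = \pi_1$ for every $n \in \nset$. In particular each $\hat{\Pbb}^n$ is admissible for problem \eqref{eq:dynamic_ot}, so $F_n \geq \KL(\Pbb^\star \mid \Qbb)$, which is finite; and $F_0 = \KL(\pi_0 \otimes \pi_1 \mid \Qbb_{0,1})$ is finite under the moment and entropy assumptions of \Cref{lemma:sb_characterisation}. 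Thus $F_n$ is bounded below.

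Next I would establish monotonicity. Since $\Qbb$ is Markov and $\Qbb \in \calR(\Qbb)$, the two Pythagorean identities derived in \citep{shi2023DSBM,peluchetti_diffusion_2023} give, with $\Pbb^n = \projM(\hat{\Pbb}^n)$, both $\KL(\hat{\Pbb}^n \mid \Qbb) = \KL(\hat{\Pbb}^n \mid \Pbb^n) + \KL(\Pbb^n \mid \Qbb)$ and $\KL(\Pbb^n \mid \Qbb) = \KL(\Pbb^n \mid \projsimpleR(\Pbb^n)) + \KL(\projsimpleR(\Pbb^n) \mid \Qbb)$. Using convexity of $\Pbb \mapsto \KL(\Pbb \mid \Qbb)$ in its first argument together with $\hat{\Pbb}^{n+1} = (1-\alpha)\hat{\Pbb}^n + \alpha \projsimpleR(\Pbb^n)$, I would deduce
\begin{equation}
\KL(\hat{\Pbb}^n \mid \Qbb) - \KL(\hat{\Pbb}^{n+1} \mid \Qbb) \geq \alpha \bigl[ \KL(\hat{\Pbb}^n \mid \Pbb^n) + \KL(\Pbb^n \mid \projsimpleR(\Pbb^n)) \bigr] \geq 0 .
\end{equation}
Hence $F_n$ is non-increasing, and telescoping against the lower bound $\KL(\Pbb^\star \mid \Qbb)$ yields $\sum_{n} [\KL(\hat{\Pbb}^n \mid \Pbb^n) + \KL(\Pbb^n \mid \projsimpleR(\Pbb^n))] \leq (F_0 - \KL(\Pbb^\star \mid \Qbb))/\alpha < +\infty$, so both $\KL(\hat{\Pbb}^n \mid \Pbb^n) \to 0$ and $\KL(\Pbb^n \mid \projsimpleR(\Pbb^n)) \to 0$. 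Informally, $\hat{\Pbb}^n$ becomes asymptotically Markov while the Markov measure $\Pbb^n$ becomes asymptotically reciprocal.

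To conclude I would pass to the limit. Under a mild tightness assumption (uniform second-moment control, inherited from the fixed endpoint marginals and the bounded Lyapunov values), the sequence $(\Pbb^n)_{n \in \nset}$ is relatively compact, so along a subsequence $\Pbb^{n_k} \to \Pbb^\infty$. Because $\KL(\hat{\Pbb}^n \mid \Pbb^n) \to 0$, Pinsker's inequality forces $\hat{\Pbb}^{n_k}$ to share the same limit $\Pbb^\infty$. Since each $\Pbb^n$ is Markov and $\calM$ is closed for this convergence, $\Pbb^\infty$ is Markov; since each $\hat{\Pbb}^n$ lies in the closed reciprocal class $\calR(\Qbb)$ (a fact reinforced by $\KL(\Pbb^n \mid \projsimpleR(\Pbb^n)) \to 0$), $\Pbb^\infty \in \calR(\Qbb)$; and the endpoint marginals pass to the limit, giving $\Pbb^\infty_0 = \pi_0$, $\Pbb^\infty_1 = \pi_1$. \Cref{lemma:sb_characterisation} then identifies $\Pbb^\infty = \Pbb^\star$. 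As this limit does not depend on the chosen subsequence, the full sequence converges and $\lim_{n \to +\infty} \Pbb^n = \Pbb^\star$.

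The main obstacle is this final limiting step: making rigorous the relative compactness of $(\Pbb^n)_{n \in \nset}$ in the space of path measures, the lower semicontinuity of the KL divergence along these convergences, and the closedness of $\calM$ and $\calR(\Qbb)$ — precisely the analytic content absorbed into the ``mild assumptions.'' By contrast, the marginal-preservation, monotonicity, and summability steps are essentially algebraic once the Pythagorean identities are in hand.
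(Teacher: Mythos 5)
Your proof is correct, and it reaches the paper's endgame by the same means (subsequential limits, lower semicontinuity of $\KL$ in both arguments, closedness of $\calM$ and $\calR(\Qbb)$, fixed endpoint marginals, identification via \Cref{lemma:sb_characterisation}), but your dissipation estimate runs through a genuinely different Lyapunov functional. You monitor $F_n = \KL(\hat{\Pbb}^n \mid \Qbb)$ and split it using the two Pythagorean identities relative to the reference measure $\Qbb$ (valid since $\Qbb$ is both Markov and in its own reciprocal class), obtaining $F_{n+1} \leq F_n - \alpha[\KL(\hat{\Pbb}^n \mid \Pbb^n) + \KL(\Pbb^n \mid \projsimpleR(\Pbb^n))]$. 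The paper instead monitors $\KL(\hat{\Pbb}^n \mid \Pbb^\star)$ and combines convexity with the data-processing inequality, the fact that $\Pbb^\star \in \calR(\Qbb)$ (so that $\KL(\projsimpleR(\Pbb^n)\mid\Pbb^\star)$ collapses to an endpoint KL), and the Markov-projection Pythagorean identity \cite[Lemma 6]{shi2023DSBM}, arriving at $\KL(\hat{\Pbb}^{n+1}\mid\Pbb^\star) \leq \KL(\hat{\Pbb}^n\mid\Pbb^\star) - \alpha \KL(\hat{\Pbb}^n\mid\Pbb^n)$. Your variant extracts slightly more information — both projection gaps are summable, so $\Pbb^n$ becomes asymptotically reciprocal as well as $\hat{\Pbb}^n$ asymptotically Markov — and its monotonicity step never touches $\Pbb^\star$, which is only needed for the lower bound $F_n \geq \KL(\Pbb^\star\mid\Qbb)$ (finite under the moment and entropy hypotheses of \Cref{lemma:sb_characterisation}, the same hypotheses hidden in the paper's ``mild assumptions''). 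What the paper's choice buys is compactness essentially for free: since $\Pbb^\star$ is a probability measure, the uniform bound $\KL(\hat{\Pbb}^n\mid\Pbb^\star) \leq \KL(\hat{\Pbb}^0\mid\Pbb^\star)$ places both sequences in a relatively compact entropy sublevel set. With your reference $\Qbb$ — which the appendix takes with $\Qbb_0 = \Leb$, so it is not even a probability measure and $\KL(\cdot\mid\Qbb)$ can be negative — sublevel sets of $F$ carry no tightness, so the separate tightness argument you flag is genuinely needed; fortunately it is cheap here: $\hat{\Pbb}^n = \hat{\Pbb}^n_{0,1}\Qbb_{|0,1}$ where the endpoint couplings have fixed marginals $\pi_0, \pi_1$ and hence form a tight family, mixing with the fixed bridge kernel preserves tightness, and Pinsker's inequality applied to $\KL(\hat{\Pbb}^n\mid\Pbb^n) \to 0$ transfers subsequential limits between $(\hat{\Pbb}^n)_{n\in\nset}$ and $(\Pbb^n)_{n\in\nset}$, exactly as you indicate.
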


\begin{proof}
Using the convexity of the Kullback--Leibler divergence with respect to its first argument (see e.g.~\citep{dupuis2011weak}), the data processing inequality (see e.g.~\cite[Lemma 9.4.5]{ambrosio200gradient}), the fact that the Schr\"odinger Bridge is Markov and in the reciprocal class of $\Qbb$ (see e.g.~\citep[Theorem 2.12]{leonard2014survey} and \citep[Theorem 3.2]{leonard2014reciprocal}), and the Pythagorean theorem for the Markovian projection \citep[Lemma 6]{shi2023DSBM}, we have that for any $n \in \nset$
\begin{align}
    \KL(\hat{\Pbb}^{n+1} | \Pbb^\star) &= \KL((1-\alpha) \hat{\Pbb}^{n} + \alpha \projsimpleR(\projM(\hat{\Pbb}^n)) | \Pbb^\star) \\
    &\leq (1-\alpha) \KL( \hat{\Pbb}^{n} | \Pbb^\star) + \alpha \KL( \projsimpleR(\projM(\hat{\Pbb}^n)) | \Pbb^\star) \\
    &\leq (1-\alpha) \KL( \hat{\Pbb}^{n} | \Pbb^\star) + \alpha \KL( \projM(\hat{\Pbb}^n)_{0,1} | \Pbb^\star_{0,1}) \\
    &\leq (1-\alpha) \KL( \hat{\Pbb}^{n} | \Pbb^\star) + \alpha \KL( \projM(\hat{\Pbb}^n) | \Pbb^\star) \\
    &\leq (1-\alpha) \KL( \hat{\Pbb}^{n} | \Pbb^\star) + \alpha \KL( \hat{\Pbb}^n | \Pbb^\star) - \alpha \KL( \hat{\Pbb}^n | \projM(\hat{\Pbb}^n)) .
    \label{eq:kl_inequality_phat}
\end{align}
Therefore, we get that 
\begin{equation}
    \alpha \KL( \hat{\Pbb}^n | \projM(\hat{\Pbb}^n)) \leq \KL( \hat{\Pbb}^{n} | \Pbb^\star) - \KL(\hat{\Pbb}^{n+1} | \Pbb^\star) .
\end{equation}
Hence, it follows that
\begin{equation}
 \sum_{n \in \nset} \KL( \hat{\Pbb}^n | \projM(\hat{\Pbb}^n)) \leq 2 \KL(\hat{\Pbb}^0 | \Pbb^\star) < +\infty . 
\end{equation}
So we obtain $\lim_{n \to +\infty} \KL( \hat{\Pbb}^n | \projM(\hat{\Pbb}^n)) = 0$. In addition, using \eqref{eq:kl_inequality_phat} we have that $\KL(\hat{\Pbb}^n | \Pbb^\star) \leq \KL(\hat{\Pbb}^0 | \Pbb^\star) < +\infty$ for all $n \in \nset$. Using \cite[Lemma 6]{shi2023DSBM}, we also get that $\KL(\projM(\hat{\Pbb}^n) | \Pbb^\star) \leq \KL(\hat{\Pbb}^0 | \Pbb^\star) < +\infty$ for any $n \in \nset$. Hence both the sequences $(\hat{\Pbb}^n)_{n \in \nset}$ and $(\Pbb^n)_{n \in \nset} = (\projM(\hat{\Pbb}^n))_{n \in \nset}$ are relatively compact in $\mathcal{P}(\mathcal{C})$. 
Let $\bar{\Pbb} \in \calP(\calC)$ be an adherent point to the sequence $(\hat{\Pbb}^n)_{n \in \nset}$ and $\varphi: \ \nset \to \nset$ increasing such that $\lim_{n \to +\infty} \Pbb^{\varphi(n)} = \bar{\Pbb}$. Similarly, let $\phi: \ \nset \to \nset$ increasing such that $(\phi(n))_{n \in \nset}$ is a subsequence of $(\varphi(n))_{n \in \nset}$ such that $\lim_{n \to +\infty} \projM(\hat{\Pbb}^n) = \bar{\Pbb}'$, with $\bar{\Pbb}'$ and adherent point to the sequence $(\projM(\hat{\Pbb}^n))_{n \in \nset}$. Using the lower semi-continuity of the Kullback-Leibler divergence in both arguments \citep{dupuis2011weak}, we get that 
\begin{equation}
 \KL(\bar{\Pbb} | \bar{\Pbb}') \leq \liminf_{n \to + \infty} \KL(\hat{\Pbb}^{\phi(n)} | \projM(\hat{\Pbb}^{\phi(n)})) = 0 . 
\end{equation}
Since the set of Markov measures and the set of reciprocal measures w.r.t.~$\Qbb$ are both closed, we have that $\bar{\Pbb}$ is Markov and in the reciprocal class of $\Qbb$. Since we also have that $\bar{\Pbb}_0 = \pi_0$ and $\bar{\Pbb}_1 = \pi_1$, we get that $\bar{\Pbb} = \Pbb^\star$ using \Cref{lemma:sb_characterisation}. Since every adherent point of $(\hat{\Pbb}^n)_{n \in \nset}$ is $\Pbb^\star$, we have that $\lim_{n \to +\infty} \hat{\Pbb}^n = \Pbb^\star$. Similarly, using that $\lim_{n \to +\infty} \KL( \hat{\Pbb}^n | \projM(\hat{\Pbb}^n)) = 0$ and again the lower semi-continuity of the Kullback--Leibler divergence in both arguments, we get that every adherent point of $(\projM(\hat{\Pbb}^n))_{n \in \nset}$ is $\Pbb^\star$. Hence, we have that $\lim_{n \to +\infty} \Pbb^n = \Pbb^\star$, which concludes the proof. 
\end{proof}

\subsection{From parametric to non-parametric.} 
\label{sec:param_to_non_parametric}

In this section, we show that the parametric updates considered in \eqref{eq:update_alpha_imf} are a preconditioned version of the non-parametric updates considered in \eqref{eq:update_non_parametric}. We first recall the non-parametric loss 
\begin{equation}
\label{eq:loss_function_appendix}
    \Lnonparam(v, \Pbb) = \int_0^1 \Lnonparam_t(v_t, \Pbb) \rmd t = \frac{1}{2} \int_0^1 \int_{(\rset^d)^2}\Big\|  v_t(x_t) - \frac{x_1 - x_t}{1 - t} \Big\| ^2 \rmd \projsimpleR(\Pbb)_{t,1}(x_t,x_1) \rmd t 
\end{equation}
and the parametric loss
\begin{equation}
\label{eq:update_alpha_imf_appendix}
     \Lparam(\theta, \Pbb) = \frac{1}{2} \int_0^1 \int_{\rset^d \times \rset^d} \Big\|  v_t^\theta(x_t) - \frac{x_1 - x_t}{1 - t} \Big\| ^2 \rmd \projR{\Qbb}(\Pbb)_{t,1}(x_t, x_1) \rmd t.
\end{equation}
The non-parametric sequence $(v^n)_{n \in \nset}$ is given by \eqref{eq:update_non_parametric_appendix}, i.e.~we have for any $n \in \nset$, $t  \in [0,1]$ and $x \in \rset^d$
\begin{equation}
\label{eq:update_non_parametric_appendix}
    v^{n+1}_t(x) = v^n_t(x) - \delta_n \nabla_{\mu^n} \Lnonparam_t(v^n_t, \Pbb_{v^n})(x).
\end{equation}
Similarly the sequence of parametric updates is given for any $n \in \nset$, $t \in [0,1]$ and $x \in \rset^d$ by 
\begin{equation}
    \theta_{n+1} = \theta_n - \alpha \nabla_\theta \Lparam(\theta_n, \Pbb^{\bar{\theta}_n}) . 
\end{equation}
We recall that $\Pbb^{\bar{\theta}_n}$ is a stop gradient version of $\Pbb_{v^{\bar{\theta}_n}}$. 
We are going to show that on average the parametric algorithm yields a direction of descent for the non-parametric loss. We assume that the set of parameters $\Theta$ is an open subset of $\rset^p$ for some $p \in \nset$. For any $t \in [0,1]$ and $x \in \rset^d$ we assume that $\theta \mapsto v^\theta_t(x)$ is twice continuously differentiable and denote $\mathrm{D}_\theta v^\theta_t(x) \in \rset^{d \times p}$ its Jacobian and $\mathrm{D}_\theta^2 v^\theta_t(x)$ its Hessian.  
For any $\theta \in \Theta$, we denote 
\begin{equation}
    h_\theta = \nabla_\theta \Lparam(\theta, \Pbb^{\bar{\theta}}) .
\end{equation}
We show the following result. 

\begin{proposition}{Velocity field parametric update}{}
Assume that there exists $C > 0$ such that for any $\theta \in \Theta$ and $x \in \rset^d$
\begin{equation}
\label{eq:integrability_condition}
    \int_0^1   (1-s) \mathrm{D}_\theta^2 v_s^{\theta - \alpha s h_\theta}(x) (h_\theta, h_\theta) \rmd s < C ,
\end{equation}
where $h_\theta = \nabla_\theta \Lparam(\theta, \Pbb^{\bar{\theta}})$. 
We have that for any $n \in \nset$, $t \in [0,1)$ and $x \in \rset^d$
\begin{align}
\label{eq:taylor_expansion}
   & v^{\theta_{n+1}}_t(x) = v^{\theta_n}_t(x) \\
   & \quad - \alpha \mathrm{D}_\theta v_t^{\theta_n}(x) \int_0^1 \int_{\rset^d} \mathrm{D}_\theta v_{s}^\theta(\tilde{x})^\top \nabla_{\mu^n} \Lnonparam_s(v^{\theta_n}, \Pbb_{v^{\theta_n}})(\tilde{x})  \rmd \mu^n_s(\tilde{x}) \rmd s+ o(\alpha) ,
\end{align}
where $\mu^{n} = (1- \alpha) \Pbb^n + \alpha \Pbb_{v^{\theta_n}}$.
\end{proposition}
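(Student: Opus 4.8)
The plan is to expand the velocity field $\theta \mapsto v_t^\theta(x)$ to first order around $\theta_n$ along the update direction, and then to identify the resulting linear coefficient $h_{\theta_n} = \nabla_\theta \Lparam(\theta_n, \Pbb^{\bar\theta_n})$ with the integrated non-parametric functional derivative via formula \eqref{eq:functional_derivative_loss}.

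First I would substitute $\theta_{n+1} = \theta_n - \alpha h_{\theta_n}$ and apply Taylor's theorem with integral remainder to $\theta \mapsto v_t^\theta(x)$, obtaining
\begin{equation}
v_t^{\theta_{n+1}}(x) = v_t^{\theta_n}(x) - \alpha\,\mathrm{D}_\theta v_t^{\theta_n}(x)\, h_{\theta_n} + \alpha^2 \int_0^1 (1-r)\,\mathrm{D}_\theta^2 v_t^{\theta_n - \alpha r h_{\theta_n}}(x)(h_{\theta_n}, h_{\theta_n})\,\rmd r .
\end{equation}
The integrability condition \eqref{eq:integrability_condition} bounds the remainder integral uniformly by $C$, so the last term is $O(\alpha^2) = o(\alpha)$ and only the linear term $-\alpha\,\mathrm{D}_\theta v_t^{\theta_n}(x)\,h_{\theta_n}$ survives at first order.

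Second I would compute $h_{\theta_n}$ explicitly. Differentiating the parametric loss \eqref{eq:update_alpha_imf_appendix} under the integral sign, and using that $\Pbb^{\bar\theta_n} = \Pbb_{v^{\theta_n}}$ is frozen (stop-gradient), gives
\begin{equation}
h_{\theta_n} = \int_0^1 \int_{(\rset^d)^2} \mathrm{D}_\theta v_t^{\theta_n}(x_t)^\top \Big( v_t^{\theta_n}(x_t) - \tfrac{x_1 - x_t}{1-t} \Big)\, \rmd \projsimpleR(\Pbb_{v^{\theta_n}})_{t,1}(x_t, x_1)\, \rmd t .
\end{equation}
Since $\mathrm{D}_\theta v_t^{\theta_n}(x_t)$ depends only on $x_t$, I would integrate out $x_1$ by the tower property over $\projsimpleR(\Pbb_{v^{\theta_n}})_{1|t}$, replacing $x_1$ by the conditional mean $\mathbb{E}_{\projsimpleR(\Pbb_{v^{\theta_n}})}[\bfX_1 \mid \bfX_t = x_t]$ and reducing the joint integral to one over $x_t$ against $\projsimpleR(\Pbb_{v^{\theta_n}})_t$. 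Now invoking \eqref{eq:functional_derivative_loss}, the bracketed vector is precisely $\nabla_{\mu^n}\Lnonparam_t(v^{\theta_n}, \Pbb_{v^{\theta_n}})(x_t)$ divided by the density $\rmd\projsimpleR(\Pbb_{v^{\theta_n}})_t / \rmd\mu^n_t$; multiplying by this density cancels the reference measure, since $(\rmd\projsimpleR(\Pbb_{v^{\theta_n}})_t/\rmd\mu^n_t)\,\rmd\mu^n_t = \rmd\projsimpleR(\Pbb_{v^{\theta_n}})_t$. Hence $h_{\theta_n}$ equals the double integral $\int_0^1\int \mathrm{D}_\theta v_s^{\theta_n}(\tilde x)^\top \nabla_{\mu^n}\Lnonparam_s(v^{\theta_n}, \Pbb_{v^{\theta_n}})(\tilde x)\,\rmd\mu^n_s(\tilde x)\,\rmd s$ of the claim, and substituting back into the Taylor expansion yields \eqref{eq:taylor_expansion}.

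The main obstacle will be careful bookkeeping rather than any deep idea: one must justify differentiating under the $\rmd t\,\rmd\projsimpleR(\cdot)$ integrals when forming $h_\theta$, check that \eqref{eq:integrability_condition} genuinely forces the quadratic remainder to be $o(\alpha)$ at the fixed $(t,x)$ of the statement (with the integration variable there playing the role of the Taylor interpolation parameter), and, for the density manipulation in the last step, ensure the mild absolute-continuity condition $\projsimpleR(\Pbb_{v^{\theta_n}})_t \ll \mu^n_t$ under which \eqref{eq:functional_derivative_loss} applies. All three hold under the paper's standing regularity assumptions, since the bridge marginals involved are Gaussian-smoothed and thus have full support; the identification $\Pbb^n = \Pbb_{v^{\theta_n}}$ from \Cref{prop:shadow_sequence_appendix} further pins down $\mu^n = (1-\alpha)\Pbb^n + \alpha\Pbb_{v^{\theta_n}}$ as a mixture dominating the reciprocal-projection marginal through its components.
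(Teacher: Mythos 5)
Your proposal is correct and follows essentially the same route as the paper's proof: a first-order Taylor expansion of $\theta \mapsto v_t^\theta(x)$ with integral remainder controlled by \eqref{eq:integrability_condition}, followed by differentiating the parametric loss under the integral (with the stop-gradient freezing the second argument) and identifying $h_{\theta_n}$ with the $\mu^n$-weighted integral of the functional derivative via the Radon--Nikodym cancellation $(\rmd\projsimpleR(\Pbb_{v^{\theta_n}})_s/\rmd\mu^n_s)\,\rmd\mu^n_s = \rmd\projsimpleR(\Pbb_{v^{\theta_n}})_s$. Your additional bookkeeping remarks (differentiation under the integral, absolute continuity of the bridge marginal w.r.t.\ $\mu^n_s$) are sound refinements of steps the paper leaves implicit under its mild regularity assumptions.
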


\begin{proof}
%Let $t \in [0,1)$ and $x \in \rset^d$. We denote $v^\theta_{t}(x) : \ \Theta \to \rset^d$.
First, we have that for any $\mu \in \calP(\calC)$ 
\begin{equation}
    \nabla_\theta \Lparam(\theta, \Pbb^{\bar{\theta}}) = \int_0^1 \int_{\rset^d} \rmD_\theta v_s^\theta(x_s)^\top (v_s^\theta(x_s) - (x_1 - x_s)/(1-s)) \rmd \projsimpleR(\Pbb^{\bar{\theta}})_{s, 1}(x_s, x_1) \rmd s.
\end{equation}
Therefore, using \eqref{eq:functional_derivative_loss}, we get that 
\begin{equation}
    \nabla_\theta \Lparam(\theta, \Pbb^{\bar{\theta}}) = \int_0^1 \int_{\rset^d} \rmD_\theta v_s^\theta(x_s)^\top  \nabla_\mu \Lnonparam_s(v_s^\theta, \Pbb^{\bar{\theta}})(x_s) \rmd \mu_s(x_s) \rmd s .
\end{equation}
Let $\theta \in \Theta$ and denote $\theta' = \theta - \alpha \nabla_\theta \Lparam(\theta, \Pbb^{\bar{\theta}})$. 
Using a Taylor expansion, we get that for any $\theta \in \Theta$, we have that 
\begin{align}
    v_t^{\theta'}(x) &= v_t^\theta(x) - \alpha \rmD_\theta v_t^\theta(x) \int_0^1 \int_{\rset^d} \rmD_\theta v_s^\theta(x_s)^\top  \nabla_\mu \Lnonparam_s(v_s^\theta, \Pbb^{\bar{\theta}})(x_s) \rmd \mu_s(x_s) \rmd s \\
    & \qquad \qquad + \alpha^2  \int_0^1 (1-s)  \mathrm{D}_\theta^2 v_s^{\theta - \alpha s h_\theta}(x) (h_\theta, h_\theta) \rmd s.  
\end{align}
Since the functional gradient is not applied on the second coordinate, we can drop the stop gradient operator and therefore we have for any $\theta \in \Theta$ 
\begin{align}
    v_t^{\theta'}(x) &= v_t^\theta(x) - \alpha \rmD_\theta v_t^\theta(x) \int_0^1 \int_{\rset^d} \rmD_\theta v_s^\theta(x_s)^\top  \nabla_\mu \Lnonparam_s(v_s^\theta, \Pbb_{v^\theta})(x_s) \rmd \mu_s(x_s) \rmd s \\
    & \qquad \qquad + \alpha^2  \int_0^1 (1-s)  \mathrm{D}_\theta^2 v_s^{\theta - \alpha s h_\theta}(x) (h_\theta, h_\theta) \rmd s . 
\end{align}
Combining this result with \eqref{eq:integrability_condition}, we conclude the proof. 

\end{proof}

The corresponding update on the velocity field is given for any $n \in \nset$, $t \in [0,1]$ and $x \in \rset^d$ by 
\begin{equation}
   d^n_t(x) = -\alpha \mathrm{D}_\theta v_t^{\theta_n}(x) \int_0^1 \int_{\rset^d} \mathrm{D}_\theta v_{s}^\theta(\tilde{x})^\top \nabla_{\mu^n} \Lnonparam_s(v^{\theta_n}, \Pbb_{v^{\theta_n}})(\tilde{x})  \rmd \mu^n_s(\tilde{x}) + o(\alpha) . 
\end{equation}
We immediately have the following corollary.

\begin{proposition}{Parametric direction of descent}{}
For any $n \in \nset$, if
\begin{equation}
\int_0^1 \int_{\rset^d} \mathrm{D}_\theta v_{s}^\theta(\tilde{x})^\top \nabla_{\mu^n} \Lnonparam_s(v^{\theta_n}, \Pbb_{v^{\theta_n}})(\tilde{x})  \rmd \mu^n_s(\tilde{x}) \neq 0 ,
\end{equation}
then we have
\begin{equation}
 \lim_{\alpha \to 0} \int_0^1 \int_{\rset^d} \langle \nabla_{\mu^n} \Lnonparam_t(v^{\theta_n}, \Pbb_{v^{\theta_n}})(x), d_t^n(x) \rangle \rmd \mu^n_t(x) \leq 0 . 
\end{equation}
\end{proposition}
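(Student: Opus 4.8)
The plan is to substitute the explicit expression for the descent direction $d^n_t$ into the target integral and collapse it to a squared norm in parameter space. Write $g^n_t(x) := \nabla_{\mu^n}\Lnonparam_t(v^{\theta_n}, \Pbb_{v^{\theta_n}})(x)$ for the functional gradient and set
\begin{equation}
    w := \int_0^1 \int_{\rset^d} \mathrm{D}_\theta v_s^{\theta_n}(\tilde{x})^\top g^n_s(\tilde{x}) \, \rmd \mu^n_s(\tilde{x}) \, \rmd s \in \rset^p,
\end{equation}
which is precisely the parameter-space vector assumed to be nonzero in the hypothesis. With this notation the leading-order part of the direction reads $d^n_t(x) = -\alpha\, \mathrm{D}_\theta v_t^{\theta_n}(x)\, w + o(\alpha)$.

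First I would insert this into $\int_0^1 \int_{\rset^d} \langle g^n_t(x), d^n_t(x)\rangle \rmd \mu^n_t(x) \rmd t$ and use the adjoint identity $\langle a, M b\rangle = \langle M^\top a, b\rangle$ with $M = \mathrm{D}_\theta v_t^{\theta_n}(x) \in \rset^{d\times p}$ to move the Jacobian onto $g^n_t$, giving
\begin{equation}
    \int_0^1 \int_{\rset^d} \langle g^n_t(x), d^n_t(x)\rangle \rmd \mu^n_t(x) \rmd t = -\alpha \int_0^1 \int_{\rset^d} \langle \mathrm{D}_\theta v_t^{\theta_n}(x)^\top g^n_t(x), w \rangle \rmd \mu^n_t(x) \rmd t + o(\alpha).
\end{equation}
Since $w$ is constant in $(t,x)$, I would pull it out of the integral and recognise the remaining $\mu^n$-integral of $\mathrm{D}_\theta v_t^{\theta_n}(x)^\top g^n_t(x)$ as exactly $w$ by its definition, so the whole expression collapses to $-\alpha \langle w, w\rangle + o(\alpha) = -\alpha \|w\|^2 + o(\alpha)$. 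The sign is then immediate: $\|w\|^2 > 0$ under the hypothesis $w \neq 0$, so the leading term is strictly negative and the directional derivative of the non-parametric loss along $d^n$ (in the sense of \eqref{eq:functional_derivative}) is non-positive for small $\alpha$, which yields the claimed inequality after sending $\alpha \to 0$.

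The main obstacle is controlling the $o(\alpha)$ remainder rigorously: one must verify that the second-order Taylor term carried over from the preceding proposition stays negligible after being integrated against $\mu^n_t$ and paired with $g^n_t$, which is exactly where the uniform Hessian bound \eqref{eq:integrability_condition} is invoked to dominate the remainder uniformly in $(t,x)$ and so justify the interchange of limit and integration. A secondary point worth flagging is notational consistency: the Jacobian appearing inside $w$ must be evaluated at $\theta_n$ (matching the one in $d^n_t$) for the final collapse to $\langle w, w\rangle$ to go through, and the literal $\lim_{\alpha \to 0}$ in the statement merely records that this leading coefficient $-\|w\|^2$ is non-positive.
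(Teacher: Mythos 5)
Your proposal is correct and matches the paper's intended argument: the paper states this proposition as an immediate corollary of the preceding Taylor expansion, and your computation (substituting $d^n_t(x) = -\alpha\,\mathrm{D}_\theta v_t^{\theta_n}(x)\,w + o(\alpha)$, moving the Jacobian by adjointness, and collapsing the pairing to $-\alpha\|w\|^2 + o(\alpha)$) is exactly the omitted one-line verification. Your two flagged caveats — that the Hessian bound \eqref{eq:integrability_condition} is what controls the $o(\alpha)$ remainder, and that the Jacobian in $w$ should be read at $\theta_n$ (a typo in the paper's display) — are both accurate.
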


\section{Background material on DSBM and extensions}
\label{sec:background_dsbm}

In this section we recall some basics on Markovian and reciprocal projections in \Cref{sec:reciprocal_markov}. We explain the link between the concept of \emph{iterative refinement} and Schr\"odinger Bridges in \Cref{sec:lit_dsbm_appendix}. Then, we briefly present Diffusion Schr\"odinger Bridge Matching (DSBM) \citep{shi2023DSBM} in \Cref{sec:sec:dsbm_appendix} and propose some new extensions in \Cref{sec:reflection_projection_appendix}. 

\subsection{Markov and reciprocal projections in practice}
\label{sec:reciprocal_markov}

In this section, we recall the definition of the reciprocal and Markov projection. We provide more details on how these different projections can be performed and illustrate them on simple examples.

\paragraph{Markov projection.} First, we recall the definition of the Markovian projection.

\begin{definitionred}{Markov projection}{markovian_proj_appendix}
  Assume that $\Qbb$ is induced by $(\sqrt{\vareps} \bfB_t)_{t \in [0,1]}$ for $\vareps >0$. Then, when it is well-defined, for any $\Pbb \in \mathcal{R}(\Qbb)$, the \emph{Markovian projection} 
  $\Mbb = \projM(\Pbb) \in \calM$ is the path measure induced by the diffusion
  \begin{equation}
    \rmd \bfX^\star_t =  v_t^\star(\bfX^\star_t) \rmd t + \sqrt{\vareps} \rmd \bfB_t, \qquad v_t^\star(x_t) =  \left(\mathbb{E}_{\Pbb_{1|t}}\left[\bfX_1 \ | \ \bfX_t = x_t\right] - x_t\right)/(1-t) , \qquad \bfX^\star_0 \sim \Pbb_0.
  \end{equation}  
\end{definitionred}

In \Cref{fig:original_coupling} and \Cref{fig:markov_projection}, we illustrate the effect of the Markovian projection, following the example of \citep{liu2022rectified}. We consider two distributions $\pi_0$ and $\pi_1$ such that 
\begin{equation}
    \pi_0 = \frac{1}{2}\mathcal{N}([-2,-2], \Id) + \frac{1}{2} \mathcal{N}([-2,2], \Id) , \quad \pi_1 = \frac{1}{2} \mathcal{N}([2,-2], \Id) +\frac{1}{2}\mathcal{N}([2,2], \Id) .
\end{equation}
In \Cref{fig:original_coupling}, we display samples from the distributions $\pi_0$ and $\pi_1$ as well as trajectories from the path measure $\Pbb = (\pi_0 \otimes \pi_1) \Qbb_{|0,1}$. Practically, this means that we sample $\bfX_0 \sim \pi_0$ and $\bfX_1 \sim \pi_1$ independently and then consider a Brownian bridge between $\bfX_0$ and $\bfX_1$. The SDE associated with the Brownian bridge with scale $\vareps >0$ is given for any $t \in [0,1]$ by
\begin{equation}
\label{eq:brownian_bridge_appendix}
    \rmd \bfX_t = (\bfX_1 - \bfX_t)/ (1-t) \rmd t + \sqrt{\vareps} \rmd \bfB_t . 
\end{equation}
Note that the measure $\Pbb = (\pi_0 \otimes \pi_1) \Qbb_{|0,1}$ is in the reciprocal class, i.e.~$\Pbb \in \mathcal{R}(\Qbb)$. 

\begin{figure}[H]
    \centering
    \includegraphics[width=.4\linewidth]{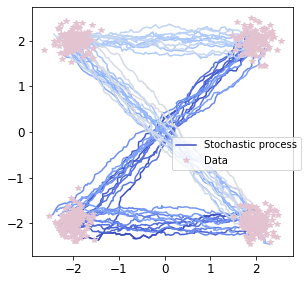}
    \caption{Samples from the original distributions $\pi_0$ (left) and $\pi_1$ (right) are shown in red, while sample paths from $\Pbb = (\pi_0 \otimes \pi_1) \Qbb_{|0,1}$ are shown in blue.}
    \label{fig:original_coupling}
\end{figure}

Next in \Cref{fig:markov_projection}, we display samples from the distributions $\pi_0$ and $\pi_1$ as well as trajectories from the path measure $\Pbb^\star = \projM(\Pbb)$. Note that in \Cref{fig:markov_projection}, contrary to \Cref{fig:original_coupling}, we observe less crossings between the trajectories. Indeed in the limit case where $\vareps \to 0$ the Markov measures $\Pbb^\star$ is an ODE with regular coefficients and therefore admits a unique solution for every starting point in the space so no crossing is possible. In particular, note that most of the trajectories starting from the upper-left Gaussian end at the upper-right Gaussian. Similarly, most of the trajectories starting from the lower-left Gaussian end at the lower-right Gaussian.

\begin{figure}[H]
    \centering
o    \includegraphics[width=.4\linewidth]{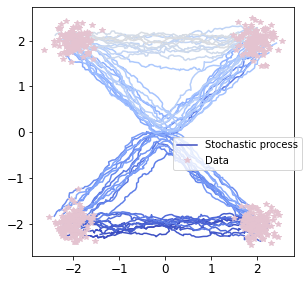}
    \caption{Samples from the original distributions are shown in red, while sample paths from $\Mbb = \projM(\Pbb)$ are shown in blue.}
    \label{fig:markov_projection}
\end{figure}

In practice, computing the Markov projection involves finding the optimal drift $v_t^\star$. This optimal drift is the minimizer of a regression problem, see \citep{shi2023DSBM} for more details. Hence, computing the Markovian projection requires training a neural network to define a vector field.

\paragraph{Reciprocal projection.} First, we recall the definition the reciprocal projection. 

\begin{definitionblue}{Reciprocal projection}{reciprocal_projection_appendix}
  $\Pbb \in \pathmeas$ is in the reciprocal class $\calR(\Qbb)$ of $\Qbb$ if
  $\Pbb = \Pbb_{0,1} \Qbb_{|0,1}$. 
  % We denote by $\calR(\Qbb)$ the reciprocal class of $\Qbb$. 
  We define the
  \emph{reciprocal projection} of $\Pbb \in \pathmeas$ as
  $\Pbb^\star = \projR{\Qbb}(\Pbb) =  \Pbb_{0,1} \Qbb_{|0,1} $. We will write $\projsimpleR$ instead of $\projR{\Qbb}$ to simplify notation.
\end{definitionblue}

To sample from $\Pbb^\star = \projsimpleR(\Pbb)$, we only need to sample $(\bfX_0, \bfX_1) \sim \Pbb_{0,1}$ and then to sample from the Brownian bridge conditioned on $(\bfX_0, \bfX_1)$. This means that in order to sample $\bfX_t^\star \sim \Pbb^\star_t$, we only need to sample $(\bfX_0, \bfX_1) \sim \Pbb_{0,1}$ and then compute 
\begin{equation}
\label{eq:interpolation_appendix_brownian}
    \bfX_t = (1-t) \bfX_0 + t \bfX_1 + \sqrt{\vareps t (1-t)} \bfZ ,
\end{equation}
with $\bfZ \sim \mathcal{N}(0, \Id)$. In particular, sampling from $\Pbb^\star = \projsimpleR(\Pbb)$ does \emph{not} require training any neural network. However, in practice, in order to obtain samples $(\bfX_0, \bfX_1) \sim \Pbb$, we have that $\Pbb$ is associated with an SDE and therefore obtaining $(\bfX_0, \bfX_1)$ requires unrolling the SDE associated with $\Pbb$. In \Cref{alg:online_DSBM_general}, the measure $\Pbb$ is associated with an SDE with parametric drift $v^\theta$. 

In \Cref{fig:reciprocal_projection}, we continue our study of the example of \citep{liu2022rectified} that we used to explain the concept of Markovian projection. We consider the path measure $\Mbb$ obtained as the Markov projection of $\Pbb = (\pi_0 \otimes \pi_1) \Qbb_{|0,1}$. In \Cref{fig:reciprocal_projection}, we display samples from the distributions $\pi_0$ and $\pi_1$ as well as trajectories from the path measure $\Pbb^\star = \Mbb_{0,1} \Qbb_{|0,1}$. In order to sample from $\Pbb^\star$ we first sample $(\bfX_0, \bfX_1) \sim \Mbb_{0,1}$. This involves unrolling the SDE associated with $\Mbb$. Once we have access to samples $(\bfX_0, \bfX_1)$, we draw trajectories from the Brownian bridge following the SDE \eqref{eq:brownian_bridge_appendix}. We can also sample from any time $t$ without having to unroll the SDE \eqref{eq:brownian_bridge_appendix} by simply sampling from \eqref{eq:interpolation_appendix_brownian}. This is what is done in \cref{alg:online_DSBM_general}.

\begin{figure}[H]
    \centering
    \includegraphics[width=.4\linewidth]{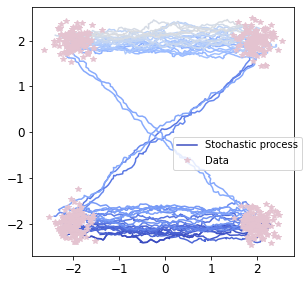}
    \caption{Samples from the original distributions are shown in red, while sample paths from $\Pbb^\star = \projsimpleR(\Mbb)$ are shown in blue.}
    \label{fig:reciprocal_projection}
\end{figure}

\subsection{Iterative refinement and Schr\"odinger Bridge}
\label{sec:lit_dsbm_appendix}

 The Schr\"odinger Bridge problem \eqref{eq:dynamic_ot} can be solved leveraging techniques from diffusion models and bridge matching. \cite{debortoli2021diffusion,vargas2021solving} consider an alternating projection algorithm, corresponding to a dynamic version of the celebrated Sinkhorn algorithm. \cite{peluchetti_diffusion_2023,shi2023DSBM} introduce the Iterative Markovian Fitting procedure which corresponds to perform an alternating projection algorithm on the class of Markov processes and the reciprocal class of the Brownian motion. It can be shown that the solution of this iterative algorithm converges to the Schr\"odinger Bridge under mild assumptions, see \citep{peluchetti_diffusion_2023,shi2023DSBM}. We highlight that in the case where $\vareps \to 0$ then DSBM is equivalent to the Rectified Flow algorithm \citep{liu_flow_2023}. One of the main limitation of those previously introduced procedures which provably converge to the solution of the Schr\"odinger Bridge problem is that they rely on these expensive iterative solvers and requires to consider two networks, one parameterising the forward process $\pi_0 \to \pi_1$ and one parameterising the backward $\pi_1 \to \pi_0$.

\subsection{Diffusion Schr\"odinger Bridge Matching}
\label{sec:sec:dsbm_appendix}

Diffusion Schr\"odinger Bridge Matching corresponds to the practical implementation of the Iterative Markovian Fitting procedure proposed in \cite{shi2023DSBM,peluchetti_diffusion_2023}. The IMF procedure alternates between projecting on the Markov class $\calM$ and
the reciprocal class $\calR_\Qbb$. In what follows, we denote $\Mbb^{n+1} = \Pbb^{2n+1} \in \calM$ and $\Pi^n = \Pbb^{2n} \in \calR(\Qbb)$. We also recall that $\Qbb$ is a (rescaled) Brownian motion associated with $(\sigma_0 \bfB_t)_{t \in [0,1]}$ and that therefore sampling from $\Qbb_{|0,1}(\cdot | x_0, x_1)$ corresponds to sampling from
\begin{equation}
    \rmd \bfX_t = (x_1 - \bfX_t) / (1 - t) \rmd t + \sigma_0 \rmd \bfB_t , \qquad \bfX_0 = x_0 . 
\end{equation}
We recall that the main computational bottleneck of the DSBM lies in the approximation of the Markovian projection. 
Indeed, using \cite[Definition 1, Proposition 2]{shi2023DSBM}, we have that $\Mbb^\star=\projM(\Pi)$ is associated with the process
\begin{equation}
  \rmd \bfX_t = (\mathbb{E}_{\Pi}[\bfX_1 \ | \ \bfX_t] - \bfX_t) / (1 - t) \rmd t + \sigma_0 \rmd \bfB_t \qquad \bfX_0 \sim \pi_0 . 
\end{equation}
We also have using \cite[Proposition 2]{shi2023DSBM} that % We can thus learn 
$\Mbb^\star$ can be approximated using $\Mbb^{\theta^\star}$ given by 
\begin{align}
&  \label{eq:approximate_markovian_proj_forward}
  \rmd \bfX_t =  v_{\theta^\star}(t, \bfX_t) \rmd t + \sigma_0 \rmd \bfB_t , \qquad \bfX_0 \sim \pi_0 , \\
  \label{eq:loss_function_theta}
 & \theta^\star = \argmin_{\theta \in \Theta} \int_0^1 \mathbb{E}_{\Pi_{t,1}}[\|(\bfX_1 - \bfX_t)/(1-t)  - v_\theta(t,\bfX_t)\|^2]\rmd t,
\end{align}
where $\ensembleLigne{v_\theta}{\theta \in \Theta}$ is a parametric family of
functions, usually given by a neural network. 

Hence, since we can approximate $\projR{\Qbb}(\Mbb)$ and $\projM(\Pi)$ we can approximate the IMF procedure. This is the DSBM algorithm introduced in \cite{shi2023DSBM,peluchetti_diffusion_2023}. We describe the first few iterations. 
Let $\Pi^0=   \Pi_{0,1}^0 \Qbb_{|0,1}$ where
$\Pi^0_0 = \pi_0$, $\Pi^0_1 = \pi_1$. 
% and $\Qbb^0_{|0,T}$ is a tractable bridge. 
% Let $\Pbb^0 = \Qbb^0$. 
Learn $\Mbb^1 \approx \projM(\Pi^0)$ given by
\eqref{eq:approximate_markovian_proj_forward} with $v_{\theta^\star}$ given by 
\eqref{eq:loss_function_theta}.  Next, sample from $\Pi^1 = \projR{\Qbb}(\Mbb^1)= \Mbb^1_{0,1} \Qbb_{|0,1}$ by
sampling from $\Mbb^1_{0,1}$ and reconstructing the bridge $\Qbb_{|0,1}$. 
Upon iterating the previous procedure, we obtain a sequence $(\Pi^n,\Mbb^{n+1})_{n \in \nset}$. To mitigate the bias accumulation problem caused by approximating \emph{only} the forward process, we alternate between a \emph{forward} Markovian
projection and a \emph{backward} Markovian projection. We give more details on the advantage of using a forward-backward parameterisation instead of a forward-forward in \Cref{sec:forward_forward_forward_backward}.  This procedure is
valid using \cite[Proposition 9]{shi2023DSBM}. The optimal backward process is approximated with
\begin{align}
&  \label{eq:approximate_markovian_proj_backward}
  \rmd \bfY_t = v_{\phi^\star}(1-t, \bfY_t) \rmd t + \sigma_0 \rmd \bfB_t , \qquad \bfY_0 \sim \pi_1 , \\ 
  \label{eq:loss_function_backward}
 &  \phi^\star = \argmin_{\phi \in \Phi} \int_0^1 \expeLigne{\Pi_{0,t}}{\normLigne{(\bfX_0 - \bfX_t)/t - v_\phi(t,\bfX_t)}^2}  \rmd t.
\end{align}
We recall the full DSBM algorithm in \Cref{alg:DSBM_general}.

\begin{algorithm}[H]
\caption{Diffusion Schr\"odinger Bridge Matching}
\label{alg:DSBM_general}
\begin{algorithmic}[1]
\STATE{\textbf{Input:} Joint distribution $\Pi_{0,1}^0$, tractable bridge $\Qbb_{|0,1}$, number of outer iterations $N \in \nset$.}
\STATE{Let $\Pi^0 = \Pi_{0,1}^0 \Qbb_{|0,1}$.}
\FOR{$n \in \{0, \dots, N-1\}$}
  \STATE Learn $v_{\phi^\star}$ using \eqref{eq:loss_function_backward} with $\Pi = \Pi^{2n}$. 
  \STATE Let $\Mbb^{2n+1}$ be given by \eqref{eq:approximate_markovian_proj_backward}. 
  \STATE Let $\Pi^{2n+1} = \Mbb^{2n+1}_{0,1} \Qbb_{|0,1}$.
  \STATE Learn $v_{\theta^\star}$ using \eqref{eq:loss_function} with $\Pi = \Pi^{2n+1}$. 
  \STATE Let $\Mbb^{2n+2}$ be given by \eqref{eq:approximate_markovian_proj_forward}. 
  \STATE Let $\Pi^{2n+2} = \Mbb^{2n+2}_{0,1} \Qbb_{|0,1}$.
  \ENDFOR
  \STATE \textbf{Output:} $v_{\theta^\star}$, $v_{\phi^\star}$
\end{algorithmic}
\end{algorithm}

\subsection{A Reflection-projection extension}
\label{sec:reflection_projection_appendix}

First, we consider a reflection-projection method similar to the one investigated in \cite{bauschke2004reflection}. We recall that the DSBM algorithm is associated with a sequence $(\Pbb^n)_{n \in \nset}$ such that for any $n \in \nset$
\begin{equation}
    \Pbb^{n+1/2} = \projM(\Pbb^n) , \qquad \Pbb^{n+1} = \projR{\Qbb}(\Pbb^{n+1/2}) .  
\end{equation}
In a reflection-projection scheme, one of the projection is replaced by a reflection. As noted in \cite{bauschke2004reflection}, this can yield faster convergence rates in practice. We consider the sequence $(\Pbb^n)_{n \in \nset}$ such that for any $n \in \nset$
\begin{equation}
\label{eq:iteration_reflection_projection}
    \Pbb^{n+1/2} = \projM(\Pbb^n) , \qquad \Pbb^{n+1} = \projR{\Qbb}(2\Pbb^{n+1/2} - \Pbb^n) .  
\end{equation}
In what follows, we make the assumption that $2\Pbb^{n+1/2} - \Pbb^n$ is a probability measure, even though it is not clear if this path measure is non-negative. However, even by making this strong assumption, we show that we can recover DSBM in \Cref{alg:reflection_projection_extended}. By considering a relaxation of the reflection-projection scheme. 

First, note that for any $n \in \nset$, $\Pbb^n_{|0}$ is Markov, see \cite{debortoli2023augmented} for instance. Hence, we assume that $\Pbb^n$ is associated with 
\begin{equation}
\label{eq:markov_reciprocal}
    \rmd \bfX_t^n = v_{t,0}^n(\bfX_t, \bfX_0) \rmd t + \sigma_0 \rmd \bfB_t , \qquad \bfX_0 \sim \pi_0 . 
\end{equation}
\paragraph{Estimating $\Pbb^{n+1}$.} First, we compute $v_{t,0}^{n+1}$ assuming that we can sample from $\Pbb^n$ and $\Pbb^{n+1/2}$. Since $\Pbb^{n+1}$ is in the reciprocal class, we have that $\Pbb^{n+1}$ is associated with 
\begin{equation}
    \rmd \bfX_t = (\mathbb{E}_{\Pbb^{n+1}_{1|0,t}}[\bfX_1 \ | \ \bfX_t, \bfX_0] - \bfX_t) / (1 - t) \rmd t + \sigma_0 \rmd \bfB_t . 
\end{equation}
We refer to \cite{debortoli2023augmented} for a proof of this fact. Hence, using \eqref{eq:iteration_reflection_projection}, we have that 
\begin{align}
     v_{t,0}^{n+1} &=   \argmin_{v} \int_0^1 \int_{\rset^d \times \rset^d \times \rset^d} \| v_{t,0}(t, x_t, x_0) - (x_1 - x_t)/(1-t) \|^2 \rmd \Pbb^{n+1}_{0,t,1}(x_0, x_t, x_1) \\
    &=   \argmin_{v} 2 \int_0^1 \int_{\rset^d \times \rset^d \times \rset^d} \| v_{t,0}(t, x_t, x_0) - (x_1 - x_t)/(1-t) \|^2 \rmd \Pbb^{n+1/2}_{0,1} \rmd \Qbb_{t|0,1}(x_t|x_0,x_1) \\
    &\qquad \qquad \qquad    - \int_0^1 \int_{\rset^d \times \rset^d \times \rset^d} \| v_{t,0}(t, x_t, x_0) - (x_1 - x_t)/(1-t) \|^2 \rmd \Pbb^{n}_{0,1} \rmd \Qbb_{t|0,1}(x_t|x_0,x_1) . \label{eq:non_markov_dr}
\end{align}
Next, we turn to the estimation of $\Pbb^{n+3/2}$.

\paragraph{Estimating $\Pbb^{n+3/2}$.} Next, we assume that for any $n \in \nset$, $\Pbb^{n+1/2}$ is associated with 
\begin{equation}
    \rmd \bfX_t^n = v_t^n(\bfX_t) \rmd t + \sigma_0 \rmd \bfB_t , \qquad \bfX_0 \sim \pi_0 . 
\end{equation}
Using \eqref{eq:markov_reciprocal}, we have that $v_t^{n+1}$ is given by 
\begin{equation}
     v_t^{n+1} = \argmin_{v} \int_0^1 \int_{\rset^d \times \rset^d} \| v_t(t, x_t) - v_{t,0}^{n+1}(t, x_t, x_0) \|^2 \rmd \Pbb^{n+1}_{0,t}(x_0, x_t) . 
\end{equation}
We note also that using \cite[Proposition 2]{shi2023DSBM} and the fact $\Pbb^n$ is in the reciprocal class of $\Qbb$, we also have that  
\begin{align}
v_t^{n+1} &=   \argmin_{v} 2 \int_0^1 \int_{\rset^d \times \rset^d \times \rset^d} \| v_{t}(t, x_t) - (x_1 - x_t)/(1-t) \|^2 \rmd \Pbb^{n+1/2}_{0,1} \rmd \Qbb_{t|0,1}(x_t|x_0,x_1) \\
    &\qquad \qquad \qquad     - \int_0^1 \int_{\rset^d \times \rset^d \times \rset^d} \| v_{t}(t, x_t) - (x_1 - x_t)/(1-t) \|^2 \rmd \Pbb^{n}_{0,1} \rmd \Qbb_{t|0,1}(x_t|x_0,x_1) . \label{eq:markov_dr}
\end{align}
Hence, assuming that we can sample from $\Pbb^n$ and $\Pbb^{n+1/2}$ then we can estimate $v_{t,0}^{n+1}$ and $v_t^{n+1}$, i.e.~sample from $\Pbb^{n+1}$ and $\Pbb^{n+3/2}$. Note that the losses \eqref{eq:non_markov_dr} and \eqref{eq:markov_dr} only differ by the conditioning with respect to the initial condition $x_0$ and therefore the optimisation can be conducted in parallel. We are now able to propose the following projection-reflection algorithm, see \Cref{alg:reflection_projection}.
\begin{algorithm}[H]
\caption{Reflection Diffusion Schr\"odinger Bridge Matching}
\label{alg:reflection_projection}
\begin{algorithmic}[1]
\STATE{\textbf{Input:} Vector field and conditional vector field $v_t^0$ and $v_{t,0}^0$, noise level $\sigma_0$ and associated bridge $\Qbb_{|0,1}$, number of outer iterations $N \in \nset$, batch size $B$}
\FOR{$n \in \{0, \dots, N-1\}$}
\WHILE{not converged}
\STATE Sample $\bfX_0^{1:B} \sim \pi_0^{\otimes B}$ 
\STATE Sample $\bfX_1^{1:B}$ using $\rmd \bfX_t^{1:B} = v_t^n(\bfX_t^{1:B}) \rmd t + \sigma_0 \rmd \bfB_t $
\STATE Sample $\hat{\bfX}_1^{1:B}$ using $\rmd \hat{\bfX}_t^{1:B} = v_{t,0}^n(\hat{\bfX}_t^{1:B}, \bfX_0^{1:B}) \rmd t + \sigma_0 \rmd \bfB_t $
\STATE $\Lnonparam = \int_0^1 [\sum_{k=1}^{B} \| v_t(\bfX_t^k) - \tfrac{\bfX_1^k - \bfX_t^k}{1-t} \|^2 - (1/2)\sum_{k=1}^B \| v_t(\bfX_t^k) - \tfrac{\bfX_1^k - \bfX_t^k}{1-t} \|^2] \rmd t $
\STATE $\Lnonparam_0 = \int_0^1 [\sum_{k=1}^{B} \| v_{t,0}(\bfX_t^k, \bfX_0^k) - \tfrac{\bfX_1^k - \bfX_t^k}{1-t} \|^2 - (1/2) \sum_{k=1}^B \| v_{t,0}(\bfX_t^k, \bfX_0^k) - \tfrac{\bfX_1^k - \bfX_t^k}{1-t} \|^2] \rmd t $
\STATE $v^{n+1}_t = \mathrm{Gradient step}(\Lnonparam)$
\STATE $v^{n+1}_{t,0} = \mathrm{Gradient step}(\Lnonparam_0)$
\ENDWHILE
\ENDFOR
\STATE{\textbf{Output:} $v_t^{N+1}, v_{t,0}^{N+1}$}
\end{algorithmic}
\end{algorithm}
Note that in \Cref{alg:reflection_projection} we only consider the optimisation of a forward process but similarly to \Cref{alg:DSBM_general}, one can construct a forward backward extension to alleviate some of the bias accumulation problems. Finally, we can interpolate between DSBM and this new reflection algorithm and DSBM by introducing an hyperparameter $\alpha \geq 0$ and consider the following extension given in \Cref{alg:reflection_projection_extended}

\begin{algorithm}[H]
\caption{Reflection Diffusion Schr\"odinger Bridge Matching}
\label{alg:reflection_projection_extended}
\begin{algorithmic}[1]
\STATE{\textbf{Input:} Vector field and conditional vector field $v_t^0$ and $v_{t,0}^0$, noise level $\sigma_0$ and associated bridge $\Qbb_{|0,1}$, number of outer iterations $N \in \nset$, batch size $B$}
\FOR{$n \in \{0, \dots, N-1\}$}
\WHILE{not converged}
\STATE Sample $\bfX_0^{1:B} \sim \pi_0^{\otimes B}$ 
\STATE Sample $\bfX_1^{1:B}$ using $\rmd \bfX_t^{1:B} = v_t^n(\bfX_t^{1:B}) \rmd t + \sigma_0 \rmd \bfB_t $
\STATE Sample $\hat{\bfX}_1^{1:B}$ using $\rmd \hat{\bfX}_t^{1:B} = v_{t,0}^n(\hat{\bfX}_t^{1:B}, \bfX_0^{1:B}) \rmd t + \sigma_0 \rmd \bfB_t $
\STATE $\Lnonparam = \int_0^1 [\sum_{k=1}^{B} \| v_t(\bfX_t^k) - \tfrac{\bfX_1^k - \bfX_t^k}{1-t} \|^2 - \alpha\sum_{k=1}^B \| v_t(\bfX_t^k) - \tfrac{\bfX_1^k - \bfX_t^k}{1-t} \|^2] \rmd t $
\STATE $\Lnonparam_0 = \int_0^1 [\sum_{k=1}^{B} \| v_{t,0}(\bfX_t^k, \bfX_0^k) - \tfrac{\bfX_1^k - \bfX_t^k}{1-t} \|^2 - \alpha \sum_{k=1}^B \| v_{t,0}(\bfX_t^k, \bfX_0^k) - \tfrac{\bfX_1^k - \bfX_t^k}{1-t} \|^2] \rmd t $
\STATE $v^{n+1}_t = \mathrm{Gradient step}(\Lnonparam)$
\STATE $v^{n+1}_{t,0} = \mathrm{Gradient step}(\Lnonparam_0)$
\ENDWHILE
\ENDFOR
\STATE{\textbf{Output:} $v_t^{N+1}, v_{t,0}^{N+1}$}
\end{algorithmic}
\end{algorithm}

Using different values of $\alpha \geq 0$ in \Cref{alg:reflection_projection_extended}, we recover different existing algorithms.  If $\alpha =1$, we recover DSBM \cite{shi2023DSBM}. Finally, if $\alpha = 1/2$, we recover the reflection algorithm \Cref{alg:reflection_projection}.

\section{Consistency in Schr\"odinger Bridge}
\label{sec:consistency_schrodinger_bridge}

The idea of training both the forward and the backward jointly was mentioned in \cite[Section G]{shi2023DSBM}. However, it was still assumed that, while being trained jointly, the forward and backward vector fields were obtained using an $\argmin$ operation, see \cite[Equation (43), (44)]{shi2023DSBM}. In addition, in \cite[Section G]{shi2023DSBM} a consistency loss was proposed in order to enforce that the forward and backward processes match, see \cite[Equation (49)]{shi2023DSBM}. In this section, we leverage new results from \cite{daras2024consistent,debortoli2024target} in order to enforce the internal consistency of the model. 

First, note that for any $(\bfX_t)_{t \in [0,1]}$ associated with $\Pbb \in \mathcal{R}(\Qbb)$ we have for any $0 \leq t_0 \leq t \leq t_1 \leq 1$ that 
\begin{equation}
    \bfX_t = \frac{t - t_0}{t_1 - t_0} \bfX_{t_1} + \frac{t_1 - t}{t_1 - t_0} \bfX_{t_0} +  \sigma_{t_0,t,t_1} \bfZ , \qquad \bfZ \sim \gN(0, \Id), 
\end{equation}
where 
\begin{equation}
    \sigma_{t_0,t,t_1} = \sqrt{\frac{(t - t_0)(t_1 - t)}{t_1 - t_0}}.
\end{equation}
Let $p_t$ be the density of $\bfX_t$ with respect to the Lebesgue measure, we have that for any $0 \leq t_0 \leq t \leq t_1 \leq 1$ and $x_t \in \rset^d$
\begin{equation}
    p_t(x_t) = \int_{\rset^d \times \rset^d} (2 \uppi \sigma_{t_0,t,t_1}^2)^{-d/2} \exp \biggl(-\frac{\| x_t - \frac{t - t_0}{t_1 - t_0} x_{t_1} - \frac{t_1 - t}{t_1 - t_0} x_{t_0} \|^2}{2 \sigma_{t_0, t, t_1}^2}\biggr) p_{t_0}(x_{t_0}) p_{t_1}(x_{t_1}) \rmd x_{t_0} \rmd x_{t_1} . 
\end{equation}
Using the change of variable $x_{t_0} \to x_{t_0} + x_t$ and $x_{t_1} \to x_{t_1} + x_t$ we get that for any $0 \leq t_0 \leq t \leq t_1 \leq 1$ and $x_t \in \rset^d$
\begin{align}
\label{eq:tsi}
  \nabla \log p_t(x_t) = \int_{\rset^d \times \rset^d} \{ \nabla \log p_{t_0}(x_{t_0}) + \nabla \log p_{t_1}(x_{t_1}) \}  p_{t_0, t_1 | t}(x_{t_0}, x_{t_1} | x_t) \rmd x_t . 
\end{align}
This identity for the score has already been presented in a bridge matching context in \cite[Section 3.3]{debortoli2024target}. 
Let $\Pbb \in \mathcal{R}(\Qbb)$ then we have that $\projM(\Pbb)$ is such that for any $t \in [0,1]$, $\projM(\Pbb)_t = \Pbb_t$, see \cite[Proposition 2]{shi2023DSBM}. We have that for any $t \in [0,1]$, $v_t^\vsra(x) + v_{1-t}^\vsla(x) =  \sigma_0^2 \nabla \log p_t(x)$. Combining this result and \eqref{eq:tsi}, this suggests considering the following consistency loss
\begin{align}
\label{eq:consistency_loss}
    \ell_{\mathrm{cons}, (t_0, t, t_1)}(\theta) &= \mathbb{E}[\| v_\theta(t, 1, \bfX_t) + v_\theta(1-t, 0, \bfX_t) \\
    & \qquad - v_\theta(t_0, 1, \bfX_t) - v_\theta(1-t_0, 0, \bfX_{t_0}) - v_\theta(t_1, 1, \bfX_{t_1}) - v_\theta(1-t_1, 0, \bfX_{t_1}) \|^2] . 
\end{align}
Similarly to \eqref{eq:empirical_loss}, we can consider an empirical version of \eqref{eq:consistency_loss}.

\section{Model stitching}
\label{sec:model_stitching}

In \Cref{alg:online_DSBM_general}, the finetuning stage requires a pretrained bridge matching model interpolating between $\pi_0$ and $\pi_1$ (lines 2-7). However, for large datasets with complex distributions $\pi_0$ and $\pi_1$, e.g. ImageNet, training this bridge model from scratch can be computationally expensive. To improve efficiency, we can leverage existing diffusion models targeting $\pi_0$ and $\pi_1$. Specifically, we assume access to generative models transferring between $\gN(0, \Id)$ and $\pi_0$, and between $\gN(0, \Id)$ and $\pi_1$. In the rest of this section, we show how one can adapt \Cref{alg:online_DSBM_general} to this setting. We then comment on the link between the proposed algorithm and Dual Diffusion Implicit Bridges \citep{su2022dual}. 

\paragraph{Setting.} For simplicity, assume that we have two pretrained diffusion models for $\pi_0$ and $\pi_1$. We describe our procedure for $\pi_0$. Consider a forward process of the form $\bfX_t = \bfX_0 + \sigma_t \bfZ$, with $\bfZ \sim \gN(0, \Id)$, where $\sigma_t$ is a hyperparameter. Note that we could have considered an interpolant of the form $\bfX_t = \alpha_t \bfX_0 + \sigma_t \bfZ$ instead, see \cite{song_denoising_2021} for instance. 

We assume that the model $\bfX_t = \bfX_0 + \sigma_t \bfZ$ is associated with the forward diffusion model 
\begin{equation}
\label{eq:forward_process_appendix}
    \rmd \bfX_t =  g_t \rmd \bfB_t,
\end{equation}
where we assume that $g_t \geq 0$ for all $t \in [0,1]$. Note that we have that for any $t \in [0,1]$, $\sigma_t^2 = \int_0^t g_s^2 \rmd s$. In particular, we have that for any $s, t \in [0,1]$ with $s \leq t$
\begin{equation}
    \bfX_t = \bfX_s + \sqrt{\sigma_t^2 - \sigma_s^2} \bfZ , \qquad \bfZ \sim \gN(0, \Id) . 
\end{equation}
Our goal is to solve the following Entropic Optimal Transport problem 
\begin{equation}
\label{eq:static_ot_appendix}
  \Pi^\star = \argmin_{\Pi \in \calP(\rset^{2d})} \Bigl\{ \int_{\rset^d \times \rset^d} c(x,y) \rmd \Pi(x,y) - \varepsilon \mathrm{H}(\Pi) \ ; \ \Pi_0 = \pi_0, \ \Pi_1 = \pi_1 \Bigr\} ,
\end{equation}
where $\vareps > 0$ is some entropic regularisation. We assume that $\vareps >0$ is fixed and assume that there exists $t' \in [0,1]$ such that $\sigma_{t'}^2 = \vareps/2$. We now consider a dynamic version of \eqref{eq:static_ot_appendix} with
\begin{equation}
    \label{eq:dynamic_ot_appendix}
    \Pbb^\star = \argmin_{\Pbb \in \calP(\calC)} \{ \KL(\Pbb | \Qbb)  \ ; \ \Pbb_0 = \pi_0, \ \Pbb_{t'} = \pi_1 \} ,
\end{equation}
where $\Qbb$ is associated with $(\bfX_t)_{t \in [0, t']}$ \eqref{eq:forward_process_appendix}. Note that contrary to the setting presented in the main paper, here we do not consider the integration between time $0$ and $1$ but between time $0$ and $t'$. It can be shown that for any $t \in [0, t']$, $(\bfX_t)_{t \in [0,t']}$ associated with $\Qbb_{t|0, t'}$ is given by 
\begin{equation}
\label{eq:interpolation_brownian_motion_appendix}
\bfX_t = \mathrm{Interp}_t(\bfX_0, \bfX_{t'}, \bfZ) = \left(1-\frac{\sigma_t^2}{\sigma_{t'}^2}\right) \bfX_0 + \frac{\sigma_t^2}{\sigma_{t'}^2} \bfX_{t'} + \sigma_t \sqrt{1 - \frac{\sigma_t^2}{\sigma_{t'}^2} } \bfZ , \quad \bfZ \sim \gN(0, \Id) .
\end{equation}
% This is simply because the generative model was not trained for interpolation tasks and therefore we need to rescale the time to use it as an interpolant proposal. 
Solving \eqref{eq:dynamic_ot_appendix} is equivalent to solving \eqref{eq:static_ot_appendix}. We now propose an algorithm to solve \eqref{eq:dynamic_ot_appendix}. It corresponds to the finetuning stage of \Cref{alg:online_DSBM_general} with a specific initialisation, similar to DSBM-IPF in \cite{shi2023DSBM}. 

By $v_\phi$, we denote a DDM model associated with $\pi_1$:
\begin{equation}
\label{eq:pi_one_gen_model}
    \rmd \bfX_t = v_\phi(t, \bfX_t) \rmd  t + g_t \rmd \bfB_t , \quad \bfX_0 \sim \gN(0, \Id), \quad \bfX_1 \sim \pi_1.
\end{equation}

Similarly, $v_\theta$ denotes a diffusion model associated with $\pi_0$: 
\begin{equation}
\label{eq:pi_zero_gen_model}
    \rmd \bfY_t = v_\theta(t, \bfY_t) \rmd  t + g_t \rmd \bfB_t , \quad \bfY_0 \sim \gN(0, \Id), \quad\bfY_1 \sim \pi_0
\end{equation}

In analogy to \Cref{eq:mixture_bridge_sde_forward_backward}, the two equations above correspond to the forward and backward SDEs.

For a given batch of inputs $\bfX_0^{1:B}$ and $\bfX_1^{1:B}$, timesteps $t \sim \mathrm{Unif}([0,t'])^{\otimes B}$, and interpolations $\bfX_t^\vsla$  and $\bfX_t^\vsra$, we compute the empirical forward and backward losses as the following modification of \Cref{eq:empirical_loss}:
% \begin{align}
% \label{eq:empirical_loss_appendix}
%     &\ell_t^B\left(\theta, \phi, (\bfX_0^{\vsra, 1:B}, \bfX_1^{\vsra, 1:B}, \bfX_t^{\vsra, 1:B}), (\bfX_0^{\vsla, 1:B}, \bfX_1^{\vsla, 1:B}, \bfX_{t_0 -t}^{\vsla, 1:B})\right) \\
%     & \qquad \qquad =  \frac{1}{2B} \sum_{i=1}^B \| v_\theta(t, \bfX_t^{\vsla,i}) - (\bfX_{t_0}^{\vsla,i} - \bfX_t^{\vsla,i})/(\sigma_{t_0}^2 - \sigma_t^2)^{1/2} \|^2 \\
%         & \qquad \qquad \qquad +  \frac{1}{2B} \sum_{i=1}^B \| v_\phi(t_0 - t, \bfX_{t}^{\vsra,i}) - (\bfX_{0}^{\vsra,i} - \bfX_{t}^{\vsra,i})/\sigma_t \|^2 .
% \end{align}

\begin{align}
\label{eq:empirical_loss_stitched}
 &\ell^{\vsra}(\phi; t, \bfX_1, \bfX_t^\vsra)  = \frac{1}{B} \sum_{i=1}^B \Big\| v_\phi\left(t^i, \bfX_t^{\vsra i}\right) - \left(\bfX_1^i - \bfX_t^{\vsra i}\right)/\sigma_t^i \Big\|^2, \\
   &\ell^{\vsla}(\theta; t, \bfX_0, \bfX_t^\vsla)  = \frac{1}{B}\sum_{i=1}^B \| v_\theta\left(t^i, \bfX_t^{\vsla i}\right) - \left(\bfX_0^i - \bfX_t^{\vsla i}\right)/\sqrt{\sigma^2_{t'}-\sigma^2_{t^i})} \|^2.
\end{align}

\Cref{alg:online_DSBM_general_model_stitching} corresponds to an online version of DSBM-IPF \citep{shi2023DSBM} with the initialisation given by two generative models. In \Cref{alg:online_DSBM_general_model_stitching}, we finetune the trained vector fields to solve the interpolation task. At inference time, the SDE associated with vector field $v_\theta$ interpolates between $\pi_1 \to \pi_0$, while the SDE associated with the vector field $v_\phi$ interpolates between $\pi_0 \to \pi_1$. 

\begin{algorithm}[H]
\caption{$\alpha$-Diffusion Schr\"odinger Bridge Matching for DDM finetuning}
\label{alg:online_DSBM_general_model_stitching}
\begin{algorithmic}[1]
\STATE{\textbf{Input:} datasets $\pi_0$ and $\pi_1$,  number  finetuning steps $N_{\mathrm{finetuning}}$, batch size $B$, DDM parameters $\phi$ and $\theta$.}
\FOR{$n \in \{1, \dots, N_{\mathrm{finetuning}}\}$} 
\STATE Sample $(\bfX_0, \bfX_1) \sim (\pi_0 \otimes \pi_1)^{\otimes B}$, $t \sim \mathrm{Unif}([0,1])$, $\bfZ^{1:B} \sim \gN(0, \Id)^{\otimes B}$
\STATE Sample $\hat \bfX_{t'}^{\vsla}$ by solving \eqref{eq:pi_one_gen_model} starting from $\bfX_0$
\STATE Sample $\hat \bfX_{t'}^{\vsra}$ by solving \eqref{eq:pi_zero_gen_model} starting from $\bfX_1$
\STATE Sample $t^\vsla \sim \mathrm{Unif}([0,t'])^{\otimes B}$, $\bfZ^\vsla \sim \gN(0,\Id)^{\otimes B} $, and compute $\bfX_t^{\vsla} = \mathrm{Interp}_{t^\vsla}(\bfX_0, \hat \bfX_{t'}^{\vsla}, \bfZ^\vsla)$
\STATE Sample $t^\vsra \sim \mathrm{Unif}([0,t'])^{\otimes B}$, $\bfZ^\vsra \sim \gN(0,\Id)^{\otimes B} $, and compute $\bfX_t^{\vsra} = \mathrm{Interp}_{t^\vsra}(\bfX_1, \hat \bfX_{t'}^{\vsra}, \bfZ^\vsra)$
\STATE Update $\theta$ with gradient step on $\ell^\vsla(\theta; t^\vsla, \bfX_0, \bfX_t^{\vsla})$
\STATE Update $\phi$ with gradient step on $\ell^\vsra(\phi; t^\vsra, \bfX_1, \bfX_t^{\vsra})$
\ENDFOR
\STATE{\textbf{Output:} $\theta, \phi$ parameters of the finetuned models}
\end{algorithmic}
\end{algorithm}

Our model stitching approach is related to  Dual Diffusion Implicit Bridges (DDIB) \citep{su2022dual}, which uses pretrained diffusion models, but without further finetuning. As highlighted in \cite{shi2023DSBM}, DDIB is inferior to DSBM in terms of quality and alignement of the samples.

\section{Extended related work}
\label{sec:extended_related_work}

We highlight links between our proposed flow and Sinkhorn flows in \Cref{sec:connection_sinkhorn_flows}. We draw connection between our practical approach and Reinforcement Learning in \Cref{sec:links_rl}. We discuss how $\alpha$-IMF is related to (incremental) Expectation-Maximisation in \Cref{sec:link_with_em}. Finally, we discuss how our algorithm can be seen as an instance of continual learning in \Cref{sec:continual_learning_appendix}.

\subsection{Links with Sinkhorn flow}
\label{sec:connection_sinkhorn_flows}

In this section, we discuss the links between our approach and the Sinkhorn flow introduced by \cite{karimi2024sinkhorn}. We start by recalling how Sinkhorn flows are defined and then discuss how they are related to our approach.

\paragraph{$\gamma$-Sinkhorn and Sinkhorn flows.} We first consider the static EOT problem and recall the Sinkhorn procedure, also called Iterative Proportional Fitting. We define a sequence of coupling $(\bar{\Pi}^n, \Pi^n)_{n \in \nset}$, i.e.~for any $n \in \nset$, $\Pi^n \in \mathcal{P}(\rset^d \times \rset^d)$. We let $\Pi^0 = \Qbb_{0,1}$ and we consider for any $n \in \nset$, \begin{align}
\label{eq:sinkhorn}
    &\Pi^n = \argmin \ensembleLigne{ \KL(\Pi \ | \ \bar{\Pi}^n) }{\Pi \in \mathcal{P}(\rset^d \times \rset^d), \ \Pi_0 = \pi_0}, \\
     &\bar{\Pi}^{n+1} = \argmin \ensembleLigne{ \KL(\Pi \ | \ \Pi^n) }{\Pi \in \mathcal{P}(\rset^d \times \rset^d), \ \Pi_1 = \pi_1},
\end{align}
In \cite{karimi2024sinkhorn}, the authors generalise \eqref{eq:sinkhorn} by introducing an extra hyperparameter $\gamma \in (0,1]$ and defining 
\begin{align}
\label{eq:gamma_sinkhorn}
    &\Pi^n = \argmin \ensembleLigne{ \KL(\Pi \ | \ \bar{\Pi}^n) }{\Pi \in \mathcal{P}(\rset^d \times \rset^d), \ \Pi_1 = \pi_1}, \\
     &\bar{\Pi}^{n+1} = \argmin \ensembleLigne{ \gamma \KL(\Pi \ | \ \Pi^n) + (1-\gamma) \KL(\Pi \ | \ \bar{\Pi}^{n}) }{\Pi \in \mathcal{P}(\rset^d \times \rset^d), \ \Pi_0 = \pi_0},
\end{align}
Using \cite[Lemma 2]{karimi2024sinkhorn}, we have that for any $\gamma \in (0,1]$, any $n \in \nset$ and any $x_0, x_1 \in \rset^d$
\begin{equation}
\label{eq:pi_n_sinkhorn}
    (\rmd \bar{\Pi}^n / \rmd \Qbb_{0,1})(x_0, x_1) = \exp[f_\gamma^n(x_0) + g_\gamma^n(x_1)] ,
\end{equation}
with $f_\gamma^0 = g_\gamma^0 = 0$ and for any $n \in \nset$, $\gamma \in (0,1]$ and $x_1 \in \rset^d$
\begin{equation}
\label{eq:g_gamma_update}
    g_\gamma^{n+1}(x_1) = g_\gamma^n(x_1) - \gamma \log (\rmd \bar{\Pi}^n_1 / \rmd \pi_1)(x_1) . 
\end{equation}
In addition, using \cite[Equation (9)]{karimi2024sinkhorn} we have that for any $n \in \nset$, $\gamma \in (0,1]$ and $x_0 \in \rset^d$
\begin{equation}
     f_\gamma^n(x_0) = - \log \left(\int_{\rset^d} \exp[g_\gamma^n(x_1) - (1/(2\vareps)) \| x_0 - x_1 \|^2] \rmd \pi_1(x_1)\right) . 
\end{equation}
When letting $\gamma \to 0$, \eqref{eq:pi_n_sinkhorn} and \eqref{eq:g_gamma_update} suggest to consider for any $s \geq 0$, $x_0, x_1 \in \rset^d$
\begin{equation}
    (\rmd \bar{\Pi}^s / \rmd \Qbb_{0,1})(x_0, x_1) = \exp[f^s(x_0) + g^s(x_1)] , \qquad \Pi^s = \argmin \ensembleLigne{ \KL(\Pi \ | \ \bar{\Pi}^s) }{\Pi, \ \Pi_1 = \pi_1} , 
\end{equation}
where for any $s \geq 0$, $x_1 \in \rset^d$
\begin{equation}
\label{eq:flow_f_g}
     \partial_s g^s(x_1) = - \log (\rmd \bar{\Pi}^s_1 / \rmd \pi_1)(x_1) , \qquad \partial_s f^s(x_0) = \int_{\rset^d} \log (\rmd \bar{\Pi}^s_1 / \rmd \pi_1)(x_1) \rmd \bar{\Pi}^s(x_1|x_0) . 
\end{equation}

\paragraph{Comparison with Schr\"odinger Bridge flows.} In order to compare our approach with the one of \cite{karimi2024sinkhorn}, we start by rewriting the $\gamma$-Sinkhorn algorithm defined by \eqref{eq:gamma_sinkhorn}. To do so, we introduce the projection on the measures with fixed marginal. 

\begin{definition}{Projection on marginals}{}
Let $\Pi \in \mathcal{P}(\rset^d \times \rset^d)$ and $\pi_0 \in \mathcal{P}(\rset^d)$, we define $\projzero{\pi_0}(\Pi)$ as follows
\begin{equation}
    \projzero{\pi_0}(\Pi) = \argmin \ensembleLigne{\KL(\tilde{\Pi} \ | \ \Pi)}{\tilde{\Pi} \in \mathcal{P}(\rset^d \times \rset^d), \tilde{\Pi}_0 = \pi_0} . 
 \end{equation}
 Similarly, for any 
 $\Pi \in \mathcal{P}(\rset^d \times \rset^d)$ and $\pi_1 \in \mathcal{P}(\rset^d)$, we define $\projone{\pi_1}(\Pi)$ as follows
\begin{equation}
    \projone{\pi_1}(\Pi) = \argmin \ensembleLigne{\KL(\tilde{\Pi} \ | \ \Pi)}{\tilde{\Pi} \in \mathcal{P}(\rset^d \times \rset^d), \tilde{\Pi}_1 = \pi_1} . 
 \end{equation}
\end{definition}

With these definitions, we have that for any $n \in \nset$, $\bar{\Pi}^{n+1} = \projzero{\pi_0}(\projone{\pi_1}(\bar{\Pi}^n))$, with $(\bar{\Pi}^n)_{n \in \nset}$ the original Sinkhorn sequence defined by \eqref{eq:sinkhorn}. Similarly, we have that the original Iterative Markovian Fitting (IMF) sequence $(\hat{\Pbb}^n)_{n \in \nset}$ as defined in \eqref{eq:discretisation_flow_path_measure} with $\alpha = 1$ satisfies for any $n \in \nset$, $\hat{\Pbb}^{n+1} = \projR{\Qbb}(\projM(\Pbb^n))$. The analogy between the Sinkhorn iterates and the IMF sequence was already highlighted in \cite{shi2023DSBM,peluchetti_diffusion_2023} and further studied in \cite{brekelmans2023schrodinger}. We know show that similarly, we can draw an analogy between the sequences defined in \eqref{eq:discretisation_flow_path_measure} with $\alpha \in (0,1)$ and the sequences obtained in $\gamma$-Sinkhorn. To do so, we start by introducing 
for any $\Pi, \tilde{\Pi} \in \mathcal{P}(\rset^d \times \rset^d)$
\begin{equation}
    \Lnonparam^{\mathrm{IPF}}(\Pi, \tilde{\Pi}) = \KL(\Pi \ | \projone{\pi_1}(\tilde{\Pi})) , \qquad R^{\mathrm{IPF}}(\Pi, \tilde{\Pi}) =  \KL(\Pi \ | \tilde{\Pi}) . 
\end{equation}

\begin{table}
\centering
\begin{tabular}{|l|c|c|}
\hline
 & $\gamma$-Sinkhorn & $\gamma$-IMF \\
 \hline \hline
Loss function & $\KL(\Pi \ | \projone{\pi_1}(\tilde{\Pi}))$ & $ \int_0^1 \mathbb{E}_{\projR{\Qbb}(\Pbb)} [\| v_t(\bfX_t)-\tfrac{\bfX_1 - \bfX_t}{1-t} \|^2]  \rmd t$\\
regularisation & $\KL(\Pi \ | \tilde{\Pi}) $ & $ \int_0^1 \int_{\rset^d} \| f_t(x_t) - \tilde{f}_t(x_t)\|^2 \rmd \mu_t(x_t) \rmd t$ \\
Update & Implicit & Explicit  \\
\hline
\end{tabular}
\caption{Comparison between $\gamma$-Sinkhorn and $\gamma$-IMF.}
\label{table:comparison_flow}
\end{table}

 With this notation, we can now rewrite \eqref{eq:gamma_sinkhorn} for any $n \in \nset$ as 
 \begin{align}
 \label{eq:rewriting_gamma_sinkhorn_2}
     &\bar{\Pi}^{n+1} = \argmin \ensembleLigne{\Lnonparam^{\mathrm{IPF}}(\Pi, \bar{\Pi}^n) + ((1-\gamma)/\gamma) R^{\mathrm{IPF}}(\Pi, \bar{\Pi}^n)}{\Pi \in \mathcal{P}(\rset^d \times \rset^d), \ \Pi_0 = \pi_0} . 
 \end{align}
  Now, we are going to see that \eqref{eq:rewriting_gamma_sinkhorn} is linked with the discretisation of the path measure flow described in \eqref{eq:discretisation_flow_path_measure}. Recall that for any suitable $v$, we define the path measure $\Pbb_v$ associated with  
\begin{equation}
%\label{eq:mixture_bridge_sde}
    \rmd \bfX_t = v_t(\bfX_t) \rmd t + \sqrt{\vareps} \rmd \bfB_t , \qquad \bfX_0 \sim \pi_0 . 
\end{equation}
 We define 
 \begin{equation}
\label{eq:loss_function_appendix}
    \Lnonparam(v, \Pbb) =\int_0^1  \Lnonparam(v_t, \Pbb)\rmd t= \int_0^1 \int_{\rset^d \times \rset^d} \Big\| v_t(x_t) - \frac{x_1 - x_t}{1-t}\Big\|^2 \rmd \projR{\Qbb}(\Pbb)_{t,1}(x_t, x_1) \rmd s . 
    % =  \int_0^1 \mathbb{E}_{\projR{\Qbb}(\Pbb)}[\| \bfX_1 - f_t(\bfX_t) \|^2] \rmd t .
\end{equation}
Similarly, for any $\mu \in \mathcal{P}(\mathcal{C})$, we define 
\begin{equation}
    R_\mu(\Pbb_v, \Pbb_{\tilde{v}}) = \int_0^1 \int_{\rset^d} \| v_t(x_t) - \tilde{v}_t(x_t)\|^2 \rmd \mu_t(x_t) \rmd t .  
\end{equation}
 Next, we define the sequence of path measures $(\bar{\Pbb}^n)_{n \in \nset}$ such that for any $n \in \nset$
 \begin{align}
 \label{eq:rewriting_gamma_sinkhorn}
     &\bar{\Pbb}^{n+1} = \argmin \ensembleLigne{\Lnonparam(\Pbb, \bar{\Pbb}^n) + (1/\alpha) R_{\mu^n}(\Pi, \bar{\Pbb}^n)}{\Pbb = \Pbb_v, \ \text{for some $v$}} .
 \end{align}
 Now, if we denote $(v^n)_{n \in \nset}$ the sequence such that for any $n \in \nset$, $\bar{\Pbb}^n = \Pbb_{v^n}$ then we have that for any $n \in \nset$, $t \in [0,1]$ and $x \in \rset^d$
 \begin{equation}
 \label{eq:update_implicit}
     v^{n+1}_t(x) = v^n_t(x) - \delta \nabla_{\mu^n} \Lnonparam_t(v^{n+1}, \bar{\Pbb}^n)(x) .
 \end{equation}
 Recall that $(\Pbb^n)_{n \in \nset}$ given by \eqref{eq:discretisation_flow_path_measure} is associated with $(v^n)_{n \in \nset}$ such that for any $n \in \nset$, $t \in [0,1]$ and $x \in \rset^d$
 \begin{equation}
  \label{eq:update_explicit}
     v^{n+1}_t(x) = v^n_t(x) - \delta \nabla_\mu^n \Lnonparam_t(v^{n}, \Pbb^n)_t(x) ,
 \end{equation}
 see \Cref{prop:shadow_sequence}. Therefore, the only difference between \eqref{eq:update_explicit} and \eqref{eq:update_implicit} is that \eqref{eq:update_explicit} is an explicit update whereas \eqref{eq:update_implicit} is an implicit update. We summarise the differences between $\gamma$-Sinkhorn and the discretisation we introduce in \Cref{table:comparison_flow}.

\subsection{Links with Reinforcement Learning}
\label{sec:links_rl}

In this section, we draw some connection between  \Cref{alg:online_DSBM_general} and self-play in Reinforcement Learning. In particular, we introduce a generalisation of \Cref{alg:online_DSBM_general} which uses the concept of replay buffer commonly used in Reinforcement learning, see \cite{mnih2015human} for instance.

We first present a generalisation of \Cref{alg:online_DSBM_general} called Replay Buffer Diffusion Schr\"odinger Bridge Matching \Cref{alg:online_DSBM_general_replay_buffer}. We define a buffer $\mathcal{B}$ as a collection of samples $\{ (\bfX_0^k, \bfX_1^k) \}_{k=1}^N$, where $N \in \nset$ is the size of the buffer equipped with two functions $\mathrm{Add}$ and $\mathrm{Sample}$. We have that $\mathrm{Add}: \ \Omega \times \bigsqcup_{k \in \nset} (\rset^{2d})^k \times (\rset^{2d})^N \to (\rset^{2d})^N$, where $\Omega$ is a probability space. In practice $\mathrm{Add}$ takes a random number (the function can be stochastic), any number of proposed samples as well as the current buffer. As an output $\mathrm{Add}$ returns the updated buffer. We also define $\mathrm{Sample}: \ \Omega \times \nset \times (\rset^{2d})^N \to \bigsqcup_{k \in \nset} (\rset^{2d})^k$. This function takes a random number (the function can be stochastic), a natural number $k$ representing the number of samples to return as well as the current buffer. As an output $\mathrm{Sample}$ returns a batch of $k$ samples from the buffer.

\begin{algorithm}[H]
\caption{Replay Buffer Diffusion Schr\"odinger Bridge Matching}
\label{alg:online_DSBM_general_replay_buffer}
\begin{algorithmic}[1]
\STATE{\textbf{Input:} $\pi_0$, $\pi_1$, $\vareps$ (entropic regularisation), $N_{\mathrm{pretraining}}$ (number of pretraining steps), $N_{\mathrm{finetuning}}$ (number of finetuning steps), $B$ (batch size), $\gamma$ (EMA parameter), $\theta$ (initial parameters), $\mathcal{B}^{\mathrm{fwd}}$ (forward buffer), $\mathcal{B}^{\mathrm{bwd}}$ (backward buffer)}
\STATE $\bar{\theta} = \theta$
\FOR{$n \in \{0, \dots, N_{\mathrm{pretraining}}\}$} 
\STATE Sample $(\bfX_0^{1:B}, \bfX_1^{1:B}) \sim (\pi_0 \otimes \pi_1)^{\otimes B}$, $t \sim \mathrm{Unif}([0,1])$, $\bfZ^{1:B} \sim \gN(0, \Id)^{\otimes B}$
\STATE Compute $\bfX_t^{1:B} = \mathrm{Interp}_t(\bfX_0^{1:B}, \bfX_1^{1:B}, \bfZ^{1:B})$ using \eqref{eq:interpolation_brownian_motion}
\STATE Update $\theta$ with gradient step on $\ell_t^B$, $\bar{\theta} = (1-\gamma) \bar{\theta} + \gamma \theta$
\ENDFOR
\FOR{$n \in \{0, \dots, N_{\mathrm{finetuning}}\}$} 
\IF{$n \equiv 0 [n_{\mathrm{refresh}}]$}
\STATE Sample $(\hat{\bfX}_0^{1:B}, \hat{\bfY}_0^{1:B}) \sim (\pi_0 \otimes \pi_1)^{\otimes B}$, $t \sim \mathrm{Unif}([0,1])$, $\bfZ^{1:B} \sim \gN(0, \Id)^{\otimes B}$
\STATE Sample  $(\bfX_1^{1:B},\bfY_1^{1:B})$ using \eqref{eq:mixture_bridge_sde_forward_backward} with initialisation $(\hat{\bfX}_0^{1:B},\hat{\bfY}_0^{1:B})$
\STATE $\mathcal{B}^{\mathrm{fwd}} = \mathrm{Add}((\hat{\bfX}_0^{1:B}, \bfX_1^{1:B}), \mathcal{B}^{\mathrm{fwd}})$
\STATE $\mathcal{B}^{\mathrm{bwd}} = \mathrm{Add}((\bfY_1^{B}, \hat{\bfY}_0^{1:B}), \mathcal{B}^{\mathrm{bwd}})$
\ENDIF
\STATE $(\hat{\bfX}_0^{1:B}, \bfX_1^{1:B}) = \mathrm{Sample}(B, \mathcal{B}^{\mathrm{fwd}})$
\STATE $(\bfY_1^{1:B}, \hat{\bfY}_0^{1:B}) = \mathrm{Sample}(B, \mathcal{B}^{\mathrm{bwd}})$
\STATE Compute $\bfX_t^{1:B} = \mathrm{Interp}_t(\hat{\bfX}_0^{1:B}, \bfX_1^{1:B}, \bfZ^{1:B})$ using \eqref{eq:interpolation_brownian_motion}
\STATE Compute $\bfY_{1-t}^{1:B} = \mathrm{Interp}_t(\bfY_1^{1:B}, \hat{\bfY}_0^{1:B}, \bfZ^{1:B})$ using \eqref{eq:interpolation_brownian_motion}
\STATE Update $\theta$ with gradient step on $\ell_t^B$, $\bar{\theta} = (1-\gamma) \bar{\theta} + \gamma \theta$
\ENDFOR
\STATE{\textbf{Output:} $(\theta, \bar{\theta})$ parameters of the finetuned model }
\end{algorithmic}
\end{algorithm}

In \Cref{alg:online_DSBM_general_replay_buffer}, we allow for more flexibility than the online procedure by leveraging the concept of replay buffer originally introduced in Reinforcement Learning \cite{mnih2015human}.
The concept of replay buffer has been used previously in Schr\"odinger Bridge works, with the notion of cache where every $n_{\mathrm{refresh}}$ steps a cache is emptied and filled with new samples.
If $n_{\mathrm{refresh}} = 1$, $N=B$ for both $\mathcal{B}^{\mathrm{fwd}}$ and $\mathcal{B}^{\mathrm{fwd}}$ we have that for any $\omega \in \Omega$ and $(\bfX_0^{1:B}, \bfX_1^{1:B}) \in (\rset^{2d})^{B}$ 
\begin{align}
    &\mathrm{Add}(\omega, (\bfX_0^{1:B}, \bfX_1^{1:B}), \mathcal{B}) = (\bfX_0^{1:B}, \bfX_1^{1:B}) , \\ &\mathrm{Sample}(\omega, B, (\bfX_0^{1:B}, \bfX_1^{1:B})) = (\bfX_0^{1:B}, \bfX_1^{1:B}) .
\end{align}
This means that the $\mathrm{Add}$ simply fills the buffer with the new samples while $\mathrm{Sample}$ just return the whole current buffer. In that case we recover \Cref{alg:online_DSBM_general}. For more general update rules, the replay buffers $\mathcal{B}^{\mathrm{fwd}}$ and $\mathcal{B}^{\mathrm{bwd}}$ allow us to collect previous samples and therefore to keep a memory of the past experiences. In future work, we plan to investigate popular choice in experience replay and their impact on the performance of \Cref{alg:online_DSBM_general_replay_buffer}.

% \subsection{Replay Buffer in Reinforcement Learning}
% \label{sec:replay_buffer_rl}

\subsection{Links with Expectation Maximisation}
\label{sec:link_with_em}
In this section, we make a connection between DSBM and the Expectation Maximisation (EM) algorithm, and show that the discretisation of the Schr\"odinger Flow proposed in \Cref{alg:online_DSBM_general} corresponds to some incremental version of an idealised algorithm, as discussed in \cite{neal1998view}. We would like to emphasize that the link between the EM algorithm and Diffusion Schr\"odinger Bridge based methodologies was already highlighted by \cite{vargas2023transport,brekelmans2023schrodinger}. Below, we follow the framework of \cite{brekelmans2023schrodinger} and recall the following definitions.

\begin{definition}{Projections and maximisations}{}
 Let $\msa$ be a subset of $\calP(\calC)$. Then, for any $\Pbb \in \calP(\calC)$, when it is well-defined, we define its \Eproj on $\mathsf{A}$ as $\Pbb^\star = \argmin_{\Qbb \in \mathsf{A}} \KL(\Qbb \ | \ \Pbb)$. Similarly, for any $\Pbb \in \calP(\calC)$, when it is well-defined, we define its \Mproj on $\mathsf{A}$ as $\Pbb^\star = \argmin_{\Qbb \in \mathsf{A}} \KL(\Pbb \ | \ \Qbb)$.   
\end{definition}

In \cite{brekelmans2023schrodinger}, the authors choose \Mproj because this corresponds to the \emph{Maximisation} step in an EM algorithm while the \Eproj corresponds to the \emph{expectation} step in the EM algorithm. In \cite{brekelmans2023schrodinger}, the authors highlight that the Iterative Proportional Fitting procedure is a Expectation-Expectation procedure, i.e.~the alternating projections are both $\mathrm{E}$-projections. In contrast, the Iterative Markovian Fitting procedure is a Maximisation-Maximisation procedure, i.e.~the alternating projections are both $\mathrm{M}$-projections. In particular, we can define the following sequence of path measures $(\Pbb^n)_{n \in \nset}$, where for any $n \in \nset$ we have
\begin{equation}
    \Pbb^{n+1/2} = \argmin_{\Pbb \in \projM}\KL(\Pbb^n \ | \Pbb) , \qquad \Pbb^{n+1} = \argmin_{\Pbb \in \mathcal{R}(\Qbb)}\KL(\Pbb^{n+1/2} \ | \Pbb) .  
\end{equation}
In addition, we have that 
\begin{equation}
    \Pbb^{n+1/2} = \Pbb_{v_\star^{n+1}}, \qquad v_\star^{n+1} = \argmin_v \Lnonparam(v, \Pbb^{n}) , \qquad \Pbb^{n+1} = \argmin_{\Pbb \in \mathcal{R}(\Qbb)}\KL(\Pbb_{v_\star^{n+1}} \ | \ \Pbb)  ,
\end{equation}
since we have that $\Pbb^n = \Pbb^{v_\star^n}$. Hence, our online procedure \Cref{alg:online_DSBM_general}, which corresponds to the discretisation of the flow of path measures \eqref{eq:flow_hat_p} can be rewritten as 
\begin{equation}
    \Pbb^{n+1/2} = \Pbb_{v_\star^{n+1}}, \quad v_\star^{n+1} = \mathrm{Gradientstep}(\Lnonparam(v, \Pbb^n) , \quad \Pbb^{n+1} = \argmin_{\Pbb \in \mathcal{R}(\Qbb)}\KL(\Pbb_{v_\star^{n+1}} \ | \ \Pbb) .  
\end{equation}
Therefore, our proposed algorithm can be seen as an incremental version of the Maximisation-Maximisation algorithm associated with DSBM instead of an incremental version of the Expectation-Maximisation algorithm discussed in \citep{neal1998view}.

\subsection{Links with finetuning of diffusion models}
 \Cref{alg:online_DSBM_general} can be seen as a method to finetune bridge matching. Finetuning of diffusion models and flow matching procedures is an active research area. Most of the existing methodologies optimise for an external cost after a pretraining phase. These procedures rely on Reinforcement Learning strategies \citep{lee2023aligning,black2023training,fan2024reinforcement}. Recently Direct Preference optimisation (DPO) \citep{rafailov2024direct} has been applied to the finetuning of diffusion models in \citep{yang2023using,rafailov2024direct}. Our approach departs from these works as the objective we minimise is given by the EOT cost. However all of these approaches involve some level of self-play, i.e.~are not simulation free.
 
 \subsection{Links with continual learning}
 \label{sec:continual_learning_appendix}
 
 Continual learning develops techniques to train models when the dataset changes during the training, usually to solve different tasks \cite{de2021continual,parisi2019continual,zajkac2023exploring}. In the context of diffusion models, continual learning has been investigated in \cite{masip2023continual,zajkac2023exploring,smith2023continual}. In \citep{masip2023continual}, the authors consider a weighted loss between a diffusion model loss and a distillation loss which ensures some consistency between the model being trained and the previous task model. Similarly to our approach this distillation loss is not simulation-free but, contrary to our loss, the clean samples are not obtained by unrolling the diffusion model but by applying a one-step prediction operator. In \citep{zajkac2023exploring}, consider different replay buffer techniques to train continual diffusion models and observe that experience replay with a small coefficient can bring improvements. Finally, in \citep{smith2023continual}, the authors consider the continual training of a text-to-image diffusion model with LoRA \citep{hu2021lora}. 

\section{Forward-Forward, Forward-Backward and accumulation of error}
\label{sec:forward_forward_forward_backward}

In this section, we investigate how error accumulates in the context of DSBM. In practice, we observe similar conclusions in the case of the online version of DSBM. We compare two methods: one which only trains a forward model and one which trains a forward and a backward model.

In what follows, we assume that $\pi_0 = \pi_1 = \gN(0, \Id)$, we also assume that $\Qbb$ is associated with $(\sqrt{2} \bfB_t)_{t \in [0,1]}$. We recall that for any $t \in [0,1]$, we have that 
\begin{equation}
    \bfX_t = (1-t) \bfX_0 + t \bfX_1 + \sqrt{2t(1-t)} \bfZ , \qquad \bfZ \sim \gN(0, \Id) . 
\end{equation}
We are going to consider to approximate schemes to implement IMF.

\paragraph{Forward-forward.} First, we consider the following sequence of path measures $(\Pbb^n)_{n \in \nset}$. We set $\Pbb^0 = (\pi_0 \otimes \pi_1) \Qbb_{|0,1}$. For any $n \in \nset$, we define $\Pbb^{2n+2} = \Pbb^{2n+1}_{0,1} \Qbb_{|0,1}$, i.e.~$\Pbb^{2n+2} = \projsimpleR(\Pbb^{2n+1})$. In addition, we define $\Pbb^{2n+1} = \projM^{\vareps, \vsra}(\Pbb^{2n})$ such that  $\Pbb^{2n+1}$ is associated with $(\bfX_t)_{t \in [0,1]}$ where for any $t \in [0,1]$
\begin{equation}
\label{eq:markov_proj_approximate}
    \rmd \bfX_t = \{ (\mathbb{E}_{\Pbb^{2n}_{1|t}}[\bfX_1 \ | \ \bfX_t] - \bfX_t) / (1 - t) + \vareps \bfX_t \} \rmd t + \sqrt{2} \rmd \bfB_t , \qquad \bfX_0 \sim \pi_0 , 
\end{equation}
with $\vareps \in \rset$. Recall that if we define $\bar{\Pbb}^{2n+1} = \projM(\Pbb^{2n})$ we have that for any $t \in [0,1]$, $\bar{\Pbb}^{2n+1}$ is associated with $(\bfX_t)_{t \in [0,1]}$ where for any $t \in [0,1]$
\begin{equation}
    \rmd \bfX_t = (\mathbb{E}_{\Pbb^{2n}_{1|t}}[\bfX_1 \ | \ \bfX_t] - \bfX_t) / (1 - t) \rmd t + \sqrt{2} \rmd \bfB_t . 
\end{equation}
Hence, $\projM^{\vareps, \vsra}$ corresponds to making an error of order $x \mapsto \vareps x$ on the estimated velocity field. Doing so, we now longer have that for any $n \in \nset$, $\Pbb^n_1 = \pi_1$. In what follows, we are going to show how the error accumulates for the sequence $\Pbb^n_{0,1}$. 

Before stating \Cref{prop:forward_forward_appendix}, we introduce $f: \ \rset^4 \to \rset$ such that for any $c_{0,0}, c_{1,1}, c_{0,1} > 0$ and $t \in [0,1]$
\begin{align}
    f(c_{0,0}, c_{1,1}, c_{0,1}, t) &= [-(1-t)c_{0,0} + t c_{1,1} + (1-2t) c_{0,1}  - 2t] \\
    & \qquad / [(1-t)^2 c_{0,0} + t^2 c_{1,1} + 2t(1-t) c_{0,1} + 2t(1-t)] .
\end{align}
We define $F(c_{0,0}, c_{1,1}, c_{0,1}, \vareps, t) = 2 \int_0^t f(c_{0,0}, c_{1,1}, c_{0,1}, s) \rmd s + 2 \vareps t$.
Finally, we define 
\begin{equation}
    f_{\mathrm{cov}}(c_{0,0}, c_{1,1}, c_{0,1}, \vareps) = \exp[\tfrac{1}{2} F(c_{0,0}, c_{1,1}, c_{0,1}, \vareps, 1)] ,
\end{equation}
as well as 
\begin{equation}
    f_{\mathrm{var}}(c_{0,0}, c_{1,1}, c_{0,1}, \vareps) = \exp[\tfrac{1}{2} F(c_{0,0}, c_{1,1}, c_{0,1}, \vareps, 1)] (1 + 2 \int_0^1 \exp[-F(c_{0,0}, c_{1,1}, c_{0,1}, \vareps, s)] \rmd s) .
\end{equation}

\begin{proposition}{Forward-Forward updates}{forward_forward_appendix}
For any $n \in \nset$, we have that $\Pbb^{2n+1}_{0,1} = \gN(0, \Sigma^{n+1} \Id)$ where 
\begin{equation}
    \Sigma^{n+1} = \left( \begin{matrix} \Id & c_{0,1}^{n+1} \Id \\
    c_{0,1}^{n+1} \Id & c_{1,1}^{n+1} \Id \end{matrix} \right ) , 
\end{equation}
and for any $n \in \nset$
\begin{align}
    &c_{1,1}^{n+1} = f_{\mathrm{var}}(1, c_{1,1}^n, c_{0,1}^n, \vareps) , \\
    &c_{0,1}^{n+1} = f_{\mathrm{cov}}(1, c_{1,1}^n, c_{0,1}^n, \vareps) . 
\end{align}
\end{proposition}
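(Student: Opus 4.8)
The plan is to argue by induction on $n$, using the fact that every path measure in the scheme stays centred Gaussian, so that the whole recursion collapses to tracking a $2\times 2$ covariance block. The inductive hypothesis I would carry is that $\Pbb^{2n} \in \mathcal{R}(\Qbb)$ with centred Gaussian endpoint law whose $(0,0)$, $(0,1)$ and $(1,1)$ covariance blocks are $\Id$, $c_{0,1}^n \Id$ and $c_{1,1}^n \Id$ respectively; the $(0,0)$ block is always $\Id$ because the forward diffusion is always started at $\bfX_0 \sim \pi_0 = \gN(0,\Id)$. The base case $n=0$ holds since $\Pbb^0 = (\pi_0\otimes\pi_1)\Qbb_{|0,1}$ is reciprocal with $c_{0,1}^0 = 0$ and $c_{1,1}^0 = 1$. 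Because the reciprocal projection satisfies $\projsimpleR(\Pbb)_{0,1} = \Pbb_{0,1}$ by definition, the step $\Pbb^{2n+2} = \projsimpleR(\Pbb^{2n+1})$ leaves the endpoint law unchanged, so it suffices to analyse the single map $\Pbb^{2n}_{0,1} \mapsto \Pbb^{2n+1}_{0,1}$ produced by the approximate forward Markovian projection.

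The crux is to evaluate the drift. Since $\Pbb^{2n}$ is reciprocal, under it one has $\bfX_t = (1-t)\bfX_0 + t\bfX_1 + \sqrt{2t(1-t)}\,\bfZ$ with $(\bfX_0,\bfX_1)$ Gaussian (blocks $\Id, c_{0,1}^n\Id, c_{1,1}^n\Id$) and $\bfZ\sim\gN(0,\Id)$ independent. The vector $(\bfX_0,\bfX_1,\bfX_t)$ is then jointly centred Gaussian, so isotropy reduces the conditioning to scalars and Gaussian regression gives
\begin{equation}
\mathbb{E}_{\Pbb^{2n}_{1|t}}[\bfX_1 \mid \bfX_t = x_t] = \frac{(1-t)c_{0,1}^n + t\,c_{1,1}^n}{D_t}\,x_t, \qquad D_t = (1-t)^2 + t^2 c_{1,1}^n + 2t(1-t)c_{0,1}^n + 2t(1-t),
\end{equation}
where $D_t$ is exactly the denominator in the definition of $f$. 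Substituting this into the perturbed drift $x_t \mapsto (\mathbb{E}[\bfX_1\mid\bfX_t=x_t]-x_t)/(1-t) + \vareps x_t$ and simplifying the numerator --- the factor $(1-t)$ cancels and the constant part collapses to $-(1+t)$ --- shows that the drift is linear in $x_t$,
\begin{equation}
v_t(x_t) = \bigl(f(1,c_{1,1}^n,c_{0,1}^n,t) + \vareps\bigr)\,x_t.
\end{equation}
This algebraic collapse, which relies on the $(0,0)$ block being $\Id$, is the step I expect to be the main obstacle: it is routine but the sign bookkeeping in matching the numerator of $f$ is the easiest place to slip.

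Once the drift is linear, $\Pbb^{2n+1}$ is the law of the time-inhomogeneous Ornstein--Uhlenbeck diffusion $\rmd\bfX_t = \beta_t \bfX_t\,\rmd t + \sqrt{2}\,\rmd\bfB_t$ with $\beta_t = f(1,c_{1,1}^n,c_{0,1}^n,t)+\vareps$ and $\bfX_0 \sim \gN(0,\Id)$, so $(\bfX_0,\bfX_1)$ is again centred Gaussian and the inductive hypothesis reproduces at level $n+1$. It remains to read off the two relevant entries through the moment ODEs. Writing $\Phi_t = \int_0^t \beta_s\,\rmd s = \tfrac{1}{2} F(1,c_{1,1}^n,c_{0,1}^n,\vareps,t)$, the cross-covariance $W_t = \mathbb{E}[\bfX_0\bfX_t]$ solves $\rmd W_t/\rmd t = \beta_t W_t$ with $W_0 = 1$, giving $c_{0,1}^{n+1} = W_1 = e^{\Phi_1} = f_{\mathrm{cov}}(1,c_{1,1}^n,c_{0,1}^n,\vareps)$, while the variance $V_t = \mathbb{E}[\bfX_t^2]$ solves $\rmd V_t/\rmd t = 2\beta_t V_t + 2$ with $V_0 = 1$, whose integrating-factor solution
\begin{equation}
V_t = e^{2\Phi_t}\Bigl(1 + 2\int_0^t e^{-2\Phi_s}\,\rmd s\Bigr)
\end{equation}
evaluated at $t=1$ delivers $c_{1,1}^{n+1} = f_{\mathrm{var}}(1,c_{1,1}^n,c_{0,1}^n,\vareps)$ after using $2\Phi_s = F(1,c_{1,1}^n,c_{0,1}^n,\vareps,s)$. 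This closes the induction and establishes the stated recursion.
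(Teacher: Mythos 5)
Your proposal is correct and takes essentially the same route as the paper's proof: compute the Gaussian regression $\mathbb{E}_{\Pbb^{2n}_{1|t}}[\bfX_1 \mid \bfX_t = x_t]$ under the reciprocal Gaussian coupling, observe that the perturbed drift is linear, $(f(1,c_{1,1}^n,c_{0,1}^n,t)+\vareps)x_t$, solve the resulting time-inhomogeneous Ornstein--Uhlenbeck SDE in closed form, and read off the endpoint covariance blocks, with your explicit induction merely formalising the iteration the paper carries out implicitly. One remark: your moment ODEs yield $c_{0,1}^{n+1}=\exp[\tfrac12 F(1,c_{1,1}^n,c_{0,1}^n,\vareps,1)]$ and $c_{1,1}^{n+1}=\exp[F(1,c_{1,1}^n,c_{0,1}^n,\vareps,1)]\bigl(1+2\int_0^1\exp[-F(1,c_{1,1}^n,c_{0,1}^n,\vareps,s)]\,\rmd s\bigr)$, which is the internally consistent answer; the paper's displayed $f_{\mathrm{var}}$ (prefactor $\exp[\tfrac12 F]$ rather than $\exp[F]$) and the proof's $g_{\mathrm{cov}}$ (prefactor $\exp[G(1)]$ rather than $\exp[\tfrac12 G(1)]$) each carry a factor-of-$\tfrac12$ slip relative to your computation, so your derivation in fact corrects these typos rather than conflicting with the argument.
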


\begin{proof}
Let $\Pbb = (\Pbb_{0,1}) \Qbb_{|0,1}$ where $\Pbb_{0,1}$ is a Gaussian random variable with zero mean and covariance matrix $\Sigma \in \rset^{2d \times 2d}$ such that
\begin{equation}
    \Sigma = \left( \begin{matrix} \Id & c_{0,1} \Id \\
    c_{0,1} \Id & c_{1,1} \Id \end{matrix} \right ) ,
\end{equation}
where $\Id$ is the $d$-dimensional identity matrix and we assume that $c_{0,1}, c_{1,1} > 0$. We denote $\Pbb^\star = \projM^{\vareps, \vsra}(\Pbb)$. We have that $\Pbb_{1|t}$ is a Gaussian random variable with zero mean. We now compute its covariance matrix. First, we have that 
\begin{equation}
    \mathbb{E}[\bfX_t \bfX_1^\top] = (1-t) \mathbb{E}[\bfX_0 \bfX_1^\top] + t \mathbb{E}[\bfX_1 \bfX_1^\top] = [(1-t) c_{0,1}  + t c_{1,1}] \Id . 
\end{equation}
We also have that 
\begin{align}
    \mathbb{E}[\bfX_t \bfX_t^\top] &= (1-t)^2 \mathbb{E}[\bfX_0 \bfX_0^\top] + t(1-t) (\mathbb{E}[\bfX_1 \bfX_0^\top] + \mathbb{E}[\bfX_1 \bfX_0^\top]) + t^2 \mathbb{E}[\bfX_1 \bfX_1^\top] + 2t(1-t) \Id \\
    &= [(1-t)^2 + t^2 c_{1,1} + 2t(1-t) c_{0,1} + 2t(1-t)] \Id \\
    &= [1 - t^2 + t^2 c_{1,1} + 2t(1-t) c_{0,1}] \Id . 
\end{align}
Therefore, we get that for any $t \in [0,1]$ and $x_t  \in \rset^d$  
\begin{equation}
    \mathbb{E}_{\Pbb_{1|t}}[\bfX_1 \ | \ \bfX_t = x_t] = ([(1-t) c_{0,1}  + t c_{1,1}] / [1 - t^2 + t^2 c_{1,1} + 2t(1-t) c_{0,1}] ) x_t .
\end{equation}
Hence, we have that for any $t \in [0,1]$ and $x_t  \in \rset^d$ 
\begin{align}
    & \mathbb{E}_{\Pbb_{1|t}}[\bfX_1 \ | \ \bfX_t = x_t] - x_t = ([(1-t) c_{0,1}  + t c_{1,1}] / [1 - t^2 + t^2 c_{1,1} + 2t(1-t) c_{0,1}] - 1) x_t \\
    & \quad = ([(1-t) c_{0,1}  + t c_{1,1} - 1 + t^2 - t^2 c_{1,1} - 2t(1-t) c_{0,1}] / [1 - t^2 + t^2 c_{1,1} + 2t(1-t) c_{0,1}]) x_t \\
    & \quad = ([(1-t)(1-2t) c_{0,1}  + t(1-t) c_{1,1} - 1 + t^2] / [1 - t^2 + t^2 c_{1,1} + 2t(1-t) c_{0,1}]) x_t \\
    & \quad = (1-t) ([(1-2t) c_{0,1}  + tc_{1,1} - 1 - t] / [1 - t^2 + t^2 c_{1,1} + 2t(1-t) c_{0,1}]) x_t .
\end{align}
So it follows that
\begin{align}
    & (\mathbb{E}_{\Pbb_{1|t}}[\bfX_1 \ | \ \bfX_t = x_t] - x_t)/(1-t) \\
    & \qquad \qquad = ([(1-2t) c_{0,1}  + tc_{1,1} - 1 - t] / [1 - t^2 + t^2 c_{1,1} + 2t(1-t) c_{0,1}]) x_t . \label{eq:conditional_expectation_appendix}
\end{align}
Note that if we set $c_{0,1} = c^2$ and $c_{1,1} = 1$, we recover \cite[Lemma 13]{shi2023DSBM} with $\sigma = 2$. Denote $\Pbb^\star = \projM^{\vareps, \vsra}(\Pbb)$. Combining \eqref{eq:conditional_expectation_appendix} and \eqref{eq:markov_proj_approximate} we get that $\Pbb^\star$ is associated with $(\bfX_t)_{t \in [0,1]}$ such that for any $t \in [0,1]$ we have
\begin{equation}
    \rmd \bfX_t = \{([(1-2t) c_{0,1}  + tc_{1,1} - 1 - t] / [1 - t^2 + t^2 c_{1,1} + 2t(1-t) c_{0,1}]) + \vareps \} \bfX_t \rmd t + \sqrt{2} \rmd \bfB_t . 
\end{equation}
Hence, we get that 
\begin{equation}
     \bfX_t = \exp[\tfrac{1}{2} G(t, c_{0,1}, c_{1,1}, \vareps)] \bfX_0 + \Bigl(2 \int_0^t \exp[- G(s, c_{0,1}, c_{1,1}, \vareps)] \rmd s \exp[G(t, c_{0,1}, c_{1,1}, \vareps)]\Bigr)^{1/2} \bfZ ,
\end{equation}
where $\bfZ \sim \gN(0, \Id)$ is independent from $\bfX_0$ and for any $t \in [0,1]$, $c_{0,1}, c_{1,1}, \vareps > 0$ we have 
\begin{equation}
    G(t, c_{0,1}, c_{1,1}, \vareps) = 2 \int_0^t [(1-2t) c_{0,1}  + tc_{1,1} - 1 - t] / [1 - t^2 + t^2 c_{1,1} + 2t(1-t) c_{0,1}] \rmd t + 2\vareps t . 
\end{equation}
In addition, we define 
\begin{align}
    &g_{\mathrm{cov}}(c_{0,1}, c_{1,1}, \vareps) = \exp[G(1, c_{0,1}, c_{1,1}, \vareps)] , \\
    & g_{\mathrm{var}}(c_{0,1}, c_{1,1}, \vareps) = \exp[G(1, c_{0,1}, c_{1,1}, \vareps)] \Bigl(1 + 2 \int_0^1 \exp[-G(t, c_{0,1}, c_{1,1}, \vareps)] \rmd t \Bigr) .
\end{align}
Hence, we have that 
\begin{equation}
    \mathbb{E}[\bfX_0 \bfX_1^\top] = g_{\mathrm{cov}}(c_{0,1}, c_{1,1}, \vareps) \Id , \qquad \mathbb{E}[\bfX_1 \bfX_1^\top] = g_{\mathrm{var}}(c_{0,1}, c_{1,1}, \vareps) \Id .
\end{equation}
Therefore, since for any $n \in \nset$, we have that $\Pbb^{2n+1} = \projM^{\vareps, \vsra}(\Pbb^{2n})$ and $\Pbb^{2n+2} = \Pbb^{2n+1}_{0,1} \Qbb_{|0,1}$, we define $(c_{0,1}^n, c_{1,1}^n)_{n \in \nset}$ such that for any $n \in \nset$
\begin{equation}
    \mathbb{E}_{\Pbb^{2n}}[\bfX_0 \bfX_1^\top] = c_{0,1}^n \Id , \qquad \mathbb{E}_{\Pbb^{2n}}[\bfX_1 \bfX_1^\top] = c_{1,1}^n \Id . 
\end{equation}
Note that for any $n \in \nset$, we have that 
\begin{equation}
    \mathbb{E}_{\Pbb^{2n+1}}[\bfX_0 \bfX_1^\top] = c_{0,1}^{n+1} \Id , \qquad \mathbb{E}_{\Pbb^{2n+1}}[\bfX_1 \bfX_1^\top] = c_{1,1}^{n+1} \Id . 
\end{equation}
Since $\Pbb^0 = (\pi_0 \otimes \pi_1) \Qbb_{|0,1}$ we get that $c_{0,1}^0 = 0$ and $c_{1,1} = 1$. We have that for any $n \in \nset$
\begin{equation}
    c_{0,1}^{n+1} = g_{\mathrm{cov}}(c_{0,1}^n, c_{1,1}^n, \vareps) , \qquad c_{1,1}^{n+1} = g_{\mathrm{var}}(c_{1,1}^n, c_{1,1}^n, \vareps) ,
\end{equation}
which concludes the proof. 
\end{proof}

\paragraph{Forward-backward.} Next, we consider the following sequences of path measures $(\Pbb^{n, \vsra})_{n \in \nset}$ and $(\Pbb^{n, \vsla})_{n \in \nset}$. We set $\Pbb^{0, \vsra} = \Pbb^{0, \vsla} = (\pi_0 \otimes \pi_1) \Qbb_{|0,1}$. For any $n \in \nset$, we define $\Pbb^{2n+2, \vsra} = \Pbb^{2n+1, \vsla}_{0,1} \Qbb_{|0,1}$ and $\Pbb^{2n+2, \vsla} = \Pbb^{2n+1, \vsra}_{0,1} \Qbb_{|0,1}$, i.e.~$\Pbb^{2n+2, \vsra} = \projsimpleR(\Pbb^{2n+1, \vsla})$ and $\Pbb^{2n+2, \vsla} = \projsimpleR(\Pbb^{2n+1, \vsra})$. In addition, we define $\Pbb^{2n+1, \vsra} = \projM^{\vareps, \vsra}(\Pbb^{2n, \vsla})$ such that for any $t \in [0,1]$, $\Pbb^{2n+1, \vsra}$ is associated with $(\bfX_t)_{t \in [0,1]}$ where 
\begin{equation}
    \rmd \bfX_t = \{ (\mathbb{E}_{\Pbb^{2n, \vsra}_{1|t}}[\bfX_1 \ | \ \bfX_t] - \bfX_t) / (1 - t) + \vareps \bfX_t \} \rmd t + \sqrt{2} \rmd \bfB_t , \qquad \bfX_0 \sim \pi_0 , 
\end{equation}
with $\vareps \in \rset$. Similarly, we define $\Pbb^{2n+1, \vsla} = \projM^{\vareps, \vsla}(\Pbb^{2n, \vsra})$ such that for any $t \in [0,1]$, $\Pbb^{2n+1, \vsla}$ is associated with $(\bfY_{1-t})_{t \in [0,1]}$ where 
\begin{equation}
    \rmd \bfY_t = \{ (\mathbb{E}_{\Pbb^{2n, \vsla}_{0|t}}[\bfX_0 \ | \ \bfY_t] - \bfY_t) / (1 - t) + \vareps \bfY_t \} \rmd t + \sqrt{2} \rmd \bfB_t , \qquad \bfY_0 \sim \pi_1 . 
\end{equation}
\begin{proposition}{Forward-Backward updates}{forward_backward_appendix}
For any $n \in \nset$, we have that $\Pbb^{2n+1, \vsra}_{0,1} = \gN(0, \Sigma^{n+1, \vsra} \Id)$ and $\Pbb^{2n+1, \vsla}_{0,1} = \gN(0, \Sigma^{n+1, \vsla} \Id)$ where 
\begin{equation}
    \Sigma^{n+1, \vsra} = \left( \begin{matrix} \Id & c_{0,1}^{n+1, \vsra} \Id \\
    c_{0,1}^{n+1, \vsra} \Id & c_{1,1}^{n+1, \vsra} \Id \end{matrix} \right ) , \qquad 
    \Sigma^{n+1, \vsla} = \left( \begin{matrix} c_{0,0}^{n+1, \vsla} \Id & c_{0,1}^{n+1, \vsla} \Id \\
    c_{0,1}^{n+1, \vsla} \Id &  \Id \end{matrix} \right ) , 
\end{equation}
and for any $n \in \nset$
\begin{align}
    &c_{1,1}^{n+1, \vsra} = f_{\mathrm{var}}(c_{0,0}^{n, \vsla}, 1, c_{0,1}^{n, \vsla}, \vareps) , \\
    &c_{0,1}^{n+1, \vsra} = f_{\mathrm{cov}}(c_{0,0}^{n, \vsla}, 1, c_{0,1}^{n, \vsla}, \vareps) , \\
    &c_{0,0}^{n+1, \vsla} = f_{\mathrm{var}}(1, c_{1,1}^{n, \vsra}, c_{0,1}^{n, \vsra}, \vareps) , \\
    &c_{0,1}^{n+1, \vsla} = f_{\mathrm{cov}}(1, c_{1,1}^{n, \vsra}, c_{0,1}^{n, \vsra}, \vareps) . 
\end{align}
\end{proposition}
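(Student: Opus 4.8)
The plan is to prove the claim by induction on $n$, mirroring the scalar computation in the proof of \Cref{prop:forward_forward_appendix} but now carrying an arbitrary value of the anchored-endpoint variance rather than fixing it to $1$. The induction hypothesis is that $\Pbb^{2n-1, \vsra}_{0,1}$ and $\Pbb^{2n-1, \vsla}_{0,1}$ are centred Gaussians with the stated block structure (in particular $\Pbb^{2n-1, \vsra}$ has $c_{0,0} = 1$ since it is re-anchored at $\pi_0$, and $\Pbb^{2n-1, \vsla}$ has $c_{1,1} = 1$ since it is re-anchored at $\pi_1$), the base case being the initial coupling $\Pbb^0 = (\pi_0 \otimes \pi_1) \Qbb_{|0,1}$ for which $c_{0,0} = c_{1,1} = 1$ and $c_{0,1} = 0$. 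First I would record that each reciprocal projection leaves the endpoint law $\Pbb_{0,1}$ unchanged and maps a centred-Gaussian coupling to a Gaussian mixture of Brownian bridges; because the even steps cross the two chains, $\Pbb^{2n, \vsra} = \projsimpleR(\Pbb^{2n-1, \vsla})$ and $\Pbb^{2n, \vsla} = \projsimpleR(\Pbb^{2n-1, \vsra})$, the forward Markovian projection at step $2n+1$ is fed the endpoint covariance $(c_{0,0}^{n, \vsla}, 1, c_{0,1}^{n, \vsla})$ while the backward one is fed $(1, c_{1,1}^{n, \vsra}, c_{0,1}^{n, \vsra})$.

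For the forward update I would reuse the Gaussian conditional-expectation computation from the proof of \Cref{prop:forward_forward_appendix}, keeping $c_{0,0}$ general. For a bridge built from endpoints with covariance $(c_{0,0}, c_{1,1}, c_{0,1})$ one computes $\mathbb{E}[\bfX_t \bfX_1^\top]$ and $\mathbb{E}[\bfX_t \bfX_t^\top]$, so that $\mathbb{E}[\bfX_1 \mid \bfX_t = x_t]$ is linear in $x_t$; subtracting $x_t$, dividing by $1-t$ and adding the error term $\vareps x_t$ shows the drift of $\projM^{\vareps, \vsra}$ is $x_t \mapsto (f(c_{0,0}, c_{1,1}, c_{0,1}, t) + \vareps)\, x_t$ with exactly the function $f$ of the statement, the $-2t$ in its numerator arising from the Brownian-bridge variance $2t(1-t)$. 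Since the drift is linear the Markovian projection is again Gaussian, and solving this scalar linear SDE started at $\bfX_0 \sim \pi_0 = \gN(0, \Id)$ gives $\bfX_1 = \exp[\tfrac12 F(\cdots,1)]\,\bfX_0 + (\text{independent noise})$. Hence $\mathbb{E}[\bfX_0\bfX_0^\top] = \Id$ (re-anchoring), $\mathbb{E}[\bfX_0 \bfX_1^\top] = f_{\mathrm{cov}}(\cdots)\,\Id$, and $\mathbb{E}[\bfX_1 \bfX_1^\top] = f_{\mathrm{var}}(\cdots)\,\Id$, where the "$1$" inside $f_{\mathrm{var}}$ is precisely the anchored variance of $\bfX_0$. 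Plugging in $(c_{0,0}^{n, \vsla}, 1, c_{0,1}^{n, \vsla})$ yields the two forward recursions and confirms $\Sigma^{n+1, \vsra}$ has the asserted form.

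The backward update I would obtain from the forward one by the reflection $\bfX_0 \leftrightarrow \bfX_1$, $t \leftrightarrow 1-t$: the process $(\bfY_{1-t})_{t\in[0,1]}$ with $\bfY_0 \sim \pi_1$ is, in its own time, a forward bridge from $\bfY_0 = \bfX_1$ to $\bfY_1 = \bfX_0$, so the same scalar-SDE analysis applies with the roles of $c_{0,0}$ and $c_{1,1}$ interchanged and with $c_{1,1}^{\mathrm{out}} = 1$ now anchored by $\pi_1$. Feeding in the backward input covariance $(1, c_{1,1}^{n, \vsra}, c_{0,1}^{n, \vsra})$ then expresses $c_{0,0}^{n+1, \vsla}$ and $c_{0,1}^{n+1, \vsla}$ through $f_{\mathrm{var}}$ and $f_{\mathrm{cov}}$, giving $\Sigma^{n+1, \vsla}$ its stated shape. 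I expect the main obstacle to be exactly this reflection bookkeeping: because $f$ is not symmetric in its first two arguments and its bridge-variance term couples $t$ with $1-t$, one must check carefully that the reflected computation reproduces the same functions $f_{\mathrm{var}}, f_{\mathrm{cov}}$ in the argument order appearing in the statement, keeping precise track of which endpoint is anchored to $\gN(0, \Id)$ and which endpoint covariance enters the drift. Once the forward scalar-SDE computation and this symmetry identity are in place, the induction closes and both chains remain jointly Gaussian with the claimed block covariances.
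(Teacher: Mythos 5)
Your plan coincides with the paper's proof: the paper likewise redoes the forward-forward Gaussian computation of \Cref{prop:forward_forward_appendix} with a general anchored-endpoint variance (taking $c_{1,1}=1$ and $c_{0,0}$ free, which is exactly the covariance fed in by the crossed reciprocal projections), solves the resulting scalar linear SDE to read off the covariance and cross-covariance updates, and disposes of the backward direction by exchanging the endpoint roles.

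The one point you leave open --- whether the reflection reproduces $f_{\mathrm{var}}, f_{\mathrm{cov}}$ in the argument order printed in the statement --- deserves the suspicion you give it, and it resolves with a swap. Writing the backward input covariance as $(c_{0,0},c_{1,1},c_{0,1})=(1,c_{1,1}^{n,\vsra},c_{0,1}^{n,\vsra})$ and computing $\mathbb{E}[\bfX_0 \mid \bfX_{1-t}=y]$ directly, the drift of $\projM^{\vareps,\vsla}$ in its own time variable is $y\mapsto (f(c_{1,1}^{n,\vsra},1,c_{0,1}^{n,\vsra},t)+\vareps)\,y$: the numerator is $-(1-t)c_{1,1}^{n,\vsra}+(1-2t)c_{0,1}^{n,\vsra}-t$ and the denominator is $(1-t)^2c_{1,1}^{n,\vsra}+t^2+2t(1-t)c_{0,1}^{n,\vsra}+2t(1-t)$, so time reversal swaps the first two slots of $f$. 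Moreover $f$ is genuinely asymmetric in those slots: one checks $f(a,b,c,t)+f(b,a,c,1-t)=-2/D(t)$, with $D$ the common denominator, so the integrated quantities $F(a,b,c,\vareps,1)$ and $F(b,a,c,\vareps,1)$ differ in general and no symmetry identity rescues the printed order. The backward updates the computation actually yields are $c_{0,0}^{n+1,\vsla}=f_{\mathrm{var}}(c_{1,1}^{n,\vsra},1,c_{0,1}^{n,\vsra},\vareps)$ and $c_{0,1}^{n+1,\vsla}=f_{\mathrm{cov}}(c_{1,1}^{n,\vsra},1,c_{0,1}^{n,\vsra},\vareps)$, and this is what the paper's own proof establishes: its functions $g_{\mathrm{cov}}(c_{0,1},c_{0,0},\vareps)$, $g_{\mathrm{var}}(c_{0,1},c_{0,0},\vareps)$ place the free variance in the slot carrying the $(1-t)$ weights and are then evaluated at $c_{0,0}=c_{1,1}^{n,\vsra}$. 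The argument order in the proposition display is therefore best read as a typo, not a target your reflection must hit. With the slots swapped, your induction closes, and the rest of your outline (re-anchoring $c_{0,0}=1$ after each forward step and $c_{1,1}=1$ after each backward step, cross-feeding the two chains through $\projsimpleR$, and $f_{\mathrm{cov}}=\exp[\tfrac{1}{2}F(\cdot,1)]$ from the linear-SDE solve) matches the paper's argument step for step.
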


The proof is similar to the one of \Cref{prop:forward_forward_appendix}.

\begin{proof}
Let $\Pbb = (\Pbb_{0,1}) \Qbb_{|0,1}$ where $\Pbb_{0,1}$ is a Gaussian random variable with zero mean and covariance matrix $\Sigma \in \rset^{2d \times 2d}$ such that
\begin{equation}
    \Sigma = \left( \begin{matrix} c_{0,0} \Id & c_{0,1} \Id \\
    c_{0,1} \Id &  \Id \end{matrix} \right ) ,
\end{equation}
where $\Id$ is the $d$-dimensional identity matrix and $c_{0,1}, c_{0,0} > 0$. We denote $\Pbb^\star = \projM^{\vareps, \vsra}(\Pbb)$. We have that $\Pbb_{1|t}$ is a Gaussian random variable with zero mean. We now compute its covariance matrix. First, we have that 
\begin{equation}
    \mathbb{E}[\bfX_t \bfX_1^\top] = (1-t) \mathbb{E}[\bfX_0 \bfX_1^\top] + t \mathbb{E}[\bfX_1 \bfX_1^\top] = [(1-t) c_{0,1}  + t ] \Id . 
\end{equation}
We also have that 
\begin{align}
    \mathbb{E}[\bfX_t \bfX_t^\top] &= (1-t)^2 \mathbb{E}[\bfX_0 \bfX_0^\top] + t(1-t) (\mathbb{E}[\bfX_1 \bfX_0^\top] + \mathbb{E}[\bfX_1 \bfX_0^\top]) + t^2 \mathbb{E}[\bfX_1 \bfX_1^\top] + 2t(1-t) \Id \\
    &= [(1-t)^2 c_{0,0} + t^2  + 2t(1-t) c_{0,1} + 2t(1-t)] \Id \\
    &= [2t - t^2 + (1-t)^2 c_{0,0} + 2t(1-t) c_{0,1}] \Id . 
\end{align}
Therefore, we get that for any $t \in [0,1]$ and $x_t  \in \rset^d$  
\begin{equation}
    \mathbb{E}_{\Pbb_{1|t}}[\bfX_1 \ | \ \bfX_t = x_t] = ([(1-t) c_{0,1}  + t ] / [2t - t^2 + (1-t)^2 c_{0,0} + 2t(1-t) c_{0,1}] ) x_t .
\end{equation}
Hence, we have that for any $t \in [0,1]$ and $x_t  \in \rset^d$ 
\begin{align}
    & \mathbb{E}_{\Pbb_{1|t}}[\bfX_1 \ | \ \bfX_t = x_t] - x_t = ([(1-t) c_{0,1}  + t ] / [2t - t^2 + (1-t)^2 c_{0,0} + 2t(1-t) c_{0,1}] - 1) x_t \\
    & \qquad = ([(1-t) c_{0,1}  + t - 2t + t^2 - (1-t)^2 c_{0,0} - 2t(1-t) c_{0,1}] \\
    & \qquad \qquad / [2t - t^2 + (1-t)^2 c_{0,0} + 2t(1-t) c_{0,1}]) x_t \\
    & \qquad = ([(1-t)(1-2t) c_{0,1} - (1-t)^2 c_{0,0} - t(1-t)] / [2t - t^2 + (1-t)^2 c_{0,0} + 2t(1-t) c_{0,1}]) x_t \\
    & \qquad = (1-t)([(1-2t) c_{0,1} - (1-t) c_{0,0} - t] / [2t - t^2 + (1-t)^2 c_{0,0} + 2t(1-t) c_{0,1}]) x_t .
\end{align}
Finally, we have that for any $t \in [0,1]$ and $x_t  \in \rset^d$ 
\begin{align}
    & (\mathbb{E}_{\Pbb_{1|t}}[\bfX_1 \ | \ \bfX_t = x_t] - x_t)/(1-t) \\
    & \qquad \qquad = ([(1-2t) c_{0,1} - (1-t) c_{0,0} - t] / [2t - t^2 + (1-t)^2 c_{0,0} + 2t(1-t) c_{0,1}]) x_t . \label{eq:conditional_expectation_duo_appendix}
\end{align}
Note that if we set $c_{0,1} = c^2$ and $c_{0,0} = 1$, we recover \cite[Lemma 13]{shi2023DSBM} with $\sigma = 2$. Denote $\Pbb^\star = \projM^{\vareps, \vsra}(\Pbb)$. Combining \eqref{eq:conditional_expectation_duo_appendix} and \eqref{eq:markov_proj_approximate} we get that $\Pbb^\star$ is associated with $(\bfX_t)_{t \in [0,1]}$ such that for any $t \in [0,1]$ we have
\begin{equation}
    \rmd \bfX_t = \{([(1-2t) c_{0,1} - (1-t) c_{0,0} - t] / [2t - t^2 + (1-t)^2 c_{0,0} + 2t(1-t) c_{0,1}])  + \vareps \} \bfX_t \rmd t + \sqrt{2} \rmd \bfB_t . 
\end{equation}
Hence, we get that 
\begin{equation}
     \bfX_t = \exp[\tfrac{1}{2} H(t, c_{0,1}, c_{0,0}, \vareps)] \bfX_0 + (2 \int_0^t \exp[- H(s, c_{0,1}, c_{0,0}, \vareps)] \rmd s \exp[H(t, c_{0,1}, c_{0,0}, \vareps)])^{1/2} \bfZ ,
\end{equation}
where $\bfZ \sim \gN(0, \Id)$ is independent from $\bfX_0$ and for any $t \in [0,1]$, $c_{0,1}, c_{1,1}, \vareps > 0$ we have 
\begin{equation}
    H(t, c_{0,1}, c_{0,0}, \vareps) = 2 \int_0^t [(1-2t) c_{0,1} - (1-t) c_{0,0} - t] / [2t - t^2 + (1-t)^2 c_{0,0} + 2t(1-t) c_{0,1}] \rmd t + 2 \vareps t . 
\end{equation}
In addition, we define 
\begin{align}
    &g_{\mathrm{cov}}(c_{0,1}, c_{0,0}, \vareps) = \exp[\tfrac{1}{2}H(1, c_{0,1}, c_{0,0}, \vareps)] , \\
    & g_{\mathrm{var}}(c_{0,1}, c_{0,0}, \vareps) = \exp[H(1, c_{0,1}, c_{0,0}, \vareps)] \Bigl(1 + 2 \int_0^1 \exp[-H(t, c_{0,1}, c_{0,0}, \vareps)] \rmd t \Bigr) .
\end{align}
Hence, we have that 
\begin{equation}
    \mathbb{E}[\bfX_0 \bfX_1^\top] = g_{\mathrm{cov}}(c_{0,1}, c_{0,0}, \vareps) \Id , \qquad \mathbb{E}[\bfX_1 \bfX_1^\top] = g_{\mathrm{var}}(c_{0,1}, c_{0,0}, \vareps) \Id . \label{eq:update_equation_appendix}
\end{equation}
Remember that  $\Pbb^{0, \vsra} = \Pbb^{0, \vsla} = (\pi_0 \otimes \pi_1) \Qbb_{|0,1}$. In addition, for any $n \in \nset$, we have $\Pbb^{2n+2, \vsra} = \Pbb^{2n+1, \vsla}_{0,1} \Qbb_{|0,1}$ and $\Pbb^{2n+2, \vsla} = \Pbb^{2n+1, \vsra}_{0,1} \Qbb_{|0,1}$, i.e.~$\Pbb^{2n+2, \vsra} = \projsimpleR(\Pbb^{2n+1, \vsla})$ and $\Pbb^{2n+2, \vsla} = \projsimpleR(\Pbb^{2n+1, \vsra})$. In addition, we also have $\Pbb^{2n+1, \vsra} = \projM^{\vareps, \vsra}(\Pbb^{2n, \vsla})$ and $\Pbb^{2n+1, \vsla} = \projM^{\vareps, \vsla}(\Pbb^{2n, \vsra})$. We also define $(c_{0,1}^{n, \vsra}, c_{1,1}^{n, \vsra})_{n \in \nset}$ such that for any $n \in \nset$
\begin{equation}
    \mathbb{E}_{\Pbb^{2n, \vsra}}[\bfX_0 \bfX_1^\top] = c_{0,1}^{n, \vsra} \Id , \qquad \mathbb{E}_{\Pbb^{2n, \vsra}}[\bfX_1 \bfX_1^\top] = c_{1,1}^{n, \vsra} \Id . 
\end{equation}
Finally, we define $(c_{0,1}^{n, \vsla}, c_{1,1}^{n, \vsla})_{n \in \nset}$ such that for any $n \in \nset$
\begin{equation}
    \mathbb{E}_{\Pbb^{2n, \vsla}}[\bfX_0 \bfX_1^\top] = c_{0,1}^{n, \vsla} \Id , \qquad \mathbb{E}_{\Pbb^{2n, \vsla}}[\bfX_0 \bfX_0^\top] = c_{0,0}^{n, \vsla} \Id . 
\end{equation}
Using this definition and \eqref{eq:update_equation_appendix} we get that for any $n \in \nset$
\begin{align}
    &c_{0,1}^{n+1, \vsra} = g_{\mathrm{cov}}(c_{0,1}^{n, \vsla}, c_{0,0}^{n, \vsla}, \vareps) , \qquad c_{1,1}^{n+1, \vsra} = g_{\mathrm{var}}(c_{0,1}^{n, \vsla}, c_{0,0}^{n, \vsla}, \vareps) , \\
    &c_{0,1}^{n+1, \vsla} = g_{\mathrm{cov}}(c_{0,1}^{n, \vsra}, c_{1,1}^{n, \vsra}, \vareps) , \qquad c_{0,0}^{n+1, \vsla} = g_{\mathrm{var}}(c_{0,1}^{n, \vsra}, c_{1,1}^{n ,\vsra}, \vareps) .
\end{align}
In addition, we have that $c_{0,1}^{n, \vsla} = c_{0,1}^{n, \vsra} = 0$ and $c_{1,1}^{0, \vsra} = c_{0,0}^{0, \vsla} = 1$. This concludes the proof. 
\end{proof}

\paragraph{Error accumulation.} In \Cref{prop:forward_forward_appendix} and \Cref{prop:forward_backward_appendix}, we derive the sequences corresponding to the evolution of the variance and the covariance throughout the DSBM iterations in forward-forward mode or forward-backward mode. In what follows, we showcase the behavior of these sequences for different values of $\vareps > 0$. We recall that $\vareps$ corresponds to the error made in the Markov projection, i.e.~$\projM$ is replaced by $\projM^{\vareps, \vsra}$ in the forward-forward mode and $\projM$ is replaced by $\projM^{\vareps, \vsra}$ and $\projM^{\vareps, \vsla}$ in the forward-backward mode. First, if we consider the perfect scenario, i.e.~$\vareps = 0$, then we observe that both the forward-forward mode and the forward-backward mode satisfy that $\mathbb{E}_{\Pbb^{2n}}[\bfX_1 \bfX_1^\top] = \Id$, see \Cref{fig:error_accumulation_variance} and \Cref{fig:error_accumulation_covariance}. Additionally, we can show that in the perfect scenario, i.e.~$\vareps = 0$, then both the forward-forward mode and the forward-backward mode satisfy that $\lim_{n \to +\infty} \mathbb{E}_{\Pbb^{2n}}[\bfX_1 \bfX_0^\top] = (\sqrt{2} - 1) \Id$, see \Cref{fig:error_accumulation_variance} and \Cref{fig:error_accumulation_covariance}. However, as $\vareps$ increases the behavior between the forward-forward sequence and the forward-backward sequence significantly differs. More precisely, the error explodes as $\vareps$ increases along the DSBM iteration for the forward-forward mode. On the contrary, in the forward-backward mode, the error remains bounded along the DSBM iterations, see \Cref{fig:error_accumulation_variance} and \Cref{fig:error_accumulation_covariance}. 

\begin{figure}
    \centering
    \includegraphics[width=.45\linewidth]{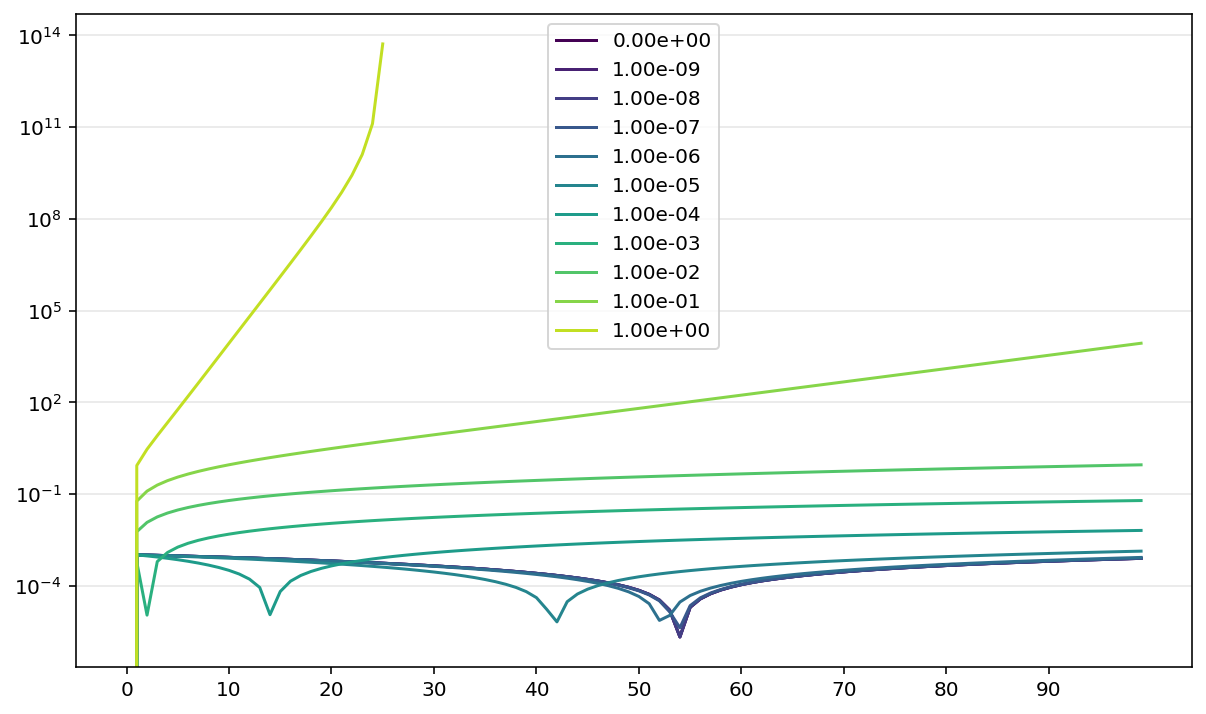} \hfill
    \includegraphics[width=.45\linewidth]{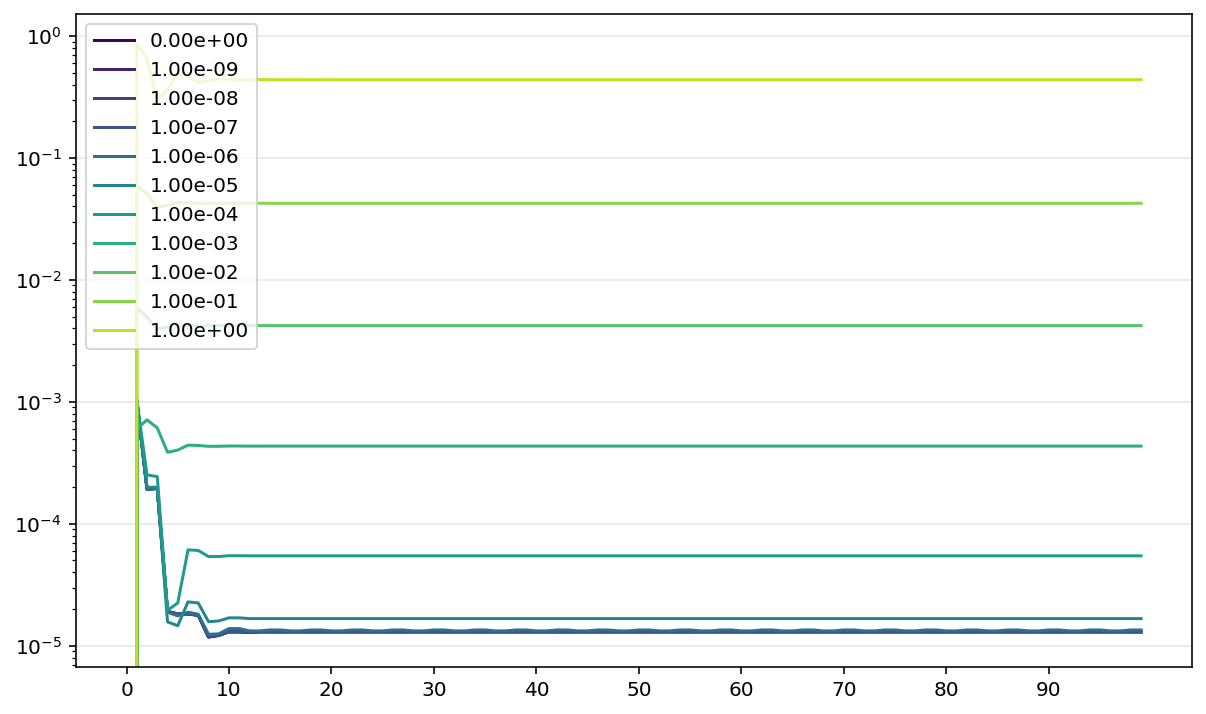}
    \caption{Evolution of $(\|\mathbb{E}_{\Pbb^{2n}}[\bfX_1 \bfX_1^\top] - \Id\|)_{n \in \nset}$ in log-space along DSBM iterations (x-axis). Different curves correspond to different values of $\vareps$, i.e.~the larger $\vareps$ the larger the error in the Markovian projection. Left: evolution in the forward-forward mode. Right: evolution in the forward-backward mode.}
    \label{fig:error_accumulation_variance}
\end{figure}

\begin{figure}
    \centering
    \includegraphics[width=.45\linewidth]{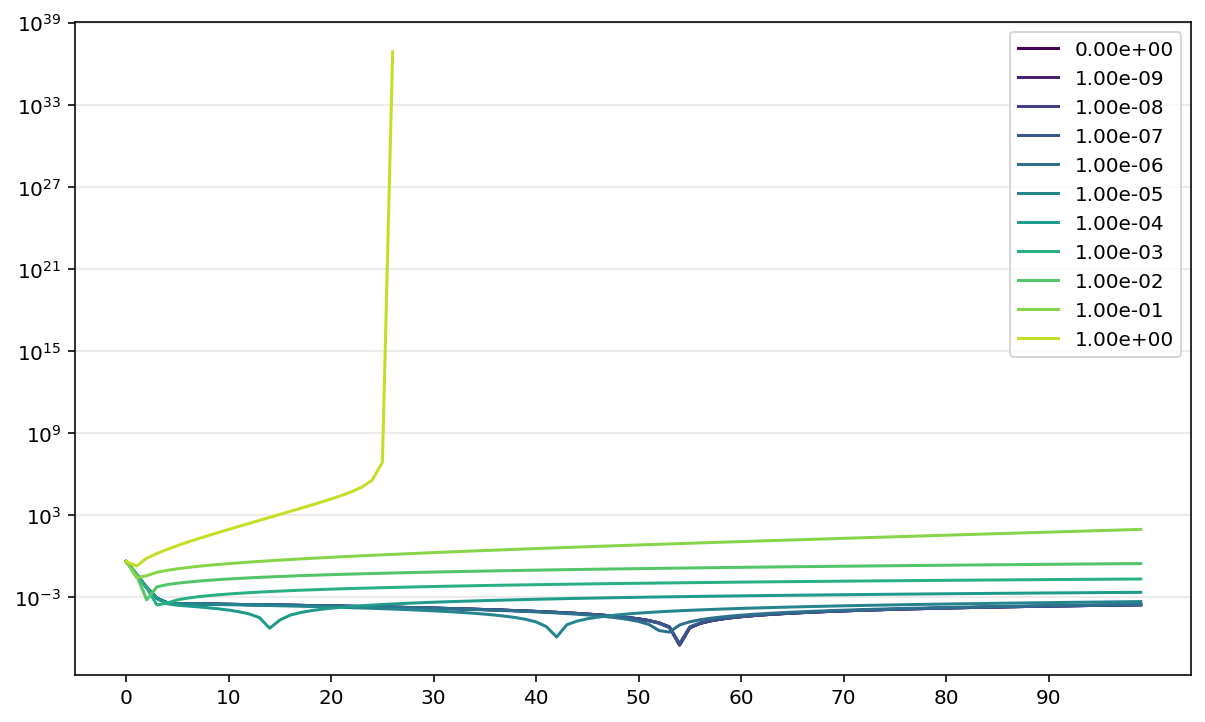} \hfill
    \includegraphics[width=.45\linewidth]{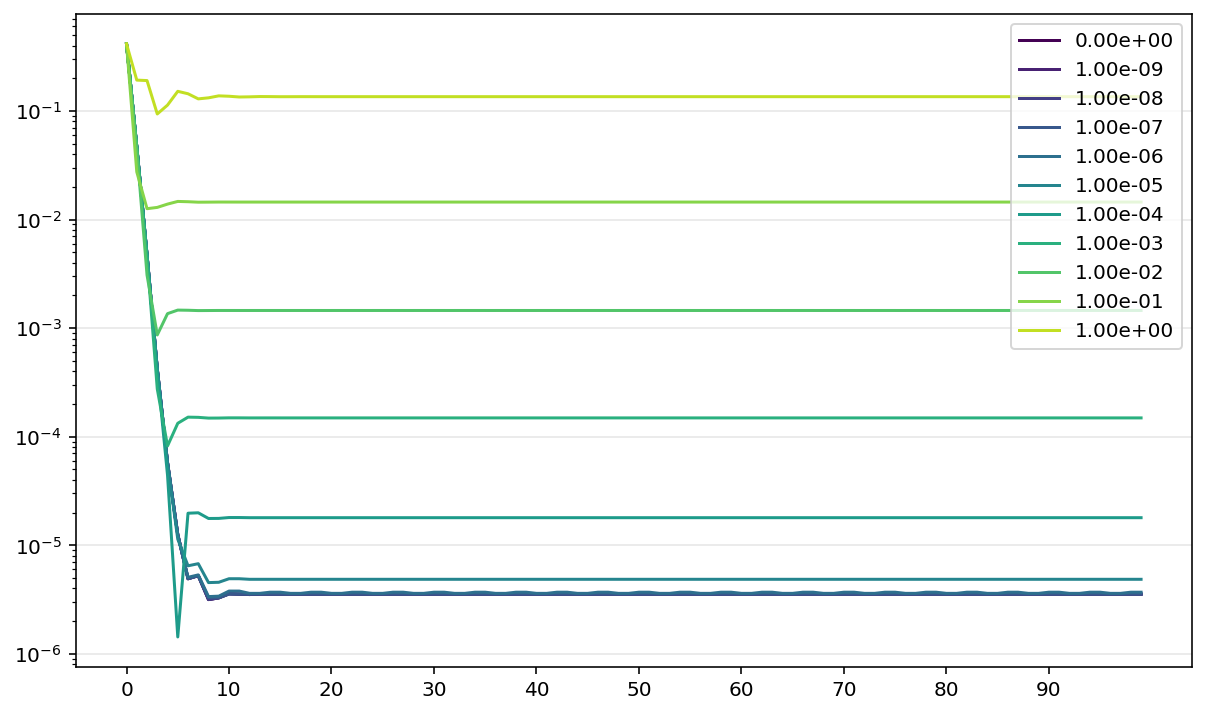}
    \caption{Evolution of $(\|\mathbb{E}_{\Pbb^{2n}}[\bfX_1 \bfX_0^\top] - \Id\|)_{n \in \nset}$ in log-space along DSBM iterations (x-axis). Different curves correspond to different values of $\vareps$, i.e.~the larger $\vareps$ the larger the error in the Markovian projection. Left: evolution in the forward-forward mode. Right: evolution in the forward-backward mode.}
    \label{fig:error_accumulation_covariance}
\end{figure}

\section{Preconditioning of the loss function}
\label{sec:preconditioning_loss}

In this section, we provide details on the scaling of the loss function we implement when training our online version of DSBM. We adapt the method of \cite[Appendix B.2]{karras2022elucidating} to the case of bridge matching. We only present our derivations in the case of the forward training of the online version of DSBM, i.e.~\eqref{eq:update_alpha_imf}. The preconditioning of the loss described in this setting can be readily extended to the forward-backward loss we consider in practice, i.e.~the parametric version of \eqref{eq:loss_function_bidirectional}.

We consider the following objective function for any $t  \in [0,1]$
\begin{equation}
\label{eq:rescaled_loss_function}
    \ell_t = \lambda_t \mathbb{E}_{\Pbb}[\| c_t^o \mathrm{nn}_t^\theta(c_t^i \bfX_t) + c_t^s \bfX_t - \tfrac{\bfX_1 - \bfX_t}{1-t} \|^2] . 
\end{equation}
We also define for any $t \in [0,1]$ and $x_t \in \rset^d$, $v_t^\theta(x_t) = c_t^o \mathrm{nn}_t^\theta(c_t^i x_t) + c_t^s x_t$. Hence, $c_t^i$ is an input scaling function, $c_t^o$ is an output scaling function and $c_t^s$ is a skip-connection function. 
During the training of the online version of DSBM, $\Pbb$ will be given by $\Pbb^n$, where $\Pbb^n = \Pbb_{v^{\theta_n}}$, where the sequence $(\theta_n)_{n \in \nset}$ is given by \eqref{eq:update_alpha_imf}. Here, we apply the principles of \cite{karras2022elucidating} to the case where $\Pbb = (\pi_0 \otimes \pi_1) \Qbb_{|0,1}$, i.e.~at initialisation of the sequence. In what follows, we assume that $\mathbb{E}_{\pi_0}[\| \bfX_0 \|^2] = \mathbb{E}_{\pi_1}[\| \bfX_1 \|^2] = d$. Note that our considerations can be generalised to $\mathbb{E}_{\pi_0}[\| \bfX_0 \|^2] = \sigma_0^2 d$ and $\mathbb{E}_{\pi_1}[\| \bfX_1 \|^2] = \sigma_1^2 d$. We also have that 
\begin{equation}
\label{eq:interpolation_preconditioning_appendix}
    \bfX_t = (1-t) \bfX_0 + t \bfX_1 + \sqrt{\vareps t (1-t)} \bfZ , \qquad \bfZ \sim \gN(0, \Id) . 
\end{equation}
Using \eqref{eq:interpolation_preconditioning_appendix}, we have that for any $t \in [0,1]$
\begin{align}
    \mathbb{E}_{\Pbb_t}[\| \bfX_t \|^2] &= (1-t)^2 \mathbb{E}_{\pi_0}[\| \bfX_0 \|^2] + t^2 \mathbb{E}_{\pi_1}[\| \bfX_1 \|^2] + \vareps t (1-t) d \\
    &= [(1-t)^2 + t^2 + \vareps t (1-t)] d . 
\end{align}
We set $c_t^i$ so that $\mathbb{E}[\| c_t^i \bfX_t \|^2] = d$ for every $t \in [0,1]$. Hence, we have that for any $t \in [0,1]$
\begin{equation}
    c_t^i = 1 / \sqrt{(1-t)^2 + t^2 + \vareps t (1-t)} . 
\end{equation}
Next, we rewrite \eqref{eq:rescaled_loss_function}. For any $t \in [0,1]$ we have that 
\begin{align}
    \ell_t &= \lambda_t \mathbb{E}_{\Pbb}[\| c_t^o \mathrm{nn}_t^\theta(c_t^i \bfX_t) + c_t^s \bfX_t - \tfrac{\bfX_1 - \bfX_t}{1-t} \|^2] \\
    &= (c_t^o)^2 \lambda_t \mathbb{E}_{\Pbb}[\| \mathrm{nn}_t^\theta(c_t^i \bfX_t) - [-c_t^s \bfX_t + \tfrac{\bfX_1 - \bfX_t}{1-t}]/c_t^o \|^2] \\
    &= (c_t^o)^2 \lambda_t \mathbb{E}_{\Pbb}[\| \mathrm{nn}_t^\theta(c_t^i \bfX_t) - [\tfrac{\bfX_1 - (1 + c_t^s(1-t))\bfX_t}{1-t}]/c_t^o \|^2]
\end{align}
Hence, we get that for any $t \in [0,1]$, $\bfT_t = [\tfrac{\bfX_1 - (1 + c_t^s(1-t))\bfX_t}{1-t}]/c_t^o$ is the target of the network in the regression loss. We are going to fix $c_t^o$ and $c_t^s$ such that i) $\mathbb{E}[\| \bfT_t \|^2] = d$, ii) $c_t^o$ is as small as possible in order not to minimise the error propagation made by the neural network. 
Using \eqref{eq:interpolation_preconditioning_appendix}, we have that for any $t \in [0,1]$
\begin{align}
    \mathbb{E}_{\Pbb_{1,t}}[\| \bfX_1 - (1 + c_t^s(1-t)) \bfX_t \|^2] &= (1 + c_t^s(1-t))^2 \mathbb{E}_{\Pbb_t}[\| \bfX_t \|^2]   + \mathbb{E}_{\pi_1}[\| \bfX_1 \|^2] \\
    & \qquad - 2(1+c_t^s(1-t))\mathbb{E}_{\Pbb_{1,t}}[\langle \bfX_t, \bfX_1 \rangle] \\
    &= (1 + c_t^s(1-t))^2 \mathbb{E}_{\Pbb_t}[\| \bfX_t \|^2] + d - 2(1+c_t^s(1-t)) t d 
\end{align}
Hence, we get that for any $t \in [0,1]$
\begin{equation}
    (c_t^o)^2 = ((1 + c_t^s(1-t))^2 \mathbb{E}_{\Pbb_t}[ \| \bfX_t \|^2]/d + 1 - 2(1+c_t^s(1-t)) t) / (1-t)^2 . 
\end{equation}
We now minimise $(c_t^o)^2$ with respect to $(1+c_t^s(1-t))$. We get that 
\begin{equation}
    1 + c_t^s(1-t) = t / (\mathbb{E}_{\Pbb_t}[ \| \bfX_t \|^2] /d) .
\end{equation}
Hence, we get that $c_t^s = t/[(1-t)((1-t)^2 + t^2 + \vareps t (1-t))]-1/(1-t)$. With that choice, we get that for any $t \in [0,1]$
\begin{align}
    (c_t^o)^2 &= (1 - t^2/((1-t)^2 + t^2 + \vareps t (1-t)))/(1-t)^2 . 
\end{align}
    In \cite{karras2022elucidating}, the weighting function $\lambda_t$ is set so that the weight in front of the regression loss is equal to one for all times $t \in [0,1]$. Hence, \cite{karras2022elucidating} suggests to set $\lambda_t = 1 / (c_t^o)^2$. However, in practice we observe better results by letting $\lambda_t = 1$. This means that the effective weight is given by $1/(c_t^o)^2$. Therefore, for any $t \in [0,1]$ we have 
    \begin{align}
        &(c_t^i)^2 = (1 + (\vareps -2)t(1-t))^{-1} , \\
        &c_t^s = ((\vareps - 2)t - 1) / (1 + (\vareps - 2)t(1-t)) , \\
        &(c_t^o)^2 = (1 + t + (\vareps - 2)t (1-t)) / (1 - t) . 
    \end{align}
    
\begin{figure}
    \centering
    \includegraphics[width=.31\linewidth]{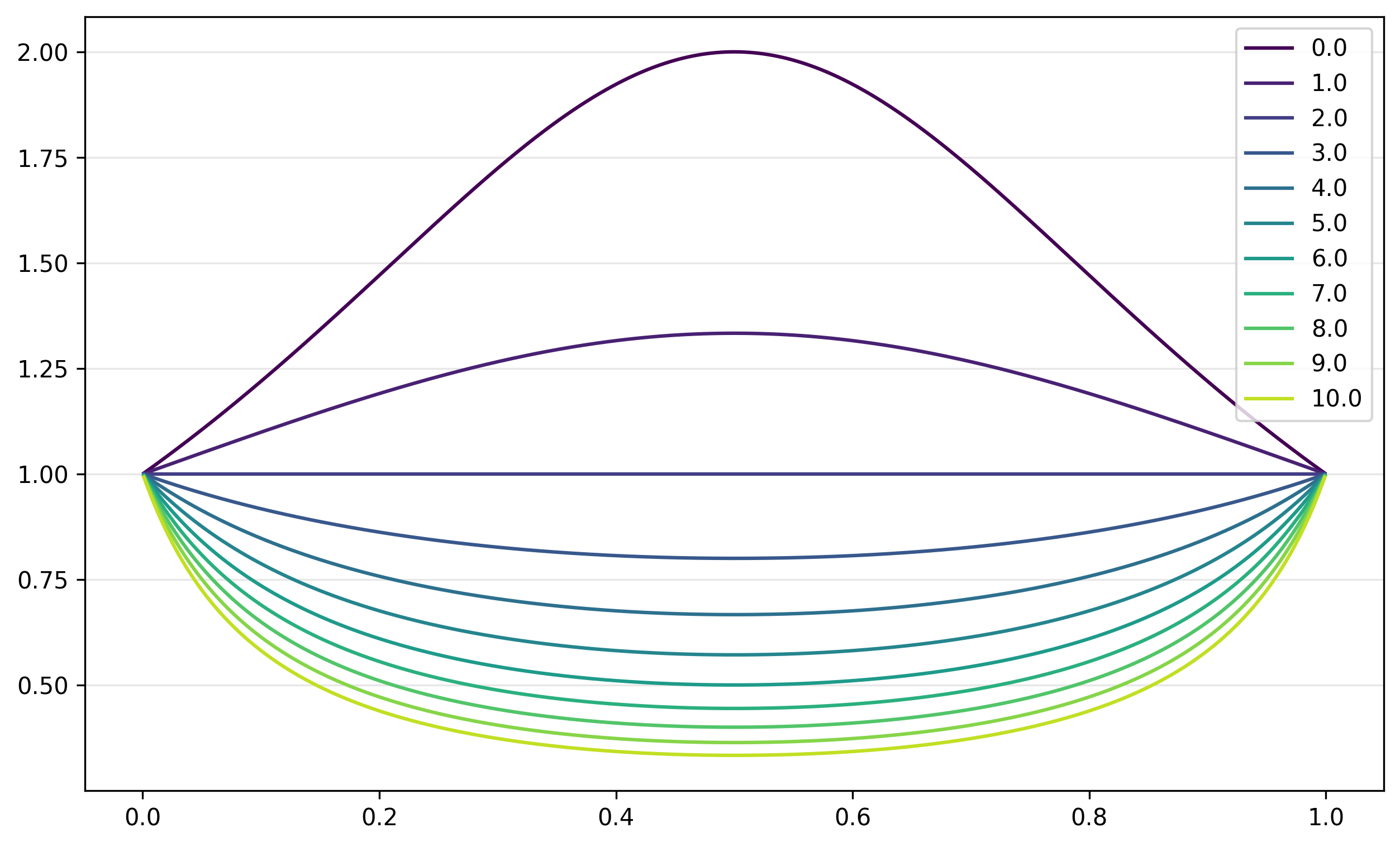} \hfill
    \includegraphics[width=.31\linewidth]{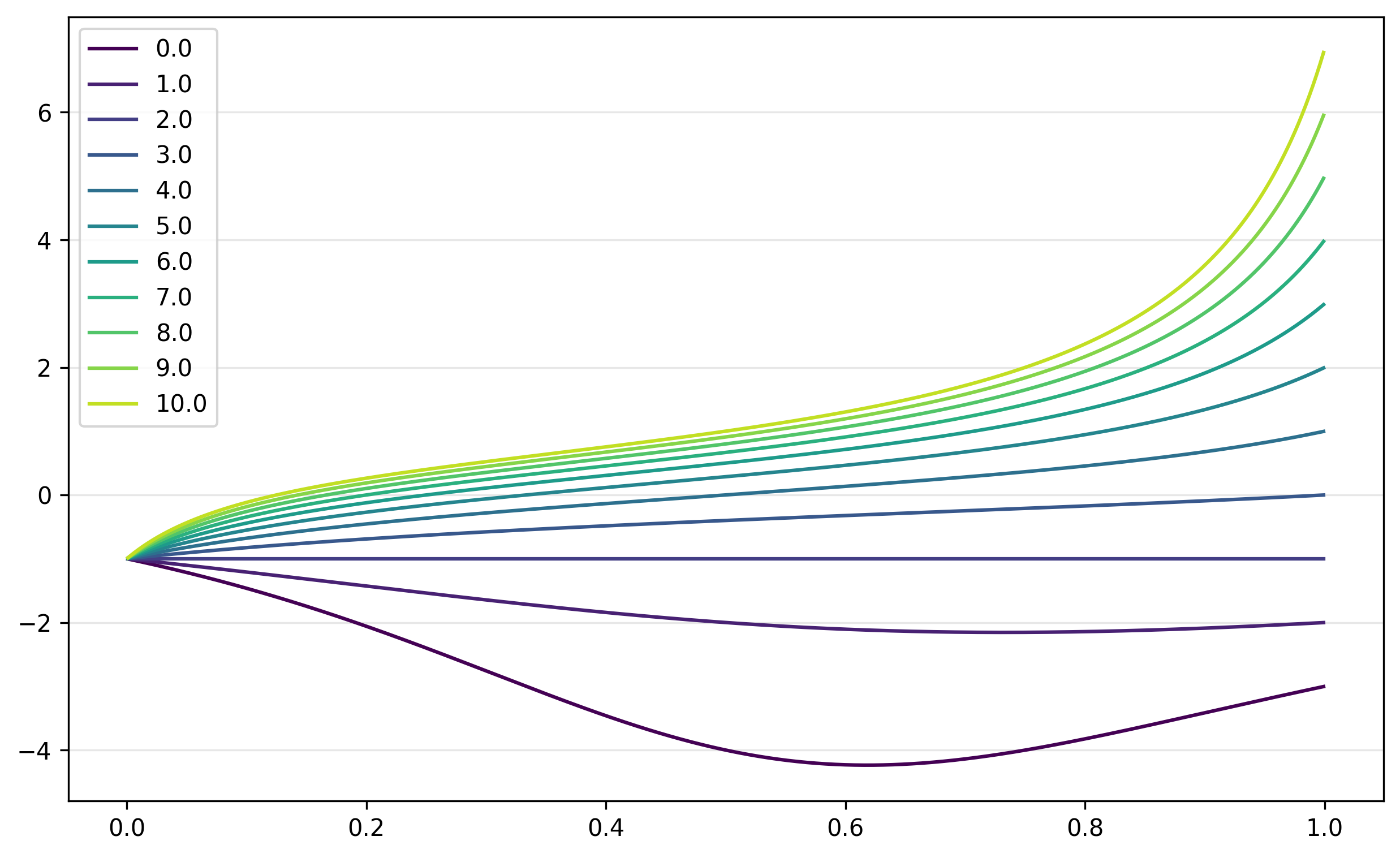} \hfill
    \includegraphics[width=.31\linewidth]{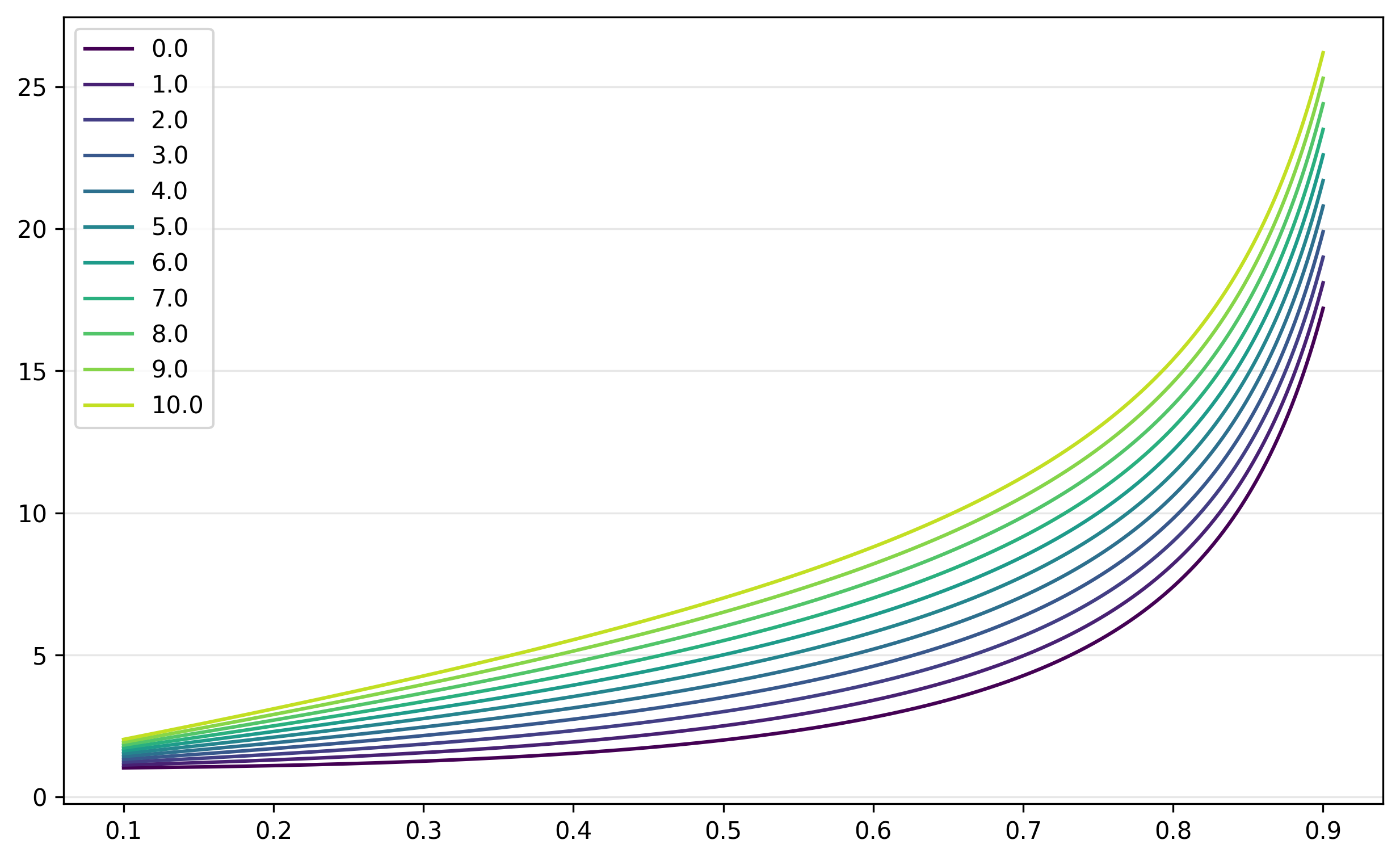}
    \caption{From left to right $((c_t^i)^2)_{t \in [0,1]}$, $(c_t^s)_{t \in [0,1]}$ and $((c_t^o)^2)_{t \in [0,1]}$ for different values of $\vareps \in [0, 10]$.}
    \label{fig:my_label}
\end{figure}

\section{Experimental details}
\label{sec:experimental_details}

In this section, we delve deeper into the specifics of each experiment, implementation details,  and share additional results.

We consider two ways of parameterising the vector fields: as in DSBM, we can use two separate neural networks to approximate the forward and backward vector fields, or we can use a single neural network that is conditioned on the direction. In the latter case, we do the conditioning in a similar fashion to how DDM's neural networks, U-Nets or MLPs, are conditioned on time embeddings. After all, if we work with continuous time variables $t \in [0, 1]$, then the direction signal $s \in \{0, 1\}$ can be thought of as a target time. Thus, we perform the same initial transformations on $t$ and $s$, i.e.~computing sinusoidal embeddings followed by a 2-layer MLP, and use the concatenated outputs in adaptive group normalisation layers~\citep{dhariwal2021diffusion, hudson2023soda, perez2018film}.          

To optimise our networks, we use Adam~\citep{kingma:adam} with $\beta=(0.9, 0.999)$, and we modify the gradients to keep their global norm below 1.0. We re-initialise the optimiser's state when the finetuning phase starts. 

All image samples in the paper are generated using EMA parameters as it has been known to increase the visual quality of resulting images~\citep{song_improved_2020}. Sampling is also the integral part of DSBM's finetuning stage, both iterative and online. Here, we have two options: sample with EMA or non-EMA parameters. The non-EMA sampling might be easier to implement, while EMA sampling results in a more stable training and slightly better quality, e.g.~see AFHQ samples in \Cref{fig:afhq64_bidirectional_online_non_ema} and \Cref{fig:afhq64_bidirectional_online_ema} for comparison. 

For every model used in the paper, we provide hyperparameters in \Cref{table:hypers}.

\begin{table}[htb]
\centering
\resizebox{\linewidth}{!}{%
\begin{tabular}{l c c c c c}
\toprule
 & 2D & Gaussian & MNIST & AFHQ-64 & AFHQ-256\\
\midrule
Channels/hidden units & 256 & 256 & 64 & 128 & 128  \\
Depth & 3 & 3 & 2 & 4 & 4 \\
Channels multiple & n/a  & n/a & 1, 2, 2 & 1,2,3,4 & 1, 1, 2, 2, 3, 4 \\
Heads & n/a  & n/a & n/a & 4 & 4 \\
Heads channels & n/a & n/a & n/a & 64 & 64 \\ 
Attention resolution & n/a & n/a & n/a & 32, 16, 8 &  32, 16, 8\\
Dropout & 0.0 & 0.0 & 0.1 & 0.0 & 0.0\\
Batch size & 128 & 256 & 128 & 128 & 128 \\
Pretraining iterations & 50K & 10K  & 100K & 100K & 100K  \\
Finetuning iterations & 150K & 40K & 150K & 20K & 20K \\
Pretraining learning rate & 1e-4 & 1e-4 & 1e-4 & 2e-4 & 2e-4\\
Finetuning learning rate & 1e-5 & 1e-4 & 1e-4 & 2e-4 & 2e-4\\
Pretraining warmup steps & n/a & n/a & n/a & 5K & 5K\\
EMA decay & n/a & n/a & 0.999 & 0.999 & 0.999 \\
Parameters count & 133.4K & 371K  & 8.8M & 194.4M & 226.7M \\
\bottomrule
\end{tabular}
}
\vspace{1mm}
\caption{Hyper-parameters for each model. Note that for 2-networks models, the architectural hyper-parameters describe only one of the two identical networks. Approximate parameters counts are given for bidirectional networks, except for the Gaussian case, where we only experimented with a 2-networks model.}
\label{table:hypers}
\end{table}

\subsection{2D Experiments}

In addition to the experiments presented in the main text, we test our models in the simplest 2D data settings used in \citet{tong_conditional_2023} and \citet{shi2023DSBM}. Note, that low-dimensional datasets might not be the ideal showcase for $\alpha$-DSBM given that one can successfully employ less computationally demanding techniques based on minibatch-OT methods~\citep{tong2024simulationfree}. 

The results of our bidirectional model finetuned with online updates are given in \Cref{table:2d}. During finetuning, we generate samples using 100 Euler–Maruyama steps to solve the forward and backward SDEs.
At test time, we solve the forward probability flow ODE (PF-ODE) given by:
\begin{equation}
\label{eq:pf-ode}
    \rmd\bfX_t = \frac{1}{2} \big[ v_\theta(1, t, \bfX_t) - v_\theta(0, 1 - t, \bfX_t) \big] \rmd t, \qquad \bfX_0 \sim \pi_0. 
\end{equation}
To evaluate model fit, we compute 2-Wasserstein distance between the true and generated samples (generated with 20 Euler steps). Additionally, we estimate path energy as a measure of trajectory simplicity: $\mathbb{E}_{\bfX_0\sim \pi_0}[\int_0^1 \| v_\theta(t, \bfX_t)\|^2 \rmd t]$ where $v_\theta(t, \bfX_t)$ is the drift of PF-ODE in \eqref{eq:pf-ode}, and the integral is approximated using 100 steps.
We have made a deliberate effort to closely replicate the experimental setup of \citet{shi2023DSBM} to ensure the comparability of our results. However, as illustrated in \Cref{fig:w2}, 2-Wasserstein distance can be very noisy even with 10K samples in the test set. To mitigate this variance, we averaged the 2-Wasserstein distance across five random sets of 10K samples per run, and then averaged these results across multiple runs.  Despite these measures, we recommend a future redesign of these 2D experiments to facilitate more robust comparisons between methods.       
\begin{table}[htb]
 \resizebox{\linewidth}{!}{
 \centering
    \begin{tabular}{lllllllll}
    \multirow{2}{*}{Method} & \multicolumn{4}{c}{2-Wasserstein} & \multicolumn{4}{c}{Path energy} \\
    \cmidrule(lr){2-5}\cmidrule(lr){6-9}
                            & $\gN \rightarrow$  moons         & $\gN \rightarrow$ scurve & $\gN \rightarrow$ 8gaussians & moons $\rightarrow$ 8gaussians & $\gN \rightarrow$moons & $\gN \rightarrow$scurve & $\gN \rightarrow$8gaussians & moons $\rightarrow$ 8gaussians  \\
      \midrule
    % DSB* & 0.190\textpm 0.049 & 0.272\textpm 0.065 & 0.411\textpm 0.084 & 0.987\textpm 0.324 & - & - & - & -\\
      DSBM-IMF* & 0.144\textpm 0.024 & 0.145\textpm 0.037 & 0.338\textpm 0.091 & 0.838\textpm 0.098 & 1.580\textpm 0.036 & 2.092\textpm 0.053 & 14.81\textpm 0.255 & 41.00\textpm 1.495 \\
      OT-CFM~\citep{tong_conditional_2023}* & 0.111\textpm0.005 & 0.102\textpm0.013 & 0.253\textpm0.040 & 0.716\textpm0.187 & 1.178\textpm0.020 & 1.577\textpm0.036 & 15.10\textpm0.215 & 30.50\textpm 0.626\\
    \midrule
      $\alpha$-DSBM & 0.168 \textpm 0.011 & 0.213\textpm0.031 & 0.292\textpm0.047 & 1.374\textpm0.286 & 1.439 \textpm 0.024 & 2.052 \textpm0.025 & 15.038\textpm0.150 & 37.626\textpm0.590 \\
      \bottomrule
    \end{tabular}
    }
    \vspace{1mm}
    \caption{2-Wasserstein distance and path energy for the 2D experiments. We report means $\pm1$ standard deviations across 5 random seeds. DSBM-IMF* and OT-CFM* results are copied from \cite{shi2023DSBM}.}
    \label{table:2d}
\end{table}

\begin{figure}[htbp]
    \centering 
   {\includegraphics[width=0.4\textwidth]{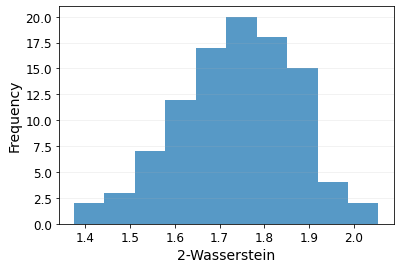}}
    \caption{A histogram of 2-Wasserstein distances for the `moons$\rightarrow$ 8gaussians' task. These distances are calculated between 10K samples from a finetuned $\alpha$-DSBM model and 8gaussians distribution, with both sets generated using 100 different random seeds. The wide spread of scores indicates that 2-Wasserstein distance, even computed on 10K samples, may not be an ideal metric for evaluating model fit in this context.} 
    \label{fig:w2} 
\end{figure}

\subsection{Gaussian data}

%% Valentin:
% We parameterise the vector-field with a neural network composed of two blocks of Residual MultiLayer Perceptron (ResMLP). Each ResMLP first applies a layer normalisation operation on the neural activations. Then, the ELU activation function is applied followed by a linear layer with 64 hidden units. We apply a linear layer to the time embeddings and add them to the current activations. Finally, we apply a normalisation layer, followed by ELU activation and a linear layer whose output dimension is the same as the input of the ResMLP. Finally, we add the inputs to the obtained activations. Before the ResMLP blocks we process the inputs with a linear layer whose output dimension is 64. After the ResMLP blocks, we apply some layer normalisation and a final linear layer to output a vector-field with appropriate dimension.
% The time is processed using sinusoidal time embedding with 256 features and a scale of 1000.

%% Ira
To parameterise the forward and backward drifts, we use a 2-layer MLP network with 256 hidden units. To process time variables, we compute sinusoidal time embeddings, followed by a 2-layer MLP with 256 hidden units and 50 output units. The resulting time embeddings are then concatenated with $\bfX_t$, so the drift networks receive 100-dimensional input vectors.

For iterative DSBM finetuning, we perform 40K steps with varying number of outer iterations, i.e. when we switch between training the forward and the backward networks. Alternating every 5K steps, corresponds to 8 outer DSBM iteration. Similarly, changing the direction every 1K steps, leads to 40 outer iterations.    

We do not have a cache dataloader like in the original DSBM implementation\footnote{\url{https://github.com/yuyang-shi/dsbm-pytorch}}, thus we generate training samples on the fly by sampling either from the forward or the backward network. For this simple task, we also do not use EMA.

During training and evaluation, we use Euler–Maruyama method with 100 equidistant time steps between 0 and 1. The covariance is evaluated using 10K samples. 

%% Extra images
% \begin{figure}[htbp] 
%     \centering 
%     \subfloat[Finetuning learning rate 1e-4]{\includegraphics[width=0.5\textwidth]{}}\hfill
%     \subfloat[Finetuning learning rate 1e-5]{\includegraphics[width=0.5\textwidth]{}}\hfill
%     \caption{Iterative vs. online DSBM convergence depending on the learning rate during finetuning.} 
%     \label{fig:mnist_2net} 
% \end{figure}

\subsection{MNIST $\leftrightarrow$ EMNIST transfer}
\label{sec:experimental_details_mnist}

We closely follow the setup of \cite{shi2023DSBM} and \cite{debortoli2021diffusion}, and train the models to transfer between 10 EMNIST letters, A-E and a-e, and 10 MNIST digits (CC BY-ND 4.0 license). We use the same U-Net architecture with hyperparameters given in~\Cref{table:hypers}. 

For DSBM finetuning, we perform 30 outer iterations, i.e.~alternating between training the forward and the backward networks, while at each outer iteration a network is trained for 5000 steps. We do not have a cache dataloader and generate training samples on the fly by sampling either from the forward or the backward network with EMA parameters. 

During training and evaluation, we use Euler–Maruyama method with 30 equidistant time steps between 0 and 1. For evaluation, we compute FID based on the whole MNIST training set of 60000 examples and a set of 4000 samples that were initialised from each test image in the EMNIST dataset. MSD is computed between 4000 initial EMNIST test examples and their corresponding MNIST samples.  

In Figures \ref{fig:mnist_2net}--\ref{fig:emnist_bidirectional}, we provide forward and backward samples, i.e.~EMNIST $\rightarrow$ MNIST and MNIST $\rightarrow$ EMNIST, from models that differ in parameterisation, finetuning methods, and sampling strategy. For all the models above, we used $\vareps=1$. \Cref{fig:mnist_sigma_sweep_samples} illustrated the behaviour of the samples when we sweep over the $\vareps$ hyperparameter.   

Pretraining a bidirectional model on 4 v3 TPUs takes 1 hour, while the online finetuning stage requires 4 hours on 16 v3 TPUs. The number of pretraining and finetuning steps is chosen to match the experimental setup of \cite{shi2023DSBM}. 

\begin{figure}[htbp] 
    \centering 
    \subfloat[Initial EMNIST letters]{\includegraphics[width=0.25\textwidth]{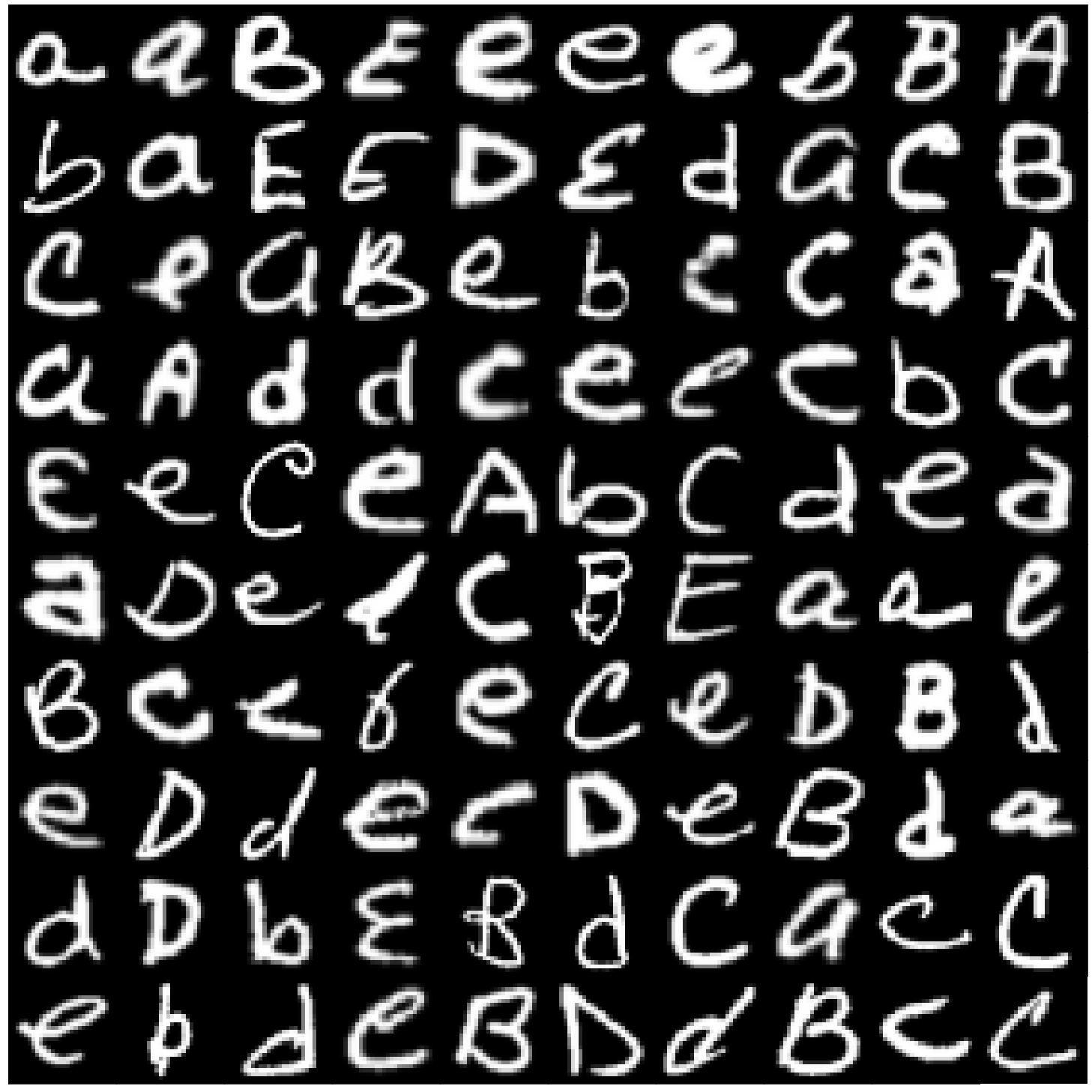}}\hfill
    \subfloat[Bridge matching: \\ FID=6.02, MSD=0.564]{\includegraphics[width=0.25\textwidth]{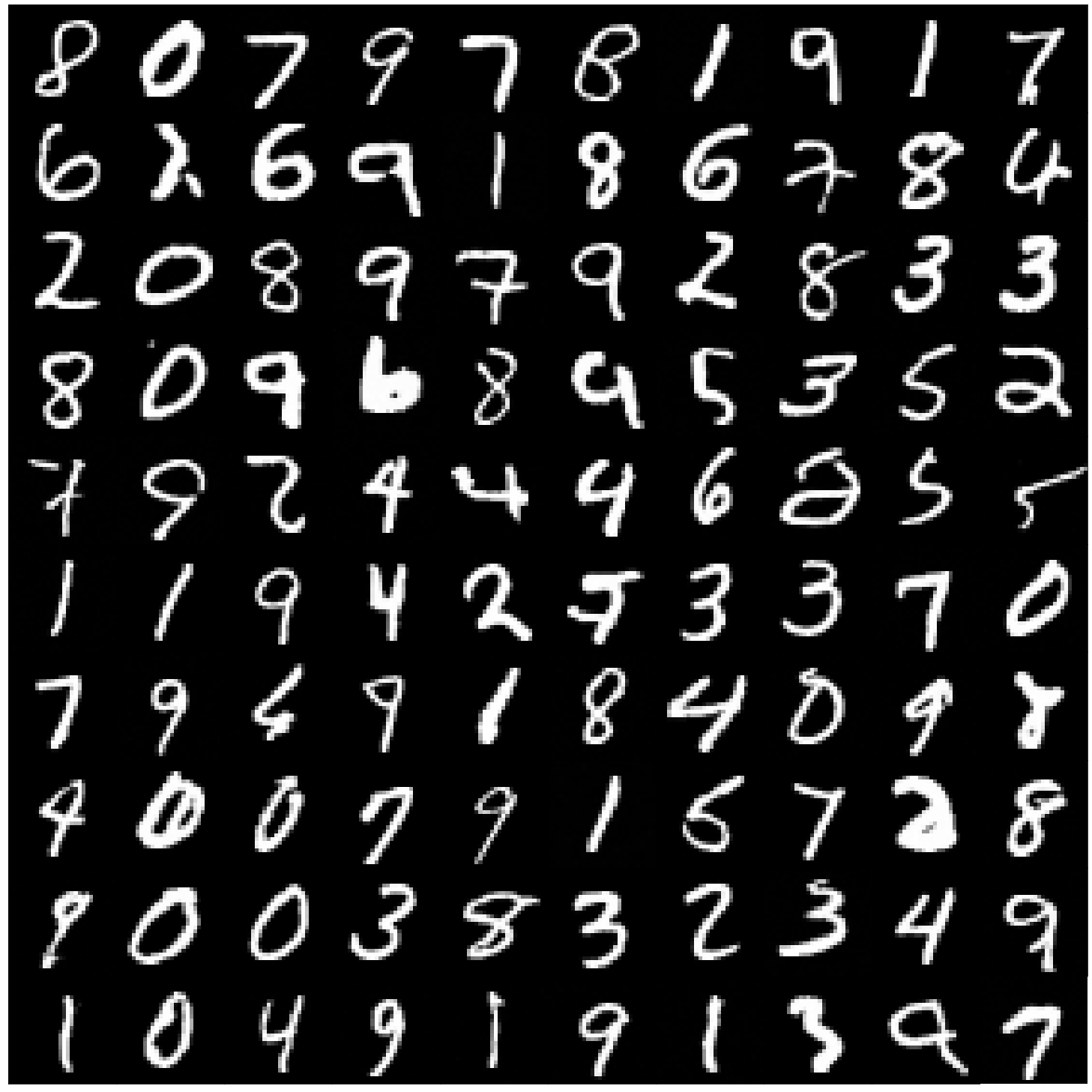}}\hfill
    \subfloat[Bridge matching \\ + DSBM finetuning: \\ FID=5.25, MSD=0.345]{\includegraphics[width=0.25\textwidth]{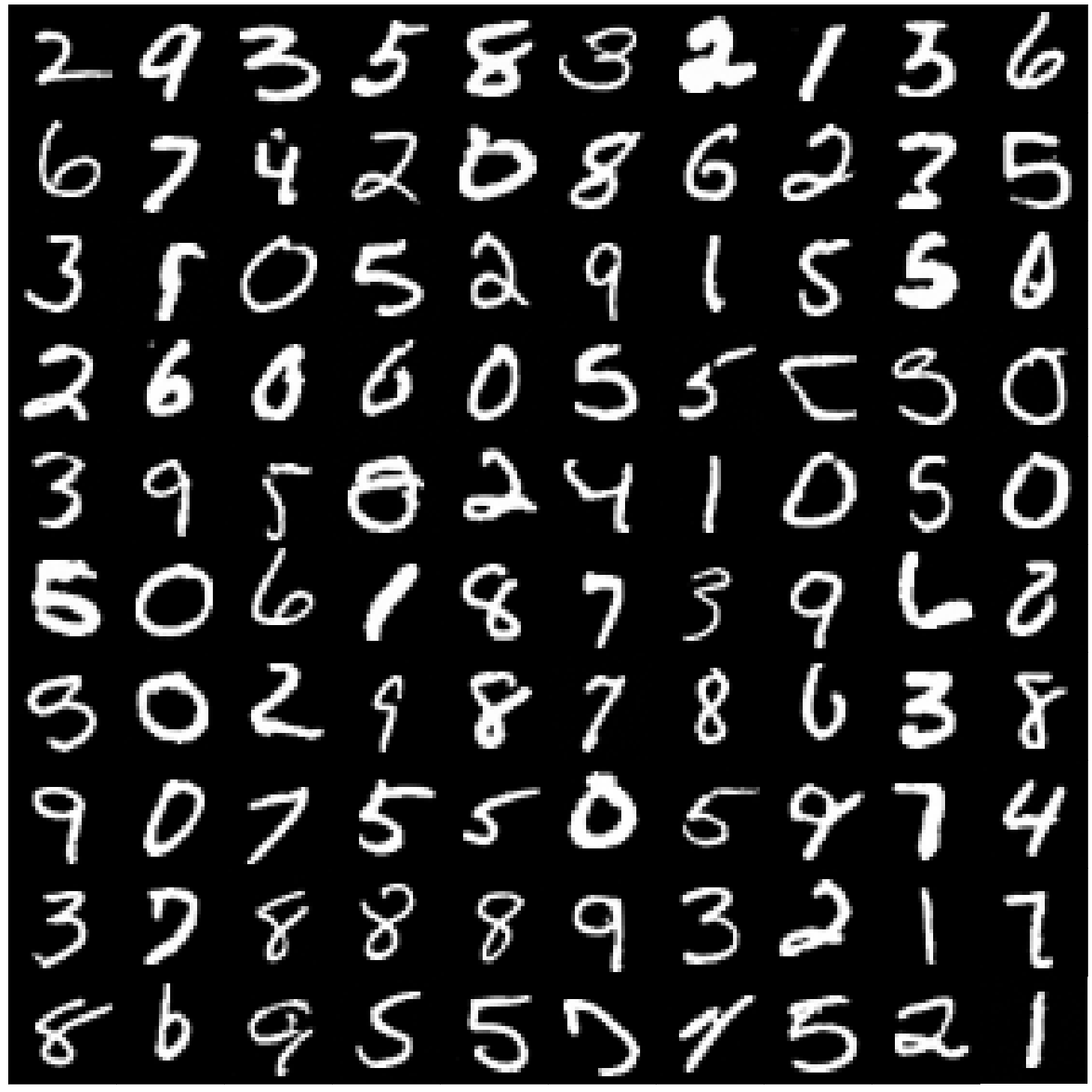}}
    \subfloat[Bridge matching \\ + online finetuning: \\ FID=4.28, MSD=0.368]{\includegraphics[width=0.25\textwidth]{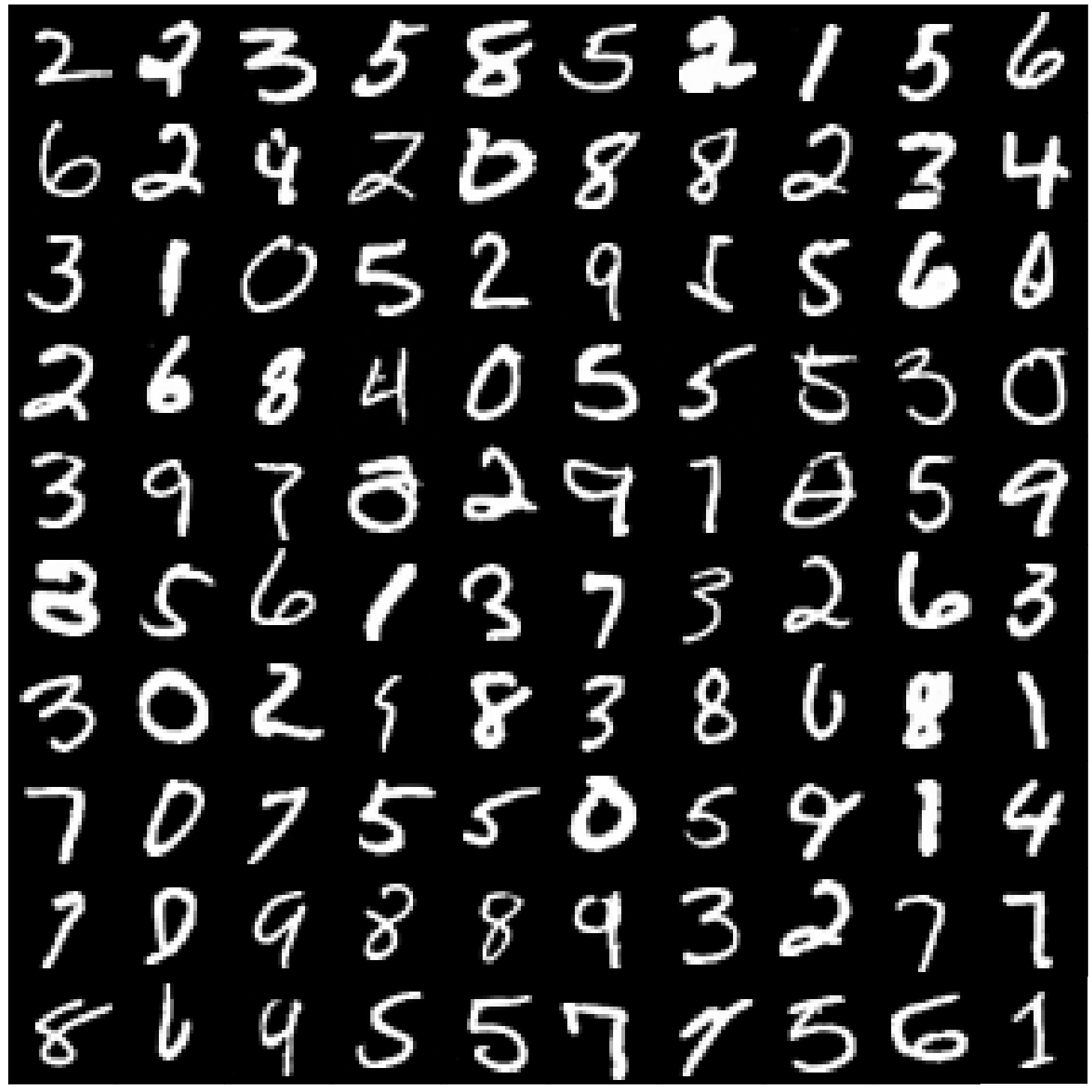}}
    \caption{EMNIST to MNIST transfer with a 2-networks model. } 
    \label{fig:mnist_2net} 
\end{figure}

\begin{figure}[htbp]
    \centering 
    \subfloat[Initial EMNIST letters]{\includegraphics[width=0.25\textwidth]{img/mnist/emnist.png}}\hfill
    \subfloat[Bridge matching: \\ FID=6.33, MSD=0.572 ]{\includegraphics[width=0.25\textwidth]{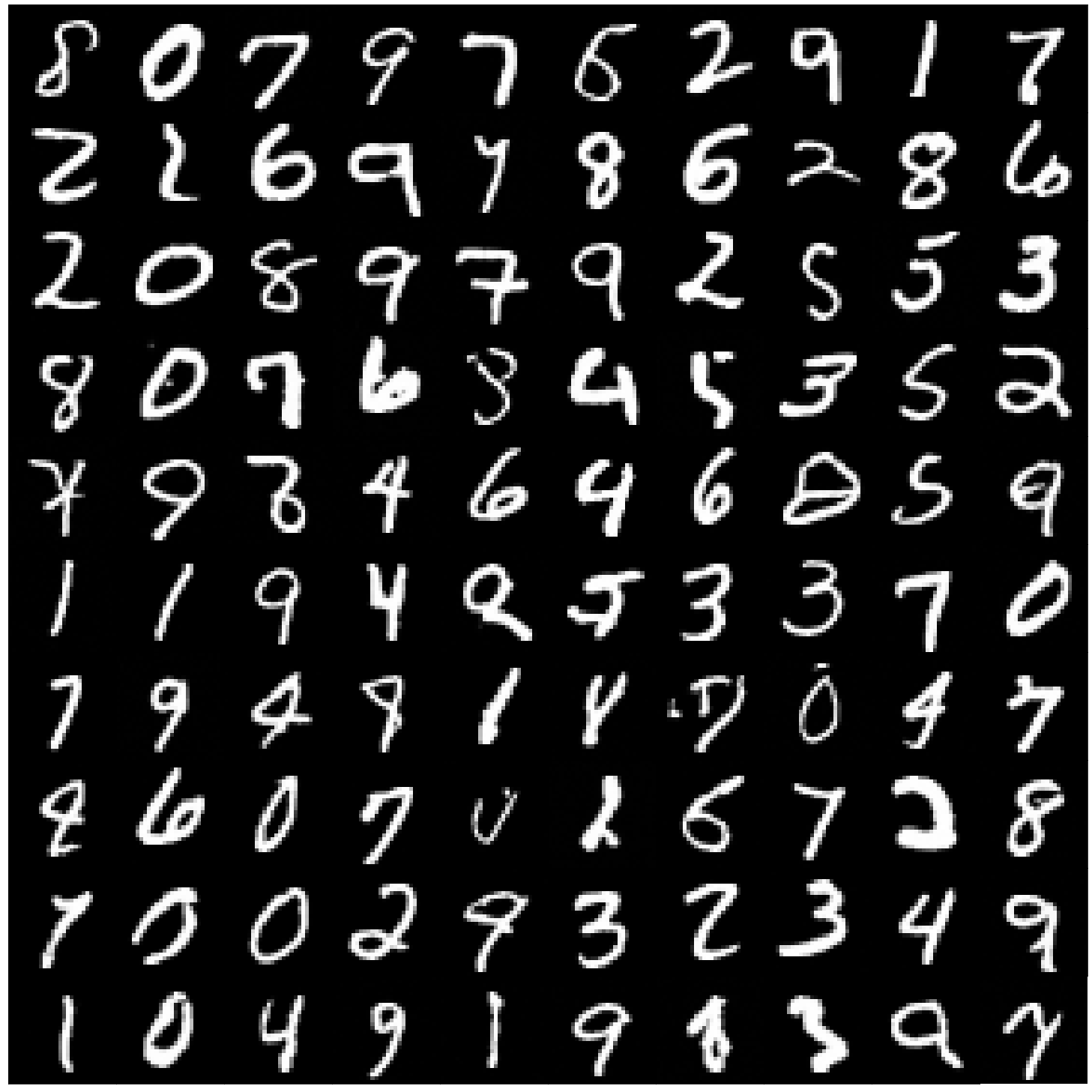}}\hfill
    \subfloat[Bridge matching \\ + online non-EMA  \\ finetuning: \\ FID=4.57, MSD=0.369]{\includegraphics[width=0.25\textwidth]{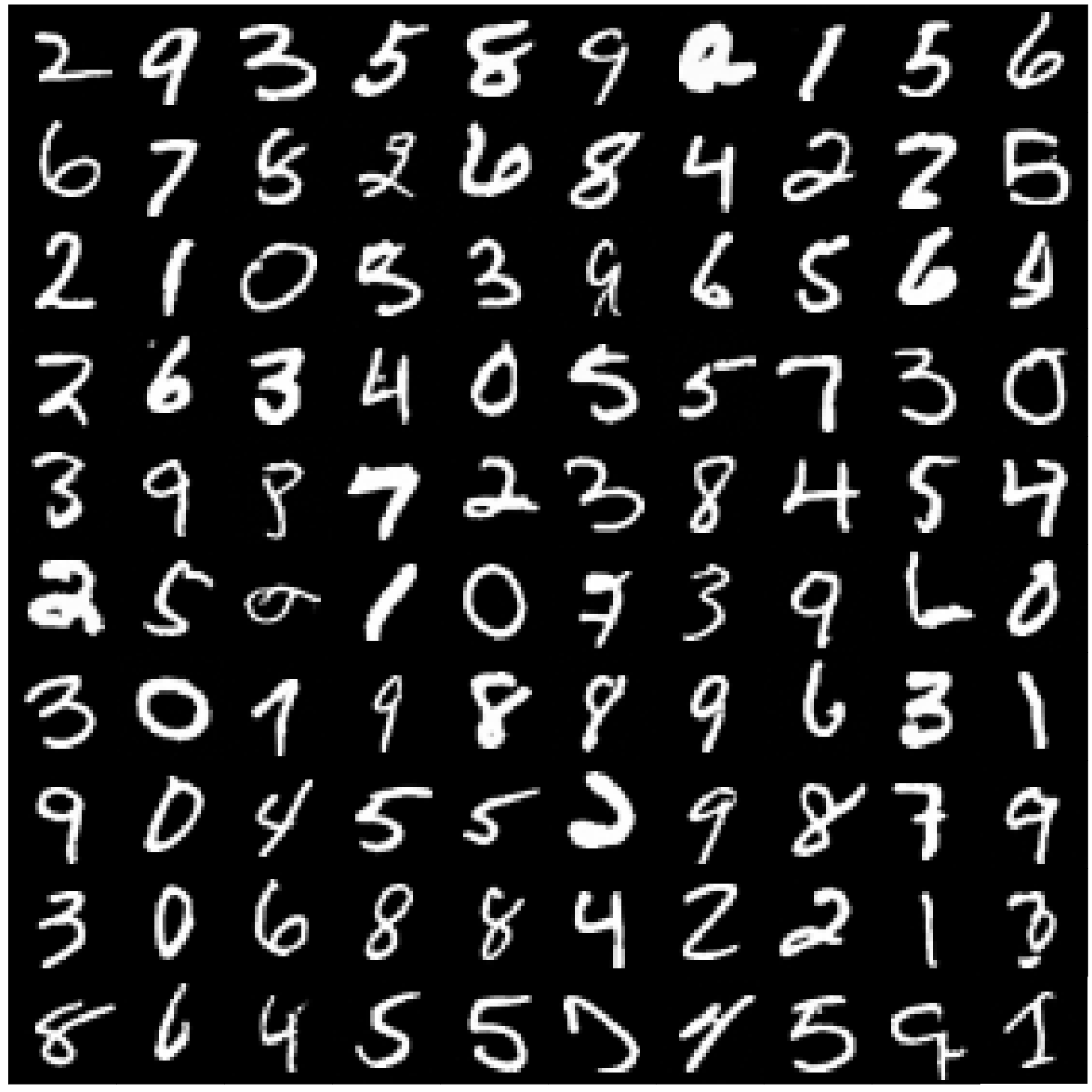}}
    \subfloat[Bridge matching \\ + online finetuning: \\ FID=4.39, MSD=0.387]{\includegraphics[width=0.25\textwidth]{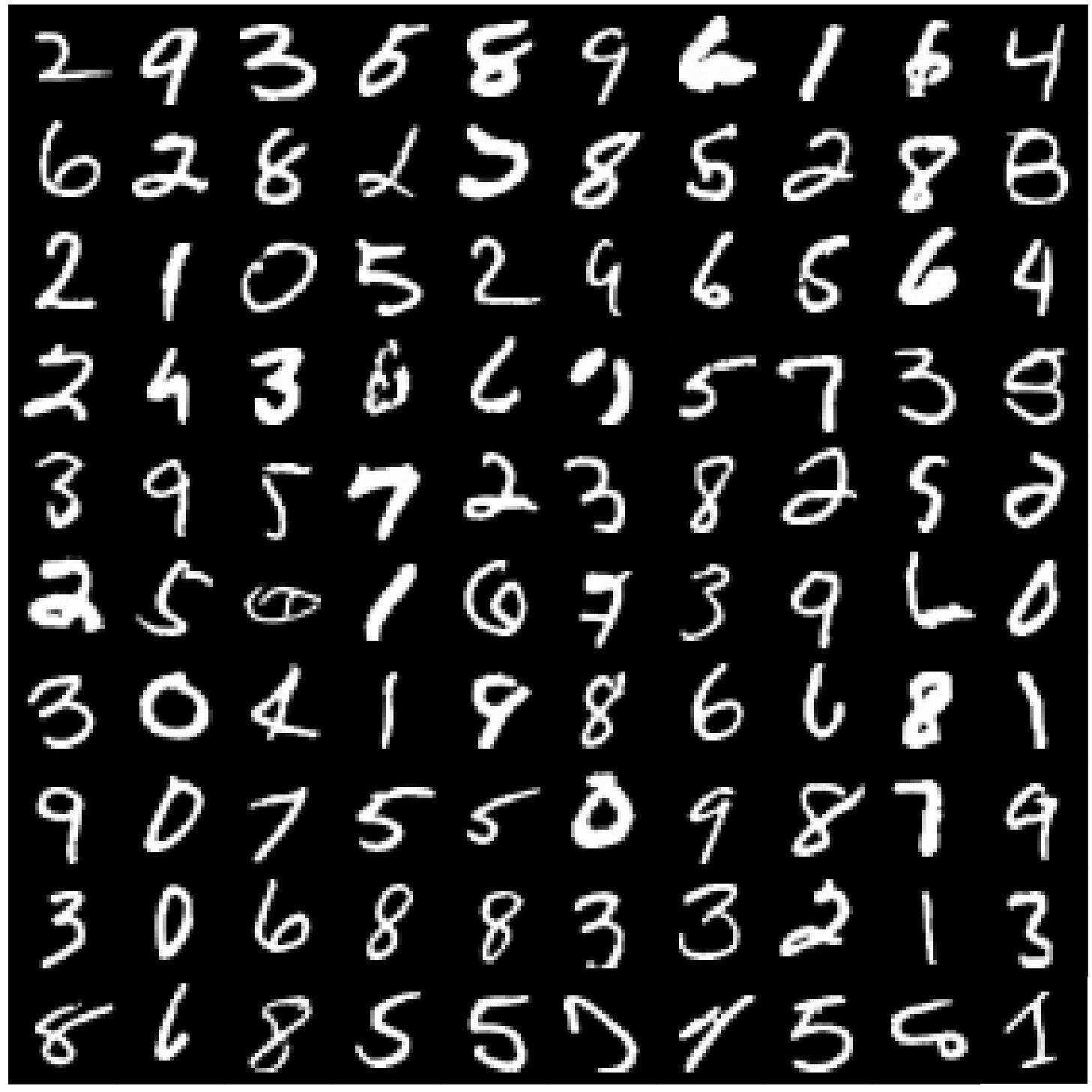}}
    \caption{EMNIST to MNIST transfer with a bidirectional model.} 
    \label{fig:mnist_bidirectional} 
\end{figure}

\begin{figure}[htbp] 
    \centering 
    \subfloat[Initial MNIST digits]{\includegraphics[width=0.25\textwidth]{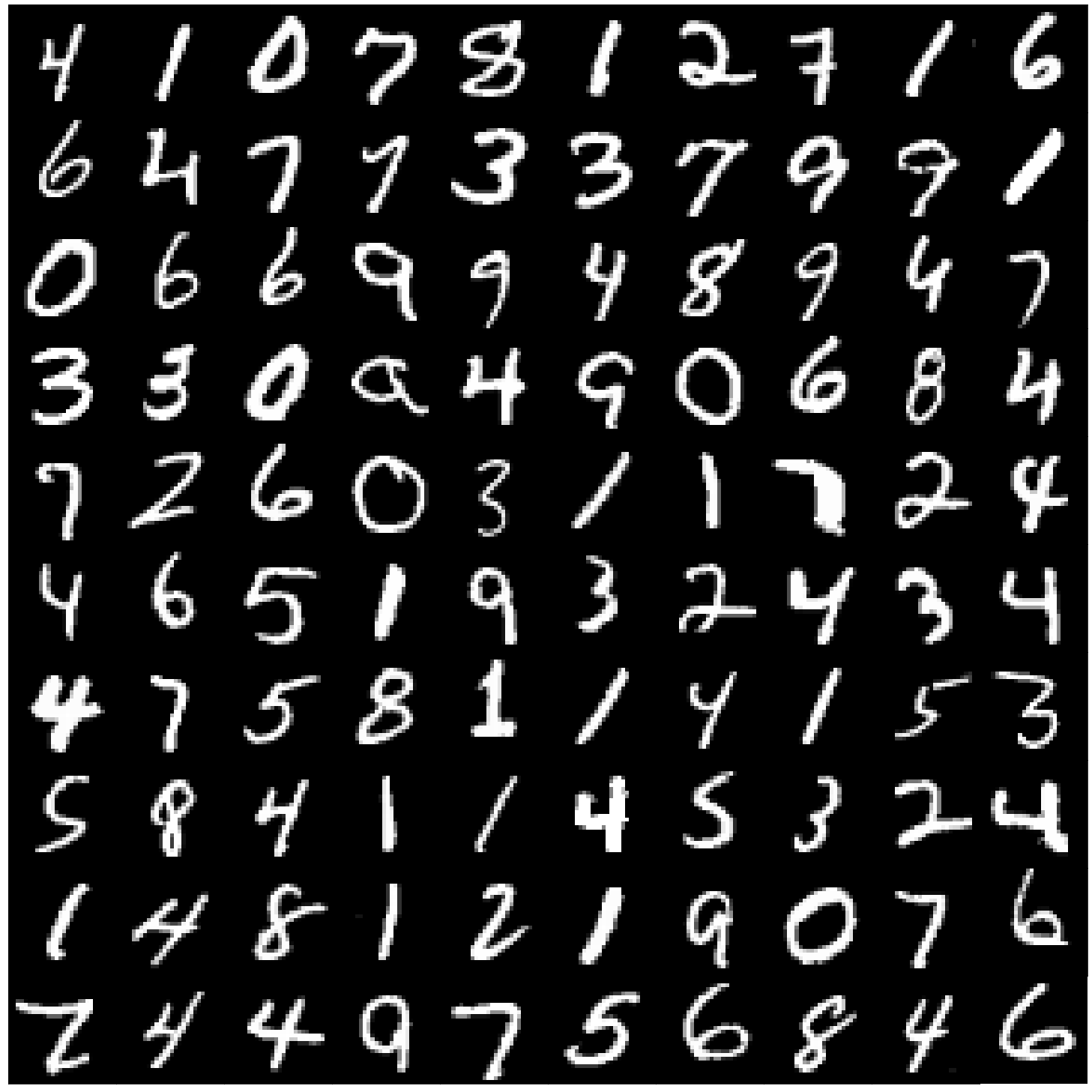}}\hfill
    \subfloat[Bridge matching: \\ FID=7.50, MSD=0.553]{\includegraphics[width=0.25\textwidth]{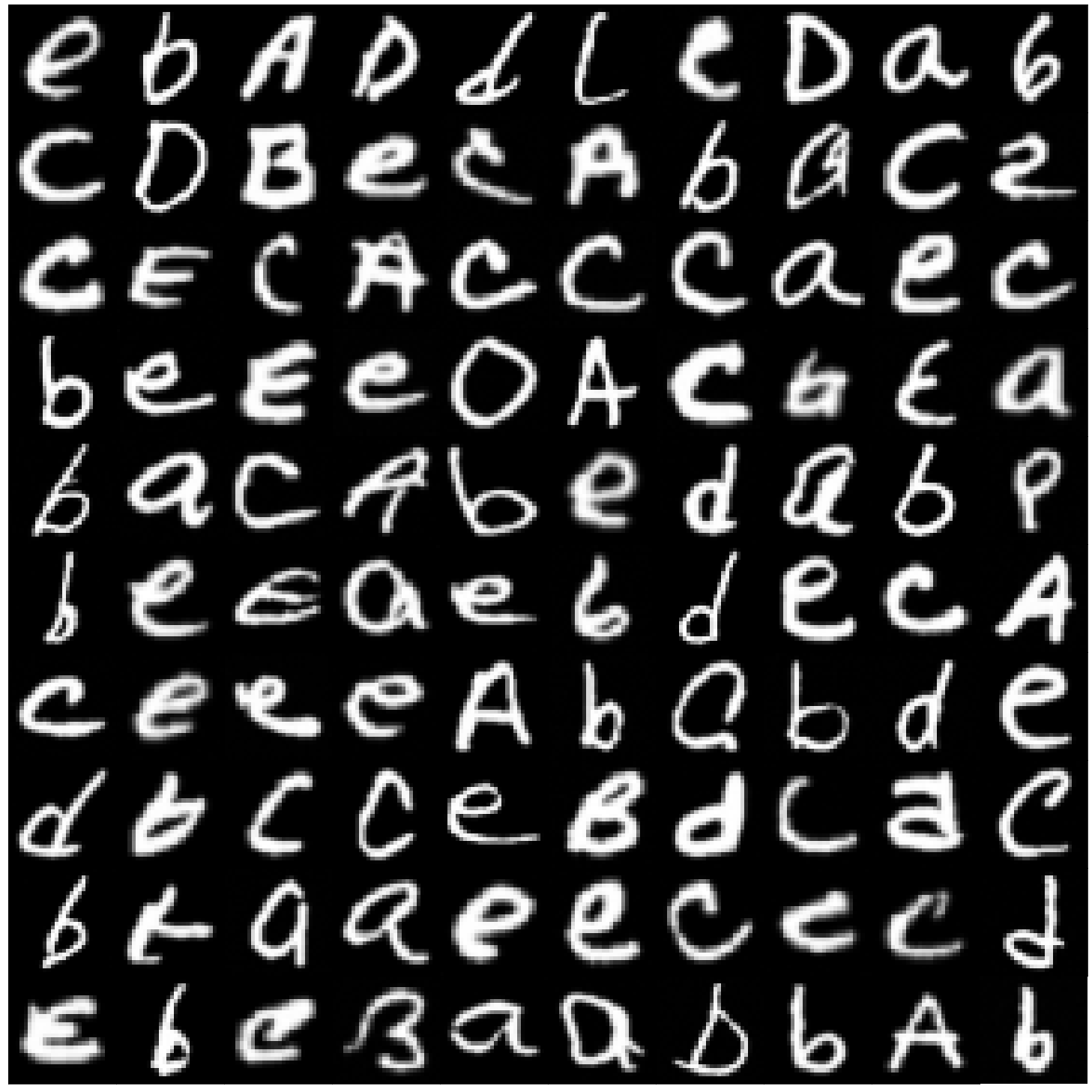}}\hfill
    \subfloat[Bridge matching \\ + DSBM finetuning: \\ FID=3.56, MSD=0.330]{\includegraphics[width=0.25\textwidth]{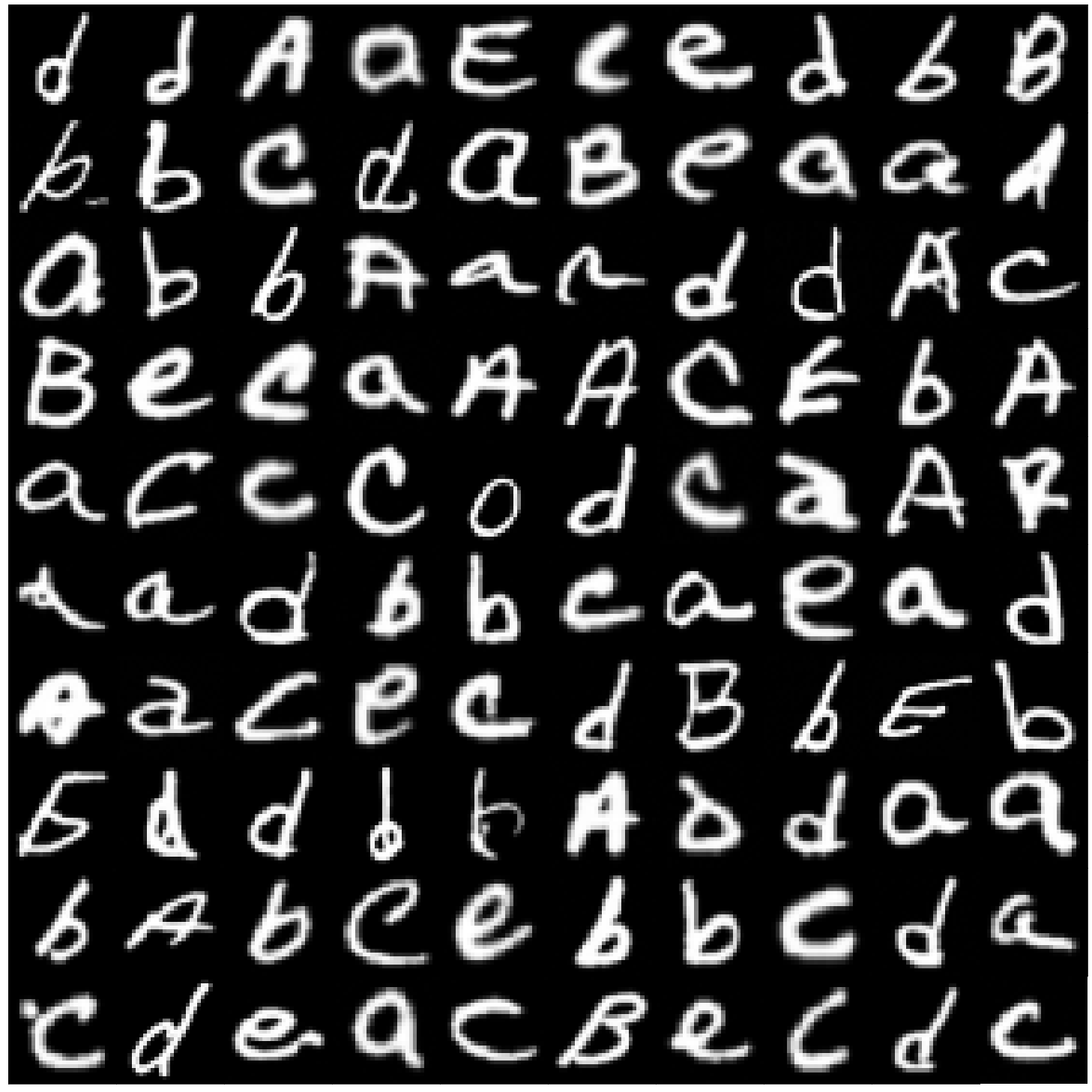}}
    \subfloat[Bridge matching \\ + online finetuning: \\ FID=3.67, MSD=0.357]{\includegraphics[width=0.25\textwidth]{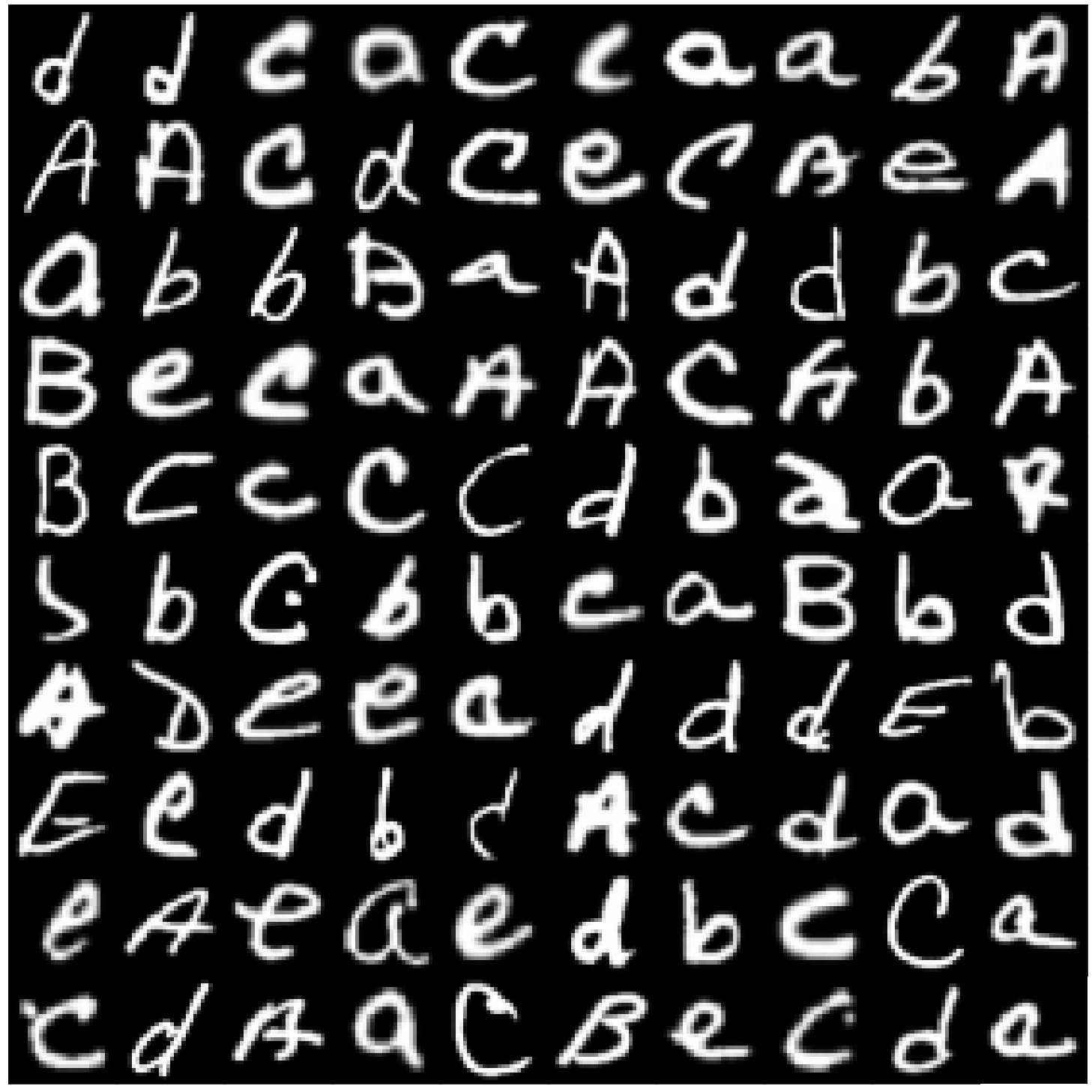}}
    \caption{MNIST to EMNIST transfer with a 2-networks model. } 
    \label{fig:emnist_2net} 
\end{figure}

\begin{figure}[htbp]
    \centering 
    \subfloat[Initial MNIST digits]{\includegraphics[width=0.25\textwidth]{img/mnist/mnist.png}}\hfill
    \subfloat[Bridge matching: \\ FID=7.97, MSD=0.572 ]{\includegraphics[width=0.25\textwidth]{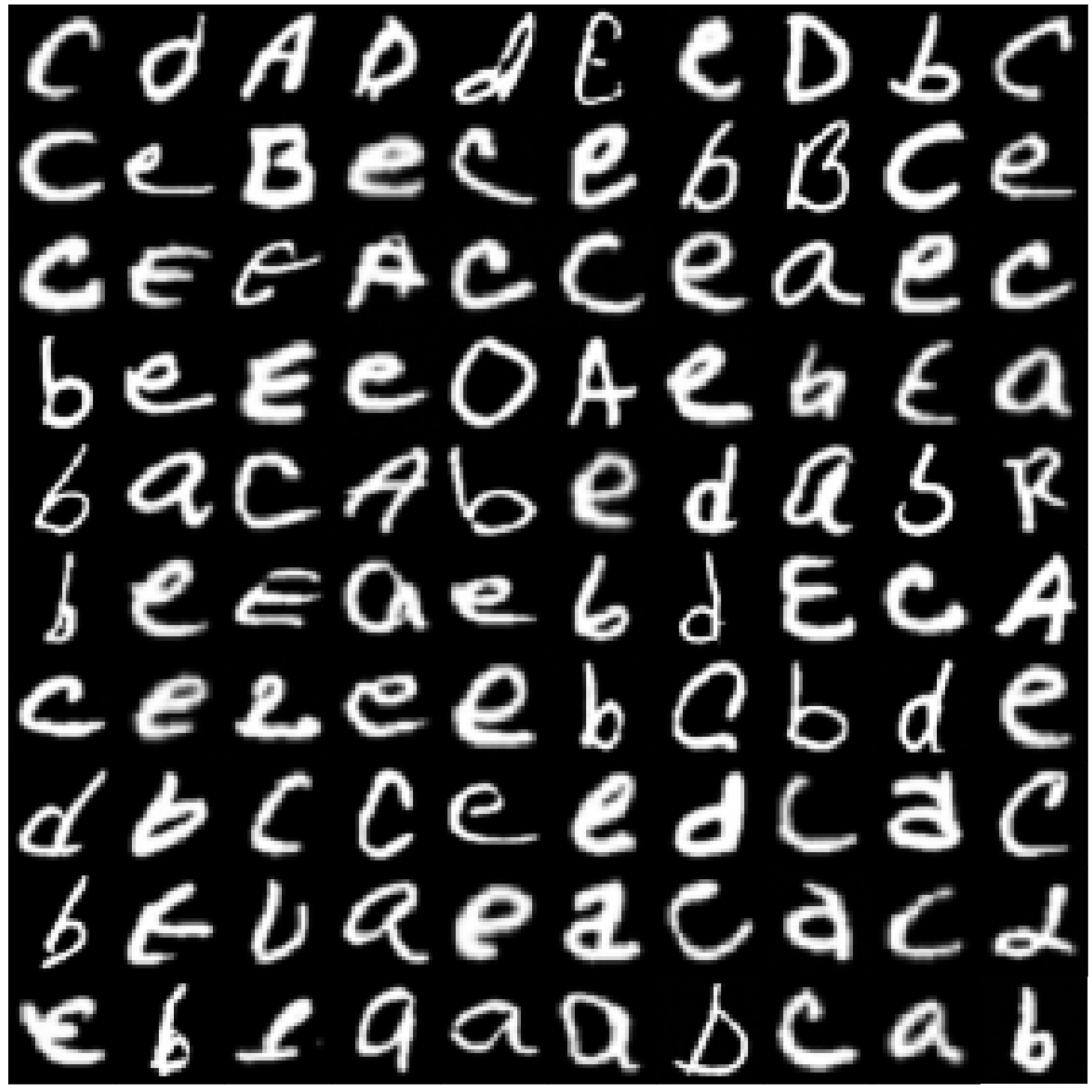}}\hfill
    \subfloat[Bridge matching \\ + online non-EMA \\ finetuning: \\ FID=4.16, MSD=0.370]{\includegraphics[width=0.25\textwidth]{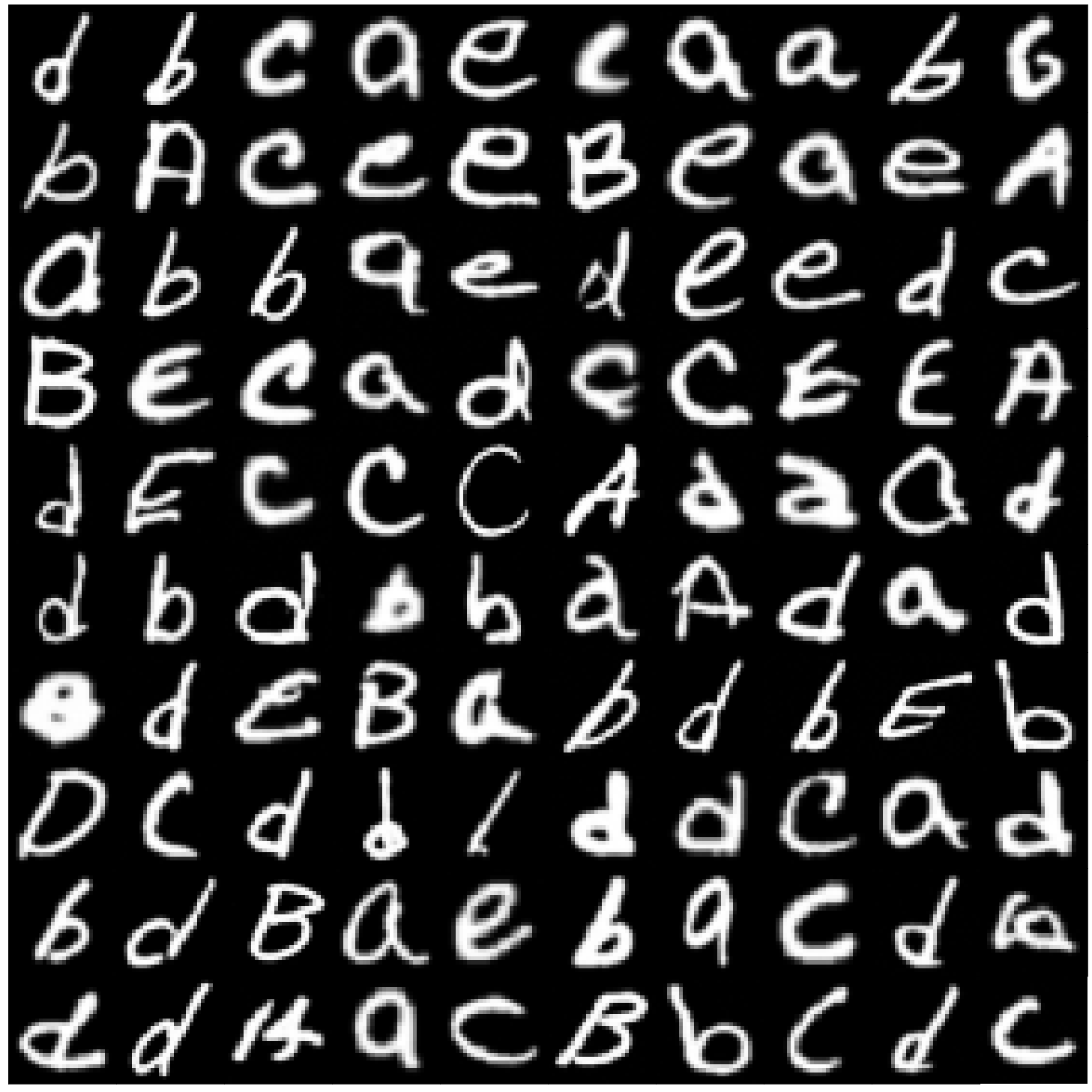}}
    \subfloat[Bridge matching \\ + online finetuning: \\ FID=3.97, MSD=0.392]{\includegraphics[width=0.25\textwidth]{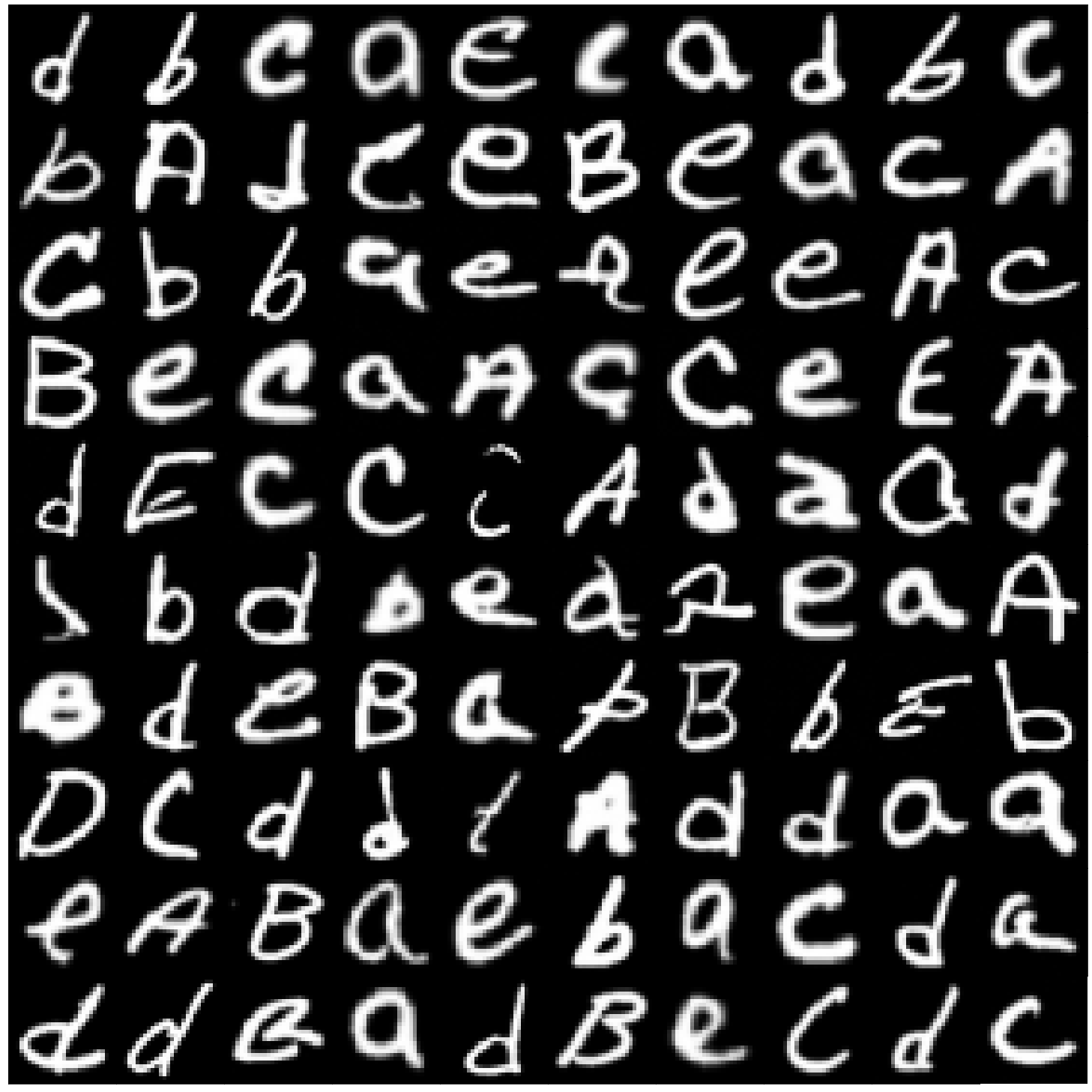}}
    \caption{MNIST to EMNIST transfer with a bidirectional model.} 
    \label{fig:emnist_bidirectional} 
\end{figure}

\begin{figure*}[bt]
\centering
\subfloat[Base model]{\includegraphics[width=0.5\textwidth]{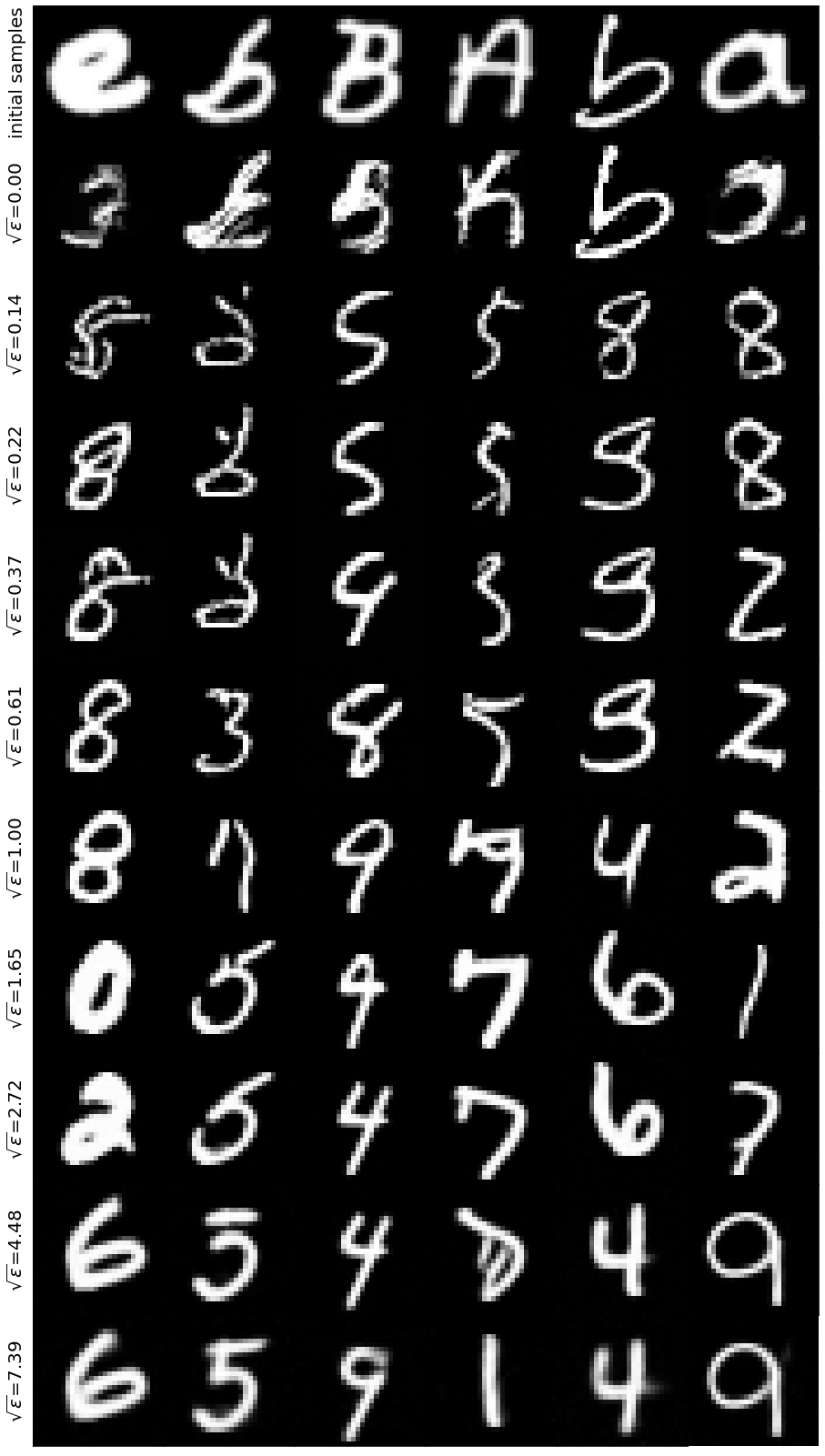}}\hfill
\subfloat[Finetuned model]{\includegraphics[width=0.5\textwidth]{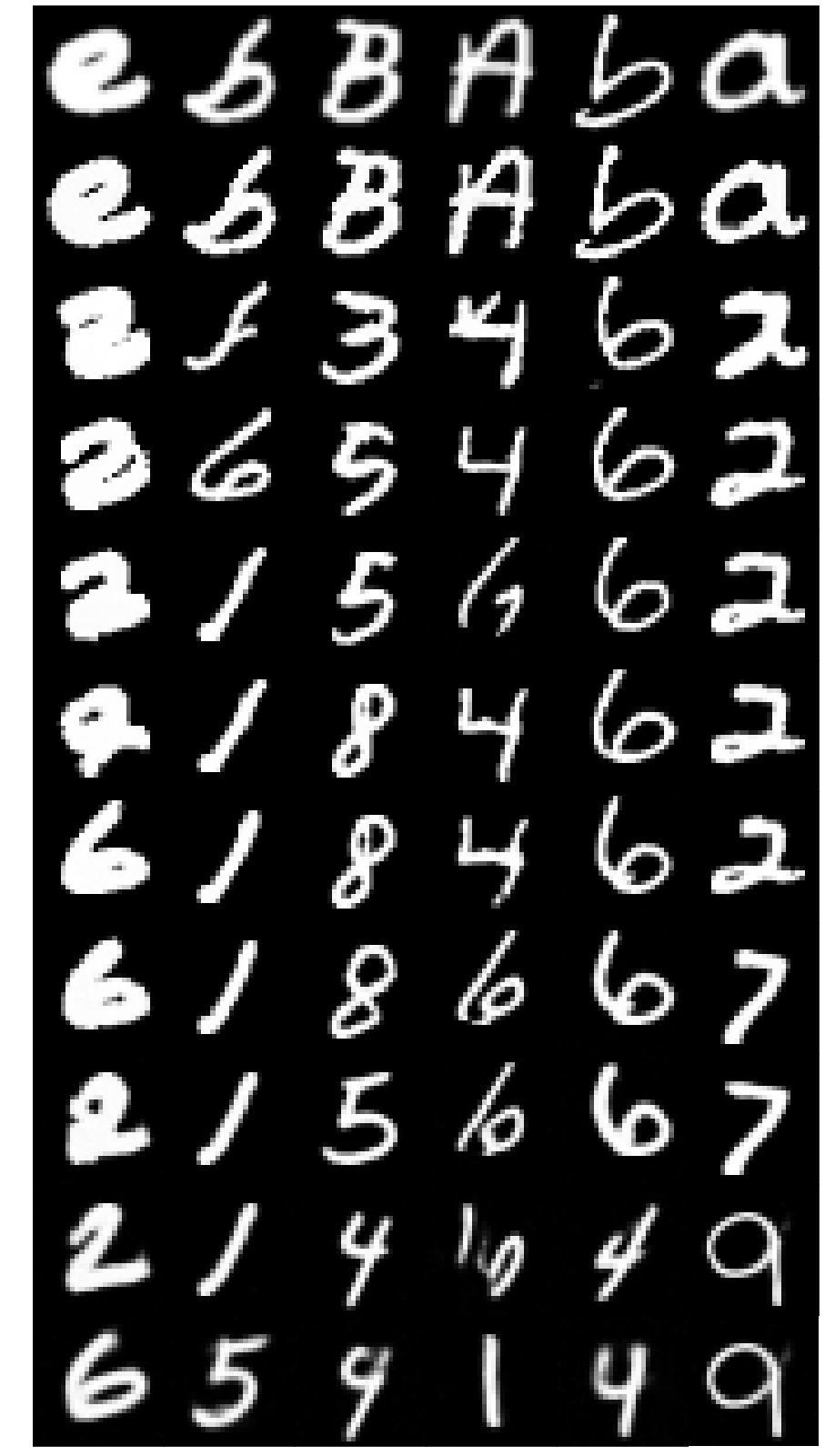}}
\caption{MNIST samples transferred from EMNIST letter inputs (top row) using base (pretrained) and fine-tuned models for different values of $\vareps$. Low noise values result in poor sample quality, particularly in the base model, which finetuning cannot fully rectify.  Conversely, excessively high $\vareps$ restricts information passing from the inputs to the outputs, leading to poor alignment. Additionally, high $\vareps$ increases blurriness due to increased noise levels, thus requiring more denoising steps.
}
\label{fig:mnist_sigma_sweep_samples}
\end{figure*}

\subsection{AFHQ: Cat $\leftrightarrow$ Wild}
\label{sec:experimental_details_afhq}

We consider the problem of image translation between Cat and Wild domains of AFHQ~\citedcc{choi2020starganv2}{CC BY-NC 4.0 DEED licence} as introduced by \cite{shi2023DSBM}. Each domain has approximately 5000 samples in the training set, and around 500 samples in the test set. We resize the original 512 $\times$ 512 images to 64$\times$64 or 256$\times$256 resolutions. 

Our U-Net~\citep{unet} implementation is based on \cite{ho_denoising_2020} with a few improvements suggested in \cite{dhariwal2021diffusion, song2020score} such as rescaling of skip connections by $\nicefrac{1}{\sqrt{2}}$, using residual blocks from BigGAN~\citep{brock2018large}, and convolution-based up- and downsampling. Hyperparameters are given in \Cref{table:hypers}.
Compared to the straightforward parameterisation of the vector fields, we obtained slightly better results using EDM preconditioning~\cite{karras2022elucidating}, which we derive in \Cref{sec:preconditioning_loss} for the case of bridge matching. 
During training, we use horizontal flips as a way to augment the data.  

During training and evaluation, we use Euler–Maruyama method with 100 equidistant time steps between 0 and 1. When evaluating the quality of Cat $\rightarrow$ Wild transfer, we compute FID based on the whole training set of 4576 examples in the Wild domain and a set of 480 samples that were initialised from test images in the Cat domain. LPIPS and MSD are computed between 480 initial Cat images and Wild samples from the model. The same procedure is followed when evaluating in the reverse direction from Wild to Cat. Given that train, and especially the test sets are small, the quantitative results for AFHQ are likely unreliable~\citep{unbiased_fid}. In \Cref{fig:afhq64_dsbm_vs_online} we provide samples from the models finetuned either with an iterative or an online method. While their FID scores are different, the samples look similar between the two models.  

As we discussed in the main text, hyperparameter $\vareps$ trades off the visual quality and alignment of the samples in the resulting transfer models. In \Cref{fig:afhq_sigma_sweep}, we provide AFHQ 64 $\times$ 64 samples for pretrained and finetuned models with different values of $\vareps$. In addition to its relation to EOT, from a DDM perspective, $\vareps$ can be seen as the controlling factor of the noise schedule. As observed by \cite{hoogeboom2023simple}, noise schedules should be adjusted for different image sizes by shifting the noise schedule of some reference resolution where it is proven to be successful.  
In our case, if we find a good value of $\vareps$ for 64 $\times$ 64 images, then a shifted $\vareps$ for the 256 $\times$ 256 resolution can be computed as  $\vareps_{256} = \vareps_{64} \left(\frac{256}{64} \right)^2$. Thus, if we choose $\sqrt\vareps=0.75$ for AFHQ-64, then for AFHQ-256, we can expect $\sqrt{\vareps} = 3.0$ to also work well. Samples from an AFHQ-256 model trained with $\sqrt{\vareps} = 3.0$ are given in \Cref{fig:afhq256_bidirectional_online_ema}.

On 16 v3 TPUs, the bidirectional base and finetuned AFHQ-64 models take 4 and 14 hours to train, respectively. For AFHQ-256, the base model trains for 15 hours, and finetuning takes an additional 37 hours. While we did not experiment with varying pretraining and fine-tuning iterations, these training times suggest that a longer pretraining stage followed by fewer fine-tuning steps may be desirable.

\begin{figure}[htbp] 
    \centering 
    \subfloat[Base model]{\includegraphics[width=0.5\textwidth]{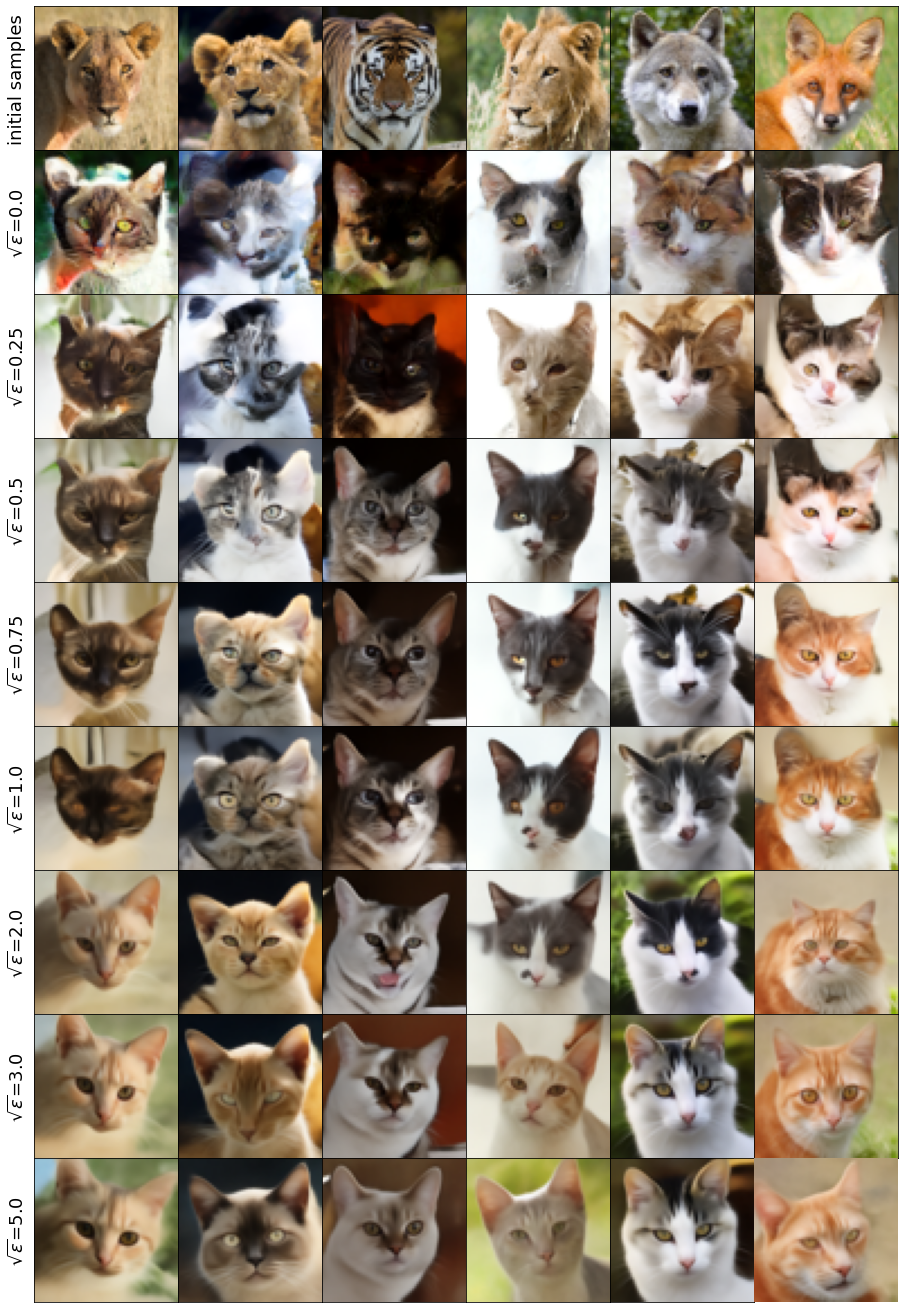}}\hfill
    \subfloat[Finetuned model]{\includegraphics[width=0.5\textwidth]{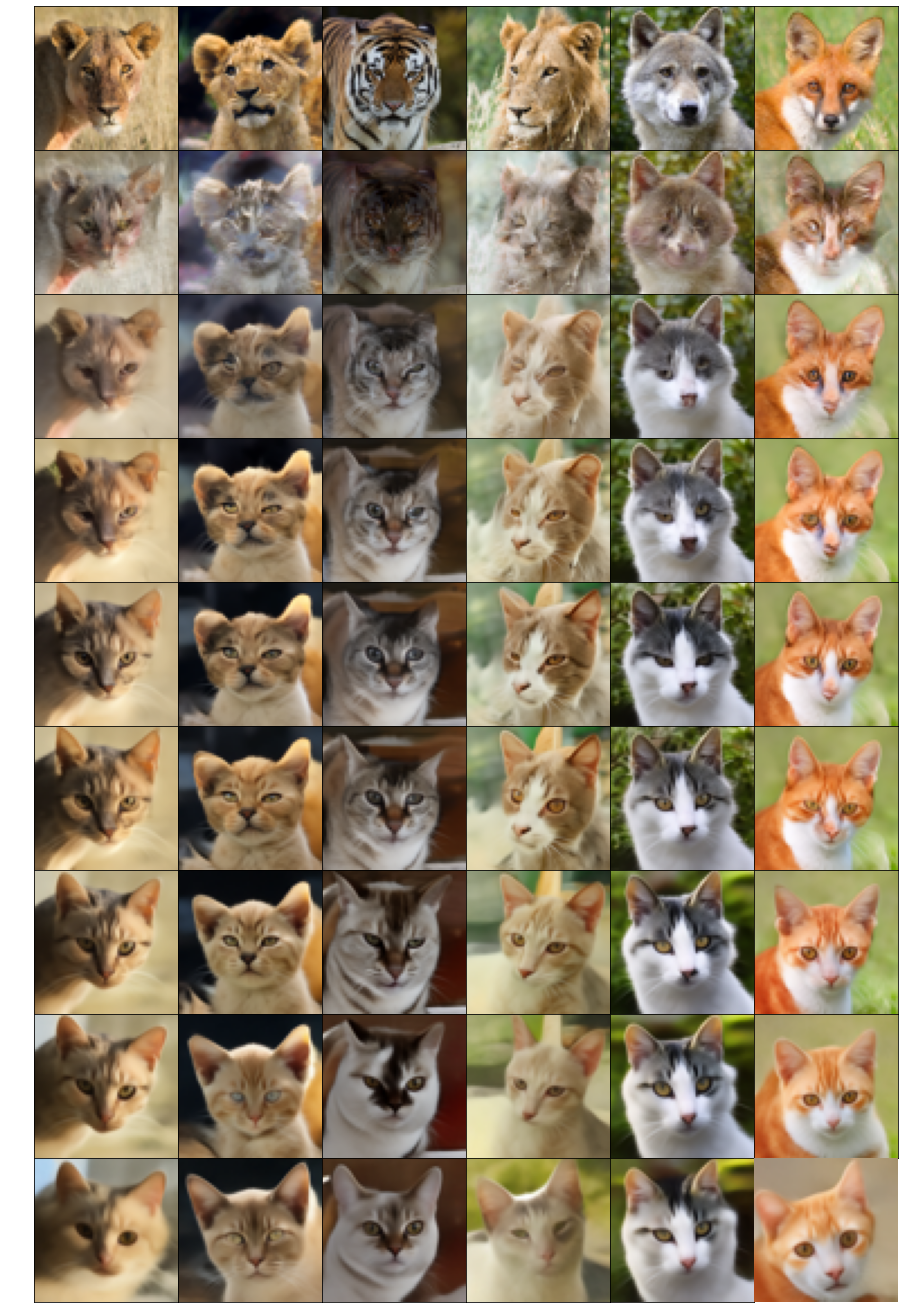}}\hfill
    
    \caption{AFHQ 64 $\times$ 64 Wild $\rightarrow$ Cat transfer results for different values of $\sqrt{\vareps}$ in a bidirectional model before and after online finetuning. Low values of $\vareps$ lead to poor sample quality in both base and finetuned models. Excessively high $\vareps$ values impede information passing from the inputs to the outputs, resulting in poor alignment. High values of $\vareps$ also increase blurriness due to noisier SDE trajectories, thus requiring more denoising steps during sampling.} 
    \label{fig:afhq_sigma_sweep} 
\end{figure}

\begin{figure}[htbp] 
    \centering 
    \subfloat[Base model]{\includegraphics[width=0.5\textwidth]{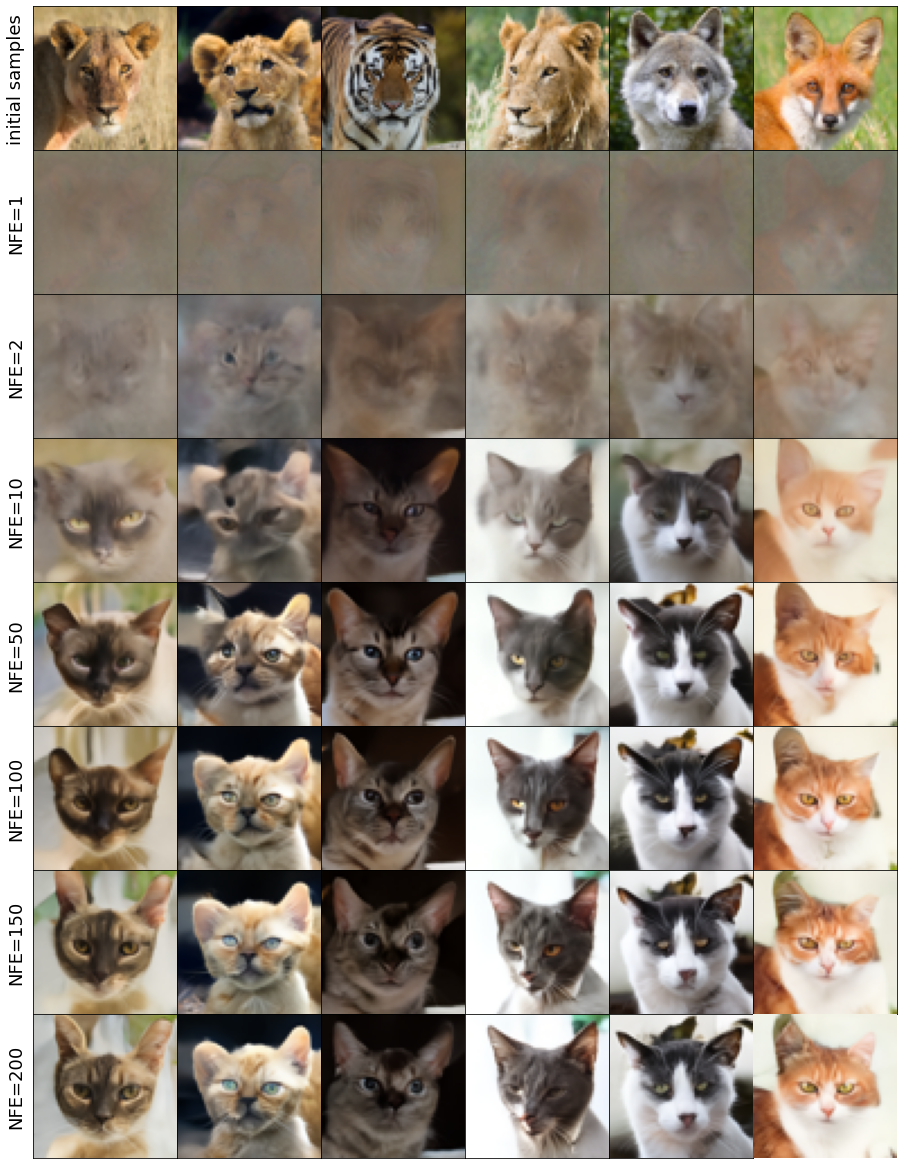}}\hfill
    \subfloat[Finetuned model]{\includegraphics[width=0.5\textwidth]{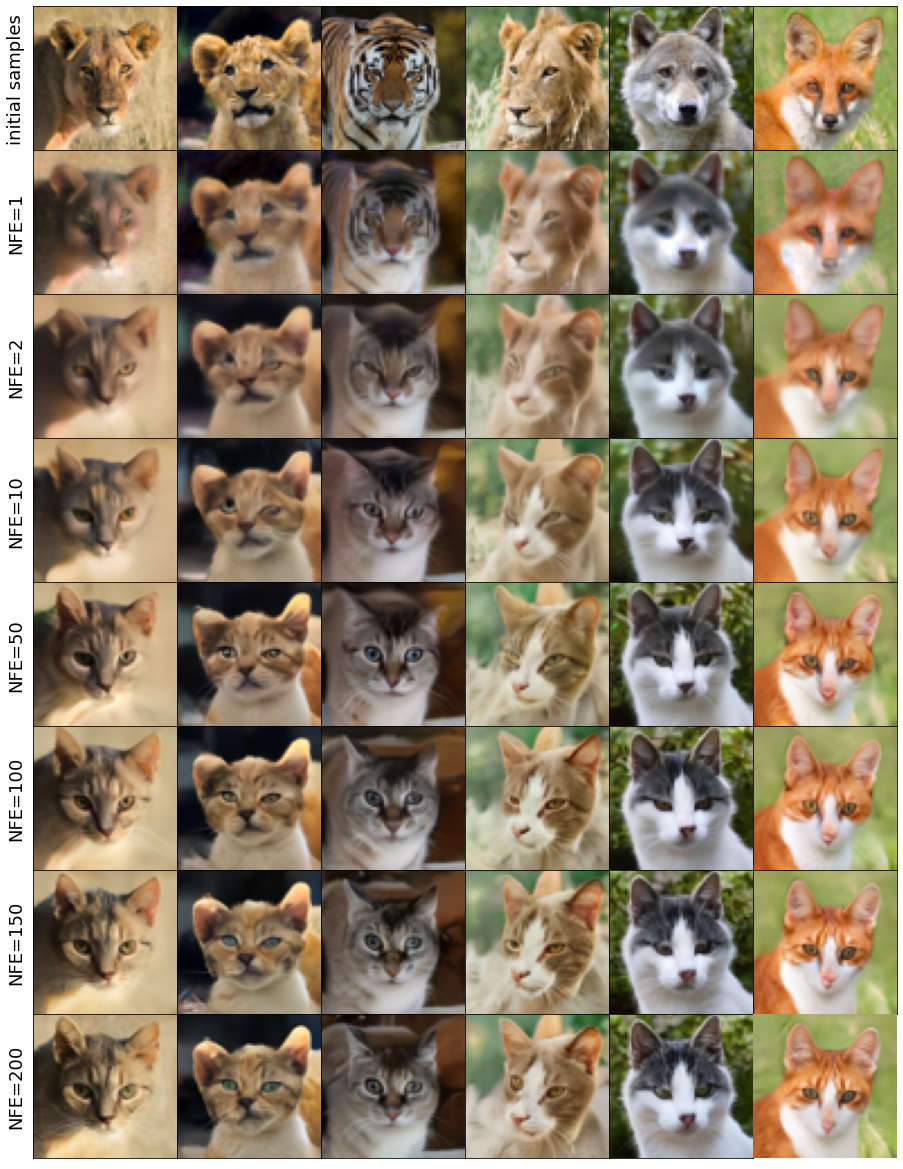}}\hfill
    \caption{AFHQ 64 $\times$ 64 Wild $\rightarrow$ Cat transfer results for varying number of function evaluations (equivalent to time discretisation steps in the Euler-Maruyama method) in a bidirectional model with $\sqrt\vareps=0.75$, both  before and after online finetuning. Post-finetuning, clearer images are achievable with fewer steps. This observation aligns with findings from Rectified Flows~\citep{liu_flow_2023}.} 
    \label{fig:afhq_nfe_sweep} 
\end{figure}

\begin{figure}[htbp] 
    \centering 
    \subfloat[Iterative finetuning: Cat $\rightarrow$ Wild. \\ FID=27.76, LPIPS=0.503, MSD=0.093]{\includegraphics[width=0.49\textwidth ]{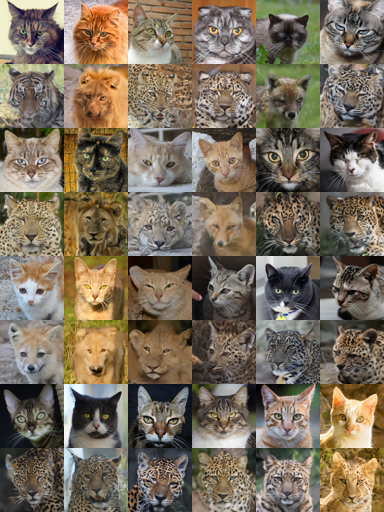}}\hfill
    \subfloat[Iterative finetuning: Wild $\rightarrow$ Cat. \\ FID=25.24, LPIPS=0.483, MSD=0.094]{\includegraphics[width=0.49\textwidth]{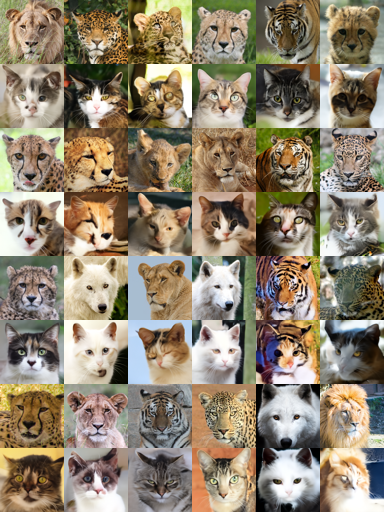}}\hfill
    
    \subfloat[Online finetuning: Cat $\rightarrow$ Wild. \\ FID=32.12, LPIPS=0.503, MSD=0.097]{\includegraphics[width=0.49\textwidth]{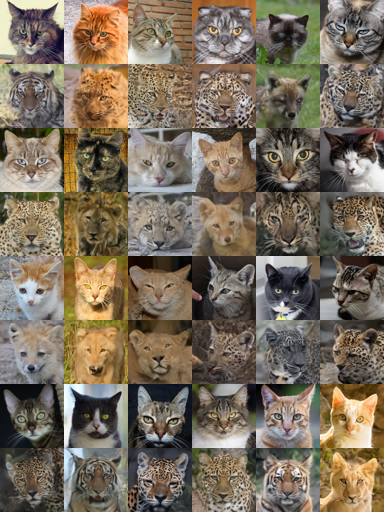}}\hfill
    \subfloat[Online finetuning: Wild $\rightarrow$ Cat.\\ FID=27.32, LPIPS=0.485, MSD=0.116]{\includegraphics[width=0.49\textwidth]{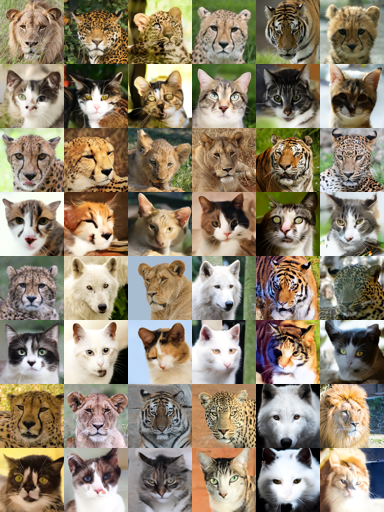}}\hfill

    \caption{Samples and metrics from a 2-networks model architecture finetuned with DSBM's iterative procedure vs online finetuning. Within each two rows, initial and transferred samples are on the top and bottom respectively. } 
    \label{fig:afhq64_dsbm_vs_online} 
\end{figure}

\begin{figure}[htbp] 
    \centering 
    \subfloat[Forward: Cat $\rightarrow$ Wild]{\includegraphics[width=0.49\textwidth]{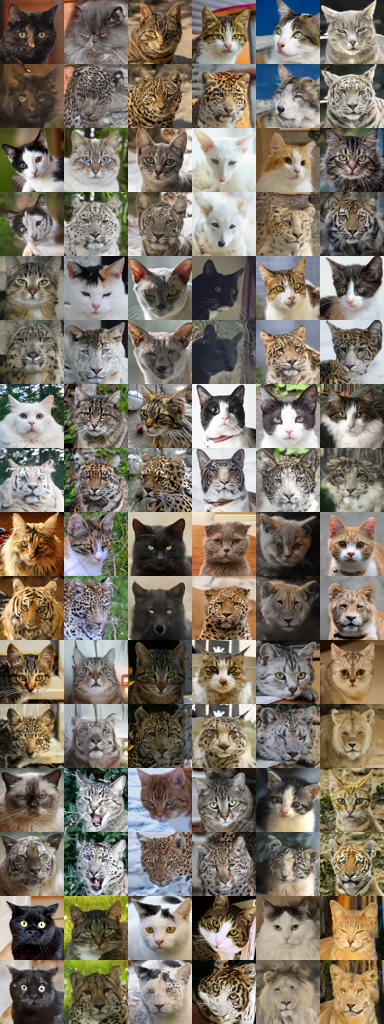}}\hfill
    \subfloat[Backward: Wild $\rightarrow$ Cat]{\includegraphics[width=0.49\textwidth]{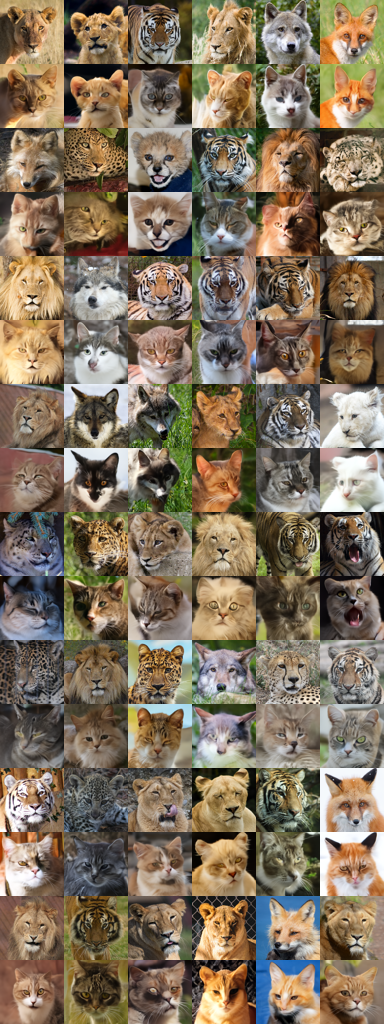}}\hfill
    \caption{Uncurated samples for AFHQ 64 $\times$ 64 transfer in a bidirectional model with online finetuning with non-EMA sampling and $\sqrt{\vareps}=0.75$. Within each two rows, initial and transferred samples are on the top and bottom respectively.} 
    \label{fig:afhq64_bidirectional_online_non_ema} 
\end{figure}

\begin{figure}[htbp] 
    \centering 
    \subfloat[Forward: Cat $\rightarrow$ Wild]{\includegraphics[width=0.49\textwidth]{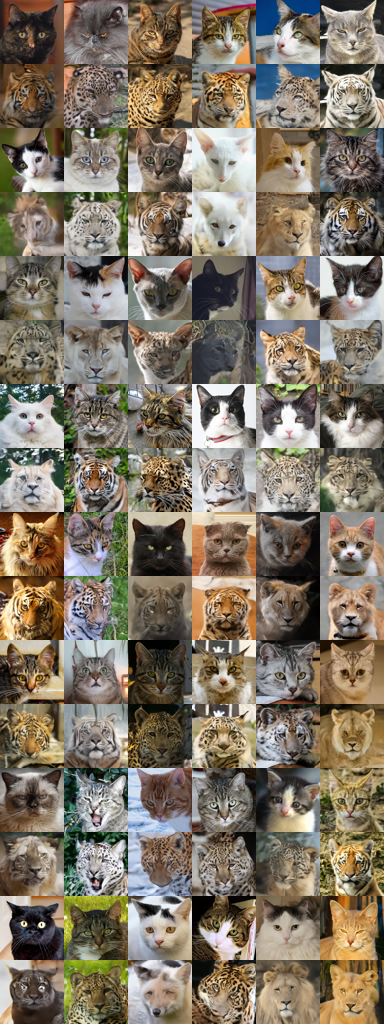}}\hfill
    \subfloat[Backward: Wild $\rightarrow$ Cat]{\includegraphics[width=0.49\textwidth]{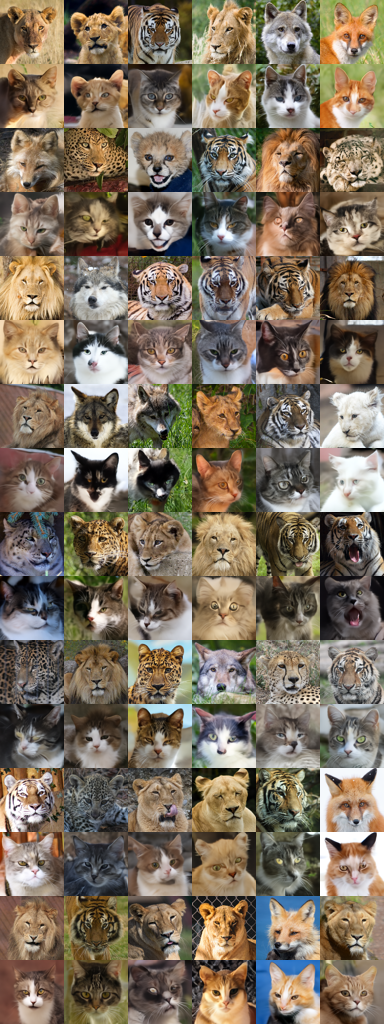}}\hfill
    \caption{Uncurated samples for AFHQ 64 $\times$ 64 transfer in a bidirectional model with online finetuning and $\sqrt{\vareps}=0.75$. Within each two rows, initial and transferred samples are on the top and bottom respectively.} 
    \label{fig:afhq64_bidirectional_online_ema} 
\end{figure}

\begin{figure}[htbp] 
    \centering 
    
    \subfloat[Forward: Cat $\rightarrow$ Wild with inputs from Wild.]{\includegraphics[width=0.49\textwidth]{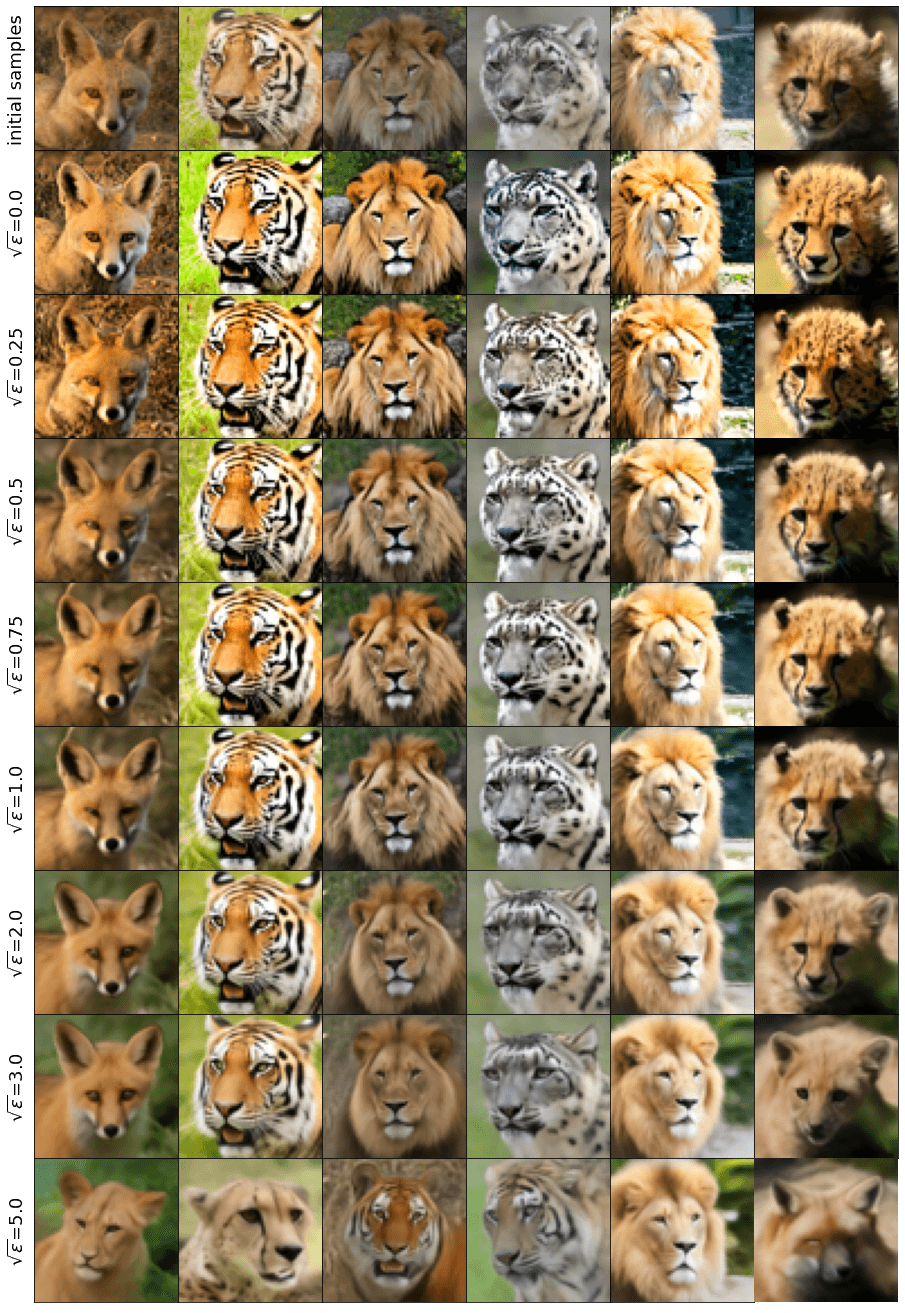}}\hfill
    \subfloat[Backward: Wild $\rightarrow$ Cat with inputs from Cat.]{\includegraphics[width=0.49\textwidth]{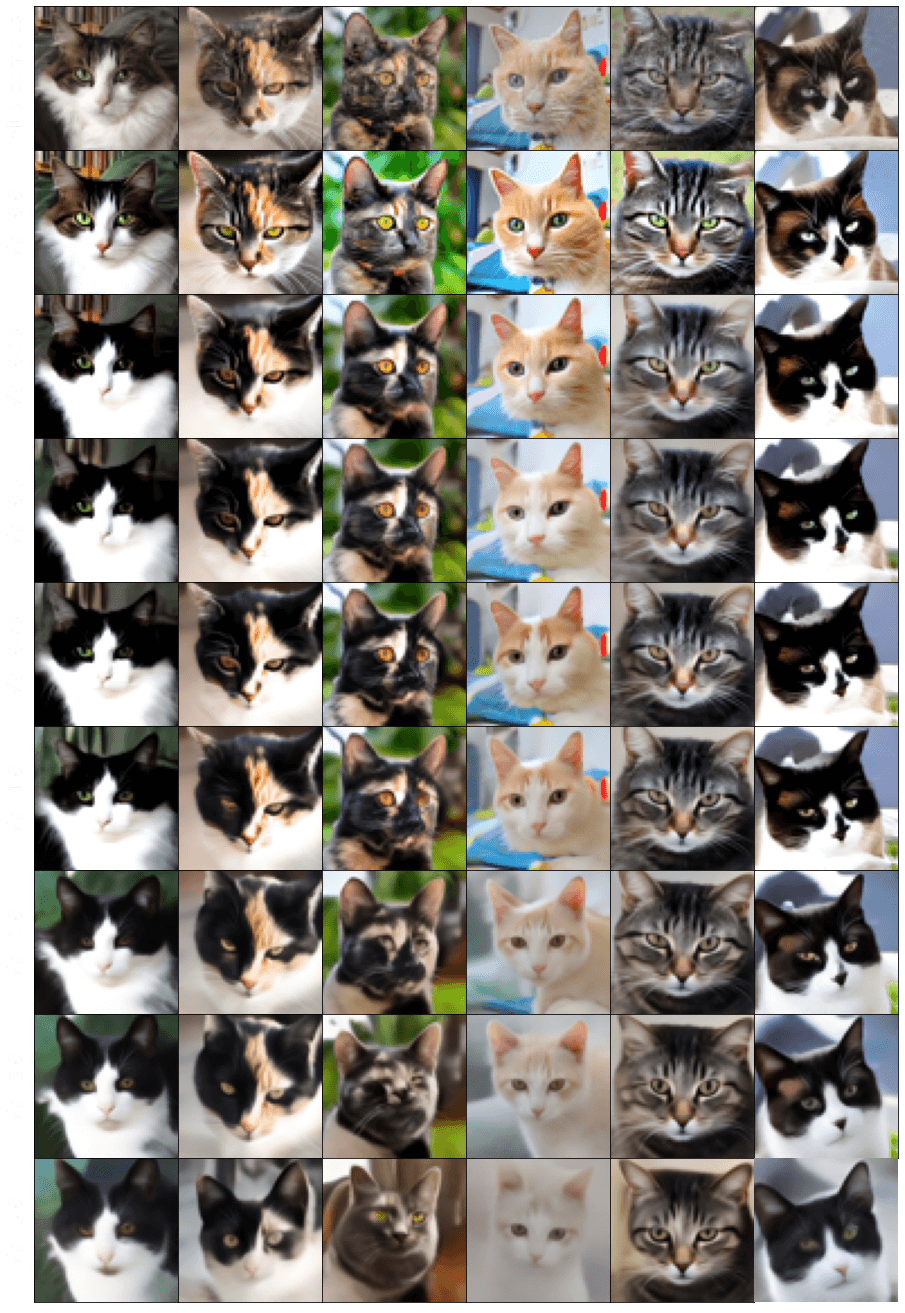}}\hfill
    \caption{Samples for AFHQ 64 $\times$ 64 transfer in bidirectional models with online finetuning and different values of $\vareps$. The models are only trained on Cat and Wild domains, $\pi_0$ and $\pi_1$, respectively. Thus, in the forward direction the models expect Cat samples as inputs at $t=0$, and transfer them to the Wild domain at $t=1$. The reverse transfer holds in the backward direction. Here, we test the models' behaviour when inputs do not come from the same distribution as during training: we feed Wild samples in the forward direction, and Cat samples in the backward, which is the opposite from what the models expect. 
    Ideally, the model should leave these inputs unchanged, which it does to varying degrees depending on $\vareps$, variance of the Gaussian noise. As we increase $\vareps$, less information can pass from the input to the output, thus making them less alike.   
    } 
    \label{fig:afhq64_reverse} 
\end{figure}

\begin{figure}[htbp] 
    \centering 
    \subfloat[Forward: Cat $\rightarrow$ Wild with inputs from Dog.]{\includegraphics[width=0.49\textwidth ]{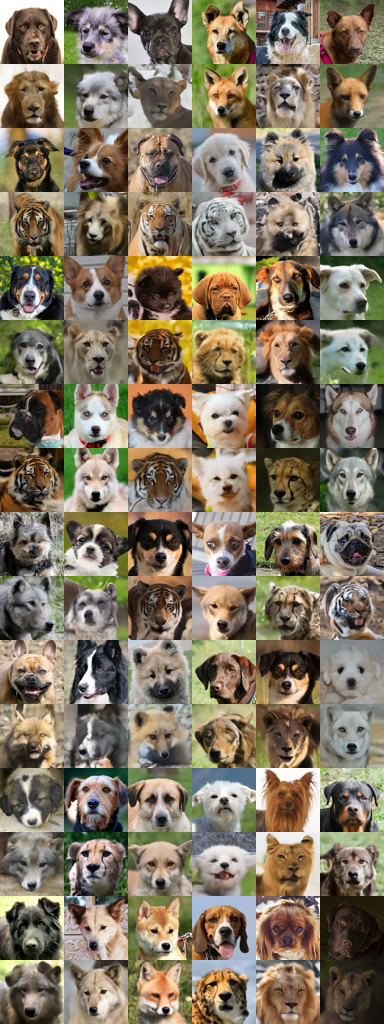}}\hfill
    \subfloat[Backward: Wild $\rightarrow$ Cat with inputs from Dog.]{\includegraphics[width=0.49\textwidth]{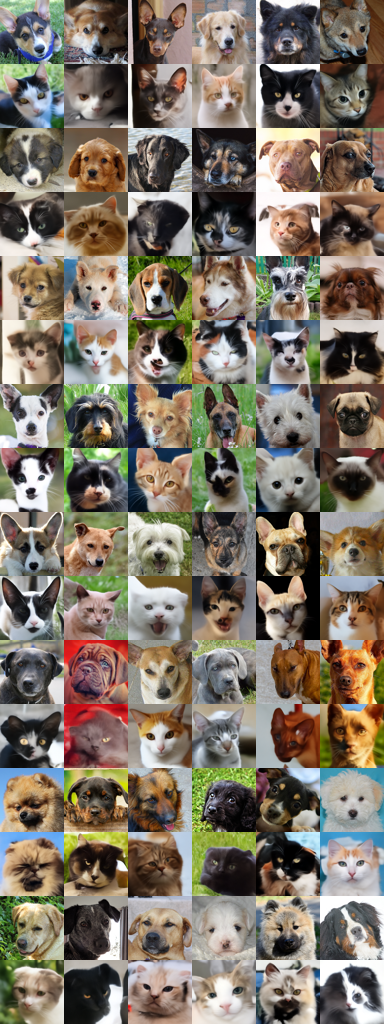}}\hfill
    \caption{Samples for AFHQ 64 $\times$ 64 transfer in a bidirectional model with online finetuning and $\sqrt \vareps=2.0$. The model is only trained on Cat and Wild domains, $\pi_0$ and $\pi_1$, respectively. Thus, in the forward direction the model expects Cat samples as inputs at $t=0$, and transfers them to the Wild domain at $t=1$. The reverse holds in the backward direction.
    Notably, the model generalises well to the unseen AFHQ Dog domain, often producing high-quality translations. These results come from a model with $\sqrt{\vareps} = 2.0$, which is higher than our chosen default value of $\sqrt{\vareps} = 0.75$. Higher noise allows the model to better deal with out-of-distribution inputs.      
    } 
    \label{fig:afhq64_dogs} 
\end{figure}

\begin{figure}[htbp] 
    \centering 
    \subfloat[Forward: Cat $\rightarrow$ Wild]{\includegraphics[width=0.49\textwidth]{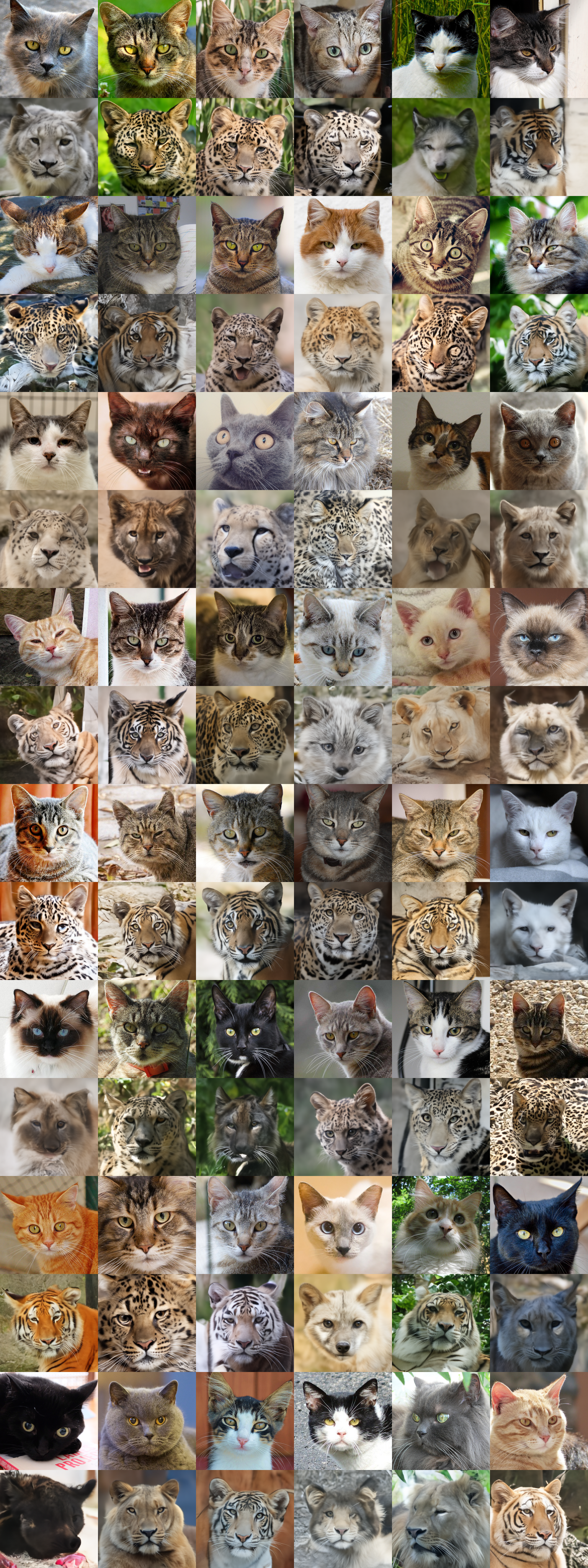}}\hfill
    \subfloat[Backward: Wild $\rightarrow$ Cat]{\includegraphics[width=0.49\textwidth]{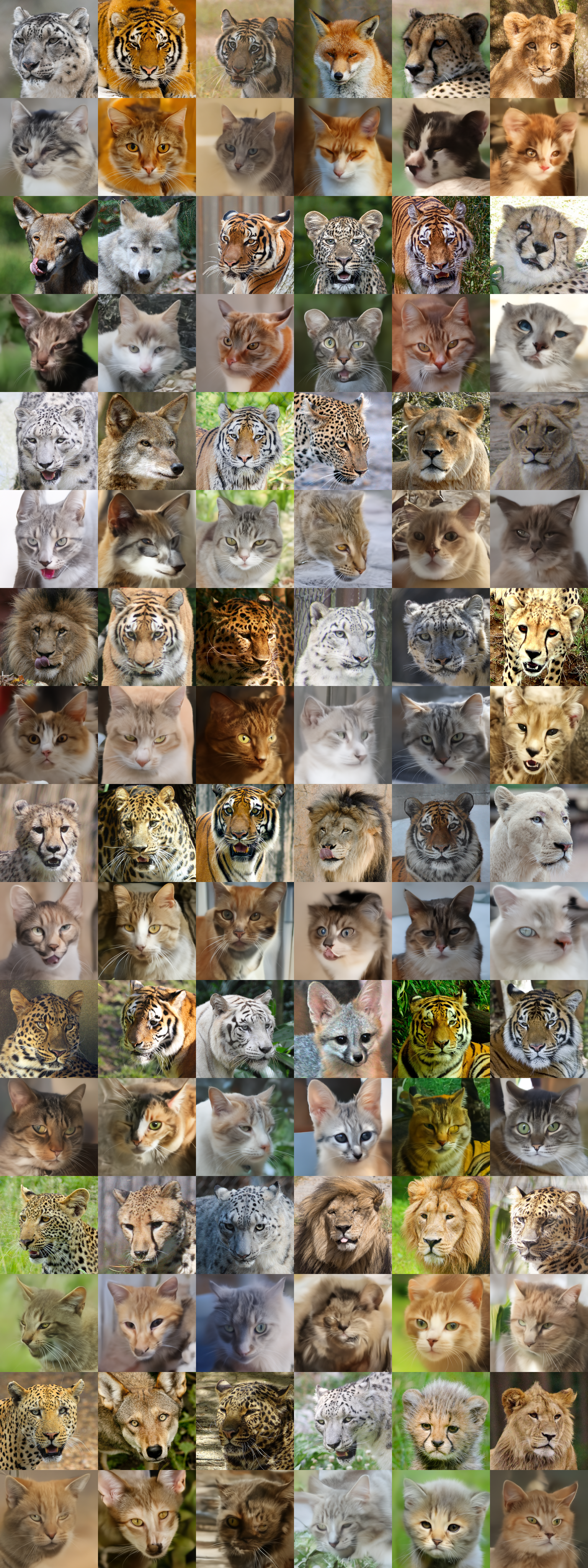}}\hfill
    \caption{Uncurated samples for AFHQ 256 $\times$ 256 transfer in a bidirectional model with online finetuning and $\sqrt{\vareps}=3$. Within each two rows, initial and transferred samples are on the top and bottom respectively.} 
    \label{fig:afhq256_bidirectional_online_ema} 
\end{figure}

\FloatBarrier
\newpage
\section*{NeurIPS Paper Checklist}

\begin{enumerate}

\item {\bf Claims}
    \item[] Question: Do the main claims made in the abstract and introduction accurately reflect the paper's contributions and scope?
    \item[] Answer: \answerYes{}  % Replace by \answerYes{}, \answerNo{}, or \answerNA{}.
    \item[] Justification: our main theoretical and experimental contributions are claimed in the abstract and demonstrated in the paper. We summarize our main contributions hereafter. Theoretically, we identify a new family of sequence of path measures related to the IMF algorithm, called $\alpha$-IMF. We show that these sequences correspond to non-parametric updates. We then introduce a parametric update that corresponds to an online version of the DSBM algorithm. We show that our procedure retains the favorable properties of DSBM while not requiring the expensive repeated inner minimisation procedure of DSBM.

\item {\bf Limitations}
    \item[] Question: Does the paper discuss the limitations of the work performed by the authors?
    \item[] Answer: \answerYes{} % Replace by \answerYes{}, \answerNo{}, or \answerNA{}.
    \item[] Justification: The limitations are addressed in the discussion section. The main limitation of our algorithm is that it is not a sampling free methodology. In future work, we would like to see how to mitigate the fact that our algorithm depends on some self-play.

\item {\bf Theory Assumptions and Proofs}
    \item[] Question: For each theoretical result, does the paper provide the full set of assumptions and a complete (and correct) proof?
    \item[] Answer: \answerYes{} % Replace by \answerYes{}, \answerNo{}, or \answerNA{}.
    \item[] Justification: All theoretical results are proven in the supplementary material, see \Cref{sec:theoretical_results_appendix}.

    \item {\bf Experimental Result Reproducibility}
    \item[] Question: Does the paper fully disclose all the information needed to reproduce the main experimental results of the paper to the extent that it affects the main claims and/or conclusions of the paper (regardless of whether the code and data are provided or not)?
    \item[] Answer: \answerYes{} % Replace by \answerYes{}, \answerNo{}, or \answerNA{}.
    \item[] Justification: Full experimental details are provided in \Cref{sec:experimental_details}.

\item {\bf Open access to data and code}
    \item[] Question: Does the paper provide open access to the data and code, with sufficient instructions to faithfully reproduce the main experimental results, as described in supplemental material?
    \item[] Answer: \answerNo{} % Replace by \answerYes{}, \answerNo{}, or \answerNA{}.
    \item[] Justification: Due to IP restrictions, we cannot share the codebase used for this paper. However, we plan to release some notebooks in order to reproduce experiments in a small scale setting.

\item {\bf Experimental Setting/Details}
    \item[] Question: Does the paper specify all the training and test details (e.g., data splits, hyperparameters, how they were chosen, type of optimiser, etc.) necessary to understand the results?
    \item[] Answer: \answerYes{} % Replace by \answerYes{}, \answerNo{}, or \answerNA{}.
    \item[] Justification: Full experimental details are provided in \Cref{sec:experimental_details}.

\item {\bf Experiment Statistical Significance}
    \item[] Question: Does the paper report error bars suitably and correctly defined or other appropriate information about the statistical significance of the experiments?
    \item[] Answer: \answerYes{} % Replace by \answerYes{}, \answerNo{}, or \answerNA{}.
    \item[] Justification: All metrics are computed using multiple random seeds and error bars are provided.

\item {\bf Experiments Compute Resources}
    \item[] Question: For each experiment, does the paper provide sufficient information on the computer resources (type of compute workers, memory, time of execution) needed to reproduce the experiments?
    \item[] Answer: \answerYes{} % Replace by \answerYes{}, \answerNo{}, or \answerNA{}.
    \item[] Justification: Full details on the compute requirements are given in \Cref{sec:experimental_details}.

\item {\bf Code Of Ethics}
    \item[] Question: Does the research conducted in the paper conform, in every respect, with the NeurIPS Code of Ethics \url{https://neurips.cc/public/EthicsGuidelines}?
    \item[] Answer: \answerYes{} % Replace by \answerYes{}, \answerNo{}, or \answerNA{}.
    \item[] Justification: After careful review of the NeurIPS Code of Ethics, we can ensure that the research presented in this paper conforms with the Code of Ethics in every respect. Indeed, we see no immediate safety, security, discrimination, surveillance, deception, harassment, environment, human rights or bias and fairness concerns to our work. In addition, we release details and documentation regarding the datasets and models used. We disclose essential details for reproducibility and have ensured that our work is legally compliant.

\item {\bf Broader Impacts}
    \item[] Question: Does the paper discuss both potential positive societal impacts and negative societal impacts of the work performed?
    \item[] Answer: \answerNA{} % Replace by \answerYes{}, \answerNo{}, or \answerNA{}.
    \item[] Justification: This paper addresses the problem of unpaired dataset translation and proposes an improvement to the DSBM methodology. As the current paper is mostly theoretical and methodological we do not see immediate societal impact of this work and therefore do not discuss these issues. However, we acknowledge that large scale implementation of our algorithm might suffer from the same societal biases as generative models. We hope to address the limitations of such models when turning to more experimental work.

\item {\bf Safeguards}
    \item[] Question: Does the paper describe safeguards that have been put in place for responsible release of data or models that have a high risk for misuse (e.g., pretrained language models, image generators, or scraped datasets)?
    \item[] Answer: \answerNA{} % Replace by \answerYes{}, \answerNo{}, or \answerNA{}.
    \item[] Justification: The paper poses no such risks.

\item {\bf Licenses for existing assets}
    \item[] Question: Are the creators or original owners of assets (e.g., code, data, models), used in the paper, properly credited and are the license and terms of use explicitly mentioned and properly respected?
    \item[] Answer: \answerYes{} % Replace by \answerYes{}, \answerNo{}, or \answerNA{}.
    \item[] Justification: We have referenced the license of the datasets we use and cite the original papers that produced the code packages and datasets that we use in that paper.

\item {\bf New Assets}
    \item[] Question: Are new assets introduced in the paper well documented and is the documentation provided alongside the assets?
    \item[] Answer: \answerNA{} % Replace by \answerYes{}, \answerNo{}, or \answerNA{}.
    \item[] Justification: The paper does not release new assets.

\item {\bf Crowdsourcing and Research with Human Subjects}
    \item[] Question: For crowdsourcing experiments and research with human subjects, does the paper include the full text of instructions given to participants and screenshots, if applicable, as well as details about compensation (if any)? 
    \item[] Answer: \answerNA{} % Replace by \answerYes{}, \answerNo{}, or \answerNA{}.
    \item[] Justification: The paper does not involve crowdsourcing nor research with human subjects.

\item {\bf Institutional Review Board (IRB) Approvals or Equivalent for Research with Human Subjects}
    \item[] Question: Does the paper describe potential risks incurred by study participants, whether such risks were disclosed to the subjects, and whether Institutional Review Board (IRB) approvals (or an equivalent approval/review based on the requirements of your country or institution) were obtained?
    \item[] Answer: \answerNA{} % Replace by \answerYes{}, \answerNo{}, or \answerNA{}.
    \item[] Justification:  The paper does not involve crowdsourcing nor research with human subjects.

\end{enumerate}

\end{document}